\newcommand{\smartparagraph}[1]{\noindent{\bf #1}\ }
\theoremstyle{plain}
\theoremstyle{plain}
\theoremstyle{plain}
\theoremstyle{plain}
\newtheorem{lemma}{Lemma}
\newtheorem{corollary}{Corollary}[lemma]
\theoremstyle{remark}
\theoremstyle{remark}
\theoremstyle{plain}
\newtheorem{definition}{Definition}
\theoremstyle{plain}
\newtheorem{property}{Property}
\theoremstyle{plain}
\xpatchcmd{\proof}{\topsep6\p@\@plus6\p@\relax}{}{}{}
\newenvironment{myreasoning}[1][]{%
  	\proof[\textit{Reasoning}]%
  	}{\endproof}
\providecommand{\customgenericname}{}
\newcommand{\newcustomtheorem}[2]{%
  \newenvironment{#1}[1]
  {%
   \renewcommand\customgenericname{#2}%
   \renewcommand\theinnercustomgeneric{##1}%
   \innercustomgeneric
  }
  {\endinnercustomgeneric}
}
\newcommand{\norm}[1]{\left\lVert#1\right\rVert}
\newcommand{\R}{\mathbb{R}} 
\newcommand{\N}{\mathbb{N}} 
\titlespacing*{\section}{0pt}{0.01\baselineskip}{0.01\baselineskip}
\titlespacing*{\subsection}{0pt}{0.01\baselineskip}{0.01\baselineskip}
\titlespacing*{\chapter}{0pt}{-8ex}{10ex}
\pgfplotsset{compat=1.16}
\newlength\figureheight
\newlength\figurewidth
\title{An Efficient Statistical-based Gradient Compression Technique for Distributed Training Systems}
\begin{document}


\begin{acronym}

\acro{CER}{character error rate}
\acro{CNN}{convolution neural network}
\acro{CDF}{cumulative distribution function}
\acro{CCDF}{complementary cumulative distribution function}
\acro{cs}[CS]{compressed sensing}                   
\acrodef{cscapital}[CS]{Compressed sensing}
\acrodef{CS}[CS]{compressed sensing}

\acro{DSSGD}{distributed synchronous SGD}
\acro{DNN}{deep neural network}
\acro{DGC}{deep gradient compression}
\acro{DR}{distortion-rate}

\acro{EC}{Error compensation}

\acro{i.i.d.}{independent, identically distributed}

\acro{NLogL}{negative log likelihood}

\acro{MM}{moment matching}
\acro{ML}{maximum likelihood}
\acro{MLE}{maximum likelihood estimate}
\acro{MSE}{mean square error}

\acro{RD}[RD]{raw data}
\acro{RDL}{"random data limit"}
\acro{r.v.}{random variable}                               
\acro{R.V.}{random vector}
\acro{RMS}{root mean square}
\acro{RNN}{recurrent neural network}

\acro{PDF}{probability density function}
\acro{PoT}{peak over threshold}
\acro{PPF}{percent-point function}
\acro{SGD}{Stochastic Gradient Descent}
\acro{SPD}[SID]{sparsity-inducing distribution}
\acro{SIDCo}{Sparsity-Inducing Distribution-based Compression}
\acro{GPD}[GP]{generalized Pareto}

\acro{WER}{word error rate}
\end{acronym}
\def\figpath{Figures/ElzanatyFittedDistribution}
\newcommand{\scheme}{\textit{\ac{SIDCo} }} 
\def\d{d} 
\def\k{k} 
\def\kh{\hat{\k}} 
\newcommand{\topk}{{\text{Top}_{\k}}} 
\newcommand{\randk}{{\text{Rand}_{\k}}} 
\def\N{N}  
\def\x{\mathbf x}
\def\xi [#1]{\x_{\{#1\}}}
\def\xin[#1]{{\x}_{\{#1\}}^{n}}
\def\gr{g} 
\def\g{\mathbf \gr}
\def\gi [#1]{{\g}_{\{#1\}}}
\def\Gi [#1]{{\G}_{\{#1\}}}
\def\gin [#1]{{\g}_{\{#1\}}^{n}}
\def\Gin [#1]{{\G}_{\{#1\}}^{n}}
\def\G{G} 
\def\C{\mathbb{C}} 
\def\Chat{\mathbb{C}_{\hat{\k}}} 
\def\Tk[#1]{{\mathbb{T}}_{\k}\left\{ #1 \right\} } 
\def\Ceta{\mathbb{C}_{\eta}} 
\def\sigmak{\sigma_{\k}}  
\def\p{p} 
\def\b{\beta} 
\def\a{\alpha} 
\def\loc{a}
\def\bhat{\hat{\beta}} 
\def\ahat{\hat{\alpha}} 
\def\lochat{\hat{a}}
\def\sign{\textrm{sign}}                                              
\def\erf{\textrm{erf}}
\def\erfc{\textrm{erfc}}
	\newcommand{\normo}[1]{{\left\lVert#1\right\rVert}_{0}}


\twocolumn[
\mlsystitle{An Efficient Statistical-based Gradient Compression Technique for Distributed Training Systems}



\mlsyssetsymbol{equal}{*}

\begin{mlsysauthorlist}
\mlsysauthor{Ahmed M. Abdelmoniem}{equal,kaust}
\mlsysauthor{Ahmed Elzanaty}{equal,kaust}
\mlsysauthor{Mohamed-Slim Alouini}{kaust}
\mlsysauthor{Marco Canini}{kaust}
\end{mlsysauthorlist}

\mlsysaffiliation{kaust}{Computer, Electrical and Mathematical Sciences and Engineering (CEMSE) Division, King
	Abdullah University of Science and Technology (KAUST),
	Thuwal 23955-6900, Saudi Arabia (email:{ahmed.sayed,
		ahmed.elzanaty,slim.alouini,marco}@kaust.edu.sa)}

\mlsyscorrespondingauthor{Marco Canini}{marco@kaust.edu.sa}

\mlsyskeywords{Distributed Deep Learning, Gradient Compression, Statistical Methods}

\vskip 0.3in

\begin{abstract}
The recent many-fold increase in the size of deep neural networks makes efficient distributed training challenging. Many proposals exploit the compressibility of the gradients and propose lossy compression techniques to speed up the communication stage of distributed training. Nevertheless, compression comes at the cost of reduced model quality and extra computation overhead. In this work, we design an efficient compressor with minimal overhead. Noting the sparsity of the gradients, we propose to model the gradients as random variables distributed according to some \acp{SPD}. We empirically validate our assumption by studying the statistical characteristics of the evolution of gradient vectors over the training process. We then propose \scheme\!, a threshold-based sparsification scheme that enjoys similar threshold estimation quality to \ac{DGC} while being faster by imposing lower compression overhead. Our extensive evaluation of popular machine learning benchmarks involving both \ac{RNN} and \ac{CNN} models shows that \scheme\! speeds up training by up to $\approx\!41.7\times$, $7.6\times$, and $1.9\times$ compared to the no-compression baseline, $\topk$, and \ac{DGC} compressors, respectively.
\end{abstract}
]

\printAffiliationsAndNotice{\mlsysEqualContribution} 

\section{Introduction}

As \acp{DNN} continue to become larger and more sophisticated, and ever increasing amounts of training data are used~\cite{megatronml,gpt3}, scaling the training process to run efficiently on a distributed cluster is currently a crucial objective that is attracting a multitude of efforts~\cite{nvidiasummit,PipeDream,bytescheduler,DML_survey}.
Modern deep learning toolkits~\cite{pytorch,tensorflow,horovod} are capable of distributed data-parallel training whereby the model is replicated and training data are partitioned among workers. Training \acp{DNN} in such settings in practice relies on synchronous distributed \ac{SGD} or similar optimizers (refer to \cref{apdx:dsgd} for more details).
Let $\N$ be the number of workers, and $\xi[i] \in \R^{\d}$ denote the model parameters with $\d$ dimensions at iteration $i$. At the end of the $i^{\text{th}}$ iteration, each worker runs the back-propagation algorithm to produce a local stochastic gradient, $\gin[i] \in \R^d$, at worker $n$. Then, each worker updates its model parameters using the final gradient aggregated across all workers as $\xi[i+1] = \xi[i] - \lambda \frac{1}{N} \sum_{n=1}^{N} \gin[i]$, where $\lambda$ is the learning rate. Gradient aggregation involves communication, which is either between the workers in a peer-to-peer fashion (typically through collective communication primitives like all-reduce) or via a parameter server architecture. Due to the synchronous nature of the optimizer, workers cannot proceed with the $(i+1)^{\text{th}}$ iteration until the aggregated gradient is available. Therefore, in distributed training workloads, communication is commonly one of the predominant bottlenecks~\cite{lin2018deep,Fang2019}. 

Addressing this communication bottleneck is the focus of this paper, where we pursue the path of improving training by reducing the communicated data volume via lossy gradient compression. Compression entails two main challenges: $(i)$ it can negatively affect the training accuracy (because the greater the compression is, the larger the error in the aggregated gradient), and $(ii)$ it introduces extra computation latency (due to the compression operation itself).
While the former can be mitigated by applying compression to a smaller extent or using compression with error-feedback~\cite{lin2018deep,ef-sgd}, the latter, if gone unchecked, can actually slow down training compared to not compressing. Surprisingly, much of the prior works in this area ignored the computation overheads of compression. Given that modern clusters for deep learning workloads nowadays use high speed, low latency network fabrics (e.g., $100$~Gbps Ethernet or InfiniBand), we argue that the efficiency of compression needs to be explicitly accounted for.

Motivated by the above observations, we propose \scheme\! compression.\footnote{Our code release is available at \url{https://github.com/sands-lab/SIDCo}.} \scheme\! builds on a sound theory of signal compressibility and enjoys linear complexity in the size of model parameters. Importantly, this affords for an implementation that parallelizes very efficiently using modern GPUs and other hardware targets. Thus, our work addresses a previously-overlooked yet crucial technical obstacle to using compression in practice, especially for communication-bounded training of large models.

\subsection{Related Work}

Efficient communication in distributed training has received extensive attention~\cite{PipeDream,Wangni18,grace}. One approach tries to maximize the overlap between the computation and communication to hide the communication overhead~\cite{PipeDream,bytescheduler}. However, the gains from these methods are bounded by the length of computation and are modest when the training is dominantly communication-bound. Alternatively, many approaches adopt methods that reduce the amount of communication, volume~\cite{lin2018deep} or frequency~\cite{patel2019communication}. In this work, we focus on gradient compression as it shows considerable benefits. 

\smartparagraph{Gradient Compression} is a well-known volume reduction technique~\cite{lin2018deep,Fang2019,Wangni18,Ahmed-AAAI-2020,grace}. Each worker applies a compression operator  $\C$ to  $\gin[i]$ to produce a compressed gradient vector that is transmitted for aggregation. Generally, the compressor $\C$ involves  quantization and/or sparsification operations.

\smartparagraph{Gradient Quantization} 
represents gradients with fewer bits for each gradient element. Under some conditions, quantization is known to achieve the same convergence as no compression~\cite{wu_memqsgd,pmlr-v119-fu20c}.
\ac{EC} is used to attain convergence when gradients are quantized using fewer bits~\cite{wu_memqsgd,ef-sgd,grace}. 
Given the standard 32-bit float number representation, the volume reduction of quantization is limited by $32\times$, i.e., $1$~bit out of $32$~bits, 
which may not be sufficient for large models or slow networks and it requires expensive encoding to pack the quantized bits~\cite{Ahmed-CONEXT-2020}.

\smartparagraph{Gradient Sparsification} 
selects a subset of gradient elements. It is generally more flexible than quantization, as it can reduce volume by up to $d\times$ and adapts easily to network conditions~\cite{Ahmed-DC2-INFOCOM21}.
It was shown that in some cases, up to 99.9\% of the non-significant gradient elements can be dropped with limited impact on convergence~\cite{lin2018deep,aji_sparse,shi2019understanding}. Gradient sparsification using $\topk$ -- selecting the top $k$ elements by their magnitude -- is known to yield better convergence compared to other compression schemes, e.g., Random-$k$~\cite{lin2018deep,Alistarh18_sparse}. However, $\topk$ or its variants are notorious for being computationally inefficient~\cite{grace}. 
$\topk$ selection does not perform well on accelerators such as GPUs~\cite{ShanbhagMIT2018}. For instance, in many cases, it is reported that $\topk$ imposes high overheads and worsens the run-time of distributed training~\cite{shi2019understanding, grace}.

\subsection{Background and Motivation}
The main challenge with using gradient compression (e.g., sparsification or quantization) is the computational overhead it introduces in the training.  If the overhead is greater than the reduction gains in communication time, the overall iteration time increases. Hence, to be useful, a robust compressor should have a low overhead~\cite{Fang2019,shi2019understanding}.
As presented earlier, one of the dominantly robust compressors is $\topk$, however it is also computationally heavy~\cite{Fang2019,shi2019understanding,shi2021towards}.
Because of this, $\topk$, for large models, results in either an increased training time or unsatisfactory performance benefits.

Numerous efforts based on algorithmic or heuristic approaches have been dedicated to enhancing the performance of $\topk$~\cite{lin2018deep,shi2019understanding,ShanbhagMIT2018,SketchML}. Existing fast implementations of $\topk$ are compute-intensive (e.g., on CPU, the computational complexity is $\mathcal{O}(\d~\log_{2}\k)$)~\cite{ShanbhagMIT2018}. Recently, more optimized implementations for multi-core hardware are proposed, which greatly depend on the data distribution and work best for a small number of $k$~\cite{ShanbhagMIT2018}. For instance, the Radix select algorithm used in PyTorch is $\mathcal{O}\left(\lceil{b}/{r}\rceil~\d\right)$ where $b$ is the number of bits in the data values and $r$ is the radix size~\cite{pytorch}. Yet, using gradient vectors of various sizes, $\topk$ is the slowest on GPUs and not the fastest on CPUs, as shown later in our micro-benchmark and in \cref{apdx:moremicrobench}.

In the context of gradient compression, {\em Threshold-based methods}, aiming to overcome the overhead of $\topk$, select in linear time gradient elements larger in magnitude than a threshold $\eta$. \ac{DGC}~\cite{lin2018deep} proposes to sample a random subset of the gradients (e.g., 1\%), apply $\topk$ on the sampled sub-population to find a threshold which is then used to obtain the actual $\topk$ elements hierarchically.\footnote{Aside from the expensive random sampling, in worst case, DGC invokes $\topk$ twice, once on the subset to obtain a threshold and another to obtain $\k$ elements if the number of elements obtained via the threshold are more than the target $\k$.} Even though DGC leads to improved performance over $\topk$, its computational complexity is still in the same order of $\topk$'s complexity. {\em Threshold estimation methods} on the other hand, are shown to attain linear time complexity~\cite{aji_sparse,Alistarh18_sparse,Dryden2016CommunicationQF}.

Recently, several works have leveraged certain features of the gradients to enhance the training process~\cite{Narang2018,pmlr-v119-fu20c}. Some approaches leveraged these features and devised heuristics to estimate and find the $\topk$ threshold which exhibits lower compression overhead compared to $\topk$ and \ac{DGC}~\cite{Fang2019,shi2019understanding}. 
%
In particular, RedSync~\cite{Fang2019} finds the threshold by moving the ratio between the maximum and mean values of the gradient; GaussianKSGD~\cite{shi2019understanding} adjusts an initial threshold obtained from fitting a Gaussian distribution through an iterative heuristic to obtain the $\topk$ elements. Nevertheless, the threshold estimation of these methods is of bad quality and the number of selected elements, $\hat{k}$,  varies significantly from the target $\k$ (\cref{sec:experiments}).
 
In this work, we propose a statistical approach to estimate an accurate threshold for selecting the $\topk$ elements with minimal overhead. In particular, we exploit the compressibility of the gradients and opt for \acp{SPD} that fit the gradients well. For instance, double exponential (i.e., Laplace), double gamma and double generalized Pareto distributions have been used as sparsity-promoting priors in Bayesian estimation framework \cite{MonMouUma:18,ArmDunLee:13,BabMolKat:10}. Our study of the gradients supports the assumption for their compressibility and suitability for modeling the gradients as \acp{r.v.} distributed according to one of the \acp{SPD}. 


 \begin{figure*}[t]
  \centering
   \begin{subfigure}[ht]{0.4\linewidth}
  \includegraphics[width=1\linewidth]{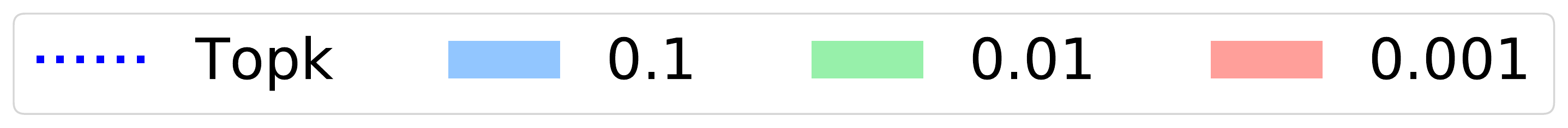}
 \end{subfigure}
  \\
    \begin{subfigure}[ht]{0.30\linewidth}
   \includegraphics[width=1\textwidth]{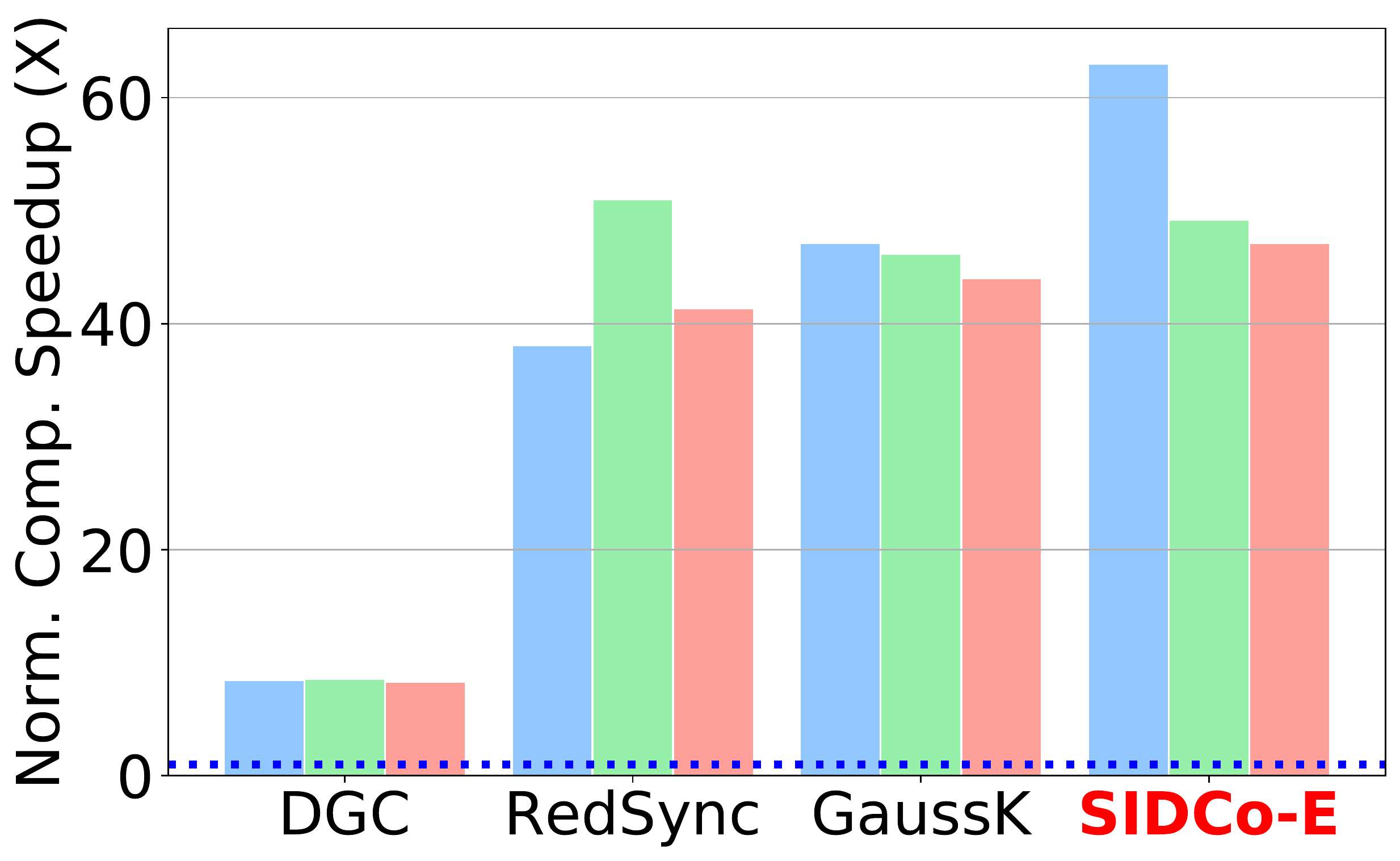}
      \caption{Compression with GPU}
    \label{fig:vgg16-cuda-speedup}
     \end{subfigure}
        \hfill
  \begin{subfigure}[ht]{0.30\linewidth}
     \includegraphics[width=1\textwidth]{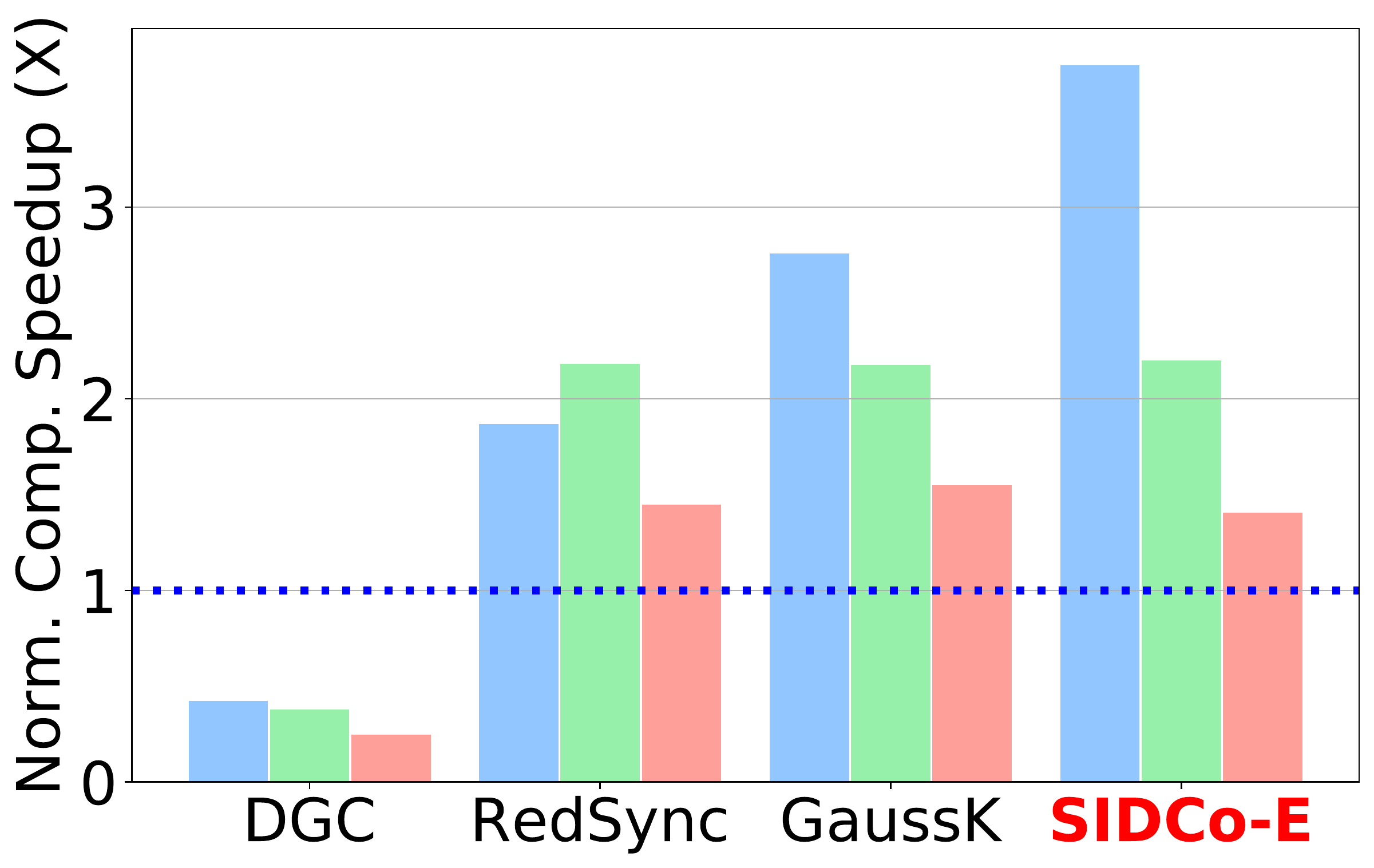}
    	\caption{Compression with CPU}
     \label{fig:vgg16-cpu-speedup}
    \end{subfigure}
    \hfill
    \begin{subfigure}[ht]{0.31\linewidth}
   \includegraphics[width=1\textwidth]{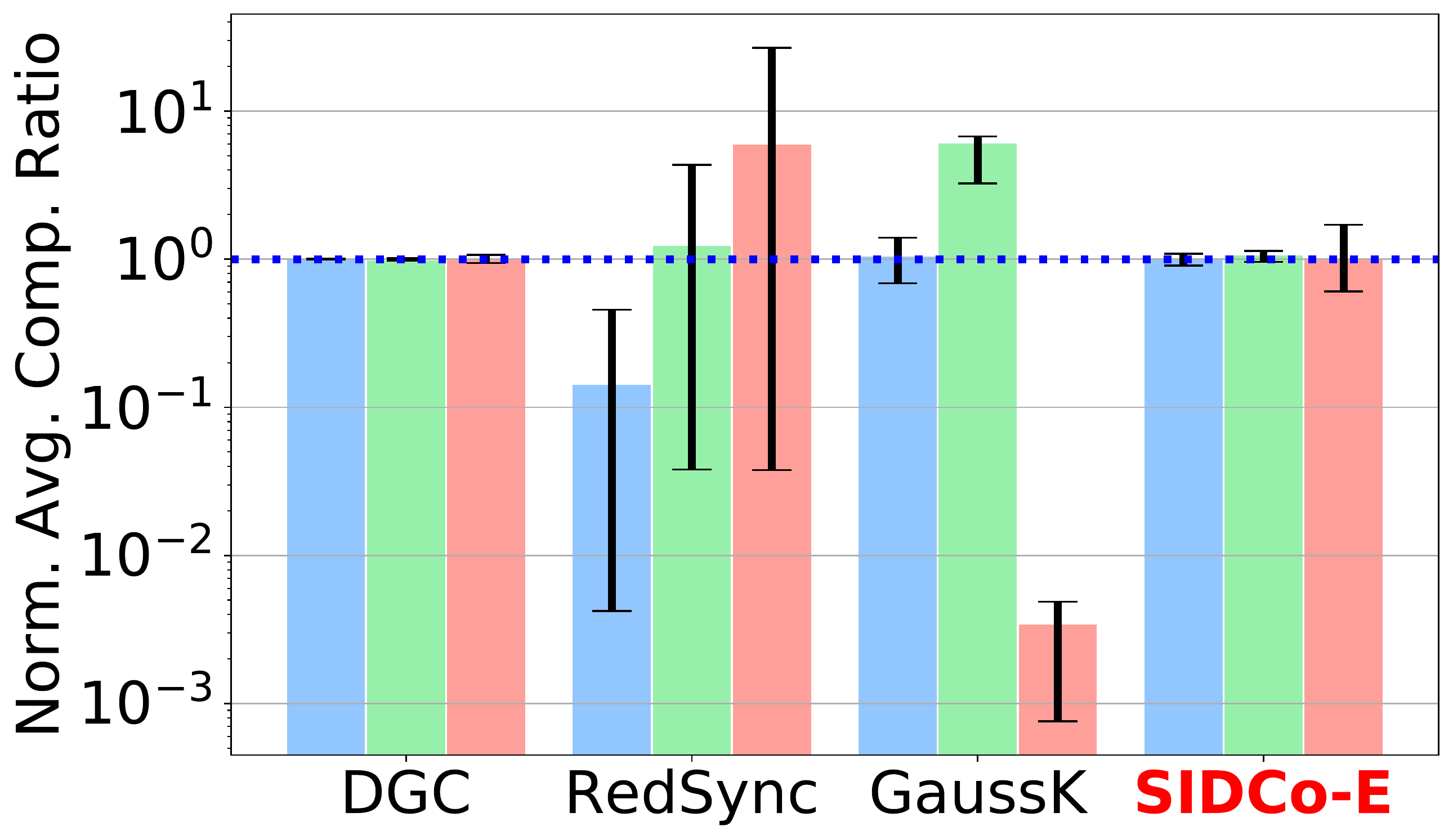}
    \caption{Quality of Threshold Estimation}
     \label{fig:vgg16-good-8}
    \end{subfigure}
    \caption{The compression speedups over $\topk$ using different compression ratios $(0.1, 0.01, 0.001)$, on (a) GPU and (b) CPU. (c) shows the average estimation quality of the target $k$. The experiments are performed for VGG16 (\cref{tab:models}) with the setup detailed in \S\ref{sec:algorithm}.}
\end{figure*}

\begin{table*}[!t]
\caption{Summary of the benchmarks used in this work.}
\centering
\scalebox{0.61}{
    \begin{tabular}{ccccrrrrccc }
    \toprule
        Task & \makecell{Neural \\ Network} & Model 	&	Dataset	& \makecell{Training \\ Parameters} & \makecell{Per-Worker \\Batch Size} & \makecell{Learning \\ Rate} & Epochs & \makecell{Comm \\ Overhead} & \makecell{Local \\ Optimizer} & \makecell{Quality \\ metric}   
        \\\midrule
       
         \multirow{3}{*}{\makecell{Language \\ Modeling}} & \multirow{3}{*}{RNN} & \multirow{3}{*}{\makecell{ LSTM\\ \citep{lstm} \\ 2 layers-1500 hidden units}} & \multirow{3}{*}{\makecell{PTB\\ \citep{ptb}}} & 66,034,000 & 20 & 22 & 30 & \textbf{\textcolor{red}{94\%}} & NesterovMom-SGD & Test Perplexity
         \\\\\\
          \midrule
          
         \multirow{2}{*}{\makecell{Speech \\ Recognition}} & \multirow{2}{*}{RNN} &  \multirow{2}{*}{\makecell{ LSTM \\ 5 layers-1024 hidden units}} & \multirow{2}{*}{\makecell{AN4\\ \citep{an4}} }& 43,476,256 & 20 & 0.004 & 150 & \textbf{\textcolor{magenta}{80\%}} & NesterovMom-SGD & WER \& CER  
         \\\\ 
         \midrule
         
        \multirow{6}{*}{\makecell{Image \\ Classification}} & \multirow{6}{*}{CNN} & {\makecell{ResNet-20\\ \citep{resnet152}}} \ & {\makecell{CIFAR-10\\ \citep{cifar10}}}	&	269,467 & 512 & 0.1 & 140 & \textcolor{black}{10\%} & SGD &\multirow{6}{*}{Top-1 Accuracy} 
        \\ 
        & & {\makecell{VGG16\\ \citep{vgg}}} & CIFAR-10 &	14,982,987 & 512 & 0.1 & 140 &  \textbf{\textcolor{orange}{60\%}} & SGD &  
        \\
         & & ResNet-50 & {\makecell{ImageNet\\ \citep{imagenet}}} &	25,559,081 & 160 & 0.2 & 90 & \textbf{\textcolor{orange}{72\%}}  & NesterovMom-SGD &   
         \\
          & & VGG19 & ImageNet & 143,671,337 & 160 & 0.05  & 90 & \textbf{\textcolor{magenta}{83\%}} &  NesterovMom-SGD &  \\ 
        \bottomrule
    \end{tabular}
    \label{tab:models}
    }
\end{table*}

To motivate our approach, we conduct initial micro-benchmark experiments to evaluate the compression overhead of sparsification techniques: $\topk$, \ac{DGC} (which uses random sub-sample for threshold calculation), RedSync and GaussianKSGD (which heuristically estimate the threshold), and one of our proposed \scheme\! schemes that estimates the threshold via a multi-stage fitting (\cref{sec:analysis}). We use both CPU and GPU to benchmark the performance (see~\cref{apdx:clusters}). We show the speed-up of different compressors normalized by the compression speed of $\topk$. We observe from the results that methods based on random sub-sampling (e.g., \ac{DGC}) excel on GPU (\cref{fig:vgg16-cuda-speedup}), but they imposes huge overhead on CPU and leads to DGC performing significantly worse than $\topk$ on CPU (\cref{fig:vgg16-cpu-speedup}). In contrast, methods that are based on estimating a threshold over which only $\k$ elements are selected, impose consistently lower compression overhead compared to $\topk$ and \ac{DGC} on both GPU and CPU. This shows that, except for linear time threshold-based methods, a variable compression overhead is to be expected on different architectures (e.g., CPU, GPU, TPU, FPGA or AI chips).\footnote{We note that many efforts are dedicated to the optimization and enabling of fast training on low-cost devices such as CPUs instead of opting for expensive hardware accelerations~\cite{Vincent2011,Das2018,Beidi2020}.} \cref{fig:vgg16-good-8} shows the normalized actual compression ratio (i.e., $\hat{k}/k$) for various schemes; note that the heuristic approaches fail to obtain the right threshold, leading to unpredictable behavior. 

\subsection{Contributions}
In this work, we make the following contributions:
\begin{itemize}[noitemsep,topsep=0pt,leftmargin=10pt]
\item We exploit the sparsity of the gradients via modeling the gradients as \acp{r.v.} with \acp{SPD} and propose a multi-stage fitting technique based on \ac{PoT} which works well with aggressive sparsification ratios and adapts to the distribution changes of the gradient.
\item We design \scheme\!, a threshold sparsification  method with  closed-form expressions for three \acp{SPD} to keep the compression overhead as low as possible. 
\item We show that \scheme\! consistently outperforms existing approaches via an extensive set of numerical and experimental evaluation on different benchmarks.
\end{itemize} 

\section{Proposed Gradient Model and Threshold Estimation}
\label{sec:analysis}
We discuss the compressibility of the gradients and their statistical distribution. Then, two threshold-based schemes are proposed that leverage the compressibility of the gradients.

\subsection{Gradient Compressibility} Signals, including gradient vectors of \acp{DNN}, can be efficiently compressed by exploiting some of their inherent features. Among these features, sparsity and compressibility are the key drivers for performing signal compression~\cite{Mal:08,Elzanaty19,ElzGioChi:19}.
%

We start by a precise definition of compressible signals.
\begin{definition}[Compressible Signals \cite{BarDavDua:11}]\label{def:compressable}
     The signal ${\g}\in \mathbb{R}^\d$ is compressible if the magnitudes of its sorted coefficients obey the following power law decay: 
\begin{equation}\label{eq:powerlaw}
\tilde{g}_{j} \leq c_{1}\, j^{-\p} \quad \forall j \in \{1,2,\cdots, \d \},  
\end{equation}
where $\tilde{\g}$ is the  sorted vector of ${|\g|}$ in descending order, ${\tilde{g}}_{j}$ is the $j^{\text{th}}$ element of  $\tilde{\g}$, and $\p>1/2$ is the decay exponent, for some constant $c_{1}$. For compressible signals with power law decay, the sparsification error, $\sigmak(\g)$, is bounded as
\begin{equation}\label{eq:bestkapprox}
    \sigmak (\g) \triangleq \norm{\g -\Tk[\g]}_{2} \leq c_{2}\, \k^{1/2-\p},
    \end{equation}
where $\norm{\x}_{q}={\left(\sum_{j=1}^{\d} \x_{j}^{q}\right)}^{1/q} $ is the $\ell_{q}$ norm of $\x$, $\Tk[\cdot]$ is the $\topk$ sparsification operator that keeps only the largest $\k$ elements in magnitude and set the others to zero, $\Tk[\g]$ is a \k-sparse vector with only $\k$ non-zero elements, and $c_2$ is a constant. The signal is more compressible if it decays faster, i.e., $\p$ is higher~\cite{Devore:98}.
\end{definition}
 \begin{property}[Gradients Compressibility]\label{property:gradientcompressible}
	The gradients generated during the training of most \acp{DNN} are compressible in the sense of \cref{def:compressable}. 
 \end{property}
 %
 \begin{myreasoning}
 From \cref{def:compressable}, it can be verified whether the gradient vectors are compressible. 
%
In \cref{apdx:statmethods}, we empirically validate that the gradients generated during the training of widely-adopted \acp{DNN} respect the condition for compressibility stated in   \eqref{eq:powerlaw} and \eqref{eq:bestkapprox}. 

 \end{myreasoning}
\subsection{Gradient Modeling}
The target now is to find the distribution of the gradient vector, while accounting for the compressibility of the gradients. The selection of sparsity-promoting priors that are able to efficiently capture the statistical characteristics of the  gradients with low computational complexity is a challenging task. However, we notice an essential property for the distribution of the gradients that permits high compression gains with low computational overhead.  
\begin{property}
\label{property:sparspromotdist}
 Gradients generated from many \acp{DNN} during the training can be modeled as \acp{r.v.} distributed according to some \aclp{SPD}, i.e., double exponential, double gamma and double \ac{GPD} distributions. More precisely, we have
 \begin{equation}
    \G \mathrel{\dot\sim}  \operatorname{Distribution}(\boldsymbol{\Theta}),
 \end{equation}
 where $\operatorname{Distribution}({\cdot})$ is one of the three \acp{SPD} with parameters indicated by the vector $\boldsymbol{\Theta}$ that generally depends on the iteration and worker's data. Also, the \ac{PDF} of $\G$, $f_{\G}(\gr;\boldsymbol{\Theta})$, is symmetric around zero.  
 \vspace{-0.1cm}
\end{property}
%
\vspace{-0.1cm}
\begin{myreasoning}
      Since the gradients are compressible as indicated by Property~\ref{def:compressable}, they  can be well approximated by sparse vectors with minimal error, as implied from \eqref{eq:bestkapprox}. Hence,  the distributions that promote sparsity are good candidates for fitting (or modeling) the gradient vectors.\footnote{For threshold estimation, we are interested in the distribution of the amplitude of a random element in the gradient vector.} For instance, the double exponential, double gamma,  double \ac{GPD}, and Bernoulli-Gaussian distributions have been used as priors that promote sparsity in \cite{MonMouUma:18,ArmDunLee:13,BabMolKat:10,Elzanaty19}.
   Property~\ref{property:sparspromotdist} is empirically verified for several \ac{DNN} architectures and datasets in \cref{sec:algorithm} and \cref{apdx:graddist}. 
\label{apdx:graddist}
\label{sec:empvalid}
\begin{figure*}[!ht]
\centering
\begin{subfigure}[h]{0.46\textwidth}
\includegraphics[width=1\textwidth]{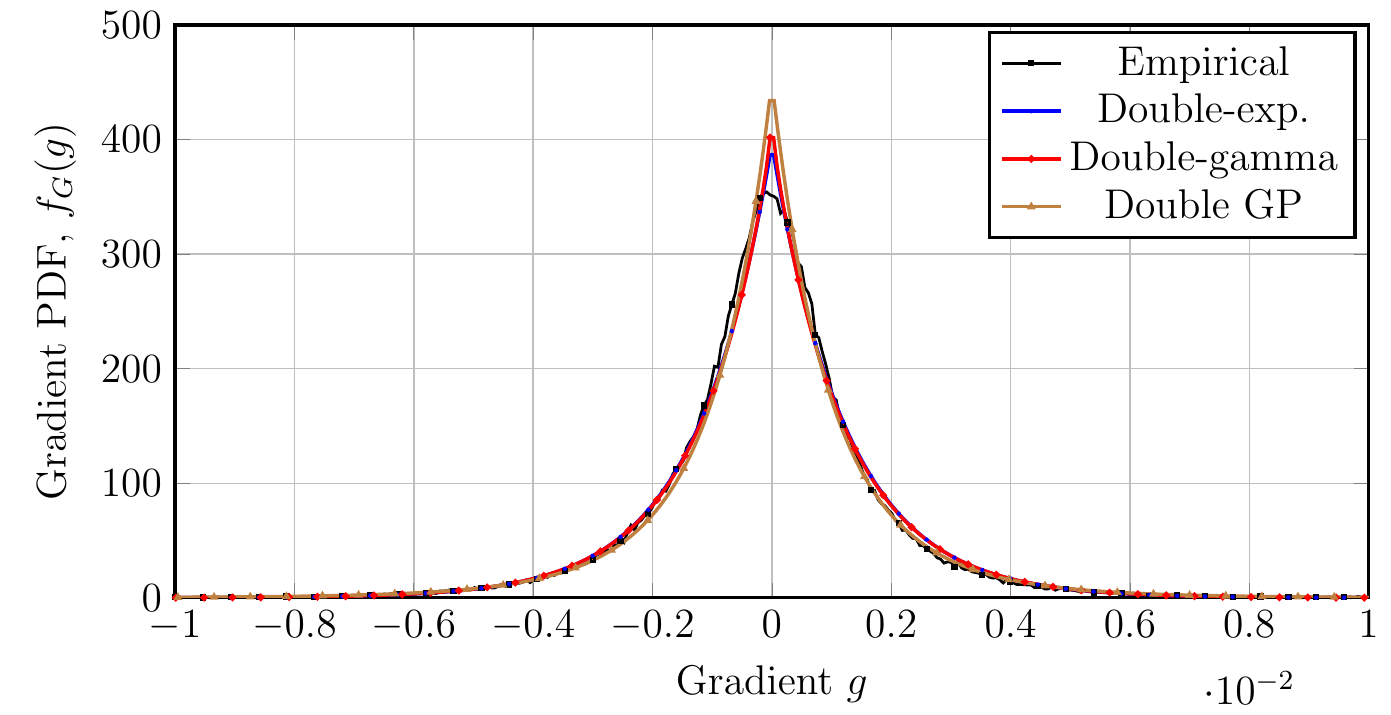}
    \caption{}
    \label{fig:PDF1}
    \end{subfigure}
\hfill
\begin{subfigure}[h]{0.46\textwidth}
 	\includegraphics[width=1\textwidth]{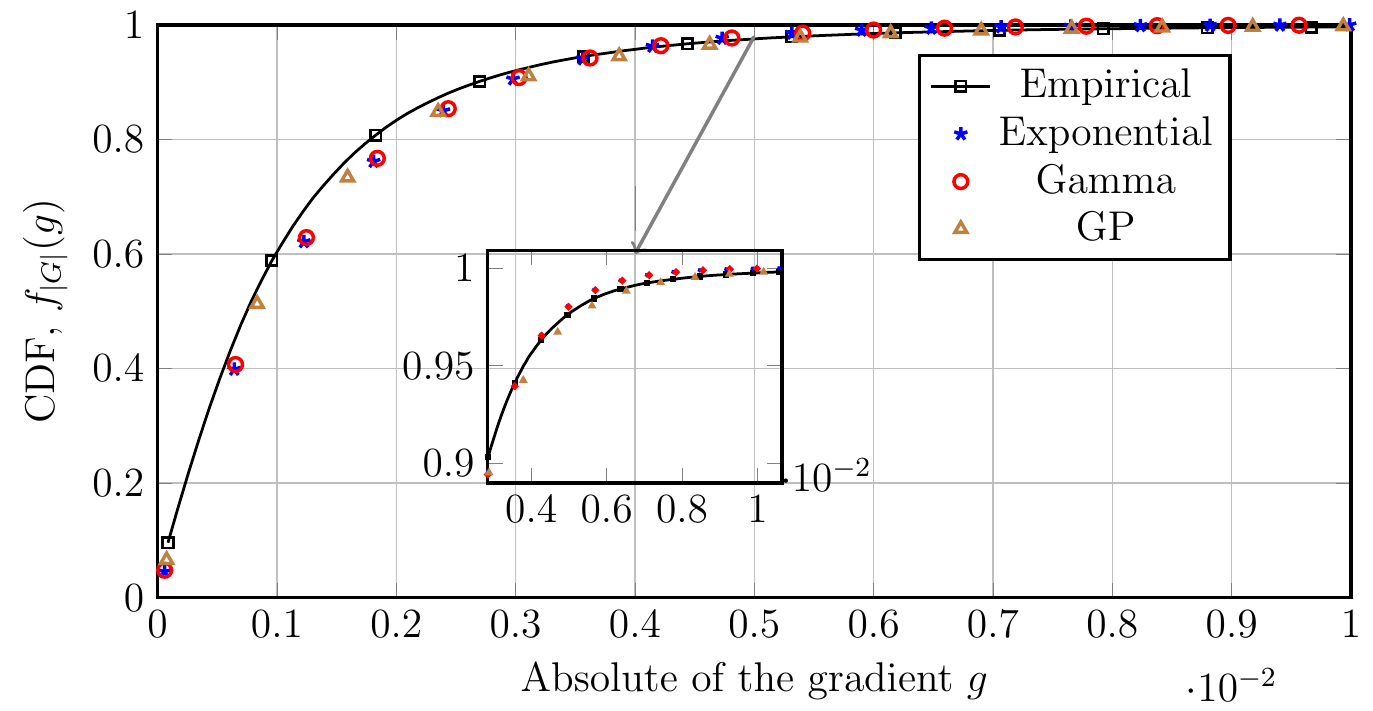}
    \caption{}
    \label{fig:CDF1}
\end{subfigure}
\\
\begin{subfigure}[h]{0.46\textwidth}
	\includegraphics[width=1\textwidth]{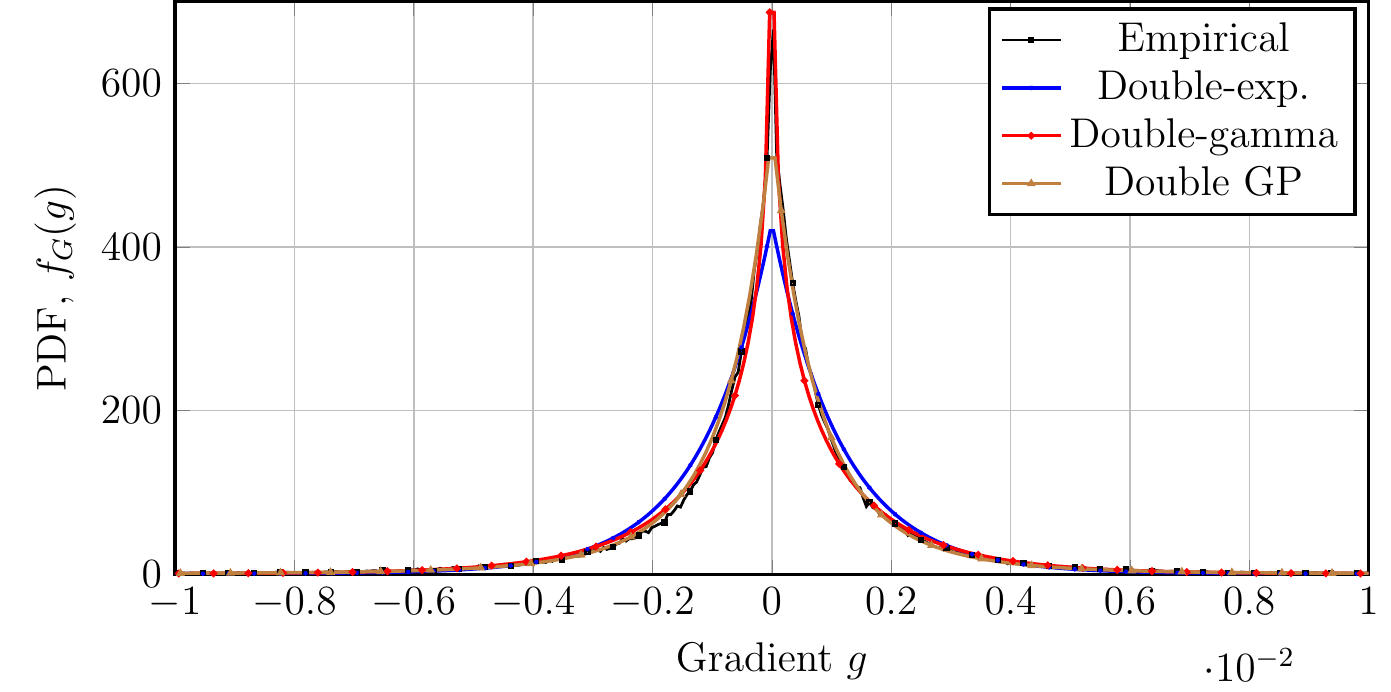}
    \caption{}
    \label{fig:PDF2}
    \end{subfigure}
\hfill
\begin{subfigure}[h]{0.46\textwidth}
\centering
\includegraphics[width=1\textwidth]{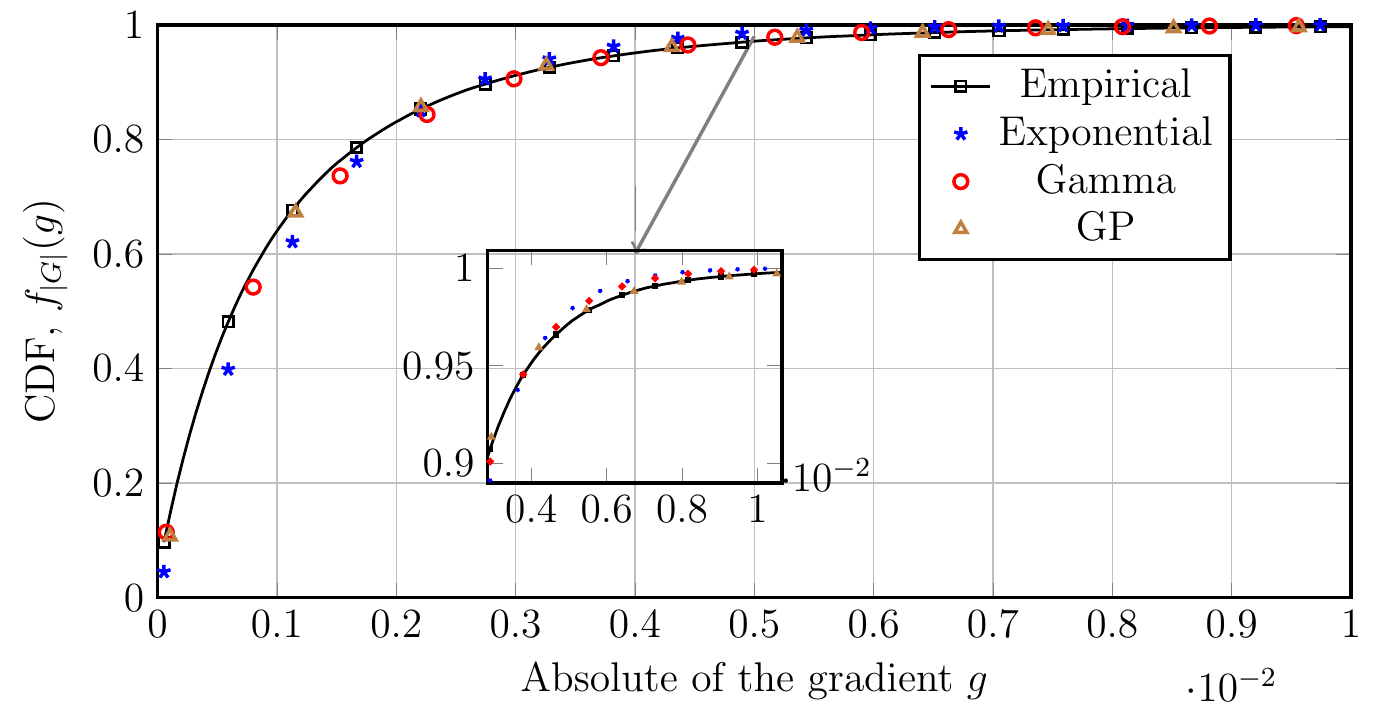}
\caption{}
    \label{fig:CDF2}
\end{subfigure}
\caption{Fitting using the three \acp{SPD} for the gradient vector along with the empirical distribution generated from training ResNet-20 on CIFAR10 using $\topk$ compressor without \ac{EC} mechanism, for the $100^{\text{th}}$ [(a) PDF, (b) CDF] and $10000^{\text{th}}$ [(c) PDF, (d) CDF] iterations.}
\label{fig:fitteddistributions}
\end{figure*}%
For instance, we consider the gradients resulting from the training of ResNet-20 with \ac{SGD}. The collected gradients are fitted by the three proposed \acp{SPD}, i.e., double exponential, double gamma, and double \ac{GPD} distributions. In \cref{fig:fitteddistributions}, the empirical distribution of the gradients and their absolutes, without \ac{EC} mechanism, are shown along with the distributions of the three fitted \ac{SPD} for two iterations.  We can notice in \cref{fig:PDF1} that the three proposed distributions can capture the main statistical characteristic of the gradients, as their \acp{PDF} approximate the empirical distribution for most of the gradient domain. This can be understood because of the compressibility of the gradients illustrated before. 
The compressibility of  \acp{r.v.} distributed according to one of the \acp{SPD} can be attributed to the shape of their \acp{PDF}, where the most probable values are those with small amplitudes. 
From \cref{fig:PDF1} and \cref{fig:PDF2}, it can be seen that the gradients at iteration $10000$ (\cref{fig:PDF2}) are more sparse than those at iteration $100$ (\cref{fig:PDF1}), where the \ac{PDF} at iteration $10000$ has higher values at smaller gradient values and it has faster tail. 
Regarding the \ac{CDF} of the absolute value of the gradients in \cref{fig:CDF1} and \cref{fig:CDF2}, we can see that the \acp{SPD} well approximate the empirical \ac{CDF}.
However, at the tail of the distribution, they tend to overestimate/underestimate the \ac{CDF} slightly. The reason is that the fitting is biased toward the majority of the data with lower values, as the gradient vector is sparse.   
\end{myreasoning}
\subsection{Single-Stage Threshold Estimator}\label{sec:sst-thresh}
We now describe the proposed compression scheme. First, the threshold that yields the target compression ratio, $\delta\triangleq \k/\d$, is derived for each of the three proposed \acp{SPD}. Then, we present a single-stage thresholding scheme for moderate compression ratios. For aggressive compression ratios with $\delta \ll 1$, e.g., $\delta \leq 0.001$, we propose a multi-stage thresholding scheme to accurately estimate the  threshold.
%
The sparsification threshold can be computed from the fitted distribution of the gradients as follows:
\begin{lemma}
		For $\G {\sim}\operatorname{Distribution}(\boldsymbol{\Theta})$ with \ac{CDF} $F_{\G}(\gr;\boldsymbol{\Theta})$, the threshold $\eta$ that yields the $\topk$ vector  with average target compression ratio $\delta \triangleq \k/\d$ can be derived as
	\begin{align} \label{eq:thresholdabs}
		\eta(\delta) &=  F_{|\G|}^{-1}(1-\delta;{\widehat{\boldsymbol{\Theta}}})  \\ &=F_{\G}^{-1}\left(1-\frac{\delta}{2};{\widehat{\boldsymbol{\Theta}}}\right),  \label{eq:threshold}
	\end{align}
where $\widehat{\boldsymbol{\Theta}}$ is the estimated parameters for the gradient distribution, $F_{|\G|}(g ;\widehat{\boldsymbol{\Theta}})$ is the \ac{CDF} of the absolute gradient ,  $F_{|\G|}^{-1}(p;\widehat{\boldsymbol{\Theta}}) \triangleq \left\{g\in \mathbb{R}^{+} : F_{|\G|}(g;\widehat{\boldsymbol{\Theta}})=p  \right\} $ is the inverse  \ac{CDF} of the absolute gradient at probability $p$, and $F_{\G}^{-1}(p;{\widehat{\boldsymbol{\Theta}}})$ is the inverse \ac{CDF} of the gradient, also known as quantile function or \ac{PPF}. 
\vspace{-0.2cm}
\end{lemma}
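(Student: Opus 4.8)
The plan is to derive the threshold directly from the definition of the target compression ratio $\delta$ and the symmetry of the fitted distribution asserted in Property~\ref{property:sparspromotdist}. The $\topk$ operator keeps exactly those elements whose magnitude exceeds the threshold $\eta$; treating the magnitude of a generic gradient coordinate as the random variable $|\G|$, the event $\{|\G| > \eta\}$ must have probability equal to the target sparsity $\delta$ on average, i.e.\ $\Pr(|\G| > \eta) = \delta$. First I would write this as $1 - F_{|\G|}(\eta;\widehat{\boldsymbol{\Theta}}) = \delta$, so that $F_{|\G|}(\eta;\widehat{\boldsymbol{\Theta}}) = 1-\delta$, and invert to get $\eta(\delta) = F_{|\G|}^{-1}(1-\delta;\widehat{\boldsymbol{\Theta}})$, which is exactly \eqref{eq:thresholdabs}. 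This step uses only the meaning of the compression ratio and the fact that, for the continuous \acp{SPD} considered, $F_{|\G|}$ is continuous and strictly increasing on $\mathbb{R}^{+}$, so the generalized inverse is well defined.

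Next I would convert the inverse CDF of $|\G|$ into the inverse CDF of $\G$ using symmetry of $f_{\G}(\gr;\widehat{\boldsymbol{\Theta}})$ around zero. For a symmetric PDF, the CDF of the absolute value satisfies $F_{|\G|}(g;\widehat{\boldsymbol{\Theta}}) = 2F_{\G}(g;\widehat{\boldsymbol{\Theta}}) - 1$ for $g \ge 0$ (since $\Pr(|\G| \le g) = \Pr(-g \le \G \le g) = F_{\G}(g) - F_{\G}(-g) = F_{\G}(g) - (1 - F_{\G}(g))$). Setting this equal to $1-\delta$ gives $F_{\G}(\eta;\widehat{\boldsymbol{\Theta}}) = 1 - \delta/2$, hence $\eta(\delta) = F_{\G}^{-1}(1 - \delta/2;\widehat{\boldsymbol{\Theta}})$, which is \eqref{eq:threshold}. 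One should note the threshold lives in the positive tail because $\delta$ is small, so the branch of the inverse is unambiguous.

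The argument is short and the only genuine subtlety — the part I would be most careful about — is the interpretation step: justifying that requiring $\Pr(|\G|>\eta)=\delta$ yields the correct \emph{average} compression ratio $\hat k/\d \approx \delta$. This follows because, modeling the $\d$ coordinates as (approximately) i.i.d.\ draws from the fitted distribution, the expected number of retained coordinates is $\d \cdot \Pr(|\G|>\eta) = \d\delta = \k$, so $\mbE[\hat k] = \k$; I would state this explicitly and point out that concentration of $\hat k$ around its mean for large $\d$ is what makes the estimate reliable in practice (and is exactly what the later multi-stage refinement in \cref{sec:sst-thresh} is designed to sharpen under aggressive ratios). Everything else — continuity and monotonicity of the CDFs, existence of the quantile function for each of the three \acp{SPD} — is routine and can be invoked without computation, with the closed-form expressions for the three specific distributions deferred to the subsequent development.
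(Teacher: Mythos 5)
Your proposal is correct and follows essentially the same route as the paper: you require $\mathbb{P}\{|\G|\geq\eta\}=\delta$ so that the expected number of exceedances among the $\d$ coordinates equals $\k$ (the paper phrases this via the binomial exceedance count), then invert $F_{|\G|}$ and use symmetry of the PDF around zero to pass to $F_{\G}^{-1}(1-\delta/2;\widehat{\boldsymbol{\Theta}})$. Your symmetry identity $F_{|\G|}(g)=2F_{\G}(g)-1$ is just an equivalent rewriting of the paper's step $\mathbb{P}\{|\G|\geq\eta\}=2\,\mathbb{P}\{\G\leq-\eta\}$, so there is no substantive difference.
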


\begin{proof}
From Property~\ref{property:sparspromotdist}, the gradients can be modeled as \acp{r.v.} distributed according to a \ac{SPD} with \ac{CDF} $F_{G}(g)$. Next, we would like to drive a threshold $\eta$ such that on average the absolute values of $\k$ elements out of $\d$ are larger than $\eta$. The problem can be seen as a binomial random process, where the number of trials is $\d$, the success probability is the probability that the absolute of the gradient is larger than $\eta$, i.e., $p \triangleq \mathbb{P}\left\{\left|\G\right| \geq \eta  \right\}$, and the average number of successes (exceedances) is $\k$. In this process, the number of exceedances is a binomial \ac{r.v.} with $\d$ number of trials and  $p$ probability of success \cite{Papoulis:02}. The mean of the number of exceedances is $\d\,p$. In order to have, on average, $\k$ elements out of $\d$, we should have $\mathbb{P}\left\{\left|\G\right| \geq \eta  \right\}=\delta.$
Hence, the threshold $\eta$ is the $100(1-\delta)$ percentile of the distribution of absolute gradients as in \eqref{eq:thresholdabs}. From the symmetry of the gradient distribution around zero, we have  $\mathbb{P}\left\{\left|\G\right| \geq \eta  \right\}=2\,\mathbb{P}\left\{\G \leq -\eta  \right\}$. Therefore, from \eqref{eq:thresholdabs}, we get 
$
    \eta\!=\! - {F}_{\G}^{-1}\left({\delta}/{2};{\widehat{\boldsymbol{\Theta}}}\right)\!=\!F_{\G}^{-1}\left(1\!-\!{\delta}/{2};
    {\widehat{\boldsymbol{\Theta}}}\right).
$
\vspace{-10pt}
\end{proof}
In the following, we report the threshold calculation for gradients modeled by double exponential distribution. The corresponding analysis for double gamma and  \ac{GPD} is presented in \cref{apdx:threshodcalculation}. 
\begin{corollary}\label{corollary:Laplacethreshold}
For double exponentially distributed gradients with scale parameter $\b$ and location zero (symmetric around zero), i.e., $ \G{\sim}  \operatorname{Laplace}(\b)$, the threshold that achieves  $\delta$ compression ratio can be computed as
\begin{equation}\label{eq:LaplaceThreshold}
    \eta= \hat{\b} \log\left(\frac{1}{\delta}\right), \qquad \quad \hat{\b}\triangleq \frac{1}{\d}\,\sum_{j=1}^{\d} \left|\gr_{j}\right|,
\end{equation}
where $\hat{\b}$ is the \ac{MLE} of the scale parameter. 
\vspace{-0.1cm}
\end{corollary}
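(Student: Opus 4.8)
The plan is to specialize the general quantile formula \eqref{eq:threshold} of the preceding lemma to the Laplace law, and then separately verify that $\hat\b$ is indeed the maximum-likelihood estimate of the scale parameter, so that substituting $\hat\b$ for $\b$ closes the corollary.

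First I would write out the CDF of $\G\sim\operatorname{Laplace}(\b)$ with location zero explicitly: $F_{\G}(\gr;\b)=\tfrac12 e^{\gr/\b}$ for $\gr\le 0$ and $F_{\G}(\gr;\b)=1-\tfrac12 e^{-\gr/\b}$ for $\gr\ge 0$, which is legitimate here because Property~\ref{property:sparspromotdist} (invoked by the lemma) guarantees symmetry about zero. Since the target ratio satisfies $\delta\le 1$, the argument $1-\delta/2$ lies in $[\tfrac12,1]$, so the inverse is taken on the upper branch: solving $1-\tfrac12 e^{-\gr/\b}=1-\delta/2$ gives $F_{\G}^{-1}\!\left(1-\delta/2;\b\right)=-\b\log\delta=\b\log(1/\delta)$. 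Plugging into \eqref{eq:threshold} and inserting the fitted scale yields $\eta(\delta)=\hat\b\,\log(1/\delta)$. Equivalently, one can read the same value off the absolute-gradient CDF $F_{|\G|}(\gr;\b)=1-e^{-\gr/\b}$ via \eqref{eq:thresholdabs}, which sidesteps the branch bookkeeping entirely.

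Next I would establish the MLE expression. Treating the $\d$ gradient coordinates $\gr_1,\dots,\gr_{\d}$ as i.i.d.\ draws with density $f(\gr;\b)=\tfrac{1}{2\b}e^{-|\gr|/\b}$, the log-likelihood is $\ell(\b)=-\d\log(2\b)-\tfrac1\b\sum_{j=1}^{\d}|\gr_j|$. Differentiating, $\ell'(\b)=-\d/\b+\b^{-2}\sum_{j=1}^{\d}|\gr_j|$, which has a unique stationary point at $\b=\tfrac1\d\sum_{j=1}^{\d}|\gr_j|$; since $\ell'(\b)>0$ below and $\ell'(\b)<0$ above this point, it is the global maximizer, so $\hat\b=\tfrac1\d\sum_{j=1}^{\d}|\gr_j|$ as claimed. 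Combining the two parts gives \eqref{eq:LaplaceThreshold}.

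There is essentially no deep obstacle here; the only points that require a little care are (i) evaluating the quantile on the correct half of the Laplace CDF — handled by noting $\delta\le 1$, or by working with $F_{|\G|}$ directly — and (ii) confirming that the stationary point of $\ell$ is a maximum rather than merely a critical point. Everything else is a direct substitution.
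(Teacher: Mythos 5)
Your proposal is correct and follows essentially the same route as the paper: the paper likewise reduces to the exponential law of $|\G|$ and applies the inverse CDF $F_{|\G|}^{-1}(1-\delta)=-\b\log\delta$ via \eqref{eq:thresholdabs}, exactly your ``absolute-gradient'' shortcut. The only difference is cosmetic --- you derive the MLE $\hat\b=\frac{1}{\d}\sum_{j}|\gr_j|$ from the log-likelihood, whereas the paper simply cites it --- so no further changes are needed.
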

\begin{proof}
For $ \G {\sim} \operatorname{Laplace}(\b)$, the gradient absolute is modeled as exponential distribution with scale $\b$,  $\left|\G \right| \sim \operatorname{Exp}(\b)$ \cite{EvaHasPea:93}. From the inverse \ac{CDF} of exponential distribution at probability $p$, i.e., {$F_{|\G|}^{-1}=-{\b} \log(1-p)$}, the \ac{MLE} of $\b$ \cite{EvaHasPea:93}, and \eqref{eq:thresholdabs}, the threshold in \eqref{eq:LaplaceThreshold} follows. 
\end{proof}
\vspace{-7pt}
\textbf{Gradient compression through thresholding:}
After computing the threshold, the compressed gradient vector is found as $\widehat{{\gr}}_{j} =\Ceta\left\{\gr_{j} \right\} \triangleq \gr_{j}\,  \mathbb{I}_{\left\{\left|\gr_{i}\right| \geq \eta \right\}},$ for each $j \in \{1,2,\cdots, \d\},$
where the vector $\widehat{\g} \in \mathbb{R}^{\d}$ is the compressed gradient vector, $\mathbb{I}_{\{\text{condition}\}}$ is an indicator function that equals one when the condition is satisfied and zero otherwise. In the following, we denote by  $\bar{\g}$ and $\kh$ the vector that contains only the exceedance non-zero gradients and their number, respectively.\footnote{Note that the compressed vector $\widehat{{\gr}}_{j}$  coincides with the $\topk$ sparsified gradient with $\k=\kh$, i.e., $\Ceta\left\{\gr_{j} \right\}=\mathbb{T}_{\kh} \left\{\gr_{j} \right\}$.}

\textbf{Possible issues in far tail fitting:}
The target compression ratio $\delta$ can be as low as $10^{-4}$. Therefore, in order to accurately estimate the threshold, the fitted distribution should tightly resemble the gradient distribution at the tail. 
This is quite challenging because the estimation of the parameters tends to account more for the majority of data at the expense of the tail. Hence, the threshold obtained from single-stage fitting is accurate up to some moderate compression ratios.  For lower compression ratios, the threshold tends to underestimate/overestimate the target $\delta$. Hence, a more accurate tail fitting method is required to reduce the bias induced by the majority of non-significant gradients, as we show next.

\subsection{Multi-Stage Threshold Estimator}
\label{sec:mst-thresh}
 We propose a multi-stage fitting approach to overcome the far tail estimation problem. For convenience, we start with the two-stage approach. First, the gradients are fitted with one of the three \acp{SPD} and compressed using the proposed procedure in \cref{sec:sst-thresh} with a threshold $\eta_{1}$ computed to yield  an initial compression ratio $\delta_{1} \triangleq \k_{1}/\d > \delta$. Then, the  vector of the exceedance gradients, $\bar{\g}$, is used to fit another distribution, defined precisely below. Then, another threshold $\eta_{2}$ is computed to achieve a compression ratio $\delta_{2} \triangleq \k_{2}/\k_{1}$ with respect to the exceedance gradients. The second compression ratio is chosen such that the overall compression ratio of the original gradient vector is the target ratio $\delta$, i.e., $\delta_{2}={\delta}/{\delta_{1}}$. Then, the estimated threshold from the last stage is applied to compress the original gradient vector. This procedure can be extended to multi-stages such that $\delta=\prod_{m=1}^{M}\delta_{m}$, where $M$ is the number of stages. 
 
 The remaining question is whether the exceedance (known also as \ac{PoT}) gradients have the same distribution as the original gradients before the compression. The extreme value theory in statistics can provide an answer for this question \cite{KotNad:00,Smith:84,Leadbetter:91,Coles:01}. Let $\kh_{m}$ be the number of exceedance gradients after the $m^{th}$ thresholding stage. Then, if we apply a threshold operator on a sequence of  \acp{r.v.}, $|\G_{1}|, |\G_{2}|, \cdots, |\G_{\kh_{m-1}}|$, the distribution of the exceedance \acp{r.v.}, ${|\bar{\G}_{1}|}, {|\bar{\G}_{2}|}, \cdots, {|\bar{\G}_{\kh_{m}}|}$,  can be approximated by a \ac{GPD} distribution for large enough threshold and vector dimension, irrespective of the original distribution of the gradients. 
 
Next, we exploit the extreme value theory to compute the threshold for the multi-stage approach.  
\begin{lemma}\label{lemma:PoT}
Considering that for the $m^{th}$ thresholding stage with $m\geq 2$, the absolute of the exceedance gradients, $\bar{|\G|_{m}}$, can be modeled as  $\operatorname{GP}(\a_{m},\b_{m},\loc_{m})$, where $-1/2 < \a_{m} < 1/2$, $\b_{m}$, and $\loc_{m}={\eta}_{m-1}$ are the shape, scale, and location parameters. The threshold that achieves a compression ratio $\delta_{m}$ is obtained as
\begin{align}\label{eq:msthresholddgpd}
    \eta_{m} &= \frac{\bhat_{m}}{\ahat_{m}} \left(e^{-\ahat_{m} \log\left(\delta_{m}\right)}-1 \right) +{\eta}_{m-1}, \\
    \ahat_{m} &\triangleq \frac{1}{2}\,  \left(1-\frac{\bar{\mu}^2}{\bar{\sigma}^2} \right), \\
    \bhat_{m} &\triangleq \frac{1}{2}\, \bar{\mu} \left(\frac{\bar{\mu}^2}{\bar{\sigma}^2} +1  \right),
\end{align}
where ${\eta}_{m-1}$ is the threshold computed at the previous stage and
  $\bar{\mu}$ and $\bar{\sigma}^2$ are the sample mean and variance of  $|{\bar{\g}}_{m}|-\eta_{m-1}$, respectively. For the proof of \cref{lemma:PoT}, please refer to \cref{apdx:prooflemmaPoT}.
\vspace{-0.1cm}
\end{lemma}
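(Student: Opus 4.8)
The plan is to mirror the binomial/exceedance argument used in the proof of Lemma~1, now applied to the sub-population of $\kh_{m-1}$ gradients that survived the $(m-1)^{\text{th}}$ thresholding stage, and then to invert the \ac{GPD} \ac{CDF} in closed form. First I would fix the parameterization: conditioned on being an exceedance of $\eta_{m-1}$, an absolute gradient is supported on $[\eta_{m-1},\infty)$, which is exactly why the location parameter is $\loc_m=\eta_{m-1}$; the lemma's hypothesis (justified above by the Pickands--Balkema--de Haan / extreme-value argument) is that the remaining shape is \ac{GPD}, so $|\bar\G|_m\sim\operatorname{GP}(\a_m,\b_m,\eta_{m-1})$ with
\[
F_{|\bar\G|_m}(y)=1-\left(1+\tfrac{\a_m\,(y-\eta_{m-1})}{\b_m}\right)^{-1/\a_m},\qquad y\ge\eta_{m-1}.
\]

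Next I would repeat the reasoning of Lemma~1 one level down: among the $\kh_{m-1}$ exceedances of the previous stage, the number that also exceed a candidate threshold $\eta_m$ is binomial with $\kh_{m-1}$ trials and success probability $\mathbb{P}\{|\bar\G|_m\ge\eta_m\}$, so requiring the mean count to equal $\k_m=\delta_m\,\k_{m-1}$ forces $\mathbb{P}\{|\bar\G|_m\ge\eta_m\}=\delta_m$; that is, $\eta_m$ is the $(1-\delta_m)$-quantile of the above \ac{GPD}. Setting $1-F_{|\bar\G|_m}(\eta_m)=\delta_m$ gives $\big(1+\a_m(\eta_m-\eta_{m-1})/\b_m\big)^{-1/\a_m}=\delta_m$, hence $\eta_m=\eta_{m-1}+\tfrac{\b_m}{\a_m}\big(\delta_m^{-\a_m}-1\big)$, which is \eqref{eq:msthresholddgpd} once one writes $\delta_m^{-\a_m}=e^{-\a_m\log\delta_m}$. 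I would note in passing that the case $\a_m<0$, where the \ac{GPD} has bounded support, is harmless: since $0<\delta_m<1$ the right-hand side still exceeds $\eta_{m-1}$.

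It then remains to produce the estimators $\ahat_m,\bhat_m$, for which I would use moment matching: writing $Y\triangleq|\bar\G|_m-\eta_{m-1}\sim\operatorname{GP}(\a_m,\b_m,0)$ with $\a_m<1/2$, one has $\mathbb{E}[Y]=\b_m/(1-\a_m)$ and $\operatorname{Var}(Y)=\b_m^2/\big[(1-\a_m)^2(1-2\a_m)\big]$. Equating these to the sample mean $\bar\mu$ and variance $\bar\sigma^2$ of $|\bar\g_m|-\eta_{m-1}$ yields $\bar\mu^2/\bar\sigma^2=1-2\a_m$, so $\ahat_m=\tfrac12(1-\bar\mu^2/\bar\sigma^2)$, and then $\bhat_m=\bar\mu(1-\ahat_m)=\tfrac12\bar\mu(1+\bar\mu^2/\bar\sigma^2)$, as claimed. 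The step I expect to need the most care is the \ac{GPD} bookkeeping — the sign convention for the shape parameter and which branch has a finite second moment — since the assumption $-1/2<\a_m<1/2$ is precisely what makes both the quantile inversion and the variance matching valid; I would therefore state the \ac{CDF} parameterization explicitly at the start so that no sign error propagates into \eqref{eq:msthresholddgpd}.
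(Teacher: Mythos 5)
Your proposal is correct and follows essentially the same route as the paper's proof: the GP model for the exceedances is taken as the hypothesis (justified via extreme-value theory), the threshold is obtained by inverting the GP CDF at the $(1-\delta_m)$ quantile, and the estimators come from the same moment-matching step, with your explicit binomial count over the $\hat{k}_{m-1}$ survivors and the remark on the $\a_{m}<0$ branch merely making explicit what the paper leaves implicit. Note also that your moment expressions $\mathbb{E}[Y]=\b_{m}/(1-\a_{m})$ and $\operatorname{Var}(Y)=\b_{m}^2/\bigl((1-\a_{m})^2(1-2\a_{m})\bigr)$ are the ones actually consistent with the stated estimators, whereas the paper's appendix writes these moments with $(1+\a)$ factors, an apparent sign slip that does not propagate to the final formulas.
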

\begin{corollary}\label{corollary:expPoT}
If the absolute of the gradients is modeled as  exponentially distributed \acp{r.v.}, $\left|\G_{m} \right| \sim \operatorname{Exp}(\b_{m})$, the distribution of the exceedance gradients over the threshold $\eta_{m-1}$, after proper shifting, is still  exponentially distributed, i.e., ${|\bar{\G}_{m}|}-\eta_{m-1}  \sim \operatorname{Exp}(\b_{m})$. The new stage threshold is 
\begin{align}
    \eta_{m}&= {\bhat}_{m} \log\left(\frac{1}{\delta_{m}}\right)+\eta_{m-1},\\ {\bhat}_{m}&\triangleq \frac{1}{\kh_{m-1}}\,\sum_{j=1}^{\kh_{m-1}} \left|\bar{\gr}_{j}\right|-\eta_{m-1},
\end{align}
where $\bar{\gr}_{j}$ is the $j^\text{th}$ element of the vector ${\bar{\g}}_{m}$. \textnormal{The proof is provided in \cref{apdx:proofcorollaryexpPoT}.} 
\end{corollary}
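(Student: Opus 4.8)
The plan is to combine the memorylessness of the exponential law — which makes the peak-over-threshold step of \cref{lemma:PoT} \emph{exact} rather than asymptotic — with the single-stage threshold of \cref{corollary:Laplacethreshold}, applied on the shifted exceedance scale, and then to cross-check the result against the $\ahat_m\to 0$ limit of \eqref{eq:msthresholddgpd}.

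First I would prove the distributional claim. If $|\G_m|\sim\operatorname{Exp}(\b_m)$, then for every $x\ge 0$,
\begin{equation}
  \mathbb{P}\big\{\, |\G_m|-\eta_{m-1}>x \,\mid\, |\G_m|>\eta_{m-1} \,\big\}
  = \frac{e^{-(x+\eta_{m-1})/\b_m}}{e^{-\eta_{m-1}/\b_m}} = e^{-x/\b_m},
\end{equation}
so the exceedances of $|\G_m|$ over $\eta_{m-1}$, shifted by $-\eta_{m-1}$, are again $\operatorname{Exp}(\b_m)$-distributed, i.e.\ ${|\bar{\G}_{m}|}-\eta_{m-1}\sim\operatorname{Exp}(\b_m)$. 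This is exactly $\operatorname{GP}(\a_m,\b_m,\loc_m)$ with $\loc_m=\eta_{m-1}$ in the limit $\a_m\to 0$, holding with equality for any dimension, so the large-$\d$ / large-threshold approximation behind \cref{lemma:PoT} is not needed here.

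Next I would compute the threshold by rerunning the argument of \cref{corollary:Laplacethreshold} on the shifted variable $Y_m\triangleq{|\bar{\G}_{m}|}-\eta_{m-1}\sim\operatorname{Exp}(\b_m)$: the value $t$ with $\mathbb{P}\{Y_m\ge t\}=\delta_m$ is $t=\b_m\log(1/\delta_m)$ from the inverse \ac{CDF} of the exponential, and replacing $\b_m$ by its \ac{MLE} from the $\kh_{m-1}$ exceedance samples — namely the sample mean $\bhat_m=\frac{1}{\kh_{m-1}}\sum_{j=1}^{\kh_{m-1}}\big(|\bar{\gr}_j|-\eta_{m-1}\big)$ — and translating back by $+\eta_{m-1}$ gives $\eta_m=\bhat_m\log(1/\delta_m)+\eta_{m-1}$, the claimed expression. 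As a consistency check with \eqref{eq:msthresholddgpd}, a first-order expansion yields $\frac{\bhat_m}{\ahat_m}\big(e^{-\ahat_m\log\delta_m}-1\big)\to\bhat_m\log(1/\delta_m)$ as $\ahat_m\to 0$, while $\ahat_m=\tfrac12\big(1-\bar{\mu}^2/\bar{\sigma}^2\big)\to 0$ forces $\bar{\mu}^2/\bar{\sigma}^2\to 1$, whence $\bhat_m=\tfrac12\bar{\mu}\big(\bar{\mu}^2/\bar{\sigma}^2+1\big)\to\bar{\mu}$, recovering the sample-mean estimator.

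The step requiring most care is purely bookkeeping: keeping the shift $\eta_{m-1}$ consistent across the two scales. The exceedance samples $\bar{\gr}_j$ are recorded on the original scale, so both the scale estimator $\bhat_m$ and the returned threshold $\eta_m$ must be offset by $\eta_{m-1}$; once that translation is tracked throughout, the corollary follows immediately from memorylessness and \cref{corollary:Laplacethreshold}.
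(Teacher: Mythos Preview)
Your proposal is correct and follows essentially the same route as the paper: compute the conditional tail $\mathbb{P}\{|\G_m|>\eta_{m-1}+y\mid |\G_m|>\eta_{m-1}\}=e^{-y/\b_m}$ to establish memorylessness, conclude that the shifted exceedances are $\operatorname{Exp}(\b_m)$, and then invoke the single-stage exponential threshold of \cref{corollary:Laplacethreshold} with the appropriate $\eta_{m-1}$ offset. Your additional $\ahat_m\to 0$ consistency check against \eqref{eq:msthresholddgpd} is a nice sanity argument that the paper does not include, but it is not needed for the proof itself.
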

In the proposed scheme, we exploit \cref{corollary:expPoT} such that when the absolute of the gradients is fitted by an exponential distribution in the first stage, the latter stages for the exceedance gradients are also fitted by  exponential distributions, i.e., multi-stage exponential. On the other hand, for gamma-fitted absolute gradients in the first stages, the latter stages are fitted by a \ac{GPD} distribution, from \cref{lemma:PoT}, i.e., gamma-\ac{GPD}.  Finally, for \ac{GPD} distributed absolute gradients in the first stage, the \ac{GPD} is still used for the \ac{PoT} data, from \cref{lemma:PoT}, i.e., multi-stage \ac{GPD}. 



\begin{figure*}[t!]
	\captionsetup[subfigure]{justification=centering}
	\centering
	\begin{subfigure}[ht]{0.5\linewidth}
		\includegraphics[width=1\linewidth]{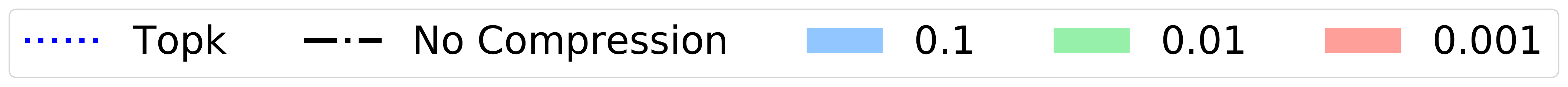}
	\end{subfigure}
	\\
	\begin{subfigure}[ht]{0.30\linewidth}
		\includegraphics[width=\linewidth]{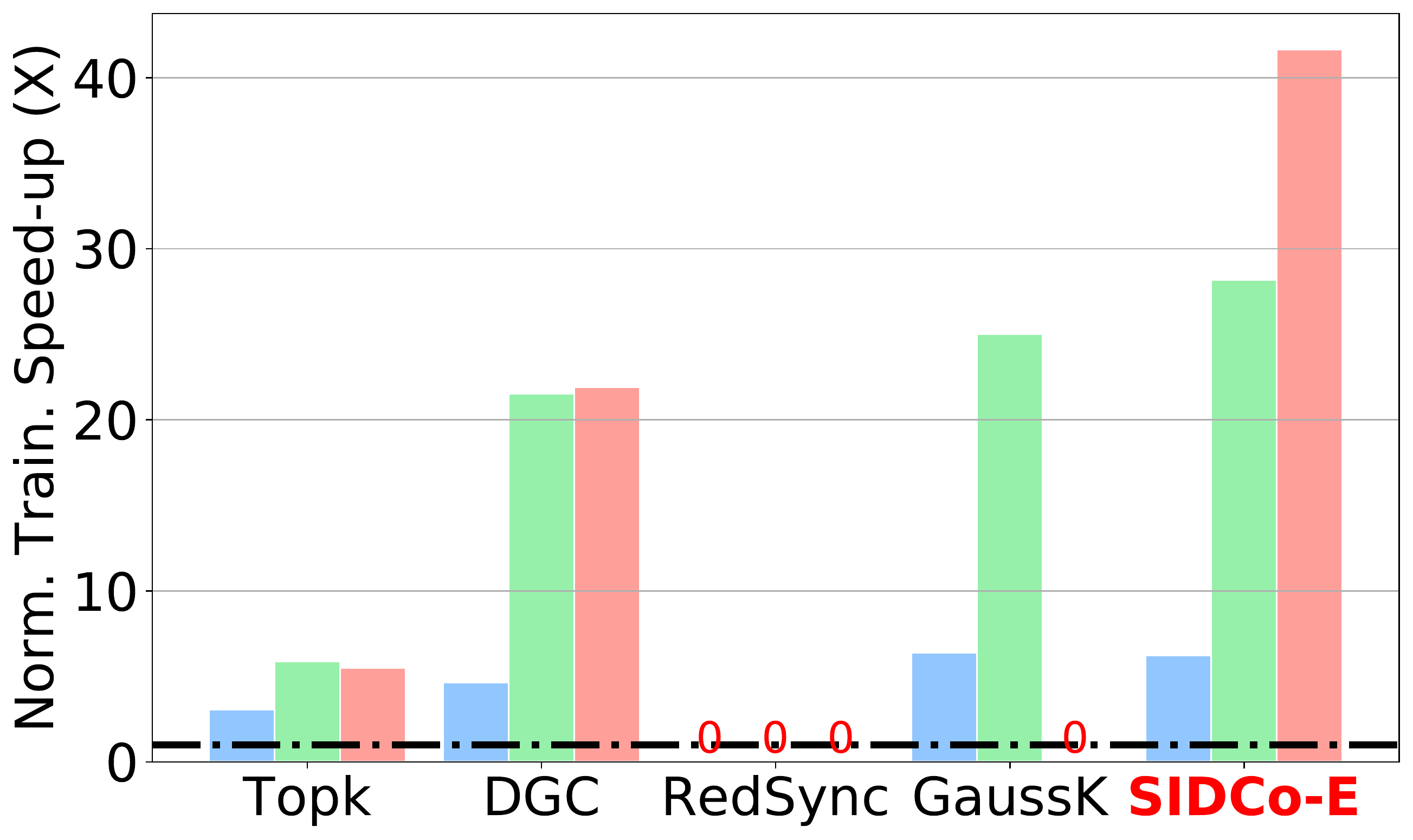}
		\caption{LSTM-PTB (Speed-up).}
		\label{fig:ptb-speedup-8}
	\end{subfigure}
	\hfill
	\begin{subfigure}[ht]{0.30\linewidth}
		\includegraphics[width=\linewidth]{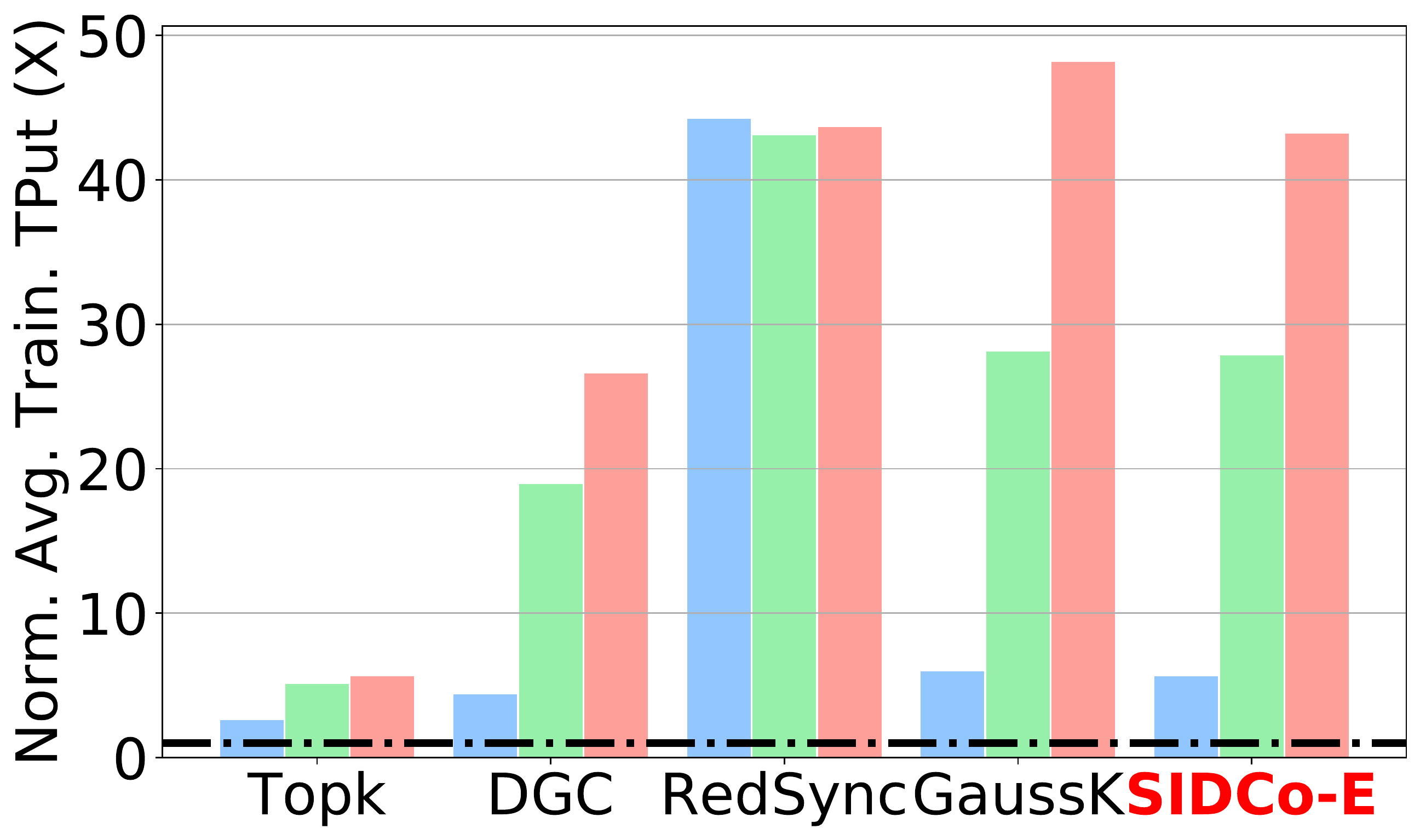}
		\caption{LSTM-PTB (Throughput).}
		\label{fig:ptb-throughput-8}
	\end{subfigure}
	\hfill
	\begin{subfigure}[ht]{0.30\linewidth}
		\includegraphics[width=\linewidth]{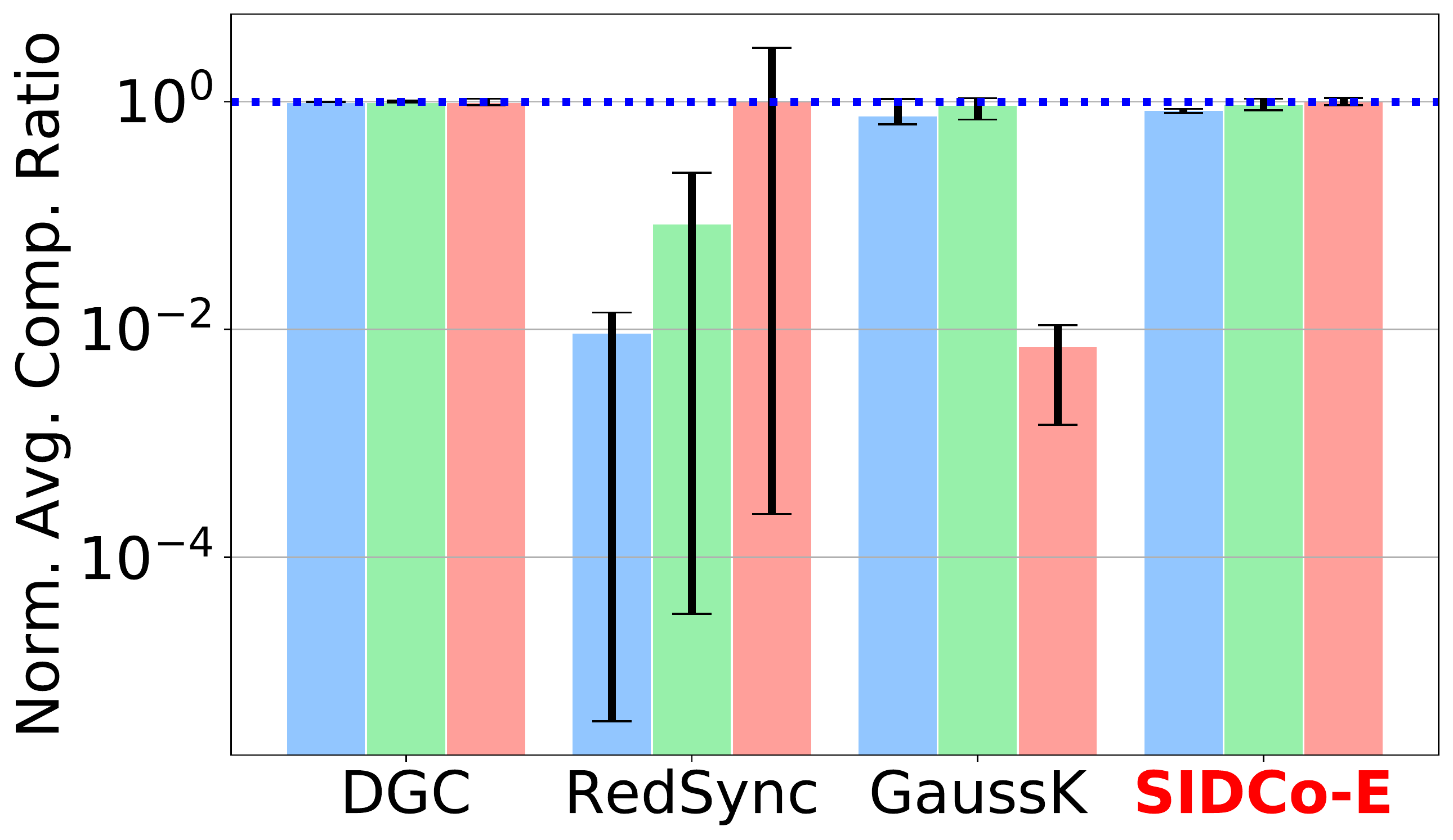}
		\caption{LSTM-PTB (Estimation Quality).}
		\label{fig:ptb-comp-8}
	\end{subfigure}
	\\
	\begin{subfigure}[ht]{0.30\linewidth}
		\includegraphics[width=\linewidth]{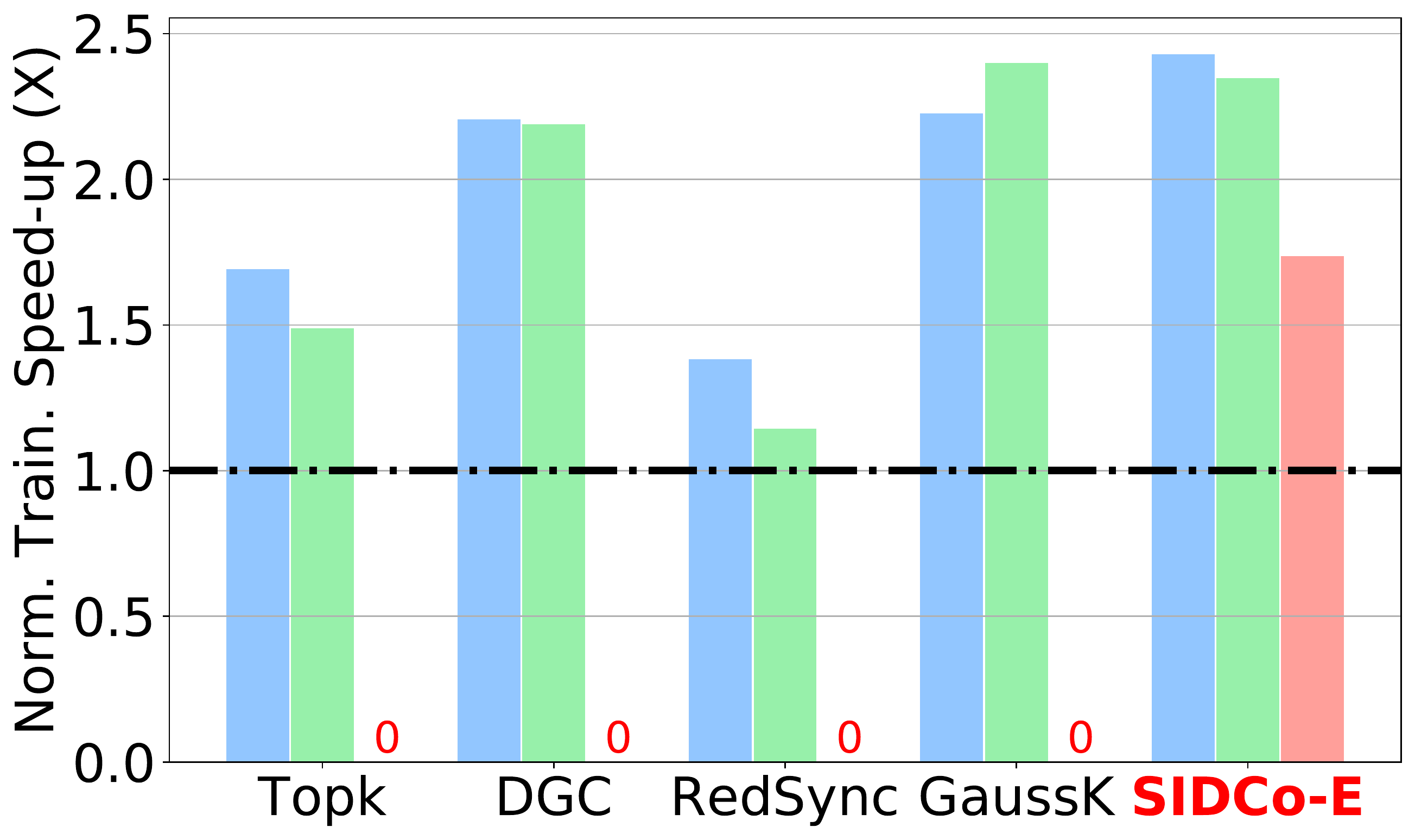}
		\caption{LSTM-AN4 (Speed-up).}
		\label{fig:an4-speedup-8}
	\end{subfigure}
	\hfill
	\begin{subfigure}[ht]{0.30\linewidth}
		\includegraphics[width=\linewidth]{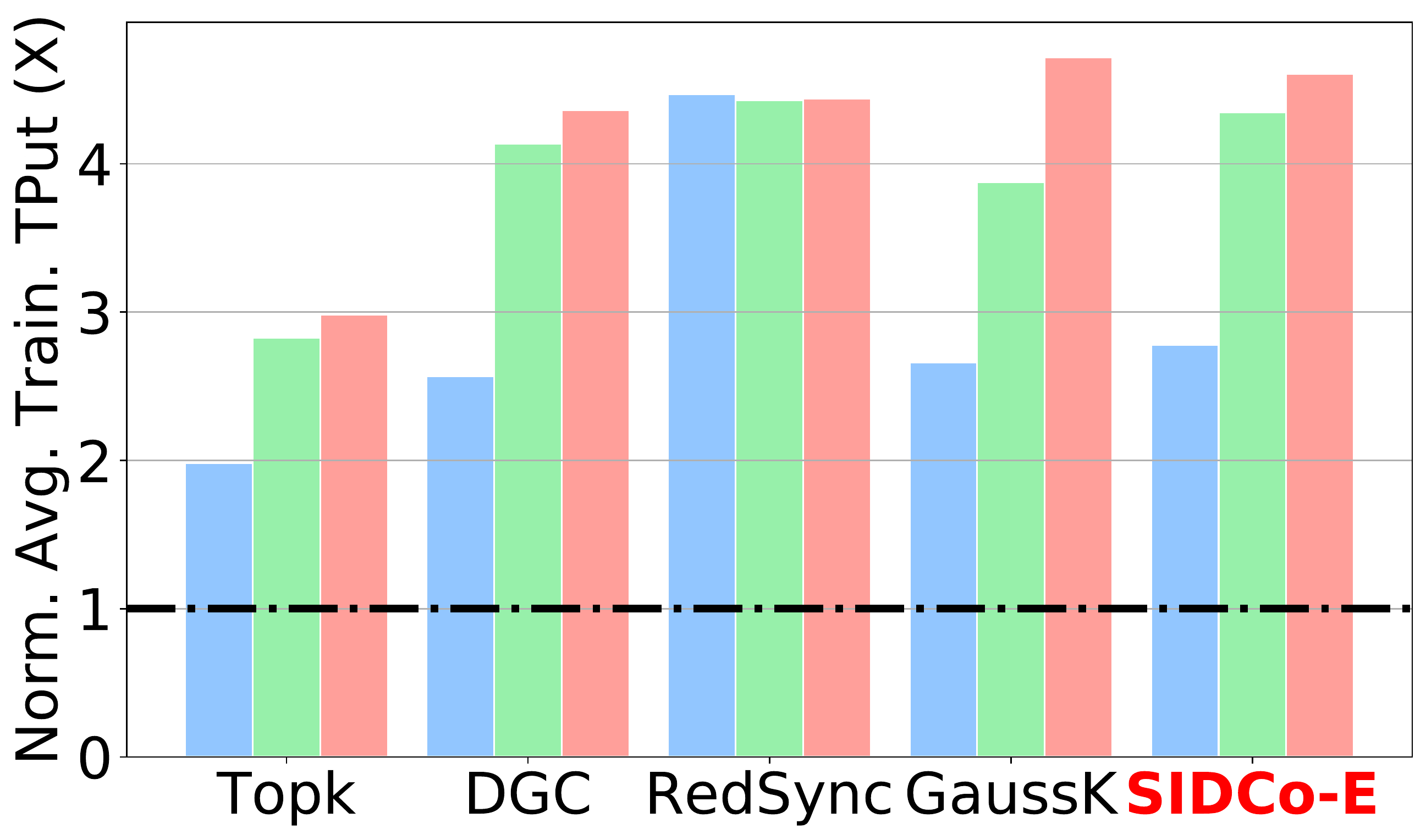}
		\caption{LSTM-AN4 (Throughput).}
		\label{fig:an4-throughput-8}
	\end{subfigure}
	\hfill
	\begin{subfigure}[ht]{0.30\linewidth}
		\includegraphics[width=\linewidth]{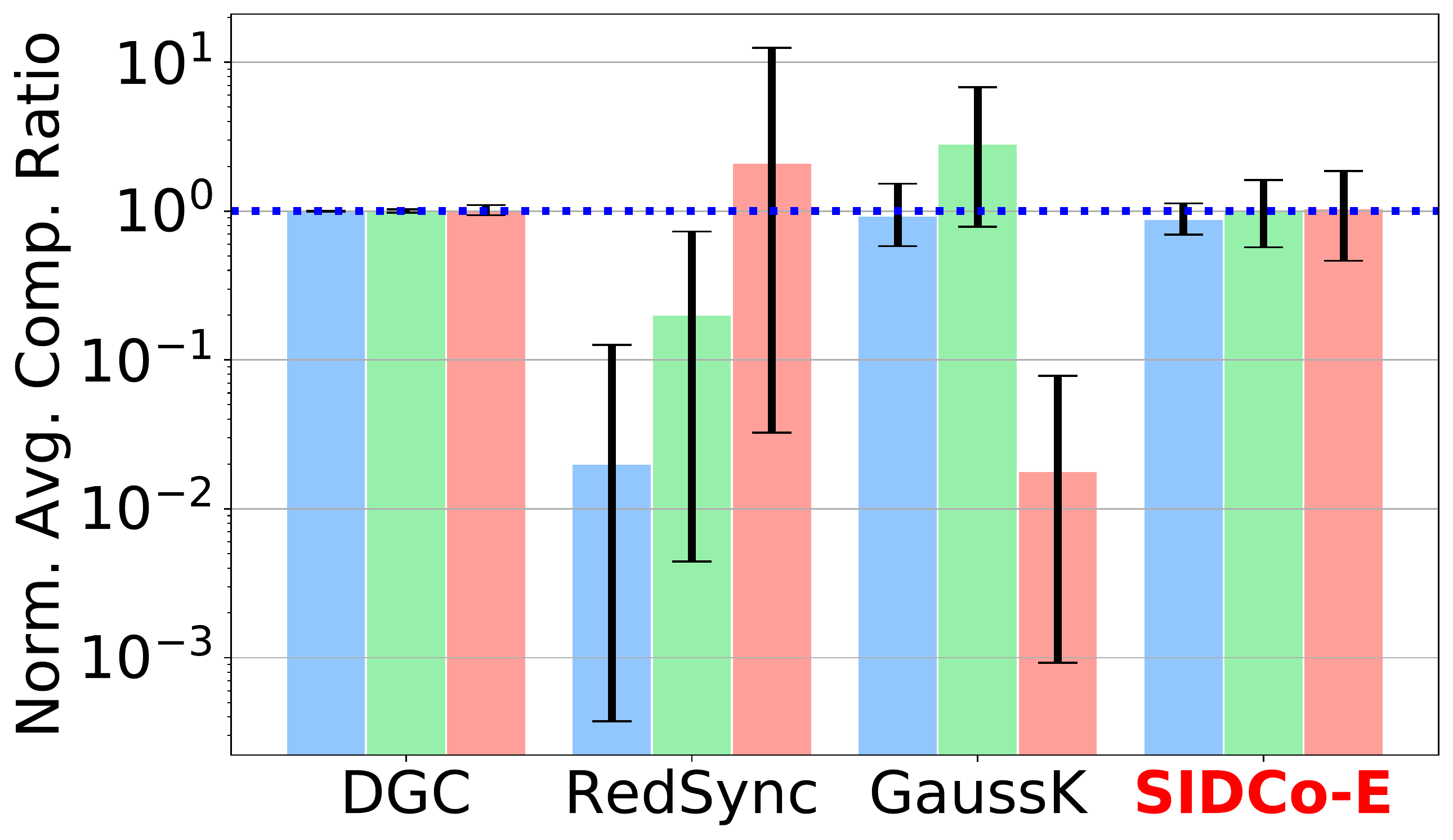}
		\caption{LSTM-AN4 (Estimation Quality).}
		\label{fig:an4-comp-8}
	\end{subfigure}
	\caption{Performance of training RNN-LSTM on PTB [(a),(b),(c)] and AN4  [(d),(e),(f)] datasets.}
	\label{fig:lstm}
\end{figure*}
%
\section{\ac{SIDCo} Algorithm}
\label{sec:algorithm}
\scheme\! leverages \acp{SPD} to obtain a threshold via the multi-stage threshold estimator described in \cref{sec:mst-thresh}. We select the number of stages, $M$, via an adaptive algorithm such that the estimation error, averaged over $Q$ iterations, is bounded below a predefined error tolerance, i.e, 
\begin{align}\label{eq:boundederror}
&\left\lvert  \hat{\delta}-\delta  \right\rvert \leq \epsilon \, \delta, && 0 \leq \epsilon < 1\,.
\end{align}
First, we describe the algorithm that \scheme\! follows to perform the gradient compression. The full pseudo-code is shown in \cref{algo:algo1} of the Appendix. The algorithm in each iteration, takes as input the gradient vector and produces a compressed vector. The vector is sparsified through the multi-stage fitting strategy described in \cref{sec:mst-thresh}. In each stage, the function {\em Thresh\_Estimation} uses the chosen \ac{SPD} to obtain a threshold. The algorithm dynamically adapts the number of stages $M$ by monitoring the quality of its estimated selection of elements and adjusting $M$ using function {\em Adapt\_Stages}. 

The algorithm starts by calling the {\em sparsify} function which takes the gradient and target ratio as the parameters. Then, the algorithm applies a multi-stage estimation loop of $M$ iterations. In each iteration, the vector is partially sparsified with the previously estimated threshold obtained from the previous stage $m-1$. Then, given the ratio $\delta_m$ at loop step $m$, the chosen \ac{SPD} distribution fitting is invoked via the function {\em Thresh\_Estimation} to obtain a new threshold. At the last stage (i.e., step $M$ of the loop), the resulting estimation threshold should approximate the threshold that would obtain the target ratio $\delta$ of the input vector. Then, the estimated threshold is used to sparsify the full gradient vector and obtain the values and their corresponding indices. For each invocation of the algorithm in each training iteration, the algorithm maintains statistics like the average ratio of the quality of its estimations over the past training steps $Q$. Then, at the end of every $Q$ training steps, the algorithm invokes {\em Adapt\_Stages} which adjusts the current number of stages $M$ based on user-defined allowable error bounds of the estimation (i.e., $\epsilon_H$ and $\epsilon_L$). After the adjustment, the next algorithm invocation will use the new number of stages $M$. The number of stages is adjusted only if the obtained ratio is not within the error bounds. 
\subsection{Convergence Analysis}
In the following, we present the convergence analysis of \scheme\!.
\begin{lemma}
\label{lemma:convanalysis}
Let  $\hat{\delta}$ be the average achieved compression ratio of the proposed scheme with bounded discrepancy with respect to the target $\delta$ with error tolerance $\epsilon$ as in \eqref{eq:boundederror} which is assured by Algorithm~\ref{algo:algo1} in the Appendix. Also, 
let $i$ be the current training iteration, then the convergence rate of the proposed scheme coincides with that of the \ac{SGD} if 
\begin{equation}
i>\mathbb{O}\left(\frac{1}{\delta^2 \, (1-\epsilon)^2}\right).
\end{equation}
\end{lemma}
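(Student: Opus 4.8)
The plan is to reduce the convergence claim for \scheme\! to a known convergence result for compressed \ac{SGD} with error-feedback, of the type established for general $\delta$-approximate compressors (e.g.\ the $\topk$ / $k$-contraction analyses in \cite{ef-sgd,Alistarh18_sparse,shi2019understanding}). The key structural observation is the footnote already noted in \cref{sec:sst-thresh}: the threshold compressor $\Ceta$ coincides with $\mathbb{T}_{\kh}$, the exact $\topk$ operator with $k=\kh$ elements retained. Hence, whatever number of elements $\kh$ the multi-stage estimator actually selects in a given iteration, the per-iteration compression is a genuine $\topk$ operation, which is a contraction with parameter governed by $\kh/\d$ rather than the target $\delta$. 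So the first step is to invoke \cref{lemma:convanalysis}'s hypothesis: Algorithm~\ref{algo:algo1} guarantees $|\hat\delta - \delta| \le \epsilon\,\delta$, i.e.\ the effective retained fraction is at least $(1-\epsilon)\,\delta$ on average. This gives a worst-case contraction coefficient of the form $1 - (1-\epsilon)\,\delta$ (or $(1-(1-\epsilon)\delta)$ inside the standard error-feedback recursions).

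Second, I would plug this effective compression parameter into the off-the-shelf convergence bound. For error-feedback \ac{SGD} with a $\gamma$-contraction compressor, the standard result (smooth, possibly nonconvex objective, bounded gradient variance) states that after $i$ iterations the average squared gradient norm is bounded by a term that behaves like the uncompressed \ac{SGD} rate plus an additional term scaling like $O\!\big((1-\gamma)/(\gamma^2\,i)\big)$ or $O\!\big(1/(\gamma^2 i)\big)$ coming from the accumulated error-feedback residual. Substituting $\gamma = (1-\epsilon)\,\delta$, the extra term is $O\!\big(1/((1-\epsilon)^2\,\delta^2\, i)\big)$. The convergence rate of compressed \ac{SGD} then "coincides" with that of plain \ac{SGD} (in order, i.e.\ the dominant $O(1/\sqrt i)$ or $O(1/i)$ term is unchanged) precisely once this perturbation term is dominated by the leading \ac{SGD} term, which happens when $i \gtrsim 1/(\delta^2 (1-\epsilon)^2)$ — exactly the claimed threshold $i > \mathbb{O}\!\big(1/(\delta^2(1-\epsilon)^2)\big)$.

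The third step is to handle the gap between "per-iteration $\kh$ varies" and "effective $\gamma = (1-\epsilon)\delta$ uniformly". Here I would use that the error-feedback analysis only needs the contraction to hold in expectation / on average over iterations (or invoke a deterministic lower bound $\kh \ge (1-\epsilon)\delta\,\d$ that the adaptive stage-control enforces up to the averaging window $Q$), together with the compressibility Property~\ref{property:gradientcompressible}: because the sorted gradient coefficients obey the power-law decay \eqref{eq:powerlaw}, retaining any $\kh \ge (1-\epsilon)\delta\,\d$ of them controls the residual $\sigma_{\kh}(\g)$ via \eqref{eq:bestkapprox}, so the contraction constant is genuinely bounded and the $\topk$-style inequality $\norm{\g - \mathbb{T}_{\kh}[\g]}^2 \le (1-\kh/\d)\norm{\g}^2$ (or its compressible-signal refinement) applies. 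Assembling these pieces into the standard telescoping/summation argument of the error-feedback proof yields the stated iteration bound.

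The main obstacle I anticipate is making the averaging rigorous: the bound $|\hat\delta-\delta|\le\epsilon\delta$ is an \emph{average} over $Q$-iteration windows, not a per-iteration guarantee, so in a given iteration $\kh$ could dip below $(1-\epsilon)\delta\,\d$. Reconciling this with the error-feedback recursion — which wants either a per-step contraction or a carefully controlled sequence of contraction constants whose "harmonic-type" average is what enters the final bound — is the delicate point; it likely requires either (i) strengthening the claim to a high-probability / in-expectation statement, (ii) appealing to the boundedness of the accumulated error buffer over each window of length $Q$, or (iii) simply assuming, as the lemma's phrasing ("assured by Algorithm~\ref{algo:algo1}") suggests, that the window is short enough that the average contraction is what matters. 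I would also need to be slightly careful that the GPD shape constraint $-1/2 < \a_m < 1/2$ from \cref{lemma:PoT} is what guarantees finite second moments of the exceedance distribution, so that the sample-mean/variance estimators feeding the threshold are well-behaved — though this affects the \emph{quality} of estimation (hence the validity of the $\epsilon$ bound) rather than the convergence reduction itself.
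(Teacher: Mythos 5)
Your proposal is correct and follows essentially the same route as the paper: both arguments note that the threshold compressor acts as a $\topk$-style contraction with effective ratio bounded below by $(1-\epsilon)\,\delta$ thanks to \eqref{eq:boundederror}, plug this into the error-feedback compressed-\ac{SGD} bound of \cite{ef-sgd} (Theorem II), and read off that the residual term is dominated once $i>\mathbb{O}\!\left(1/(\delta^2(1-\epsilon)^2)\right)$. The paper additionally phrases the per-iteration selection as a binomial \ac{r.v.} (genie-aided case first, then the estimated distribution) and, like you, treats the window-averaged guarantee on $\hat{\delta}$ as if it supplied the contraction in expectation, so the averaging subtlety you flag is present in the paper's proof as well rather than being a gap specific to your argument.
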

\begin{proof}
The convergence of the proposed scheme would mainly follow the existing convergence analysis of $\topk$~\cite{Alistarh18_sparse, aji_sparse, stich2018sparsified}, because \scheme\!
is designed to estimate a threshold for obtaining the top $\k$ elements. In contrast to $\topk$, the number of non-zero elements in the proposed scheme is a binomial \acs{r.v.}, $\hat{K}$, and not a constant. Second, the expected value of the estimated number of non-zero elements, $\hat{k} \triangleq \mathbb{E}\{\hat{K}\}$, may not coincide with the target $\k$, due to a possible  mismatch between the assumed \acs{SPD} of the stochastic gradients and their original distribution. The complete proof is detailed in Appendix~\ref{appnd:conanalyproof}.
\end{proof}
The proper selection of the distribution as a \ac{SPD} permits the actual compression rate to approach the designed compression ratio with small $\epsilon$.  This can be seen from the extensive numerical results in plots showing the estimation quality of \cref{fig:vgg16-good-8}, \cref{fig:resnet20-good-8}, \cref{fig:resnet50-comp-8}, and \cref{fig:compratio}.
 One can notice that on average $\hat{\k}/\k \approx 1 $, hence it resembles $\topk$. For some rare extreme cases, we have $\hat{\delta} \geq 0.8\, \delta$ (i.e., $\epsilon=20\%$), meaning that we need at most about $50\%$ more iterations than $\topk$ to reach the rate of \ac{SGD}. 
\section{Experimental Evaluation}
\label{sec:algorithm}



This evaluation answers the following questions:\\
$\bullet$ What benefits, in terms of training speed-up and model quality, does \scheme\! provide compared to state-of-the-art approaches (\emph{gains in training time to accuracy})?\\
$\bullet$ Are the improvements of \scheme\! only due to its faster training over other schemes (\emph{training throughput gains})?\\
$\bullet$ How accurate is the the threshold estimation of \scheme\! compared to the state-of-the-art (\emph{estimation quality})?

In the following, we describe the main results, and present more experimental results and scenarios in \cref{apdx:moreexp}.



\subsection{Experimental Settings}
\label{sec:experiments}
%
%
Unless otherwise mentioned, the default settings of the experiments are as follows.\\
\smartparagraph{Environment:}
We perform our experiments on 8 server machines equipped with dual 2.6 GHz 16-core Intel Xeon Silver 4112 CPU, 512GB of RAM, and 10 Gbps NICs. Each machine has an NVIDIA V100 GPU with 16GB of memory. The servers run Ubuntu 18.04, Linux kernel 4.15.0. We use PyTorch 1.1.0 with CUDA 10.2 as the ML toolkit. We use Horovod 0.16.4 configured with OpenMPI 4.0.0 for collective communication.

\smartparagraph{Benchmarks and hyper-parameters:}  The benchmarks and hyper-parameters are listed in \cref{tab:models}. We use both \ac{CNN} and \ac{RNN} models for image classification and language modeling tasks, respectively. We use compression ratios ($\delta$) of 0.1 (10\%), 0.01 (1\%) and 0.001 (0.1\%) to span a wide range of the trade-off between compression and accuracy similar to prior work \cite{aji_sparse,Alistarh18_sparse,lin2018deep}. Further details of the environment, tasks and settings of the experiments are given in \cref{apdx:clusters}.
 

\smartparagraph{Compressors: } We compare \scheme\! with $\topk$, \ac{DGC}, RedSync and GaussianKSGD.
The \ac{EC} mechanism is employed to further enhance the convergence of \ac{SGD} with compressed gradients \cite{lin2018deep,ef-sgd}. For \scheme\!, we set $\delta_{1}=0.25$, $\epsilon=20\%$, and $Q=5$ iterations to adapt the stages as in \cref{algo:algo1}. For conciseness, we present the performance of \scheme\! with double exponential fitting (shown in the figures as \ac{SIDCo}-E).\footnote{The results for double \ac{GPD} (\ac{SIDCo}-GP) and double gamma (\ac{SIDCo}-P), presented in \cref{apdx:expalldist}, are quite similar.}  

\smartparagraph{Metrics:} We quantify the performance of a given scheme (i.e., \scheme\!, Top-$k$, DGC, RedSync or GaussianKSGD) via the following metrics:\\
$\bullet$ \emph{Normalized Training Speed-up:} We evaluate the model quality at iteration $T$ (the end of training) and divide it by the time taken to complete $T$ iterations. We normalize this quantity by the same measurement calculated for the baseline case. This is the normalized training speed-up relative to the baseline;\\
$\bullet$ \emph{Normalized Average Training Throughput:} is the average throughput normalized by the baseline's throughput which illustrates the speed-up from compression irrespective of its impact on model quality;\\
$\bullet$ \emph{Estimation Quality:} is the compression ratio ($\hat{k}/d$) averaged over the training divided by the target ratio ($\delta=k/d$) along with the $90\%$ confidence interval as error-bars. 

\begin{figure}[!t]
\captionsetup[subfigure]{justification=centering}
\centering
 \begin{subfigure}[ht]{0.65\linewidth}
  \includegraphics[width=1\linewidth]{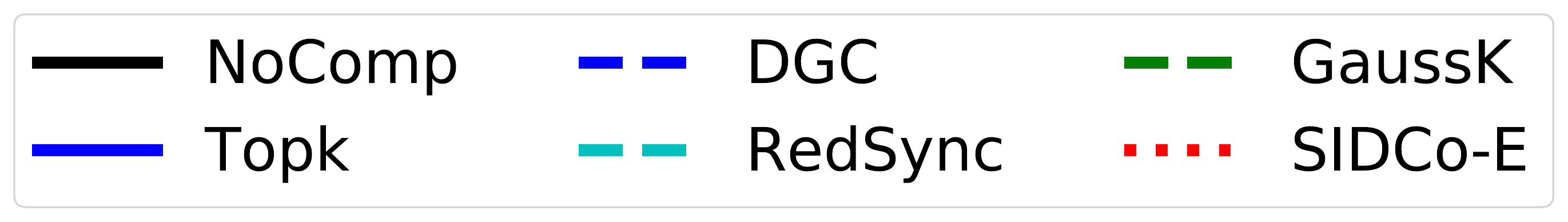}
 \end{subfigure}
    \begin{subfigure}[ht]{0.48\linewidth}
  \includegraphics[width=\linewidth]{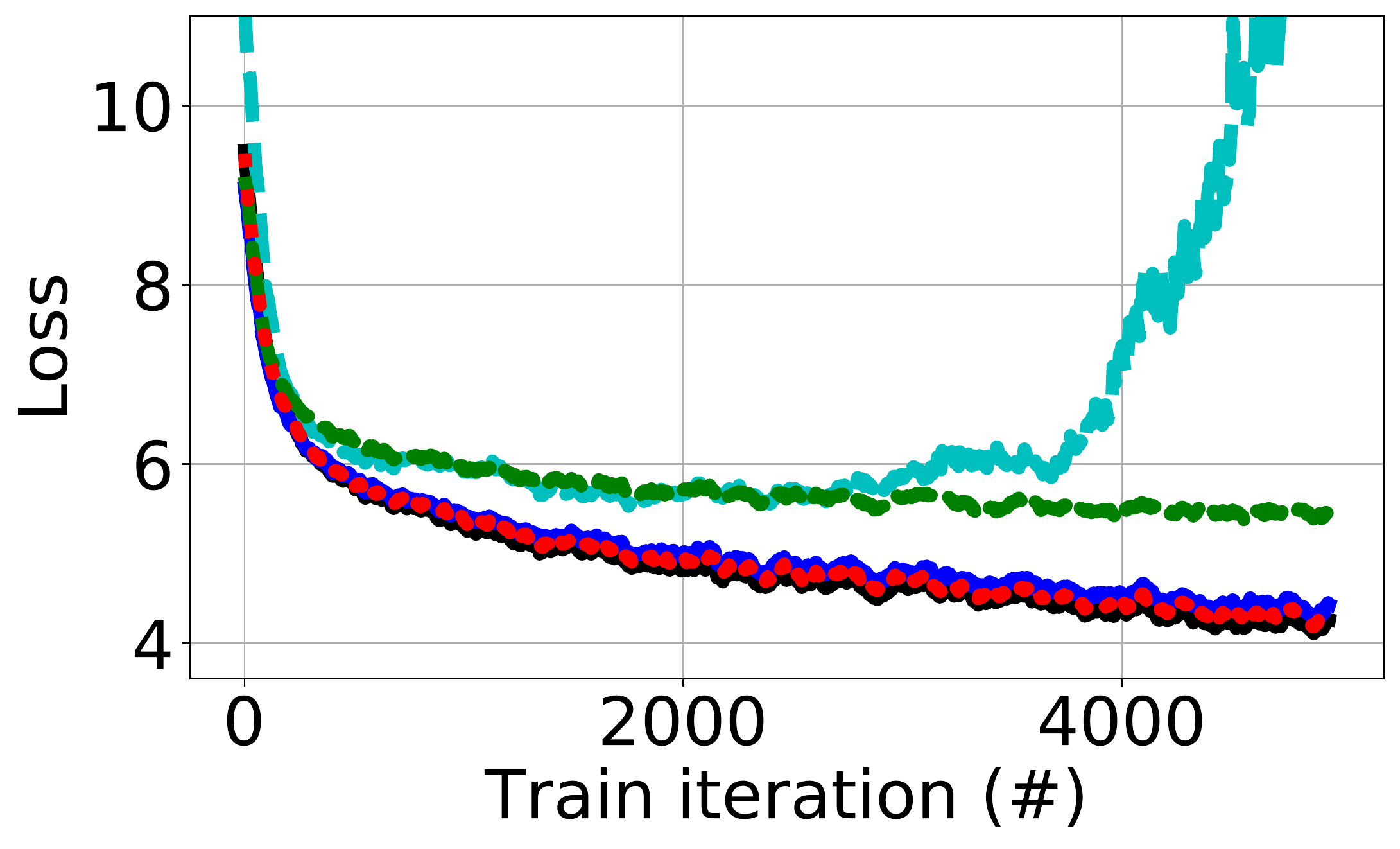}
	\caption{Train loss vs iterations}
	\label{fig:ptb-loss-0.001}
    \end{subfigure}
    \hfill
  \begin{subfigure}[ht]{0.48\linewidth}
    \includegraphics[width=\linewidth]{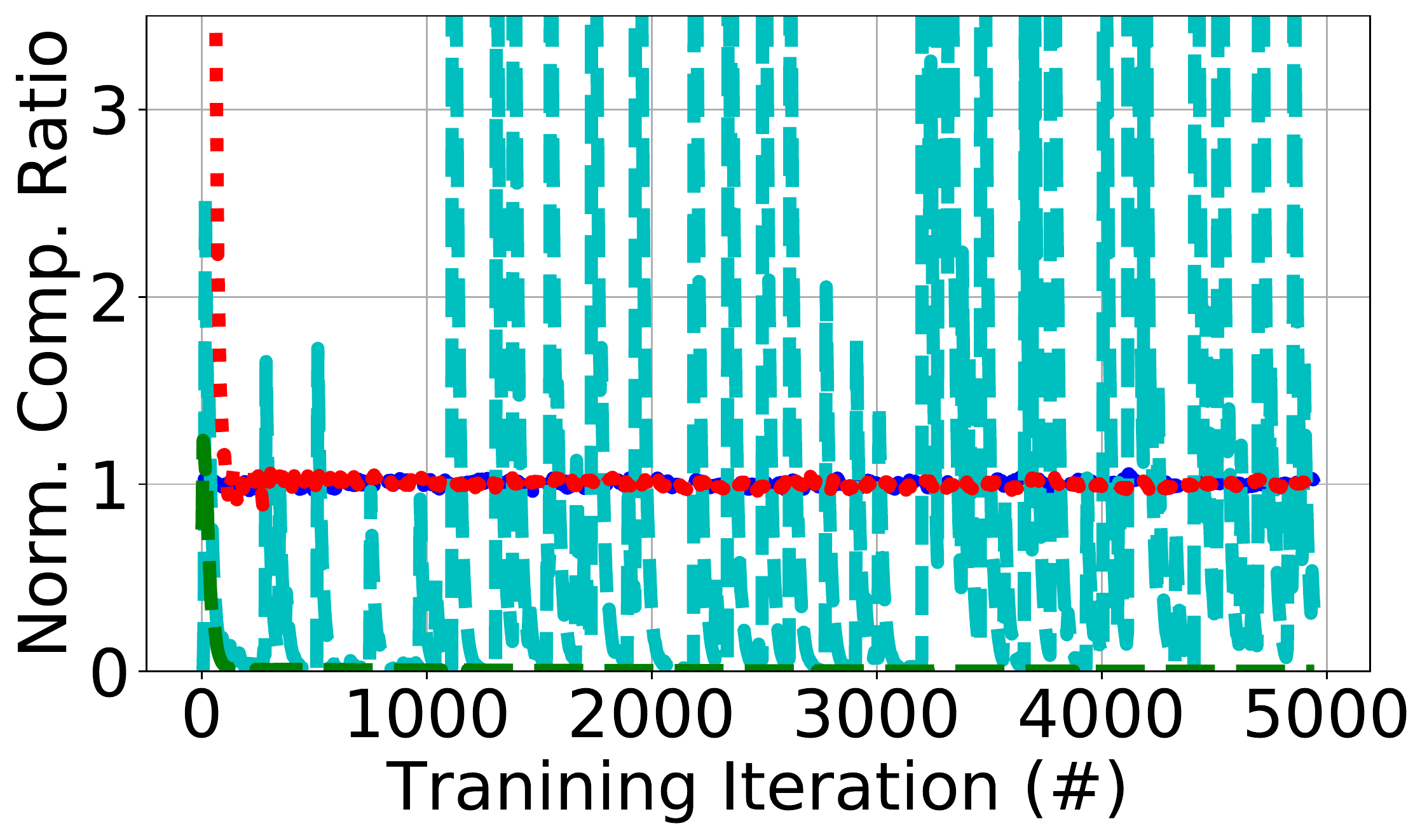}
	\caption{Thresh. Estimation Quality.}
	\label{fig:ptb-avgcomp-0.001}
     \end{subfigure}
     \\
    \begin{subfigure}[ht]{0.48\linewidth}
  \includegraphics[width=\linewidth]{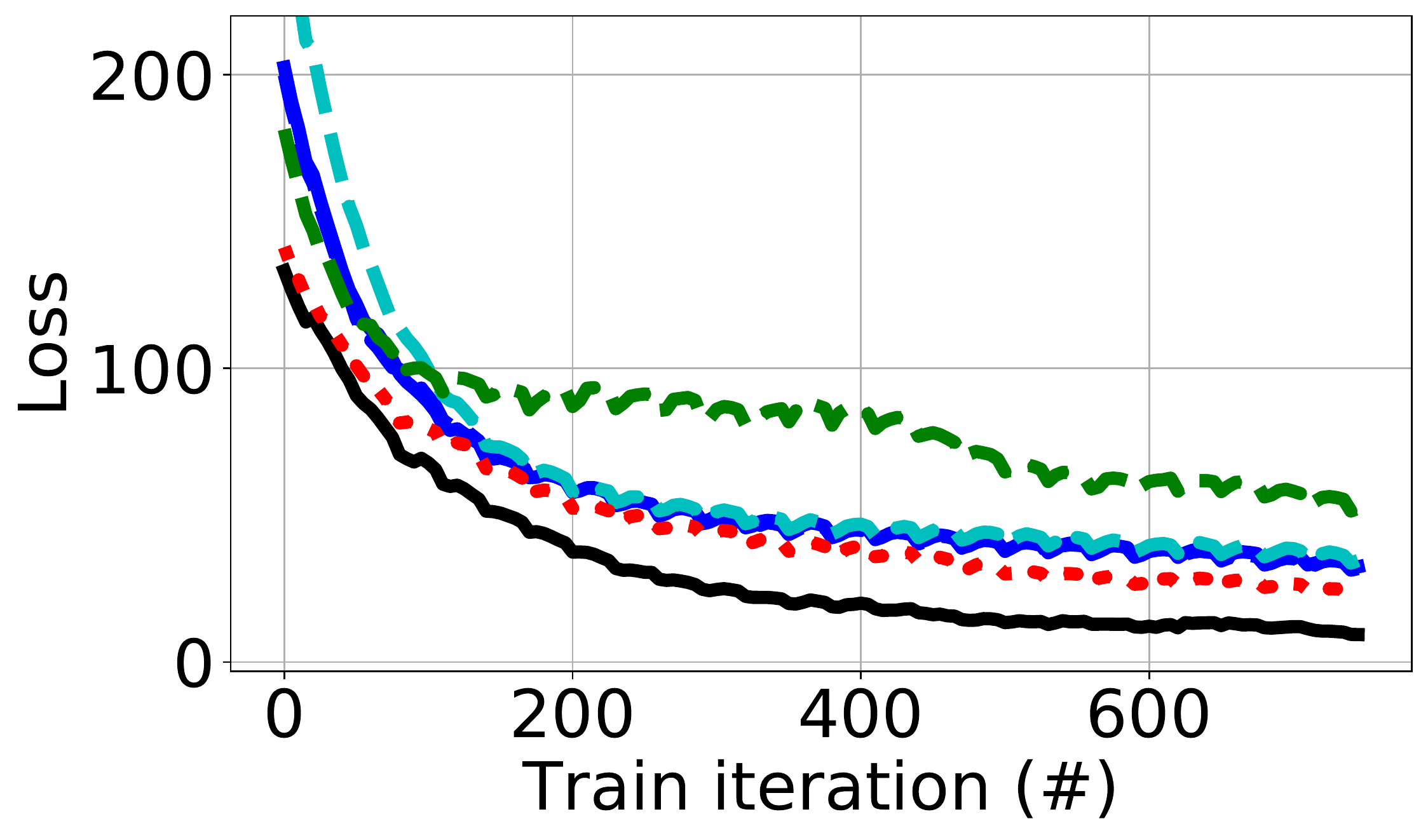}
	\caption{Train loss vs iterations}
	\label{fig:an4-loss-0.001}
    \end{subfigure}
    \hfill
  \begin{subfigure}[ht]{0.48\linewidth}
    \includegraphics[width=\linewidth]{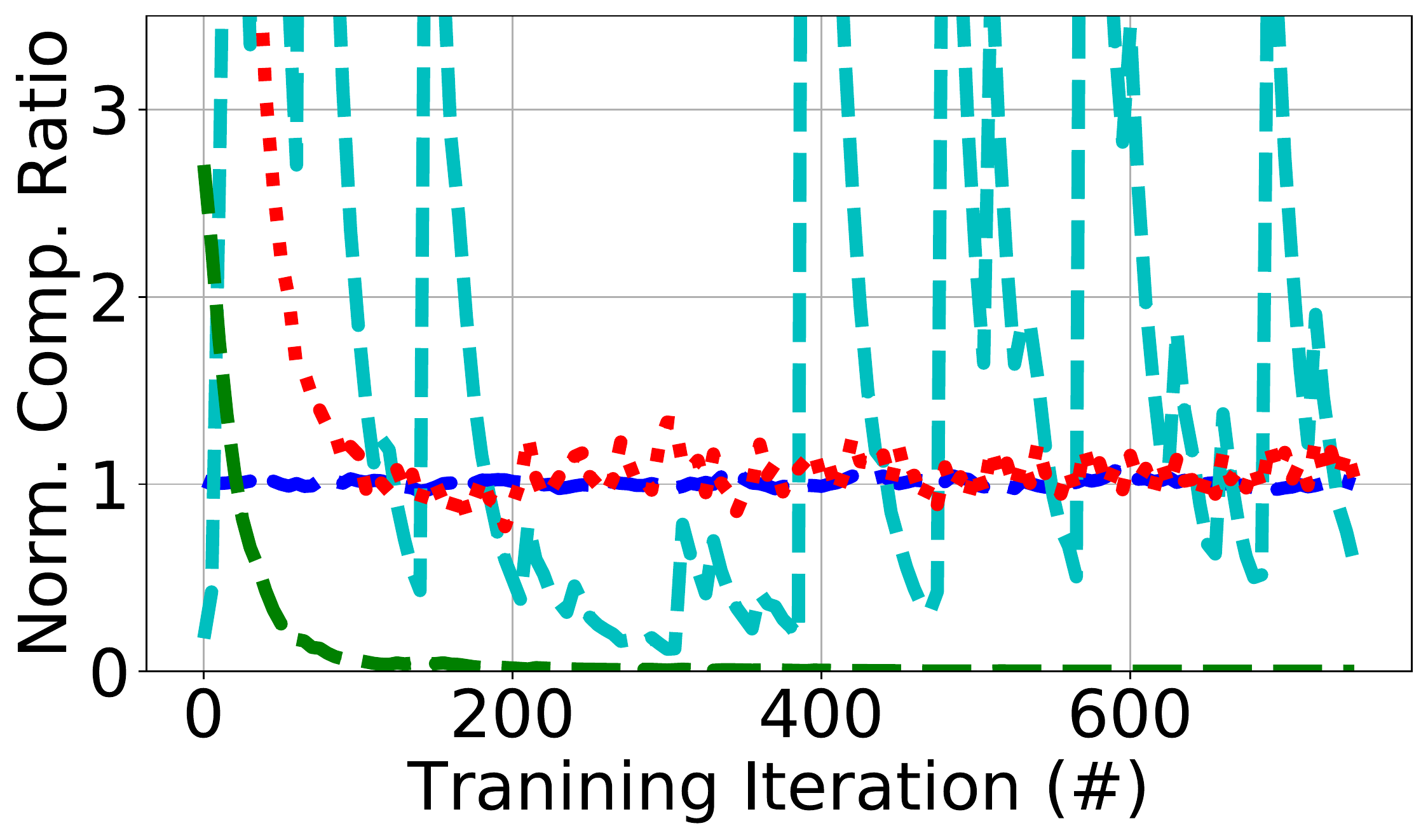}
	\caption{Thresh. Estimation Quality}
	\label{fig:an4-avgcomp-0.001}
     \end{subfigure}
\caption{The training performance for the LSTM model on PTB and AN4 datasets with compression ratio of $0.001$.}
\label{fig:rnn-extra}
\end{figure}


\begin{figure*}[t!]
	\captionsetup[subfigure]{justification=centering}
	\centering
	\begin{subfigure}[ht]{0.5\linewidth}
		\includegraphics[width=1\linewidth]{Figures/experiments/legend3.pdf}
	\end{subfigure}
	\\
	\begin{subfigure}[ht]{0.31\linewidth}
		\includegraphics[width=\linewidth]{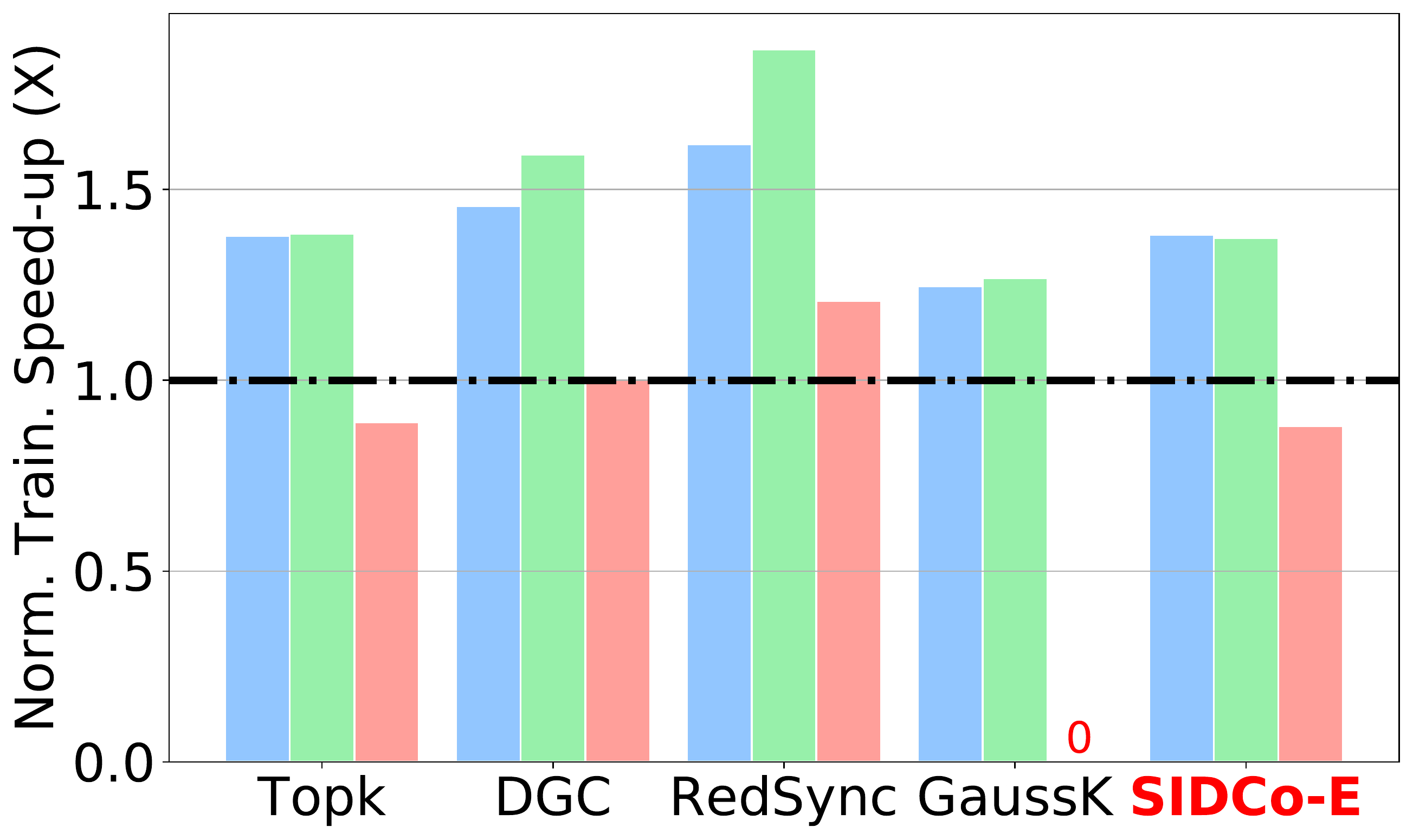}
		\caption{ResNet20-CIFAR-10 (Speedup).}
		\label{fig:resnet20-speedup-8}
	\end{subfigure}
	\hfill
	\begin{subfigure}[ht]{0.30\linewidth}
		\includegraphics[width=\linewidth]{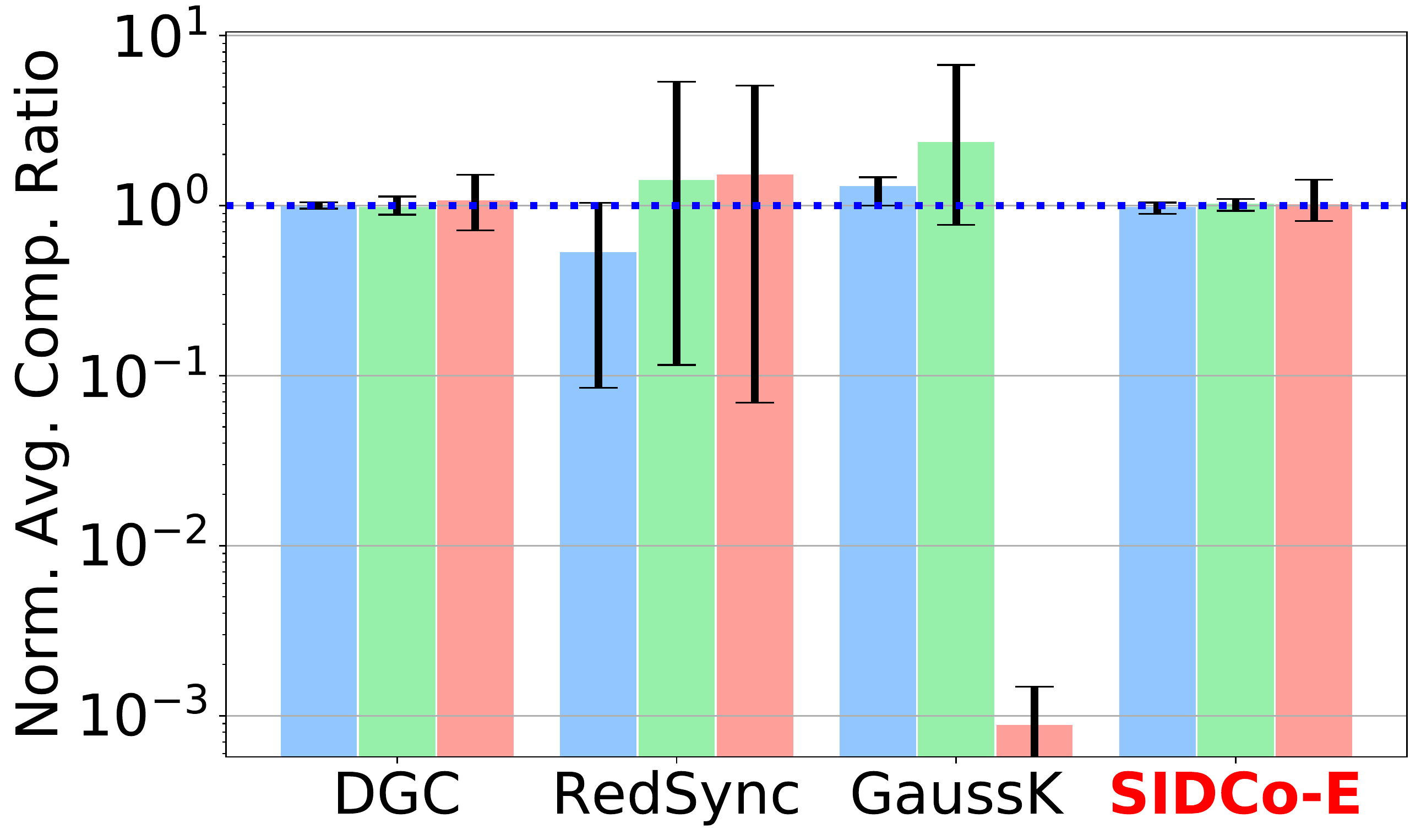}
		\caption{ResNet20-CIFAR-10 (Est. Quality).}
		\label{fig:resnet20-good-8}
	\end{subfigure}
	\hfill
	\begin{subfigure}[ht]{0.30\linewidth}
		\includegraphics[width=\linewidth]{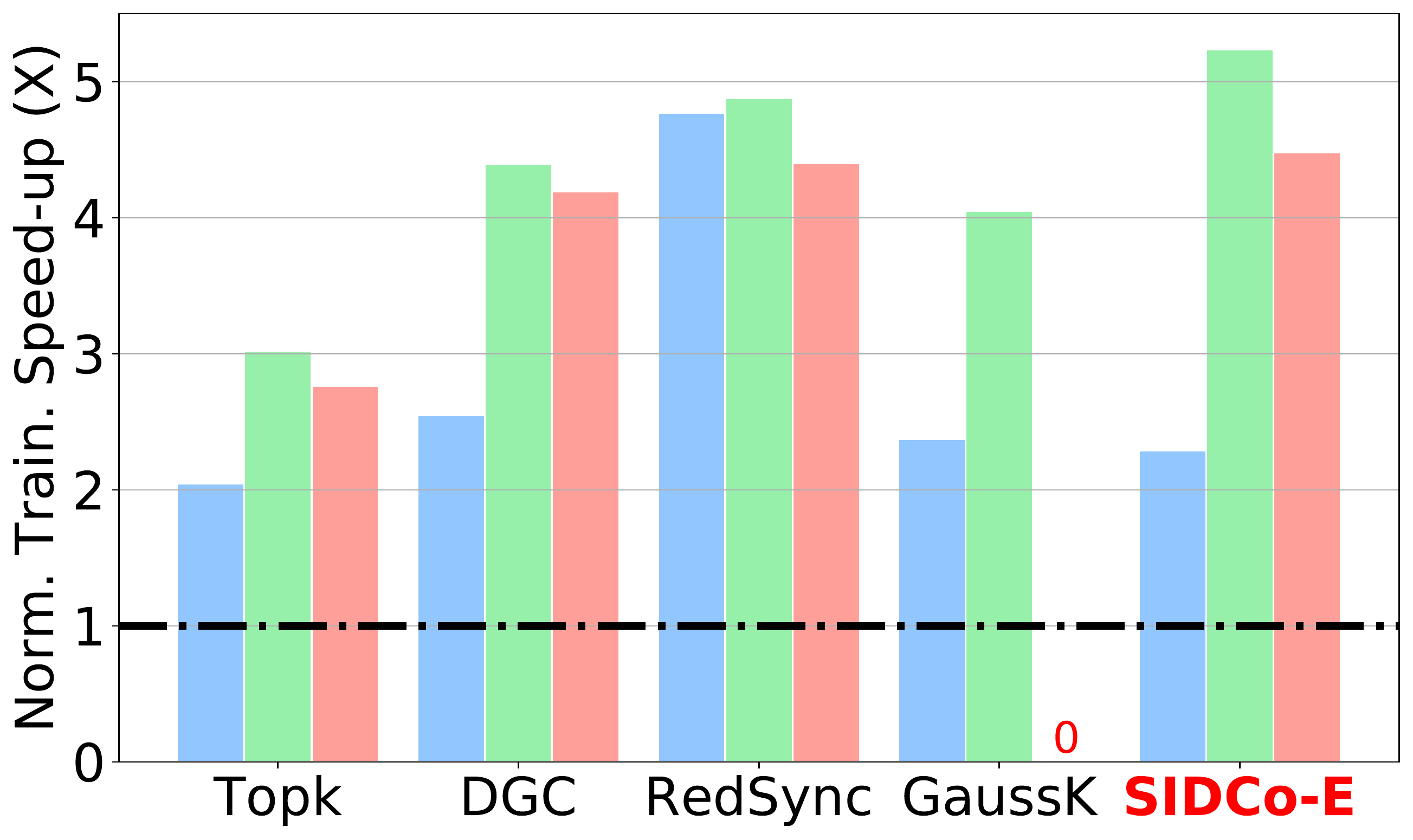}
		\caption{VGG16-CIFAR-10 (Speedup)}
		\label{fig:vgg16-speedup-8}
	\end{subfigure}
	\caption{The training performance for ResNet20 [(a),(b)] and VGG16 [(c)] on CIFAR-10 dataset.}
	\label{fig:cifar10}
\end{figure*}
Next, we present the results for the benchmarks in~\cref{tab:models}.
\subsection{Recurrent Neural Networks (RNNs)}
\textbf{RNN-LSTM on PTB: } This benchmark has the highest communication overhead (\cref{tab:models}). In~\cref{fig:ptb-speedup-8}, \scheme\! shows significant speed-up over no-compression by {\em $\approx\!41.7\times$} and improves over $\topk$ and \ac{DGC} by up to {\em $\approx\!7.6\times$} and {\em $\approx\!1.9\times$}, respectively. 
 At high compression ratio of $0.001$, both RedSync and GaussianKSGD compression methods do not converge to the target loss and test perplexity (\cref{fig:ptb-loss-0.001}) and therefore they attain zero speed-ups.
\cref{fig:ptb-throughput-8} shows that threshold estimation schemes including \scheme\! have the highest training throughput. However, in~\cref{fig:ptb-comp-8}, \ac{DGC} and \scheme\! are the only methods that accurately estimate the target ratio with high confidence. However, for GaussianKSGD at ratio of $0.001$ and RedSync at ratios of $0.01$ and $0.001$, the number of selected  elements is two orders-of-magnitude lower than the target. Moreover, over the training process, the estimation quality of RedSync has high variance, harming convergence. \cref{fig:ptb-avgcomp-0.001} shows, at target ratio of $0.001$, RedSync causes significant fluctuation in compression ratio and training does not converge. GaussianKSGD results in very low compression ratio which is close to $0$ and far from the target leading to significantly higher loss (and test perplexity) values compared to the target values. 

\textbf{RNN-LSTM on AN4:} \cref{fig:an4-speedup-8} shows that \scheme\! achieves higher gains compared to other compressors by up to {\em $\approx\!2.1\times$} for ratios of $0.1$ and $0.01$. Notability, at ratio of $0.001$, only \scheme\! achieved the target \ac{CER}. Thus, we ran other compressors for $250$ epochs to achieve the target \ac{CER} (instead of the default 150), except for GaussianKSGD, which does not converge. The gains of \scheme\! over the other compressors are increased by up to {\em $\approx\!4\times$}. The reason could be that the model is more sensitive to compression (esp., in the initial training phase). \scheme\! starts as single-stage before performing stage adaptations, leading to a slight over-estimation of $k$ and so more gradient elements are sent during training start-up. Throughput-wise, \cref{fig:an4-throughput-8} shows that threshold-estimation methods including \scheme\! enjoy higher training throughput, explaining the gains over the baseline. Similar to LSTM-PTB results,~\cref{fig:an4-comp-8} shows that on average, with low variance, \scheme\! closely matches the estimated ratios of DGC while other estimation methods have poor estimation quality. Similar to PTB, \cref{fig:an4-avgcomp-0.001} shows, at target ratio of $0.001$, RedSync causes significant fluctuation in compression ratio and GaussianKSGD results in very low compression ratio (close to 0) which is far from the target. This leads both methods to achieve significantly higher loss (or test perplexity) values compared to the target loss (or test perplexity) values. 

\subsection{Convolutional Neural Networks (CNNs)}

\begin{figure*}[!h]
\captionsetup[subfigure]{justification=centering}
\centering
\centering
 \begin{subfigure}[ht]{0.5\linewidth}
  \includegraphics[width=1\linewidth]{Figures/experiments/legend3.pdf}
 \end{subfigure}
          \\
  \begin{subfigure}[ht]{0.30\linewidth}
    \includegraphics[width=\linewidth]{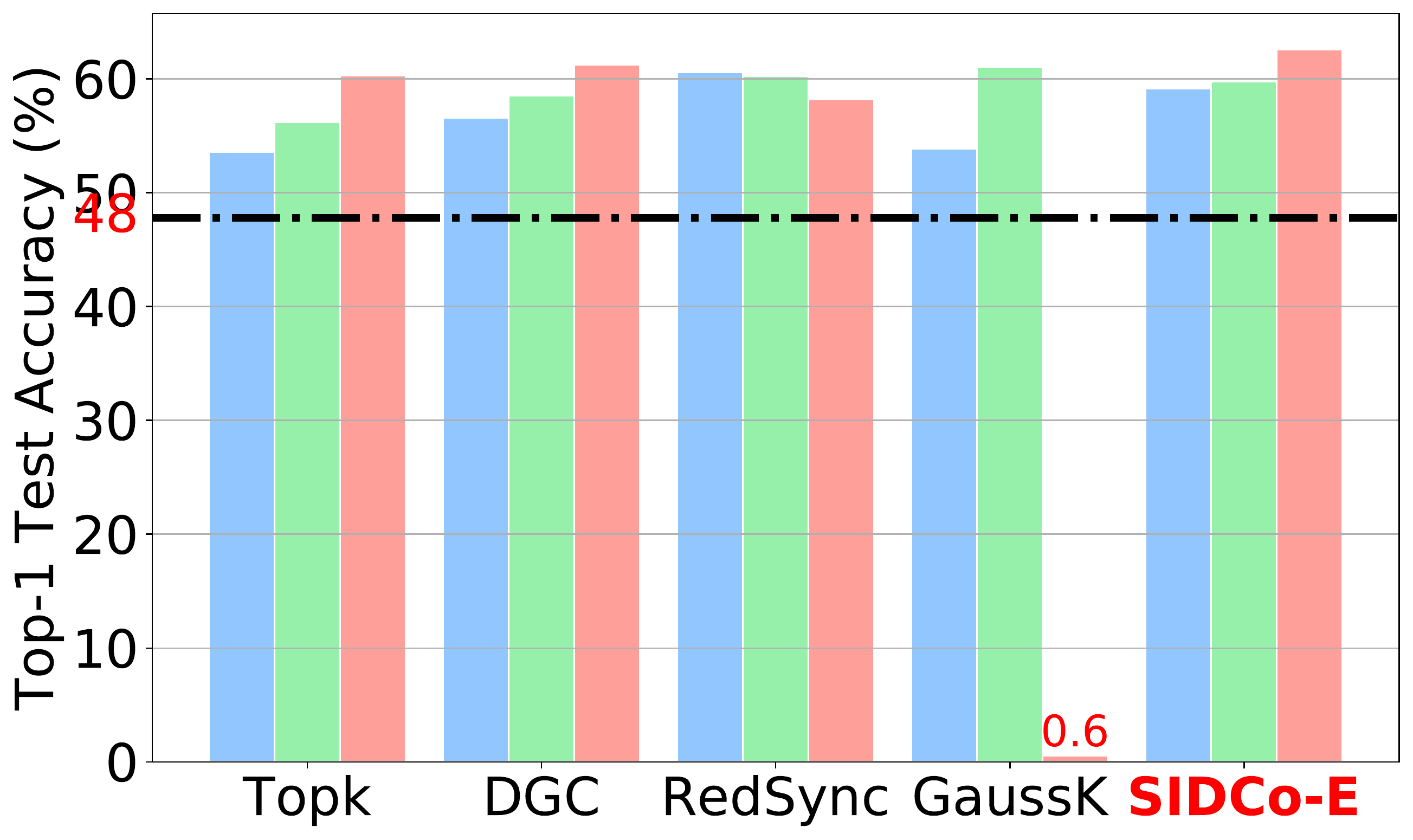}
	\caption{ResNet50-ImageNet (Accuracy)}
	\label{fig:resnet50-speedup-8}
     \end{subfigure}
     \hfill
     \begin{subfigure}[ht]{0.30\linewidth}
  \includegraphics[width=\linewidth]{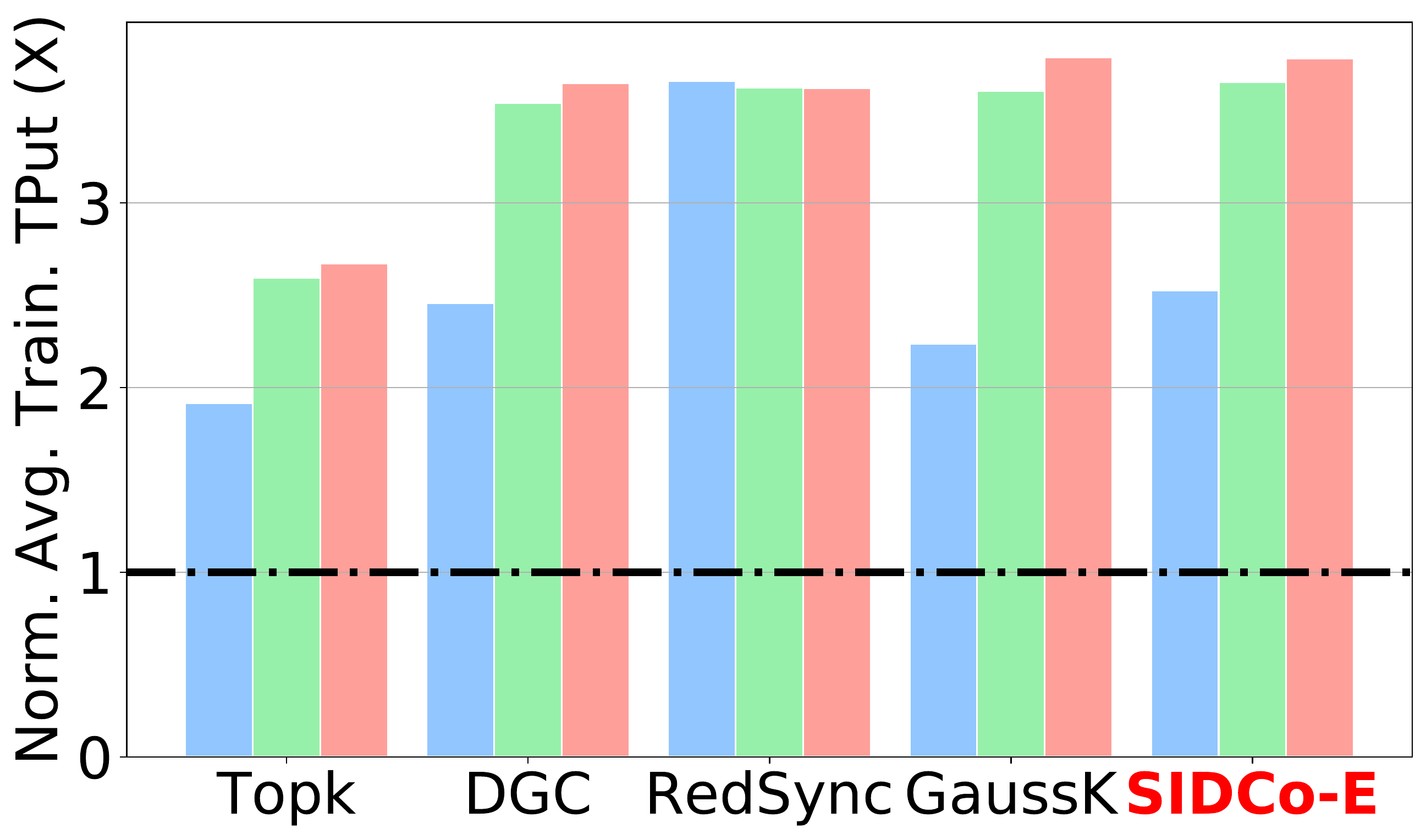}
	\caption{ResNet50-ImageNet (Throughput)}
	\label{fig:resnet50-tput-8}
    \end{subfigure}
    \hfill
        \begin{subfigure}[ht]{0.30\linewidth}
    \includegraphics[width=\linewidth]{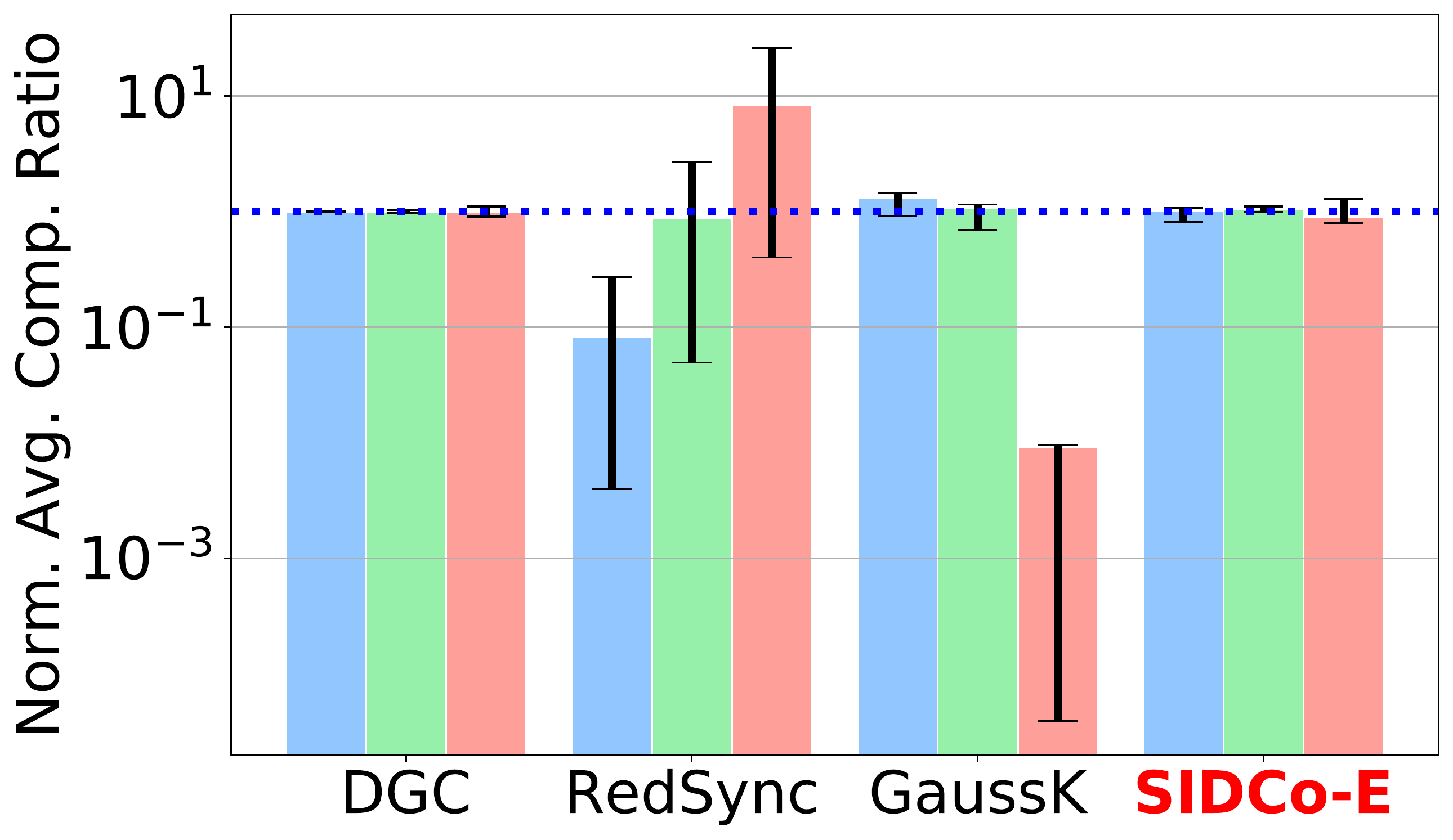}
	\caption{ResNet50-ImageNet (Est. Quality)}
	\label{fig:resnet50-comp-8}
     \end{subfigure}
     \\
     \begin{subfigure}[ht]{0.30\linewidth}
    \includegraphics[width=\linewidth]{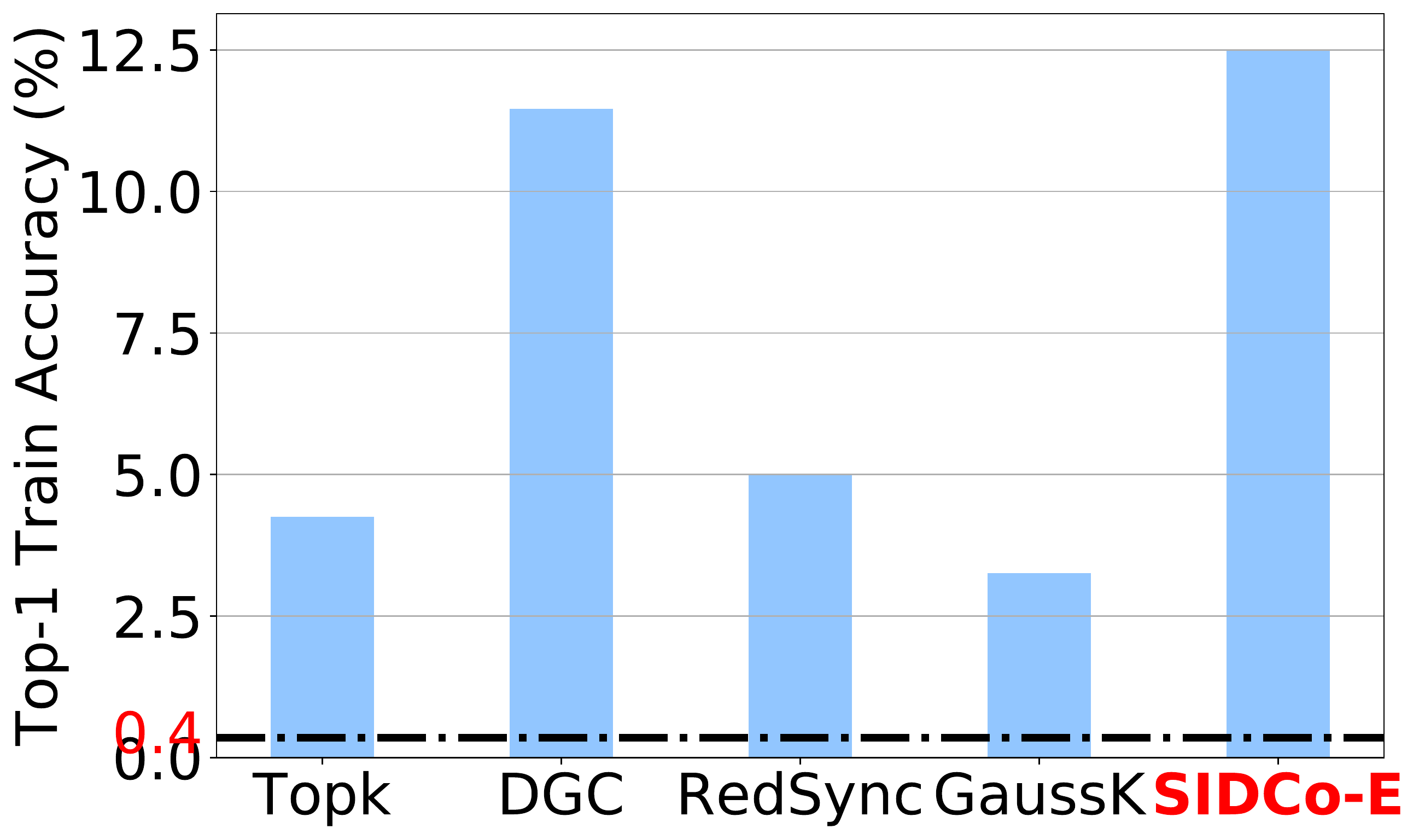}
	\caption{VGG19-ImageNet (Accuracy)}
	\label{fig:vgg19-speedup-8}
     \end{subfigure}
     \hfill
     \begin{subfigure}[ht]{0.30\linewidth}
  \includegraphics[width=\linewidth]{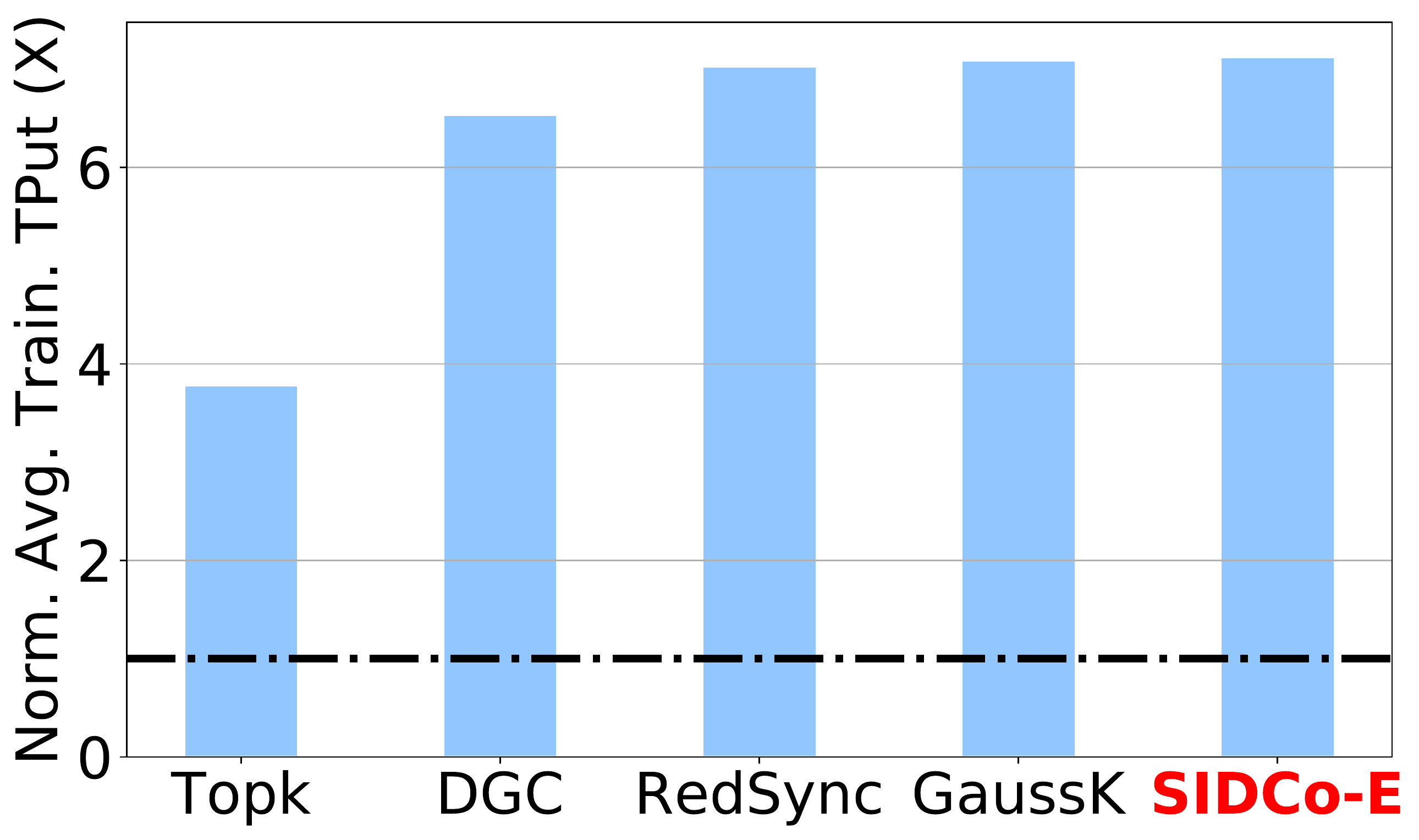}
	\caption{VGG19-ImageNet (Throughput)}
	\label{fig:vgg19-tput-8}
    \end{subfigure}
    \hfill
        \begin{subfigure}[ht]{0.30\linewidth}
    \includegraphics[width=\linewidth]{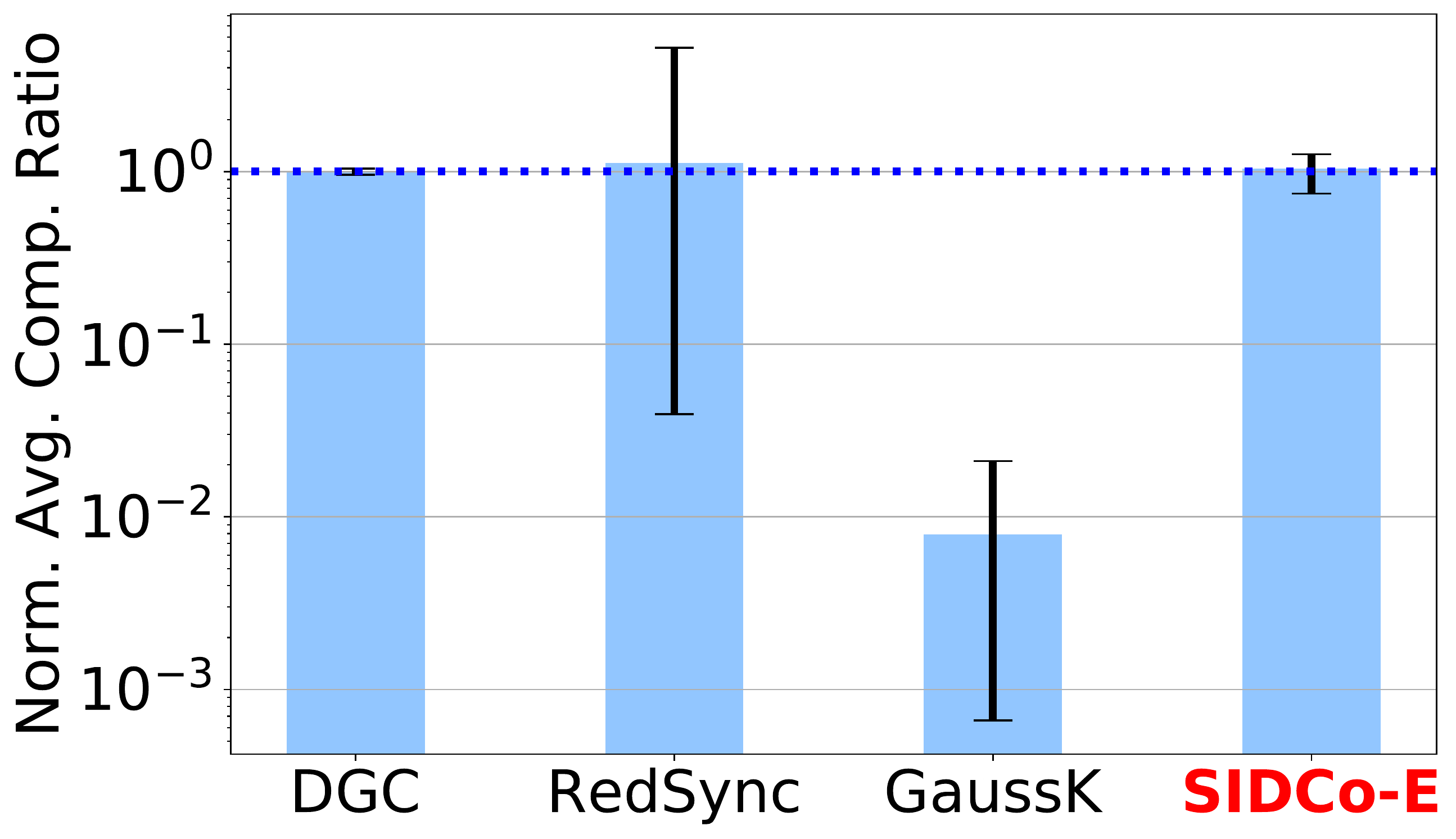}
	\caption{VGG19-ImageNet (Est. Quality)}
	\label{fig:vgg19-comp-8}
     \end{subfigure}
\caption{The training performance for ResNet50 [(a), (b), (c)] and VGG19 [(d), (e), (f)] on ImageNet dataset.}
\label{fig:imagenet}
\end{figure*}

\textbf{ResNet20 and VGG16 on CIFAR-10:} \cref{fig:resnet20-speedup-8} shows that, for ResNet20, all compressors achieve somewhat comparable and modest speed-ups over the no-compression baseline (except at ratio of 0.001, where accuracy is degraded and hence the lower speed-up than the baseline).
This is not surprising because ResNet20 is not network-bound.
However, for the larger VGG16 model, \cref{fig:vgg16-speedup-8} shows that \scheme\! achieves significant speed-ups over no-compression, $\topk$ and DGC by up to $\approx\!5\times$, $1.5\times$, and $1.2\times$, respectively. \cref{fig:resnet20-good-8} shows that, unlike other estimation schemes, \scheme\! can accurately achieve the target ratio.

\textbf{ResNet50 and VGG19 on ImageNet:} In these experiments, we set a time-limit of 5 hours per run to reduce our costs. For calculating the speed-up, we compare the top-1 accuracy achieved by different methods  at the end of training. First, for ResNet50 benchmark, we use compression ratios of $0.1$, $0.01$, and $0.001$. \cref{fig:resnet50-speedup-8} shows that \scheme\! achieves the highest accuracy that is higher than the baseline, $\topk$ and \ac{DGC} by $\approx15$, $3$, and $2$ accuracy points, i.e., normalized accuracy gains of $\approx\!40\%$, $5\%$, and $4\%$, respectively. \cref{fig:resnet50-tput-8} shows that \scheme\! attains the highest throughput among all methods (except for RedSync at $0.1$ compression).
\cref{fig:resnet50-comp-8} shows that, unlike GaussianKSGD and RedSync, which both result in estimation quality far from the target with high variance, \scheme\! estimates the threshold with very high quality for all ratios. Similar trends are observed for the VGG19 benchmark where we use compression ratio of $0.001$. As shown in \cref{fig:vgg19-speedup-8,fig:vgg19-tput-8,fig:vgg19-comp-8}, \scheme\! estimates the threshold with high quality, and achieves the highest top-1 accuracy and training throughput among all methods. The accuracy gains compared to the baseline, $\topk$ and DGC are $\approx\!34\times$, $2.9\times$, and $1.13\times$, respectively. 


\smartparagraph{Takeaways:} Our approach is simple in nature, which is intentional, to make it applicable in practice. Nonetheless, our work goes beyond existing works that estimate a threshold for $\topk$ sparsification. These works either did not leverage the statistical property of the gradients (DGC) or assumed Gaussian distribution without a thorough study of the gradient (e.g., RedSync, GaussianKSGD). On a GPU, \scheme\! improves over DGC by at least $2\times$, and the speed-ups are significantly larger on the CPU as shown in ~\cref{fig:vgg16-cpu-speedup} and~\cref{apdx:moremicrobench}. As a threshold estimation method, SIDCo does not only benefit from the throughput gains of threshold methods but also from the high quality of its threshold estimation. The results in~\cref{fig:rnn-extra,fig:compratio} indicate that existing estimation methods (e.g., RedSync and GaussianKSGD) fail to achieve consistent threshold estimation behavior even though they may provide throughput gains. Their throughput gains, in many cases, are due to severe under-estimation of the target ratio, which results in lower volumes of data sent compared to other compressors.

\section{Conclusion}
We solved a practical problem in distributed deep learning. We showed that the performance of compressors other than threshold-based ones has high computational costs whereas existing threshold-estimation methods fail to achieve their target. To address these issues, we proposed \scheme\!, a multi-stage threshold-based compressor through imposing a sparsity prior on the gradients. We evaluated \scheme\! and compared it with popular compressors using common benchmarks involving RNN and CNN architectures. \ac{SIDCo}, unlike existing threshold estimation methods, can efficiently approximate the target threshold and results in significant gains of up to $\approx\!41.7\times$, $7.5\times$, and $1.9\times$ over no-compression baseline, $\topk$ and \ac{DGC} compression methods, respectively. Also, we expect further gains for large and communication-bounded models. In the future, we will explore ways to estimate a threshold for which compression satisfies other quality targets.

\small
\bibliographystyle{mlsys2021}
\bibliography{main}

\normalsize
\clearpage

\appendix



\begin{algorithm*}[!t]
	\caption{\scheme\! Algorithm}
	\label{algo:algo1}
	\KwIn{$ {\bf g}$\: the gradient vector to be compressed} \KwIn{$\delta$\: the target compression ratio}
	\KwIn{$i$\: the training iteration number }
	\KwIn{$Q$: the frequency of invoking stages adaption}
	\KwIn{($\epsilon_H$,$\epsilon_L$): upper and lower bounds of estimation error for adapting the stages.}
	\tcc{The discrepancy tolerance $\epsilon$ in \eqref{eq:boundederror} can be computed as $\epsilon=\max(\epsilon_H,\epsilon_L)$}
	Define $\hat{k}, \hat{k}_{avg}$:  number of and average number of elements obtained from threshold estimation.\\
	\Fn{Sparsify(${\bf g}, \delta$)}{
		\tcc{Multi-stage threshold estimation}
		${\bf temp}=copy({\bf g})$\\
		$\eta_0 = 0$ \\
		\For{each stage $m$ in (1,$M$)}
		{
			${\bf temp}={\bf temp}-\eta_{m-1}$\\
			$\eta_m=\text{Thresh\_Estimation}({\bf temp}, \delta_m, DIST)$\\
			$\widehat{I} =abs({\bf temp}) > \eta_m $ - find the index of top absolute values larger than $\eta_m$. \\
			Use $\widehat{I}$ to filter {\bf temp} which keeps the top values of {\bf temp} and sets the others to zero.\\
		}
		\If{$i \mod Q == 0$}
		{
		    \tcc{Call stages adaption function to adjust number of stages}
			$M$=Adapt\_Stages($M$, $\frac{\hat{k}_{avg}}{Q}$ )\\
			$\hat{k}_{avg} = 0$
		}
		\Else
		{
			$\hat{k}_{avg} = \hat{k}_{avg} + \hat{k}$\\
		}
		\tcc{Threshold sparsification: find indices of top absolute values larger than the threshold $\eta_M$, then use them to filter the elements of the gradient {\bf g}}
		return $\Chat({\bf g}) = {\bf g}[abs({\bf g}) \geq \eta_M]$

	}
	\Fn{Thresh\_Estimation($\g, \delta$, DIST)}{
		\If{DIST is Exponential}{
			$\widehat{\mu}$ = mean(abs($\g$))\\
			return $ -\widehat{\mu}\, \log(\delta)$
		}
		\If{DIST is GPareto}{
			$\widehat{\mu}, \widehat{\sigma}$ = mean\_var(abs($\g$))\\
			${\displaystyle \hat{\alpha} = 0.5\, \left(1-\frac{\widehat{\mu}^2}{\widehat{\sigma}^2} \right)}$\\
			${\displaystyle \hat{\beta} = 0.5\, \widehat{\mu}\, \left(\frac{\widehat{\mu}^2} {\widehat{\sigma}} + 1\right)}$\\
			return ${\displaystyle \frac{\hat{\beta}}{\hat{\alpha}}\, \left(\exp({-\hat{\alpha} \text{log}(\delta)})-1\right)}$
		}
		\If{DIST is Gamma}{
			$\widehat{\mu}$ = mean(abs($\g$))\\
			$s$ = log($\widehat{\mu}$) - mean(log(abs($\g$)))\\
			${\displaystyle \hat{\alpha} = \frac{3 - s+ \sqrt{(s - 3)^2 + 24\,s}}{12\,s}}$\\
			${\displaystyle \hat{\beta}=\frac{\widehat{\mu}}{\hat{\alpha}}}$\\
			return ${\displaystyle -\hat{\beta} \, (\text{log}(\delta) + \text{log}(\text{gamma}(\hat{\alpha})))}$
		}
		
	}
	\Fn{Adapt\_Stages$(M, \hat{k}_{avg})$}
	{
		\tcc{Choose the number of stages}
		\If {$\hat{k}_{avg} > k * (1 + \epsilon_H)$  }
		{
			$M = M - 1$
		}
		\If{$\hat{k}_{avg} <  k * (1 - \epsilon_L)$}
		{
			$M = M + 1$
		}
		return min(max($M$, 1),$M_{max}$)
	}
\end{algorithm*}


\clearpage
\section{Distributed Synchronous SGD (DSSGD) with Sparsification}
\label{apdx:dsgd}

Here, we describe the Distributed Synchronous version of \ac{SGD} optimization algorithm which is the main work-horse behind most of the distributed training~\cite{aji_sparse}. \cref{algo:dsgd} presents the specifics of the algorithm with sparsification. The algorithm executes in parallel on each worker and starts with sampling a batch-sized sub-set from the full training dataset. Then, each worker performs a local pass including forward pass to obtain a loss value followed by a backward pass to calculate the gradient vector for updating the model parameters. At this point and before moving along with the model update, the workers have to synchronize by performing aggregation (or averaging) of their gradient vectors and use the aggregated gradient for the update. The gradient is sparsified using a chosen compressor (e.g., $\topk$, DGC, \scheme\!, .., etc) and target sparsification ratio. For example, to invoke \scheme\! compressor, one would invoke function {\emph Sparsify} of \cref{algo:algo1} which takes as input the gradient $\g$ and target sparsification ratio $\delta$. Then, the aggregation can be either accomplished via means of a parameter server which has a global copy of the model parameters and receives the gradients from the workers and update its local model and then the workers can pull the up-to-date model parameters at any time~\cite{Dean2012}. The other way is to perform aggregation in a peer-to-peer fashion via means of collective operation like All-Reduce or All-Gather which requires no extra parameters server for the aggregation~\cite{horovod}. The peer-to-peer collective communication  methods are widely adopted by most frameworks \cite{pytorch, horovod} and known to scale well in practice \cite{Goyal2017} and hence is adopted in this work. 

\subsection{Discussion on \scheme\! Algorithm}
We highlight a few technical aspects of \scheme\! algorithm presented in \cref{algo:algo1}.
\smartparagraph{Scalability concerns:} The compression algorithm has no scalability issues since it executes locally and does not rely on inter-node communication. Also, the compressor only depends on the size of the gradient vector leading to the same compression time on all the workers regardless of the number of training workers that run in parallel.

\smartparagraph{Algorithm's dependence on training iteration:} the gradient sparsity changes over iterations as shown in ~\cref{fig:fitteddistributions}~and~\cref{fig:fitteddistributionsEC}. The proposed algorithm leverages extreme-value theorem to handle sparsity variations by adapting the number of stages at each iteration. This enables adaptive fitting of the gradient at each iteration via the sparsity-inducing distribution enabling the estimation of an approximate threshold that obtains the top $k$ elements of the gradient vector. 

\smartparagraph{Sparsity and compressability of the gradients:} our algorithm relies on a principled statistical approach, which makes it robust to various sparsity levels of the gradient given that the compressibility property holds. And if not, most sparsifiers would be equally ineffective. Moreover, the compressibility property is the reason why $\topk$ is commonly used in the literature. Therefore, in this work, we seek an approximate fast threshold estimation method that exploits a common prior information of the gradients, while preserving the convergence guarantee of $\topk$, albeit with different rate depending on the accuracy of the estimated threshold.


\begin{algorithm}[!t]
\caption{Sparsified Distributed Synchronous SGD}
\label{algo:dsgd}
\tcc{Worker $n$}
\tcc{Initialization}
\KwIn{$D$: Local Dataset}
\KwIn{$B$: Minibatch size per node}
\KwIn{$N$: The total number of workers}
\KwIn{$\lambda$: The learning rate}
\KwIn{$\delta$: The target sparsification ratio}
\KwIn{$x$: Model parameters $x=(x[0],x[1], ...,x[d])$}
\tcc{loop till end of training}
\For{i = 0, 1, ...} 
{
    \tcc{Calculate stochastic gradient}
    $\gin[i]$ = 0\\
    \For{i = 1, ..., B} 
    {
        Sample data point $d$ from $D$\\
        Calculate   $\nabla f(x;d)$\\
        $\gin[i] = \gin[i] + \frac{1}{B} \nabla f(x;d)$\\ 
    }
    \tcc{Aggregate workers' gradients}
    Collective-Comm: $\Gin[i] = \frac{1}{N} \sum_{n=1}^{N} \text{Sparsify}(\gin[i], \delta)$\\
    \tcc{Update model parameters}
    $\xin[i+1] = \xin[i] + \lambda\,  \Gin[i]$\\
}
\end{algorithm}

\section{Gradient Features and Distribution}

\subsection{Validation of Gradient Compressibility}
\label{apdx:statmethods}
The compressibility of the gradient vector allows efficient compression for the gradients through sparsification techniques, e.g., $\topk$ and thresholding-based compression \cite{Elzanaty19,grace}. Here, we empirically investigate the compressibility of the gradients according to \cref{def:compressable}. In order to verify the vector compressibility, we consider the gradients generated while the training of ResNet20. The absolute of the gradients are sorted in descending order to obtain the vector $\tilde{\g}$ with $\d=269722$. In \cref{Fig:compressgradients}, the elements of the gradient vector $\tilde{\g}$, i.e., $\tilde{\gr}_{j}$, are reported vs their index,  for three iterations in the beginning, middle, and end of the training.\footnote{Note that  in \cref{Fig:compressgradients}, we focus only on the elements from $1$ to $10^5$, as for larger indices the  amplitude of the vector elements are sufficiently small.} As a benchmark, we report a power low decay example with decay exponent $p>0.5$, i.e., $p=0.7$.  It can be noticed that the gradients follow a power-law decay with decay exponent $p>0.7>0.5$; hence, they are compressible from \eqref{eq:powerlaw}. 

In \cref{fig:bestkapprox}, we report the sparsification error for the best $\k$ approximation, e.g., the $\topk$, as a function of $\k$. We also report an example of the power decay model with decay exponent $p-0.5=0.2$. We can see the best $\k$ approximation error decays faster than the benchmark. Hence, the vector can be considered compressible, according to \eqref{eq:bestkapprox}.

We also validate this behavior for various models and datasets, not reported here for conciseness. Therefore, the gradient vectors can be considered compressible in the sense of \cref{def:compressable}. 
\begin{figure*}[!t]
  \centering
  \begin{subfigure}[h]{0.48\textwidth}
      	\includegraphics[width=1\textwidth]{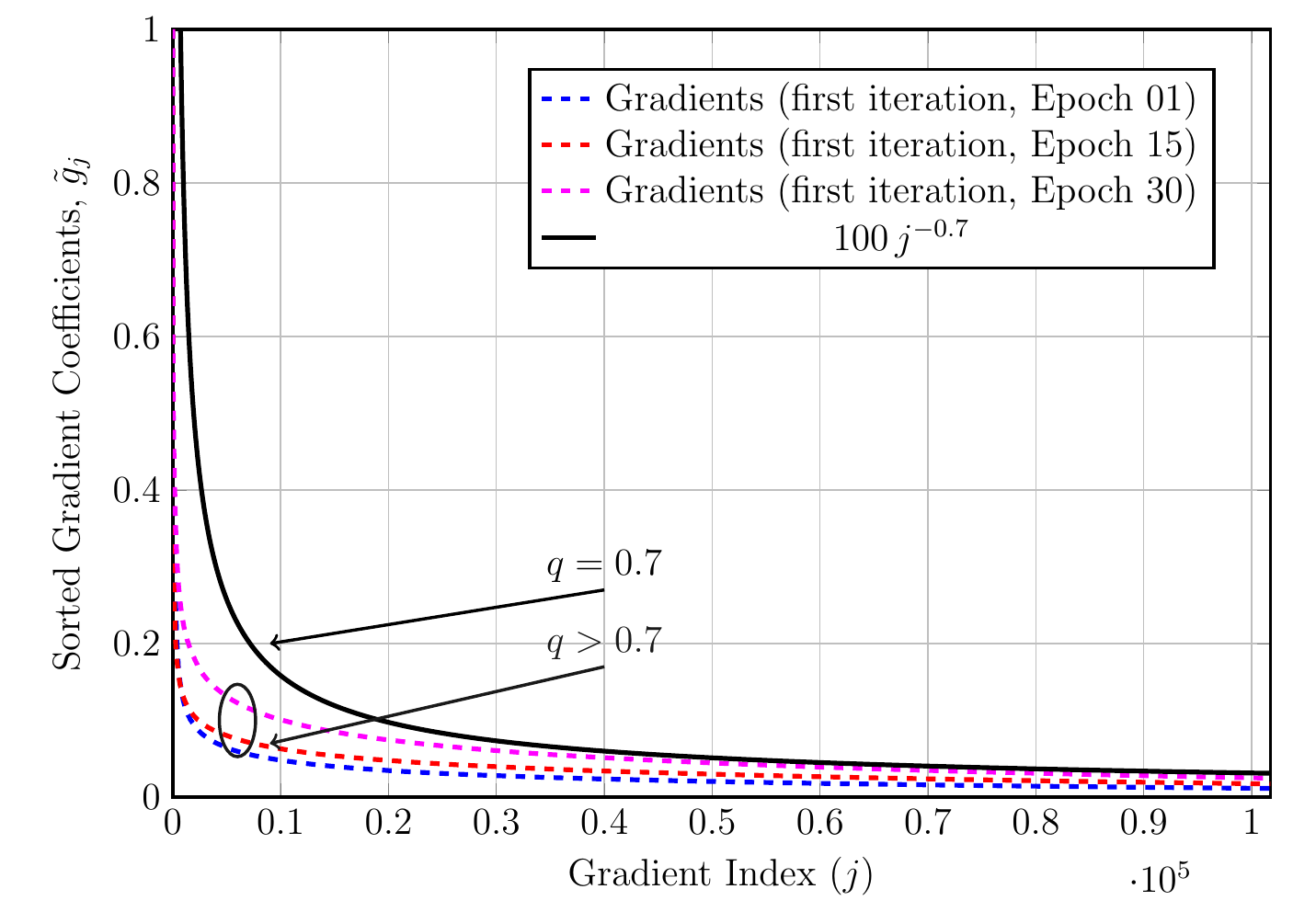}
    	\caption{\footnotesize The sorted magnitude of the gradients vs their indexes, and the fitted curve via power law in \eqref{eq:powerlaw}.} 
    	\label{Fig:compressgradients}
    \end{subfigure}
    \hfill
      \begin{subfigure}[h]{0.48\textwidth}
    	\includegraphics[width=1\textwidth]{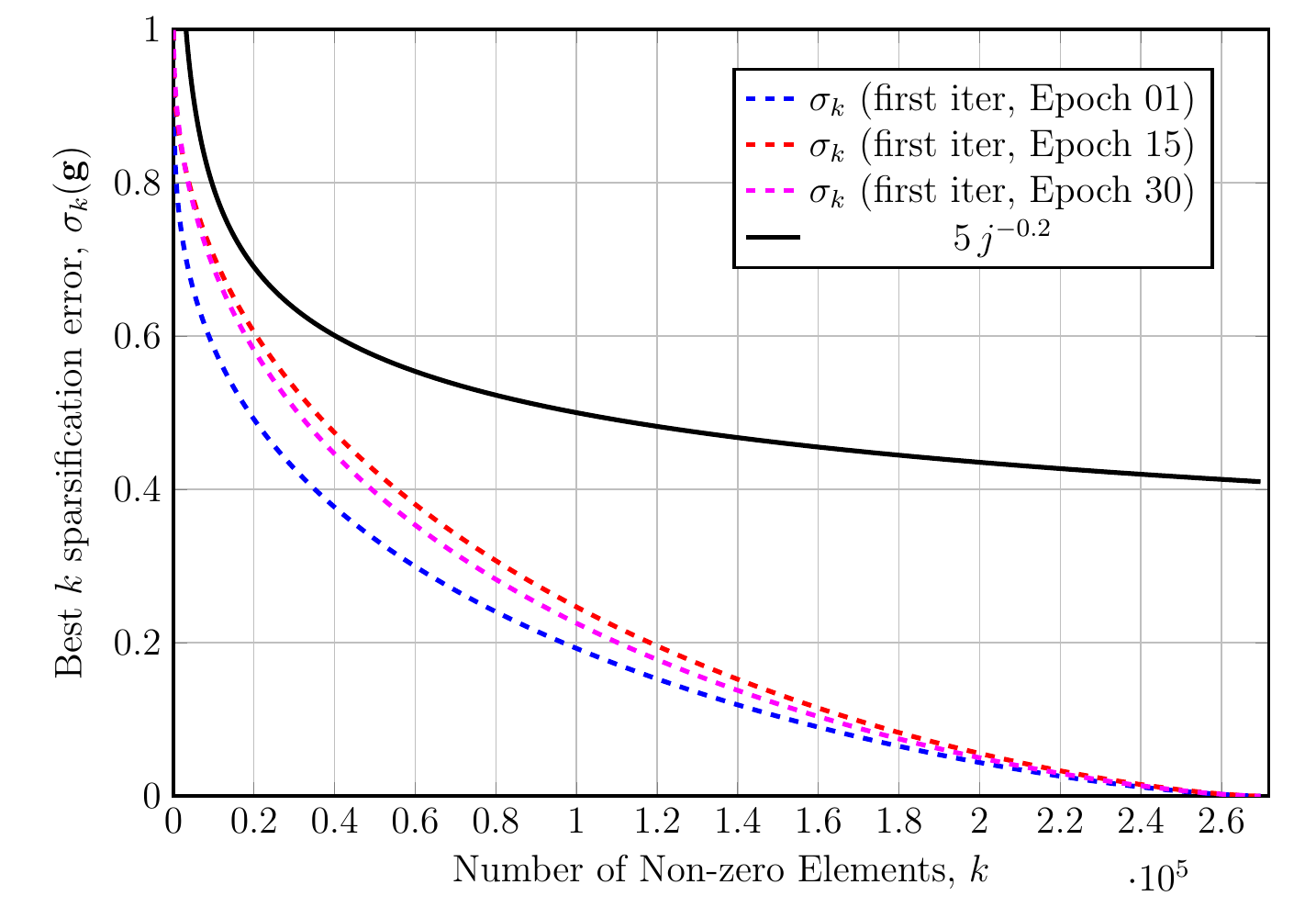}
    	\caption{\footnotesize The approximation error for the $\topk$ vs the number of non-zero elements, $\k$.}
        \label{fig:bestkapprox}
     \end{subfigure}
     \caption{The compressibility property of the gradients}
     \label{fig:compressability}
\end{figure*}

\subsection{Validation of Gradient Distributions}
\label{apdx:graddist}
\label{sec:empvalid}

In this part, we discuss the distribution of the gradients generated while training several neural networks. Since the gradient vectors are compressible, \acp{SPD} can approximate the gradient distribution. This feature, i.e. Property \ref{property:sparspromotdist}, is numerically validated as follows. 

First, we consider the gradients from training ResNet-20 with \ac{SGD}. The generated gradient vector at iteration $i$ is compressed using $\topk$ with $\delta=0.001$, and the distributed \ac{SGD} is employed as described in \cref{apdx:dsgd}. We investigate two cases: i) memoryless compression, where the \acf{EC} mechanism is not applied, as shown in \cref{fig:fitteddistributions} ii) memory-based compression, where an \ac{EC} mechanism is deployed by adding the sparsification error from the previous iteration to the gradient vector before the $\topk$ compression, i.e., $\g_{\{i\}}=\g_{\{i\}}+ \left[\g_{\{i-1\}} -\Tk[\g_{\{i-1\}}]\right]$. For both cases, we collect the uncompressed gradients from the master worker, as different workers have similar gradients in distributions. The gradient vectors are then normalized by their $\ell_{2}$ norm to easily visualize and compare the evolution of the gradient distributions over various iterations. Then, the collected gradients are fitted by the three proposed \acp{SPD}, i.e., double exponential, double gamma, and double \ac{GPD} distributions. The parameters of the distribution are estimated as indicated in  Corollary~\ref{corollary:Laplacethreshold}, Corollary~\ref{corollary:gammathreshold}, and Corollary~\ref{corollary:Gparetothreshold}.

For the training with \ac{EC} mechanism in \cref{fig:fitteddistributionsEC}, it becomes more challenging to fit the gradients, especially for larger iterations, as can be seen in \cref{fig:PDF2EC}.  This behavior arises due to the addition of the sparsification error from the previous stage to the gradients, as the resulting distribution of the gradients changes. More precisely, the gradient distribution is the convolution between the \ac{PDF} of the error from the last stage and  \ac{PDF} of the current gradient vector before the \ac{EC}. Therefore, the distribution of the gradients significantly changes over the iterations. Therefore, single-stage fitting suffers more when \ac{EC} mechanism is used, particularly for fitting the tail, as in \cref{fig:CDF1EC} and \cref{fig:CDF2EC}.

We also validate that the gradients generated from the other networks in \cref{tab:models} can be well approximated with \acp{r.v.} distributed according to one of the \acp{SPD}, which are not reported here for conciseness.  In general, there is a slight variation in the fitting accuracy among the three \acp{SPD} for various networks and datasets, due to the nature of the gradients. For example, the double exponential distribution can not capture well the gradients with an empirical distribution that decays fast. In contrast, the double gamma and double \ac{GPD} distributions have an additional shape parameter that can approximate the behavior of sparser vectors with $\alpha<1$. Nevertheless, the double-exponential behaves well when the distribution of the absolute of the gradients decays as fast as the exponential distribution.

\begin{figure*}[!ht]
\centering
\begin{subfigure}[h]{0.48\textwidth}
\centering
 	\includegraphics[width=1\textwidth]{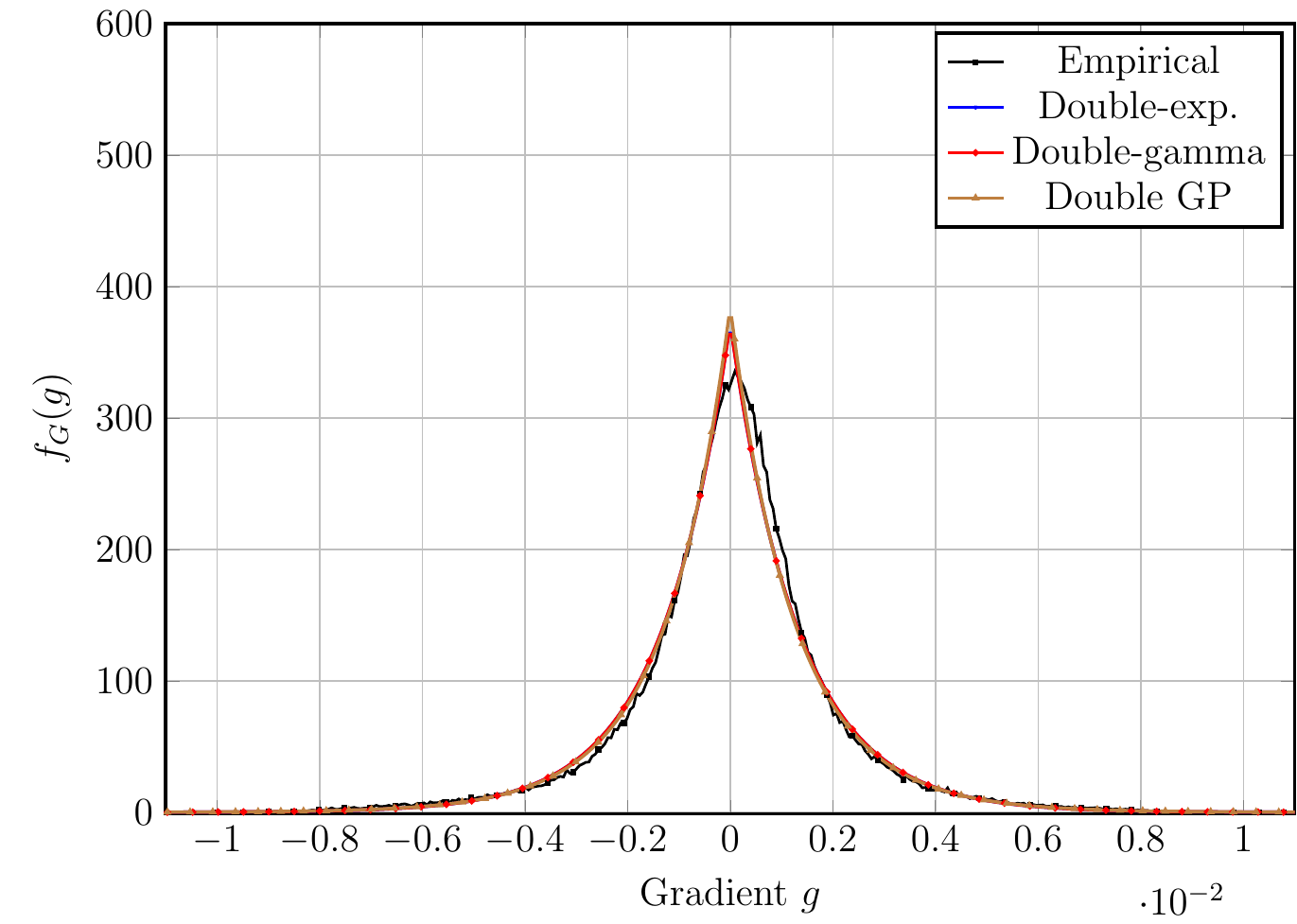}
 \caption{}
    \label{fig:PDF1EC}

    \end{subfigure}
\hfill
\begin{subfigure}[h]{0.48\textwidth}
\centering
 	\includegraphics[width=1\textwidth]{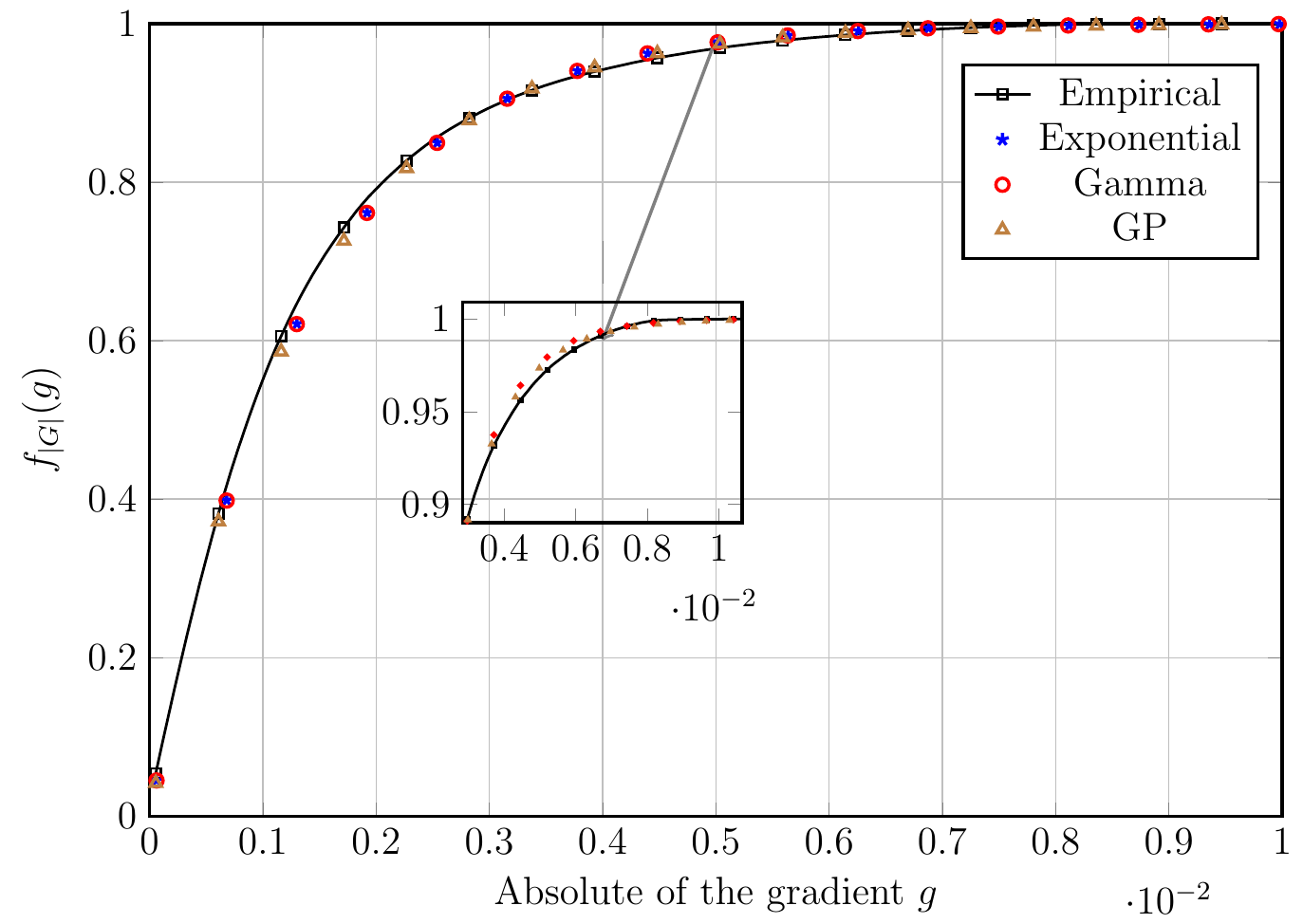}
\caption{}
    \label{fig:CDF1EC}
\end{subfigure}
\\
\begin{subfigure}[h]{0.48\textwidth}
\centering
     	\includegraphics[width=1\textwidth]{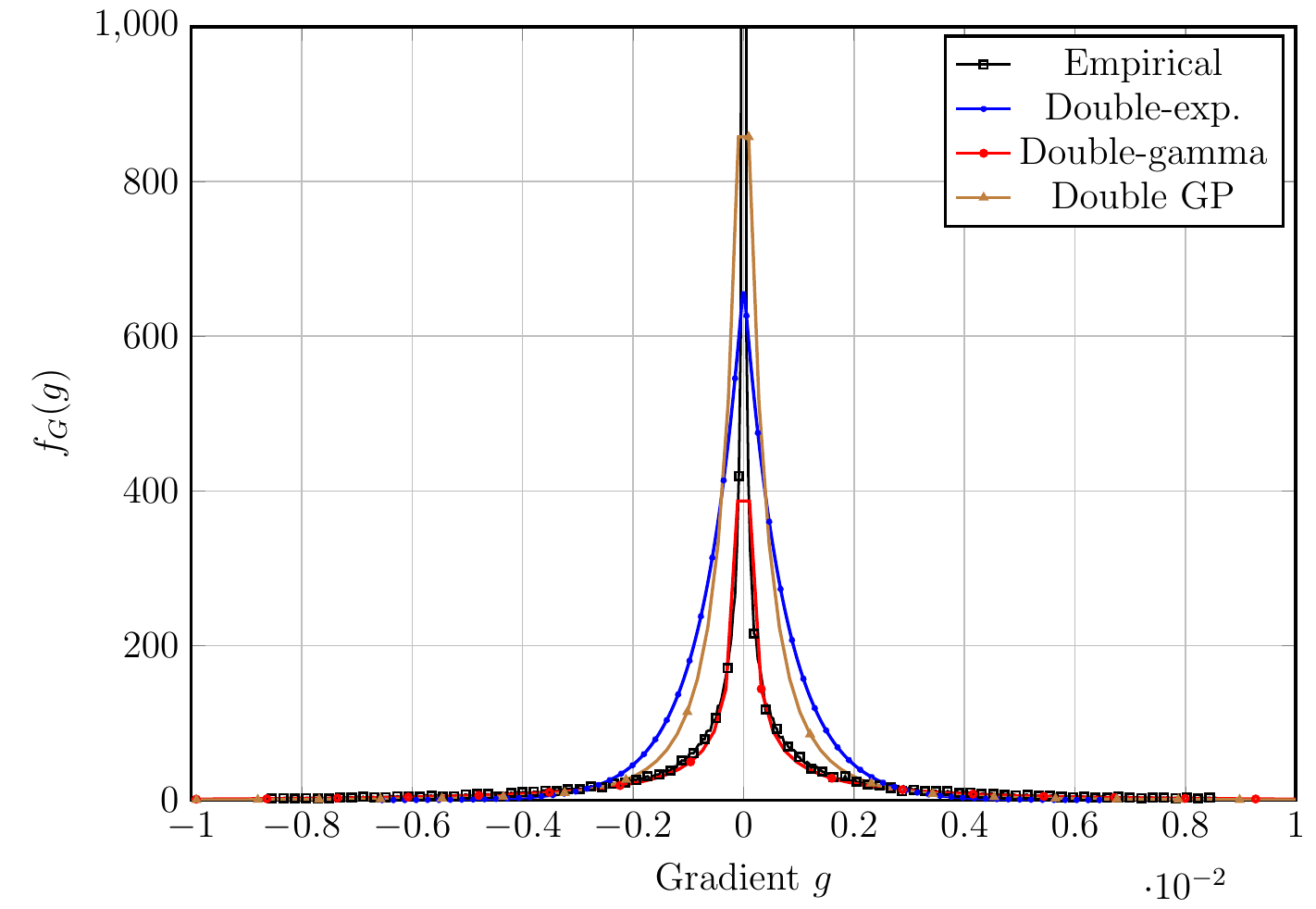}
 \caption{}
    \label{fig:PDF2EC}
    
    \end{subfigure}
\hfill
\begin{subfigure}[h]{0.48\textwidth}
\centering
 	\includegraphics[width=1\textwidth]{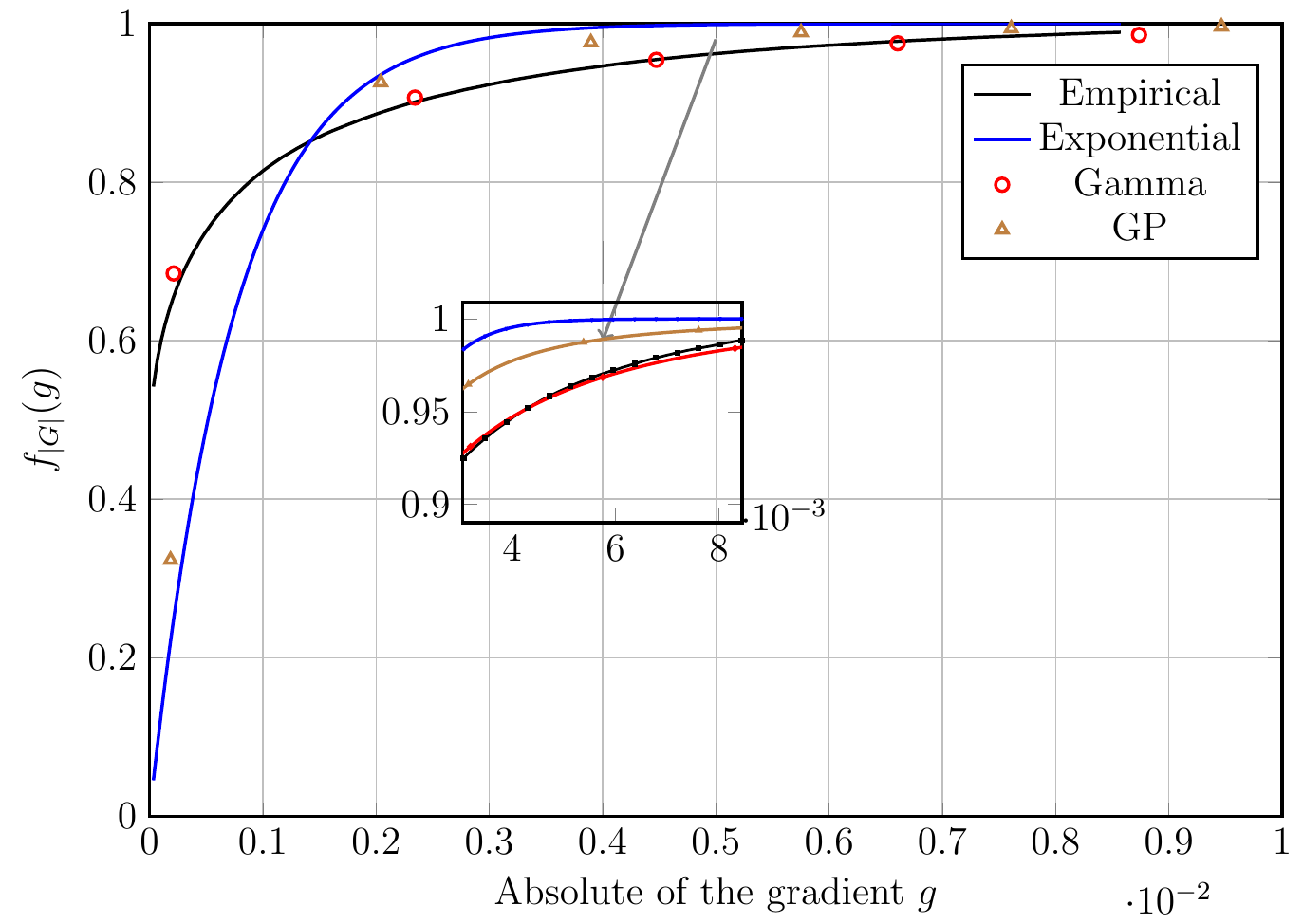}
\caption{}
    \label{fig:CDF2EC}
\end{subfigure}
\caption{Gradient fitting using the three \acp{SPD} for the gradient vector along with the empirical distribution generated from training ResNet-20 on CIFAR10 dataset using $\topk$ compressor \textbf{with \ac{EC} mechanism}, for the $100^{\text{th}}$ [(a) PDF, (b) CDF] and $10000^{\text{th}}$ [(c) PDF, (d) CDF] iterations.}
\label{fig:fitteddistributionsEC}
\end{figure*}

\subsection{Analysis of Double Gamma and Double Generalized Pareto Distributed Gradients}
\label{apdx:threshodcalculation}
\subsubsection{Threshold Calculation for Gamma Distributed Gradients} \label{apdx:gammathreshold}
\begin{customcorollary}{1.2}\label{corollary:gammathreshold}
Considering that the gradients that can be well-fitted by double gamma distribution with  shape parameter $\alpha \leq 1$, the absolute of the gradient is gamma distributed \cite{Bond:01}, i.e., $ |\G| {\sim} \operatorname{gamma}(\alpha,\b)$. The sparsifying threshold can be derived as
\begin{align} \label{eq.gammathreshold}
	\eta(\delta) &=\hat{\b} \, P^{-1}(\hat{\alpha},1-\delta) \\
	& \simeq   -\hat{\b}\, {\left[\log(\delta)+ \log({\Gamma(\hat{\alpha}))}\right]},  
	\end{align}
where	${  P(\alpha,x)\triangleq \frac{1}{\Gamma(\alpha)}\int _{0}^{x}t^{\alpha-1}\,\mathrm {e} ^{-t}\,{\rm {d}}t\,}$ is the regularized lower incomplete gamma function, and $P^{-1}(\alpha,p)\triangleq \{x: P(\alpha,x)=p\}$ is the inverse of the regularized lower incomplete gamma function \cite{AbraSte:65},
\begin{align}
\ahat &\triangleq \frac{3 - s+ \sqrt{(s - 3)^2 + 24\,s}}{12\,s}, &\bhat \triangleq  \frac{\widehat{\mu}}{\ahat},
\end{align}  
with $s \triangleq \log(\widehat{\mu})-\widehat{\mu}_{\log}$,  $\widehat{\mu}_{\log} \triangleq \frac{1}{\d}\sum_{i=1}^{\d} \log\left({|\gr|}_{i}\right)$ 
, and $\widehat{\mu}$ and $\widehat{\sigma}^2$ are  the sample mean and variance for the absolute gradient vector $|\g|$, respectively.
 \end{customcorollary}
%
 \begin{proof}
 The gradients are modeled by double gamma distribution %
 with $\alpha \leq 1$,
 with \ac{PDF} defined in \cite{Bond:01} as 
 \begin{align}
f_{\G}(\gr;\alpha,\b) & =\frac{1}{2} \frac{|\gr|^{\alpha-1} e^{-|\gr|/\beta}}{\b^\alpha\Gamma(\alpha)}, &&\text{for}\quad -\infty < \gr < \infty. 
 \end{align}
Hence, the absolute of the gradient is modeled as gamma distribution with \ac{PDF} 
\begin{align} \label{eq:gammapdf}
f_{|\G|}(\gr;\alpha,\b) & = \frac{\gr^{\alpha-1} e^{-\gr/\beta}}{\b^\alpha\Gamma(\alpha)}, &&\text{for}\quad 0 \leq \gr < \infty. 
 \end{align}
The \ac{CDF} of the gamma \ac{r.v.} which can be written from \eqref{eq:gammapdf} as
\begin{align}
F_{|\G|}(\gr;\alpha,\b) &=\int_{0}^{\gr}\frac{t^{\alpha-1} e^{-t/\beta}}{\b^\alpha\Gamma(\alpha)} \mathrm{d}t\\
&=\int_{0}^{\gr/\b}\frac{ z^{\alpha-1} e^{-z}}{\Gamma(\alpha)} \mathrm{d}z \triangleq P(\alpha, \gr/\b ) ,
\end{align}
where  $P(\alpha, x)$ is the regularized lower incomplete gamma function \cite{AbraSte:65}. The threshold in \eqref{eq.gammathreshold} follows from the inverse of the \ac{CDF} at $1-\delta$, as illustrated in \eqref{eq:thresholdabs} and by substituting the parameters of the gamma distribution  with their estimates ${\ahat}$ and $\bhat$. Nevertheless, calculating the threshold involves the inverse of incomplete gamma function which can be computationally heavy. In the following we provide a closed-form approximation for the threshold. First, we would like to find a closed-form approximation for the inverse lower incomplete function at $1-\delta$, i.e., $x \triangleq P^{-1}(\ahat,1-\delta)$.  Starting from the bound on  $P(\ahat, x)$ in \cite{Olver:97}, we have
\begin{align}
    P(\ahat, x)&= 1-\delta \geq 1- \frac{x^{\ahat-1}\,e^{-x}}{\Gamma(\ahat)} \qquad \text{for\,\,}\ahat\leq 1, x> 0. 
\end{align}
After some manipulations, we get
\begin{align}
       x   &\leq -\log\left(\delta \Gamma(\ahat)\right) -(1-\ahat)\log(x),  &&\text{for\,\,}{\ahat}\leq 1, x> 0,\\
    x&\leq - \log\left(\delta\right) -\log\left(\Gamma({\ahat})\right),  &&\text{for\,\,}{\ahat}\leq 1, x\geq 1.\label{eq:inversep}
\end{align}
Finally, by substituting $P^{-1}(\ahat,1-\delta)$ with \eqref{eq:inversep} in \eqref{eq.gammathreshold}, we get
\begin{align}
 \eta &\leq    - {\bhat}\, \left[\log(\delta)+ \log({\Gamma({\ahat}))}\right], &&\text{for\,\,}{\ahat}\leq 1, x\geq 1
\end{align}
with equality if $\ahat=1$. For $0<x<1$ or  $\ahat>1$, the bound does not hold, however, it provides a good approximation for the threshold when $\ahat$ is close to one. 

For estimating the pentameters, let us start by 
 %
the \ac{PDF} of the gamma distribution, defined as
\begin{align}
f_{|\G|}(\gr;\a,\b) & = \frac{\gr^{\a-1} e^{-\gr/\b}}{\b^\a\Gamma(\a)} \quad \text{ for } x > 0, \quad \a,\b>0
\end{align}
where $\alpha$ and $\beta$ are the shape and scale parameters, respectively. The shape parameter can be estimated from the absolute gradient vector $|\g|$ using \ac{MLE} as the solution of
\begin{align}\label{eq.alphaMLE}
\Psi(\alpha)-\log(\alpha)+\log(\widehat{\mu})- \widehat{\mu}_{\log}=0,
\end{align}
where $\Psi(x)\triangleq \frac{\d \Gamma(x)}{\d x}$ is the digamma function, $\widehat{\mu} \triangleq \frac{1}{\d}\sum_{i=1}^{\d} {|\gr|}_{i}$ is the sample mean, and $\widehat{\mu}_{\log} \triangleq \frac{1}{\d}\sum_{i=1}^{\d} \log\left({|\gr|}_{i}\right)$  \cite{Papoulis:02}. On the other hand, the scale parameter can be estimated as   $\bhat=\widehat{\mu}/\alpha$. Nevertheless, the shape parameter estimation in \eqref{eq.alphaMLE} involves solving a non-linear equation with a special function. Hence, it increases the computational complexity for the scheme, leading to higher time overhead for the compression. In order to reduce the complexity, we propose to employ a simpler closed-form approximation for the shape parameter, i.e.,
\begin{align}
&\ahat = \frac{3 - s+ \sqrt{(s - 3)^2 + 24\,s}}{12\,s},   & \bhat=\frac{\widehat{\mu}}{\alpha},
\end{align}
where $s \triangleq \log(\widehat{\mu})-\widehat{\mu}_{\log}$ \cite{Minka:02}.
 \end{proof}
\subsubsection{Threshold Calculation for Generalized Pareto Distributed Gradients} \label{apdx:GPDthreshold}
\begin{customcorollary}{1.3}\label{corollary:Gparetothreshold}
For gradients distributed as double generalized Pareto \acp{r.v.}, the absolute of the gradients is modeled as \ac{GPD} distributed \acp{r.v.}, $|\G|\sim \operatorname{GP}(\a,\b,\loc)$, where $0 < \a < 1/2$, $\b$, $\loc=0$ are the shape, scale, and location parameters. The sparsifying threshold that achieves a compression ratio $\delta$ is 
\begin{align}\label{eq:thresholddgpd}
    \eta &= \frac{\bhat}{\ahat} \left(e^{-\ahat \log(\delta)}-1 \right),
\end{align}
where
\begin{align}\label{eq:gpdestimatesappdx}
     \ahat&\triangleq \frac{1}{2}\,  \left(1-\frac{\hat{\mu}^2}{\hat{\sigma}^2}   \right), &&\bhat \triangleq \frac{1}{2}\, \hat{\mu} \left(\frac{\hat{\mu}^2}{\hat{\sigma}^2} +1  \right),
 \end{align}
 with $\hat{\mu}$ and $\hat{\sigma}^2$ being the sample mean and variance for the absolute gradient vector, $|\g|$, respectively.
\end{customcorollary}
\begin{proof}
the gradients can be well-fitted by double \ac{GPD} distribution with \ac{PDF}, indicated in \cite{ArmDunLee:13} as\footnote{The double \ac{GPD} distribution resembles the Laplacian distribution for $\a \rightarrow 0$. Similarly, the \ac{GPD} becomes exponential distribution for $\a=0$.}  
\begin{multline}
    f_{\G}(\gr)= \frac{1}{2 \b} \left(1+ \a \,\frac{|\gr|}{\b} \right)^{-(\frac{1}{\a}+1)}, \\ 0 < \a < \frac{1}{2},\,- \infty <\gr < \infty
\end{multline}
Hence, the absolute of the gradients can be modeled as \ac{GPD} distributed \acp{r.v.} with \ac{PDF}
\begin{align}
    f_{|\G|}(\gr)= \frac{1}{\b} \left(1+ \a\, \frac{\gr}{\b} \right)^{-(1/\a+1)}, && 0 < \a \leq \frac{1}{2},\, \gr \geq 0
\end{align}
and the corresponding \ac{CDF} can be written from \cite{HosWal:87} as
 \begin{align}\label{eq:cdfgpd}
     F_{|\G|}(\gr)= 1- \left(1+ \a\, \frac{\gr}{\b} \right)^{-1/\a}.
 \end{align}
 The inverse \ac{CDF} can be written from \eqref{eq:cdfgpd} as
 \begin{align} \label{eq:inversecdfgpd}
     F^{-1}_{|\G|}(p)= \frac{\b}{\a} \left(e^{-\a \log(1-p)}-1 \right). 
 \end{align}
 From  \eqref{eq:thresholdabs} and \eqref{eq:inversecdfgpd} and by substituting the distribution parameters with their estimates, provided below, the threshold in \eqref{eq:thresholddgpd} follows. 

%
 Unfortunately, there are no closed-form \ac{ML} estimators for the parameters of \ac{GPD} distributions. Hence, the  \ac{ML} estimates have to be computed through complex numerical optimization. Alternately, the parameters can be estimated in closed-from through the \ac{MM} method under some conditions on the shape parameter \cite{HosWal:87}. More precisely, for the considered range of the shape parameter, i.e., $-0.5<\alpha<0.5$, the first and second moments exit and they can be written as
 \begin{align} \label{eq:meanssgdp}
 \mu &= \frac{\b}{1+\a}, && S^2=  \frac{\b^2}{(1+\a)^2 (1+2\,\a)},
 \end{align}
 where $\mu$ and $S^2$ are the mean and mean square, respectively. Therefore, from \eqref{eq:meanssgdp} through the \ac{MM} method, the parameters can e estimated as
 \begin{align}\label{eq:gpdestimatesapndx}
     \ahat&= \frac{1}{2}\,  \left[1-\frac{\hat{\mu}^2}{\hat{\sigma}^2} \right], &&\bhat= \frac{1}{2}\, \hat{\mu} \left[\frac{\hat{\mu}^2}{\hat{\sigma}^2} +1  \right],
 \end{align}
 where $\hat{\mu}$ and $\hat{\sigma}^2$ are the sample mean and variance for the absolute gradient vector, $|\g|$, respectively.
 \end{proof} 
\subsubsection{Proof of \cref{lemma:PoT}} \label{apdx:prooflemmaPoT}
The distribution of the \ac{PoT} absolute gradients for the $m$th stage can be approximated as \ac{GPD} distribution from Theorem 4.1 in \cite{Coles:01} with \ac{CDF} 
\begin{align}
 F_{{{|\bar{\G}_{m}|}}}(\gr)=& 1- \left(1+ \a_{m}\, \frac{\gr-\eta_{m-1}}{\b_{m}} \right)^{-1/\a_{m}}, \nonumber \\
 &\gr \geq \eta_{m-1}, -1/2< \a_{m} < 1/2,
 \label{eq:mscdfgpd}
\end{align}
where the first and second moments of the \ac{r.v.} ${{{|\bar{\G}_{m}|}}}$  are finite and the \ac{PDF} is smooth for $-1/2 <\a_{m} < 1/2$ \cite{HosWal:87}. The inverse \ac{CDF} can be written from \eqref{eq:mscdfgpd} as
 \begin{align} \label{eq:msinversecdfgpd}
     F^{-1}_{{{|\bar{\G}_{m}|}}}(p)= \frac{\b_{m}}{\a_{m}} \left(e^{-\a_{m} \log(1-p)}-1 \right) +\eta_{m-1}. 
 \end{align}
 The threshold in \eqref{eq:msthresholddgpd} follows from  \eqref{eq:thresholdabs} and \eqref{eq:msinversecdfgpd} and by substituting the distribution parameters with their estimates derived as from \eqref{eq:meanssgdp}
   \begin{align}\label{eq:gpdestimatesapndxPoT}
     \ahat_{m}&= \frac{1}{2}\,  \left[1-\frac{\bar{\mu}^2}{\bar{\sigma}^2} \right], &&\bhat_{m}= \frac{1}{2}\, \bar{\mu} \left[\frac{\bar{\mu}^2}{\bar{\sigma}^2} +1  \right],
 \end{align}
 where the sample mean $\bar{\mu}$ and the variance $\bar{\sigma}^2$ are computed from absolute of the \ac{PoT} gradients shifted by the threshold, i.e., $|\tilde{\g}_{m}|-\eta_{m-1}$.
 
\subsubsection{Proof of Corollary~\ref{corollary:expPoT}} \label{apdx:proofcorollaryexpPoT}
The \ac{CCDF} of the exceedance \ac{r.v.} can be written as
\begin{align}
    &\mathbb{P}\left\{{{{|\bar{\G}_{m}|}}} \geq \gr\right\} \nonumber \\ &=\mathbb{P}\Big\{{|\G_{m}|} \geq \gr \,\Big|\, |\G_{m}|>  \eta_{m-1} \Big\},\forall \gr \geq \eta_{m-1} \\
    &=\mathbb{P}\Big\{{|\G_{m}|} \geq \eta_{m-1}\!+\!y \,\Big|\, |\G_{m}|\!>\!  \eta_{m-1} \Big\}, \forall y \!\triangleq\! \gr\!-\!\eta_{m-1} \!\geq\! 0 \\
    &= \frac{1- F_{|G_{m}|}(\eta_{m-1}+y)}{1- F_{|G_{m}|}(\eta_{m-1})}
    =e^{- \frac{\gr-\eta_{m-1}}{\b_{m}}}. \label{eq:msexp}
\end{align}
From \eqref{eq:msexp}, the \ac{PoT} gradients is distributed as exponential \ac{r.v.} with location $\eta_{m-1}$. Hence, the \ac{r.v.} ${|\bar{\G}_{m}|}-\eta_{m-1}$ is exponentially distributed. Consequently, the threshold can be calculated from \eqref{eq:LaplaceThreshold} after proper shifting.
\section{Proof of Lemma~\ref{lemma:convanalysis} for the Convergence Analysis}
\label{appnd:conanalyproof}
 Let $\bar{f}: \mathbb{R}^\d \rightarrow \mathbb{R}$ be a function that is required to be minimized. This function can be a convex or non-convex $L_{0}$-smooth function \cite{ef-sgd}. Also, the expected value of the stochastic gradient vector equals the oracle gradient, and the second moment of the stochastic gradient is upper bounded by some constant, i.e., $\sigma_{0}^2$. 
 
Let us start first with the assumption that the genie-aided distribution for the amplitude of the stochastic gradient, $F_{\G}(g)$, is known.\footnote{The genie-aided distribution assumption is relaxed later.} Hence, the threshold $\eta$ is calculated as in \cref{eq:threshold} for some compression ratio $\delta$.\footnote{Note, the genie-aided distribution of gradients' amplitude, $F_{\G}(g)$, is not similar to the oracle gradient, $\nabla \bar{f}(\x_{\{i\}})$.} After applying the threshold based compression operator $\mathbb{C}_{\eta}$,  the number of non-zero gradients in the sparsified vector  is a \ac{r.v.} distributed as binomial distribution with number of trials $\d$ and success probability $\delta$. Hence,
 the expected number of non-zero gradients matches that of $\topk$, i.e.,
    $\mathbb{E}\left\{ \normo{ \mathbb{C}_{\eta}\{\mathbf{\g}\}} \right\}=\delta \d= \k.$ 
    Therefore, the threshold based compression technique, designed to keep the $\k$ largest gradients in magnitude, has  the same $\k$-contraction property of $\topk$ on average
    \begin{align}\label{eq:contraction}
        \mathbb{E}\left\{{\left\lVert{\mathbb{C}_{{\eta}}\{\mathbf{\g}\}}- \mathbf{\g}\right\rVert}_{2}^{2}\right\} &=  \mathbb{E}\left\{{\left\lVert{\mathbb{T}_{{\k}}\{\mathbf{\g}\}}- \mathbf{\g}\right\rVert}_{2}^{2}\right\} \nonumber \\
        &\leq  (1-\delta)\,\mathbb{E}\left\{{\mathbf{\|\g\|}}_{2}^{2} \right\}.
    \end{align} 
From \eqref{eq:contraction} and Theorem II in \cite{ef-sgd} for compressed \ac{SGD} adopted with  the \ac{EC} technique,  we have   
\begin{align}
    \min_{i\in [I]} \mathbb{E}\{\norm{\nabla \bar{f}\left(\x_{\{i\}}\right)}_{2}^{2} \} &\leq \frac{4(\bar{f}(\x_{0})-\bar{f}^{*})+L_{0}\, \sigma_{0}^2}{2\sqrt{I+1}}\nonumber \\
    &+\frac{4 \, L_{0}^2\, \sigma_{0}^{2} \,(1-\delta)}{\delta^2\,(I+1)},
\end{align}
where 
$\bar{f}^{*}$ is a  minimum value for the function $\bar{f}$, and $I$ is the number of iterations over which the function is minimized. Therefore, the rate of convergence of the threshold based scheme with genie-aided distribution coincides with that of $\topk$ designed with the same compression ratio $\delta$ in remark 4 in \cite{ef-sgd}. In other words,  after  $I>\mathbb{O}\left(1/\delta^2\right)$ iteration, the thresholding scheme coincides with the \ac{SGD} convergence rate.


Now let us move to a more realistic case where we do not know the genie-aided distribution of the gradients. Indeed, there can be a discrepancy between the original and estimated distribution $\hat{F}_{\G}(g)$, which weakens the assumption of \ac{SPD}. In this case, the threshold is estimated as $\hat{\eta}=\hat{F}^{-1}_{\G}(1-\delta)$, leading to an error in the resulting average compression ratio, $\hat{\delta} \triangleq \kh/\d$, quantified as
\begin{align}
    \hat{\delta}-\delta  &\triangleq \frac{1}{\d} \left(\mathbb{E}\left\{\normo{\mathbb{C}_{\hat{\eta}}\{\mathbf{\g}\}}\right\} -\mathbb{E}\left\{\normo{\mathbb{C}_{\eta}\{\mathbf{\g}\}}\right\} \right)\\
    &=
   F_{\G}\left(\eta(\delta) \right)-  {F}_{\G}(\hat{\eta}(\delta)).  
\end{align}
In \cref{algo:algo1}, the number of thresholding stages are adapted such that 
\begin{align}
&\left\lvert  \hat{\delta}-\delta  \right\rvert \leq \epsilon \, \delta, && 0 \leq \epsilon < 1\,.
\end{align}
Hence, the actual compression ratio can be bounded as
\begin{equation}
    \delta\, \left(1- \epsilon\right) \leq \hat{\delta} \leq  \delta\, \left(1+ \epsilon \right).
\end{equation}

 For $\hat{\delta} \geq \delta$, the proposed scheme convergences with a rate faster than that of $\topk$, as the total number of iterations required to reach the \ac{SGD}'s rate is $I>\mathbb{O}\left(\frac{1}{\delta^2 \, (1+\epsilon)^2}\right)$, which is smaller than that required for $\topk$. The reason is that the proposed scheme, in this case, has a better contraction property on average. On the other hand, for $\hat{\delta} \leq \delta$, after number of iterations $I>\mathbb{O}\left(\frac{1}{\delta^2 \, (1-\epsilon)^2}\right)$, the proposed scheme coincides with the SGD convergence rate, requiring more iterations than $\topk$. In Lemma~\ref{lemma:convanalysis}, we report only the worst-case convergence rate, requiring more iterations.

\section{Experimental Specifications}
\label{apdx:clusters}

\paragraph{\textbf{Cluster 1 - Dedicated Environment}}
\begin{itemize}[noitemsep,topsep=0pt,leftmargin=10pt]
    \item 8 nodes per experiment
    \item GPUs per node: 1 $\times$ Tesla V100-SXM2 with 16GB of GPU memory
    \item GPU inter-connection: traversing PCIe and the SMP interconnect between NUMA nodes
    \item CPU: Intel(R) Xeon(R) Silver 4112 CPU @ 2.60GHz, 16 cores
    \item System memory: 512 GiB
    \item Ethernet: 25 Gbps SFI/SFP+ - Ethernet
    \item Network Topology: Star network topology
    \item OS: Ubuntu 18.04 + Linux Kernel v4.15
    \item Environment: Horovod's Docker container on DockerHub
    \item Software: PyTorch 1.1.0, Horovod 0.16, and OpenMPI v4.0
    
\end{itemize}
\paragraph{\textbf{Cluster 2 - Shared Environment}}
\begin{itemize}[noitemsep,topsep=0pt,leftmargin=10pt]
    \item 1 node per experiment
    \item GPUs per node: 8 $\times$ Tesla V100-SXM2 with 32GB of GPU memory
    \item GPU inter-connection: traversing PCIe and the SMP interconnect between NUMA nodes
    \item CPU:  Intel(R) Xeon(R) Gold 6248 CPU @ 2.50GHz, 16 cores
    \item System memory: 512 GiB
    \item Ethernet: 100 Gbps - InfiniBand
    \item Network Topology: Fat Tree topology
    \item OS: CentOS 7.7 + Linux Kernel v3.10
    \item Environment: Miniconda 4.3
    \item Software: PyTorch 1.3, Horovod 0.18, and OpenMPI 4.0
\end{itemize}

\subsection{Further Experimental and Evaluation Details}
\label{apdx:expdetails}
Here, we present more details on our experimental settings, benchmarks, hyper-parameters, etc. First, we describe the three benchmarks used in this work which covers three commonly used ML tasks in practice. The benchmarks also cover both \ac{RNN} and \ac{CNN} architectures. 

\textbf{Image Classification: } We studied ResNet20 anf VGG16 on Cifar10, and ResNet-50 and VGG19 on ImageNet. Cifar10 consists of 50,000 training images and 10,000 validation images in 10 classes~\cite{cifar10}, while ImageNet contains over 1 million training images and 50,000 validation images in 1000 classes~\cite{imagenet}. We train CIFAR10 models with vanilia SGD (without Momentum) and ImageNet models with Nesterov-momentum SGD following the training schedule in~\cite{Gross2016}. The warm-up period is set to 5 epochs for all schemes.

\textbf{Language Modeling: } The Penn Treebank corpus (PTB) dataset consists of 923,000 training, 73,000 validation and 82,000 test words~\cite{ptb}. 
We adopt the 2-layer LSTM language model architecture with 1500 hidden units per layer~\cite{lstm}. We use Nesterov-momentum SGD with gradient clipping, while learning rate decays when no improvement has been made in validation loss. The warm-up period is 5 epoch out of the 30 epochs.

\textbf{Speech Recognition: } The AN4 dataset contains 948 training and 130 test utterances~\cite{an4}. We use DeepSpeech architecture without n-gram language model~\cite{deepspeech}, which is a multi-layer RNN following a stack of convolution layers. We train a 5-layer LSTM of 800 hidden units per layer with Nesterov momentum SGD and gradient clipping, while learning rate anneals every epoch. The warm-up period is 5 epochs out of 150 epochs.

\paragraph{\textbf{Further Evaluation Details:}}
For training speed-up, we evaluate the speed-up based on the time-to-accuracy of the method that is when it can achieve (or exceed) a certain training accuracy or test perplexity. The target test accuracy is 75\% for ResNet20 and 80\% for VGG16 on CIFAR-10. The target test perplexity is 105 for PTB benchmark. The target \ac{CER} is 55 for AN4. We compare no compression, existing and proposed sparsification methods with ratios of ($k=0.1,0.01,0.001$) using 8 nodes.




\section{Extra experiments, and results}
\label{apdx:moreexp}

\begin{figure*}[!h]
\captionsetup[subfigure]{justification=centering}
\centering
\begin{subfigure}[ht]{0.8\linewidth}
  \includegraphics[width=1\linewidth]{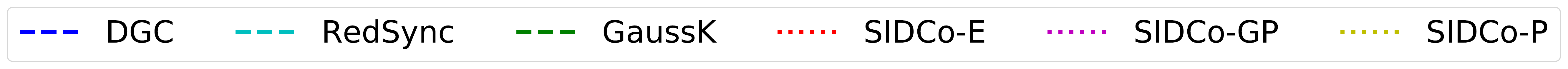}
 \end{subfigure}
  \\
  \begin{subfigure}[ht]{0.3\linewidth}
    \includegraphics[ width=\textwidth]{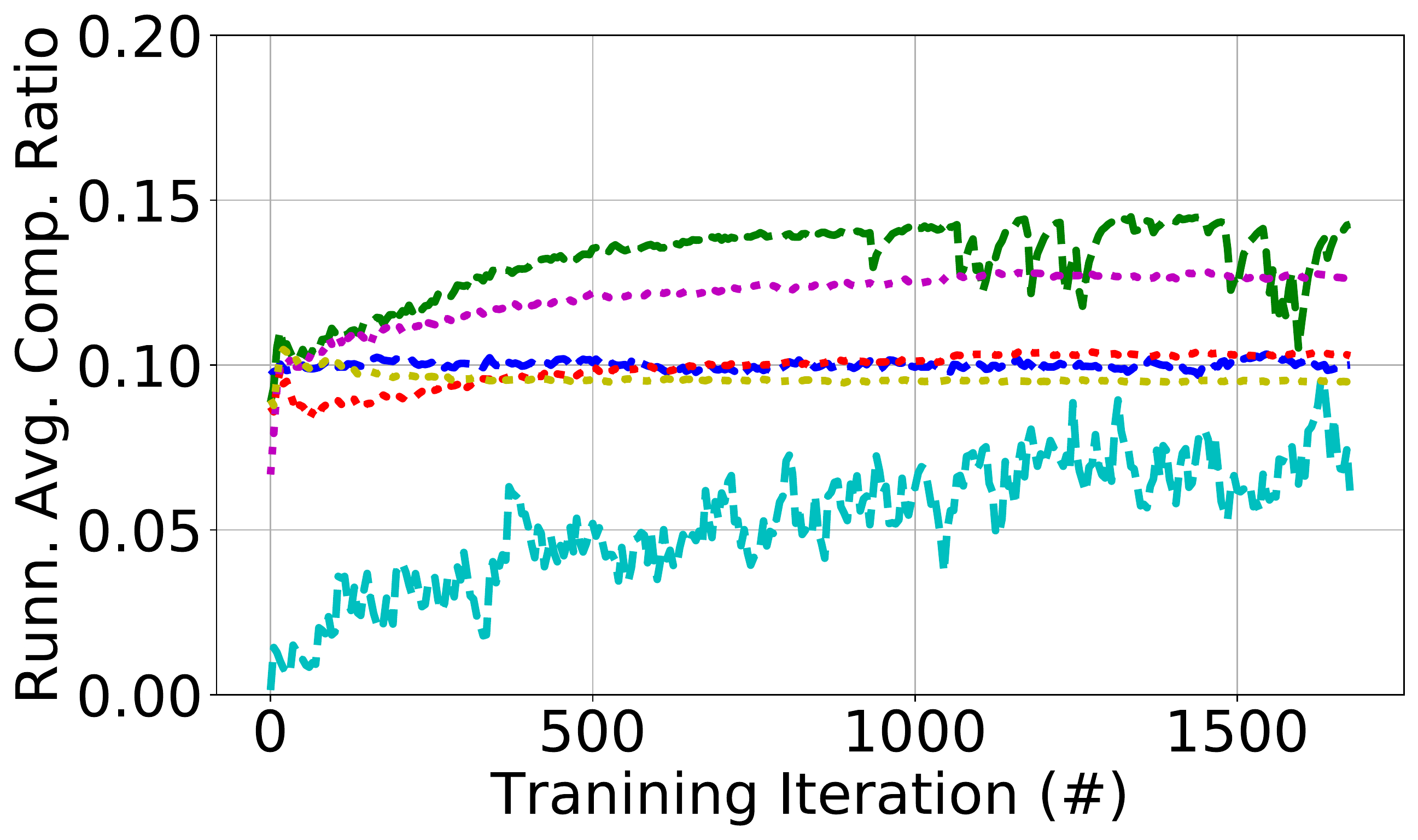}
	\caption{ResNet20 on CIFAR10 - Ratio 0.1.}
	\label{fig:resnet20-avgcomp0.1-8}
     \end{subfigure}
     \hfill
	\begin{subfigure}[ht]{0.3\linewidth}
  \includegraphics[ width=\textwidth]{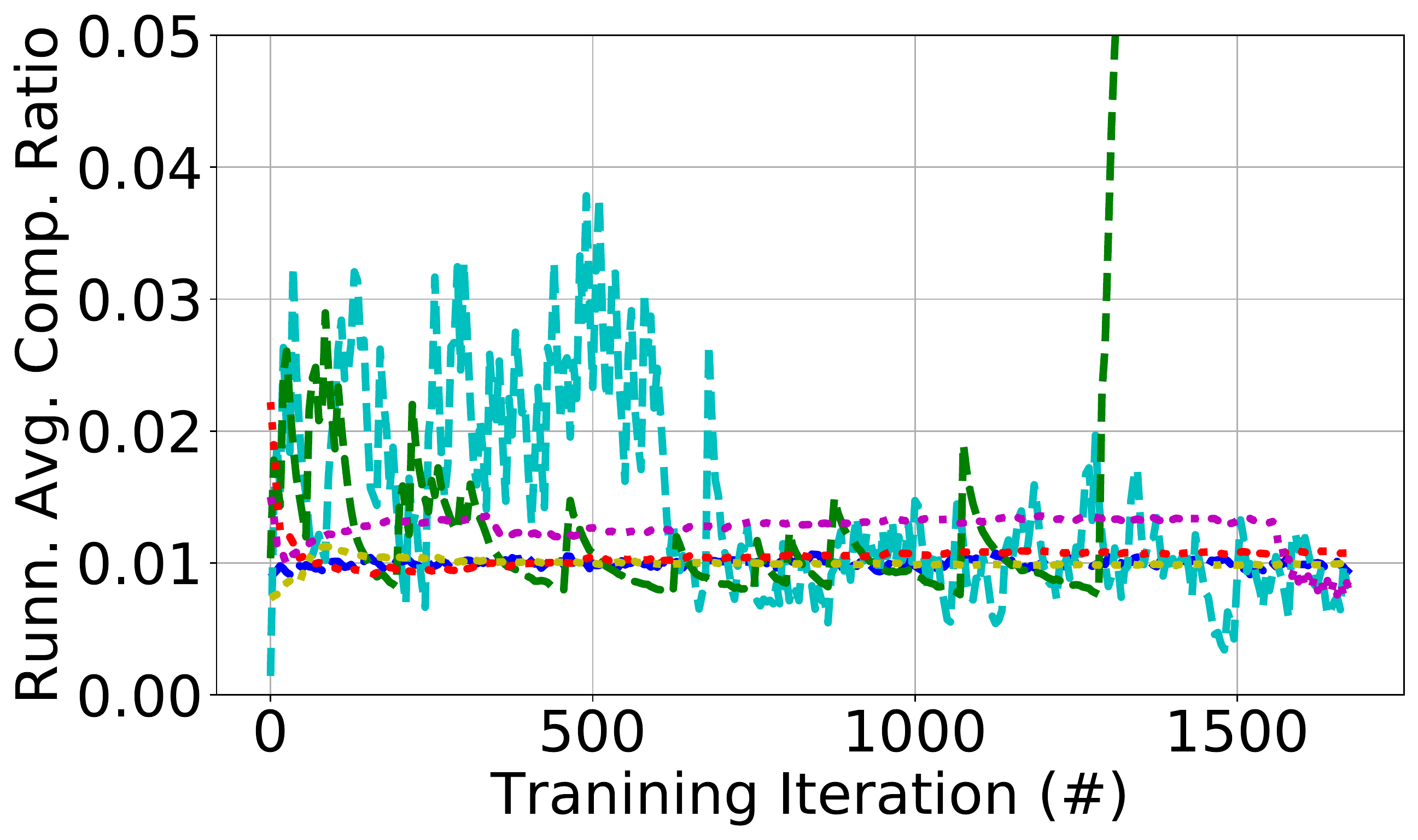}
	\caption{ResNet20 on CIFAR10 - Ratio 0.01.}
	\label{fig:resnet20-avgcomp0.01-8}
    \end{subfigure}
     \hfill
    \begin{subfigure}[ht]{0.3\linewidth}
   \includegraphics[ width=\textwidth]{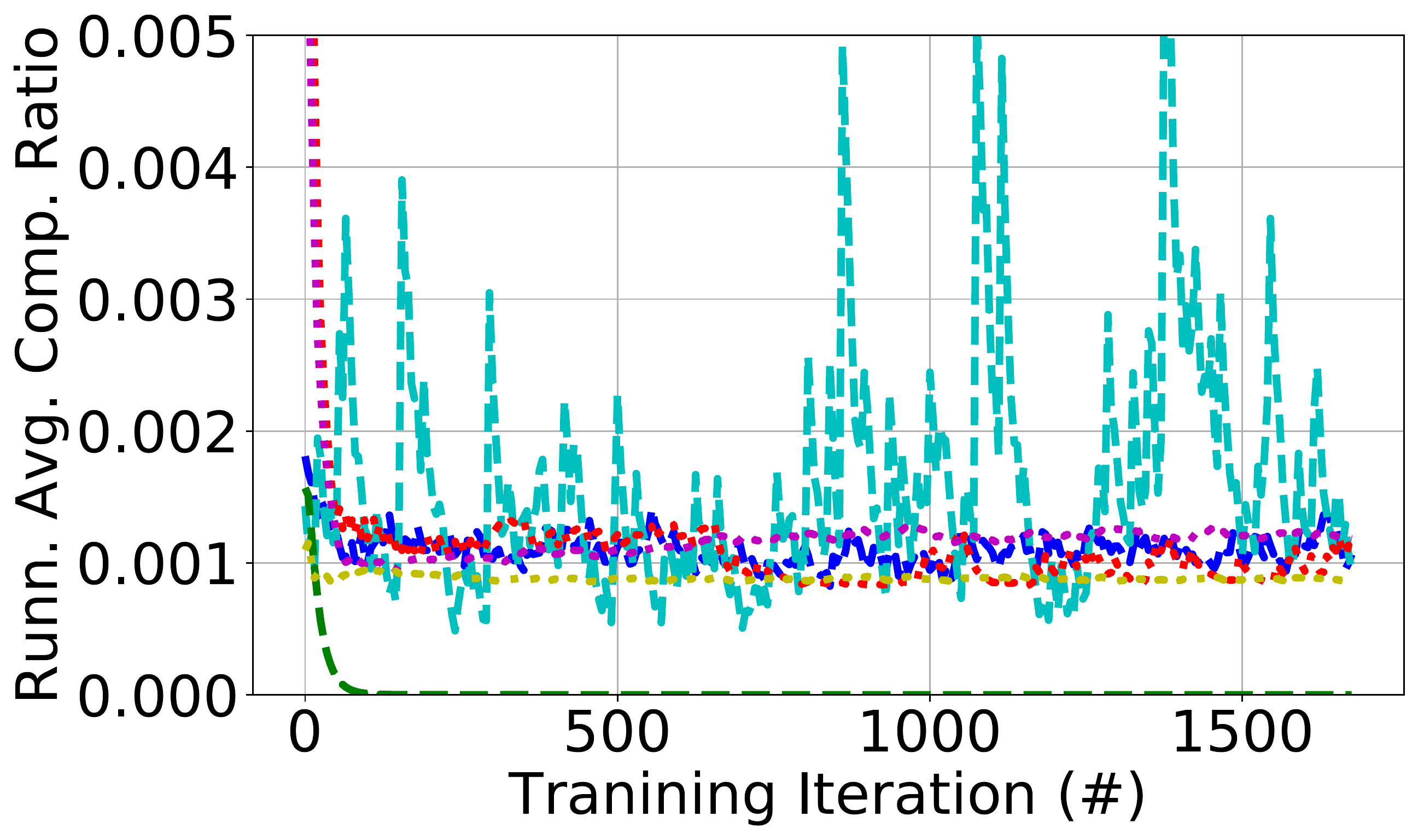}
	\caption{ResNet20 on CIFAR10 - Ratio 0.001.}
	\label{fig:resnet20-avgcomp0.001-8}
     \end{subfigure}
     \\
     \begin{subfigure}[ht]{0.3\linewidth}
    \includegraphics[ width=\textwidth]{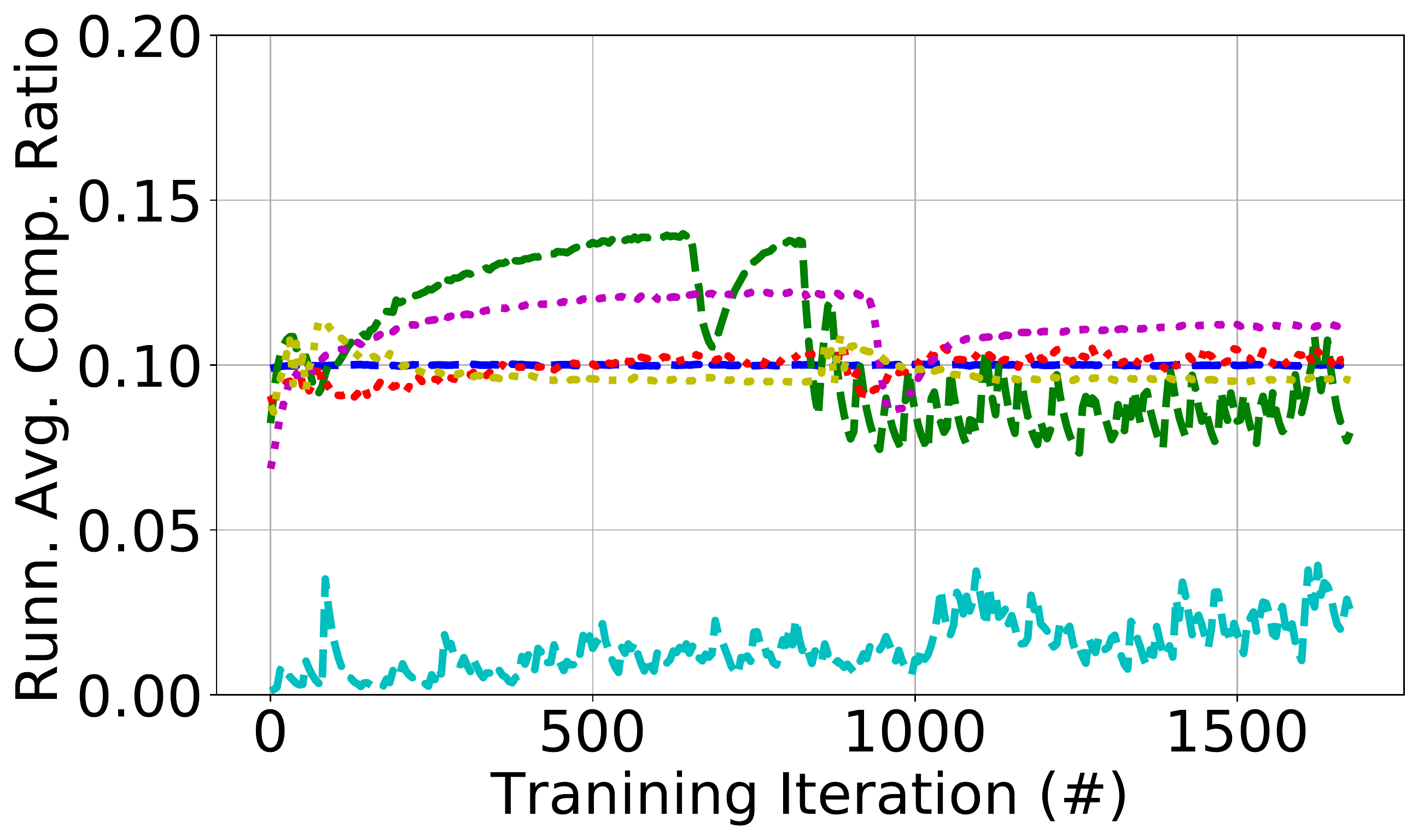}
	\caption{VGG16 on CIFAR10 - Ratio 0.1.}
	\label{fig:vgg16-avgcomp0.1-8}
     \end{subfigure}
     \hfill
	\begin{subfigure}[ht]{0.3\textwidth}
  \includegraphics[ width=\textwidth]{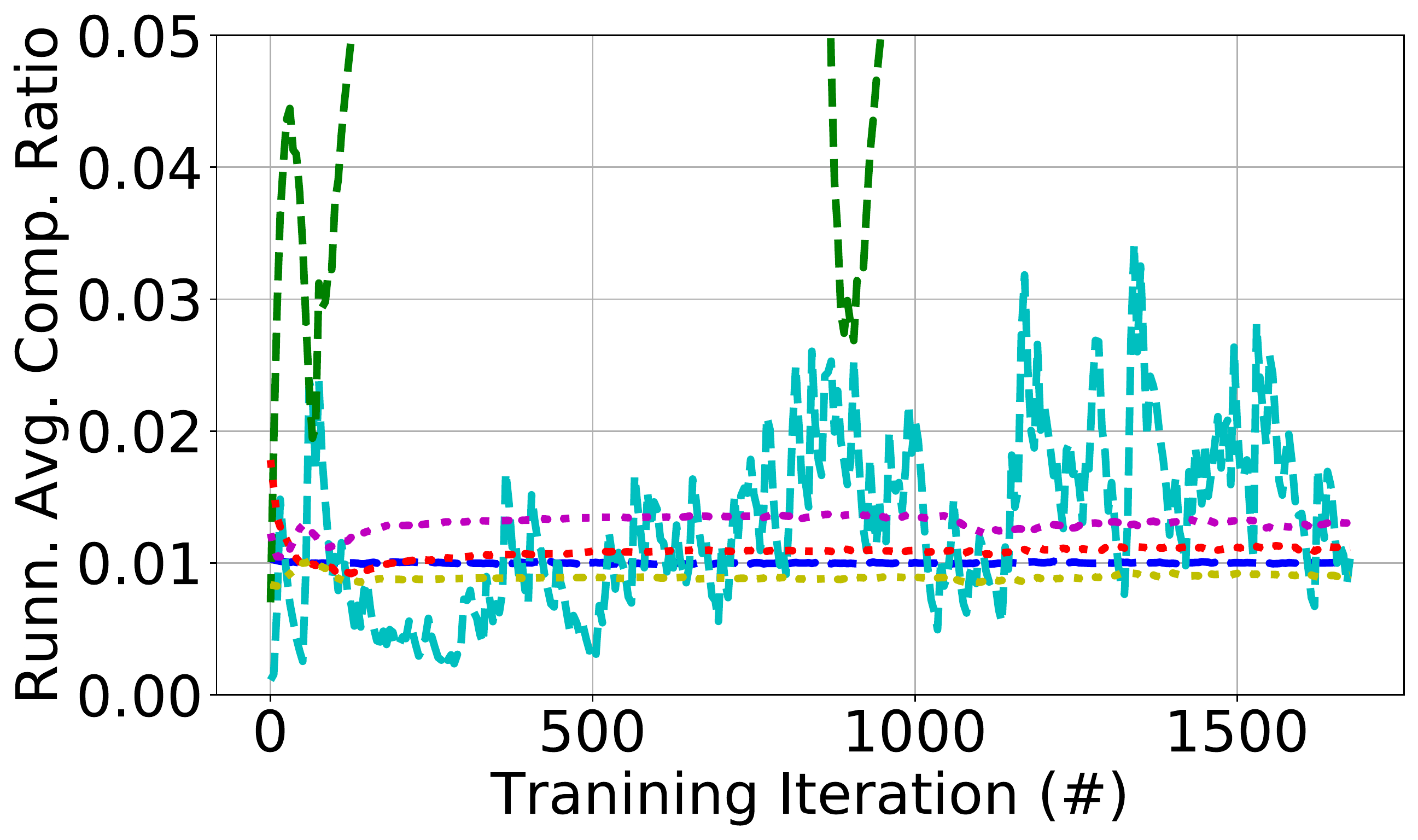}
	\caption{VGG16 on CIFAR10 - Ratio 0.01.}
	\label{fig:vgg16-avgcomp0.01-8}
    \end{subfigure}
     \hfill
    \begin{subfigure}[ht]{0.3\linewidth}
   \includegraphics[ width=\textwidth]{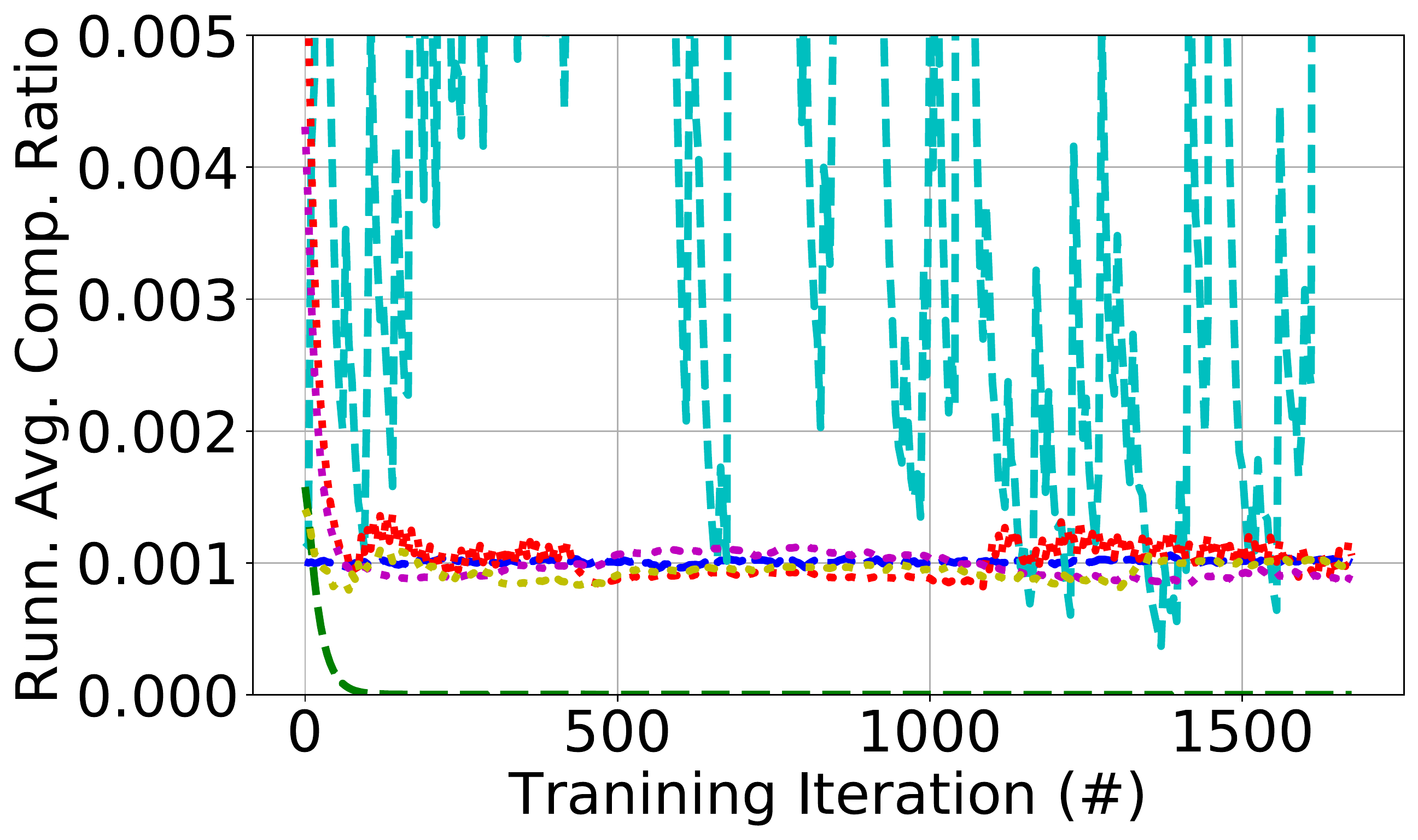}
	\caption{VGG16 on CIFAR10 - Ratio 0.001.}
	\label{fig:vgg16-avgcomp0.001-8}
     \end{subfigure}
      \\
      \begin{subfigure}[ht]{0.3\linewidth}
    \includegraphics[ width=\textwidth]{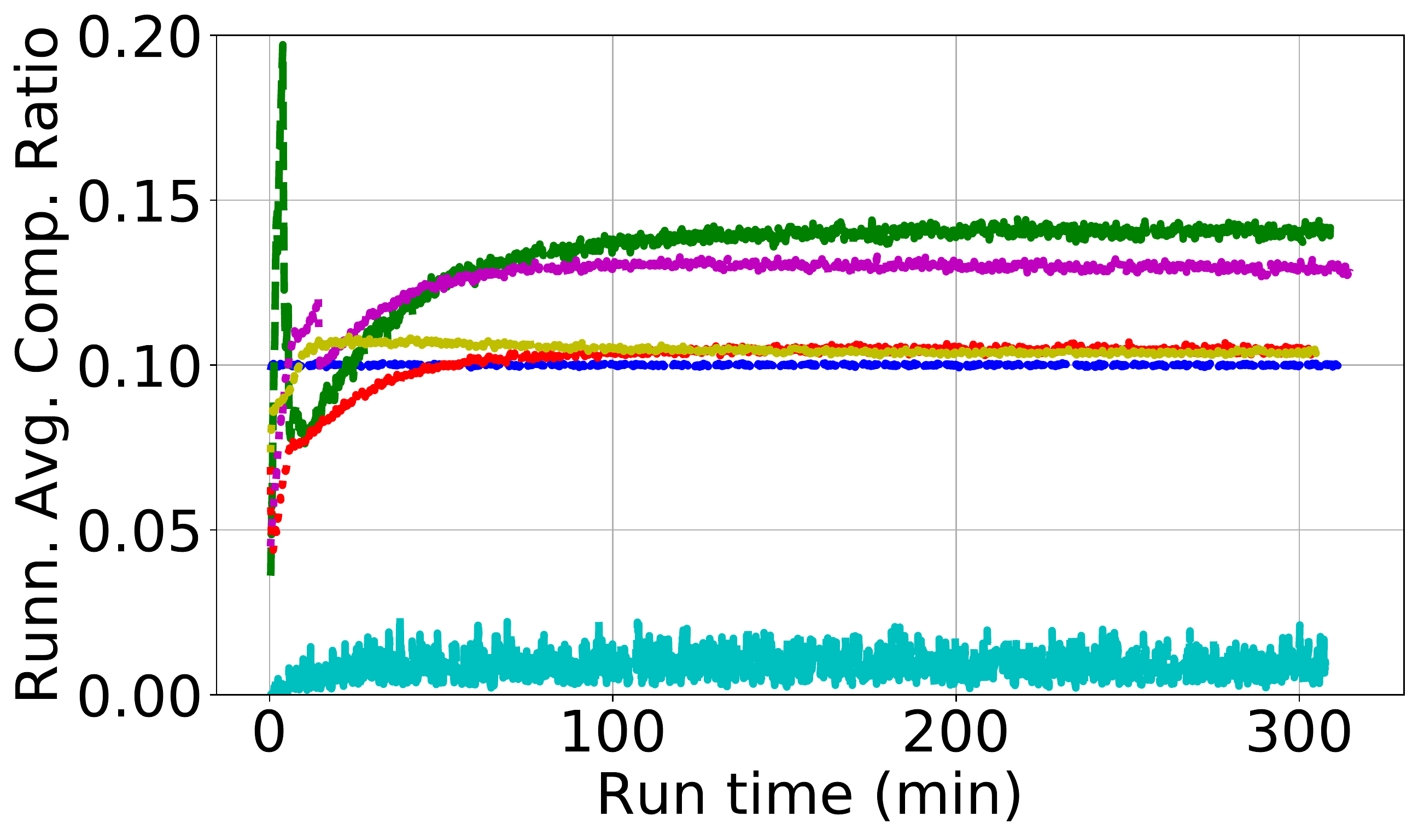}
	\caption{ResNet50 on ImageNet - Ratio 0.1.}
	\label{fig:resnet50-avgcomp0.1-8}
     \end{subfigure}
     \hfill
	\begin{subfigure}[ht]{0.3\linewidth}
  \includegraphics[ width=\textwidth]{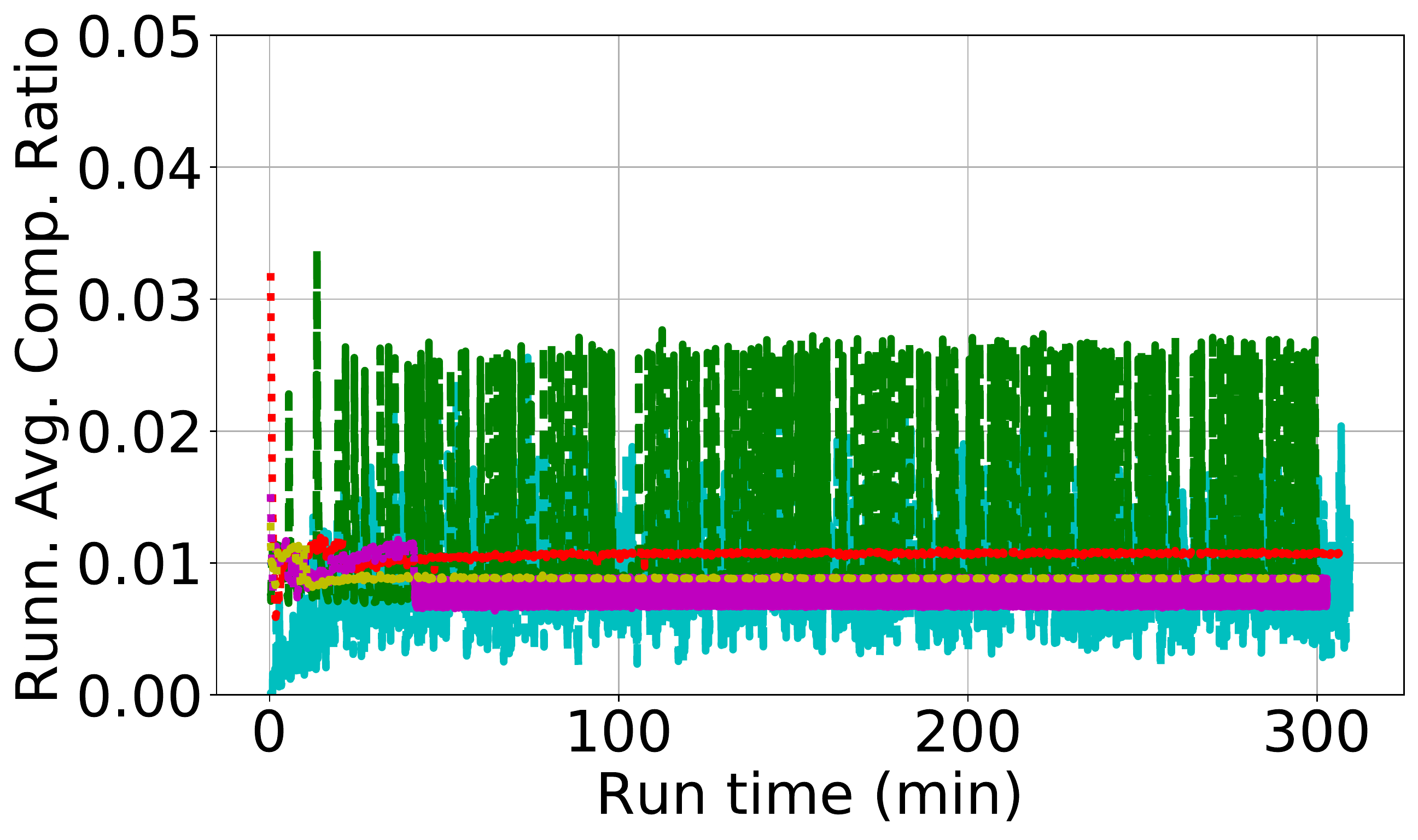}
	\caption{ResNet50 on ImageNet - Ratio 0.01.}
	\label{fig:resnet50-avgcomp0.01-8}
    \end{subfigure}
     \hfill
    \begin{subfigure}[ht]{0.3\linewidth}
   \includegraphics[ width=\textwidth]{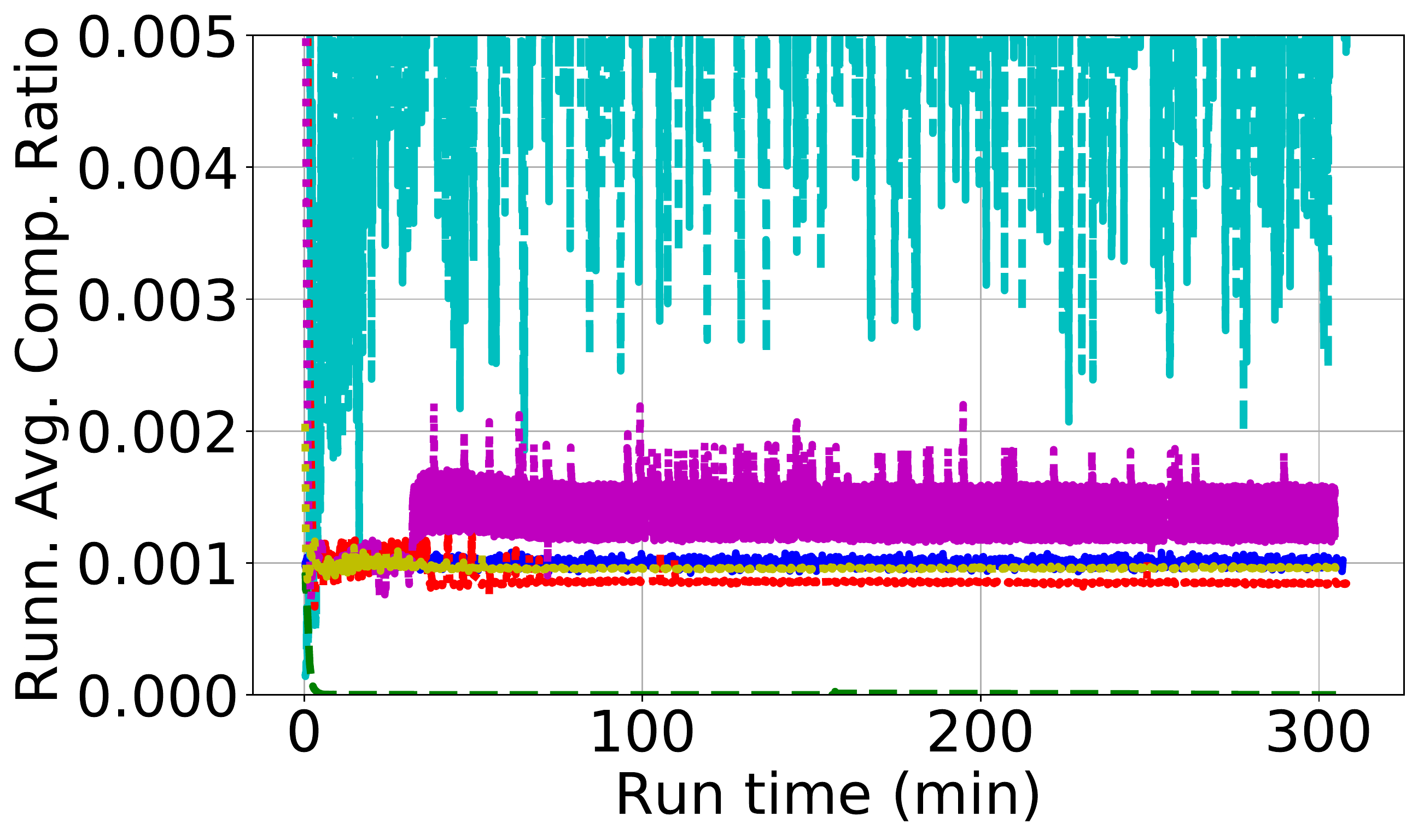}
	\caption{ResNet50 on ImageNet - Ratio 0.001.}
	\label{fig:resnet50-avgcomp0.001-8}
     \end{subfigure}
     \\
     \begin{subfigure}[ht]{0.3\linewidth}
    \includegraphics[ width=\textwidth]{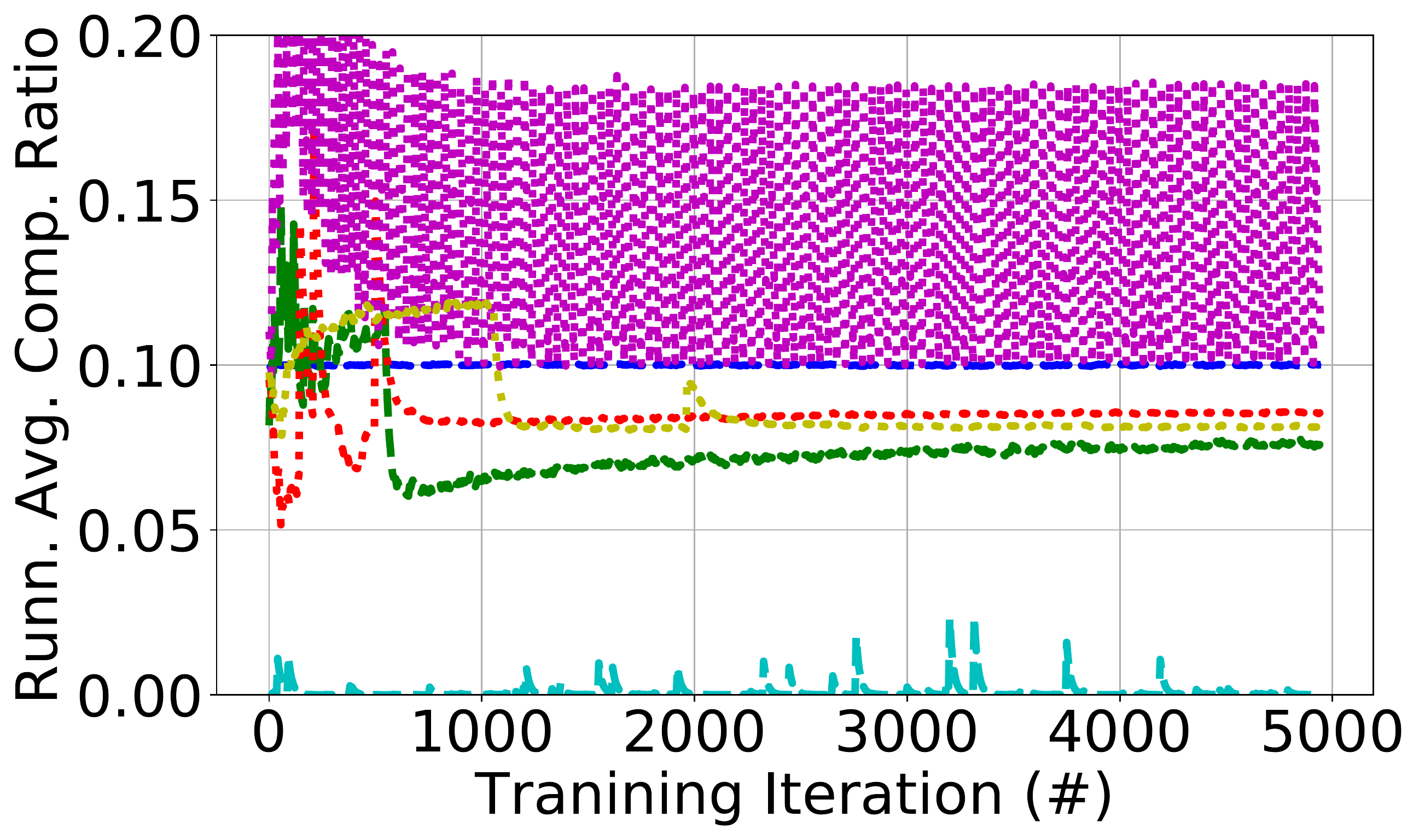}
	\caption{PTB on LSTM - Ratio 0.1.}
	\label{fig:ptb-avgcomp0.1-8}
     \end{subfigure}
     \hfill
	\begin{subfigure}[ht]{0.3\linewidth}
  \includegraphics[ width=\textwidth]{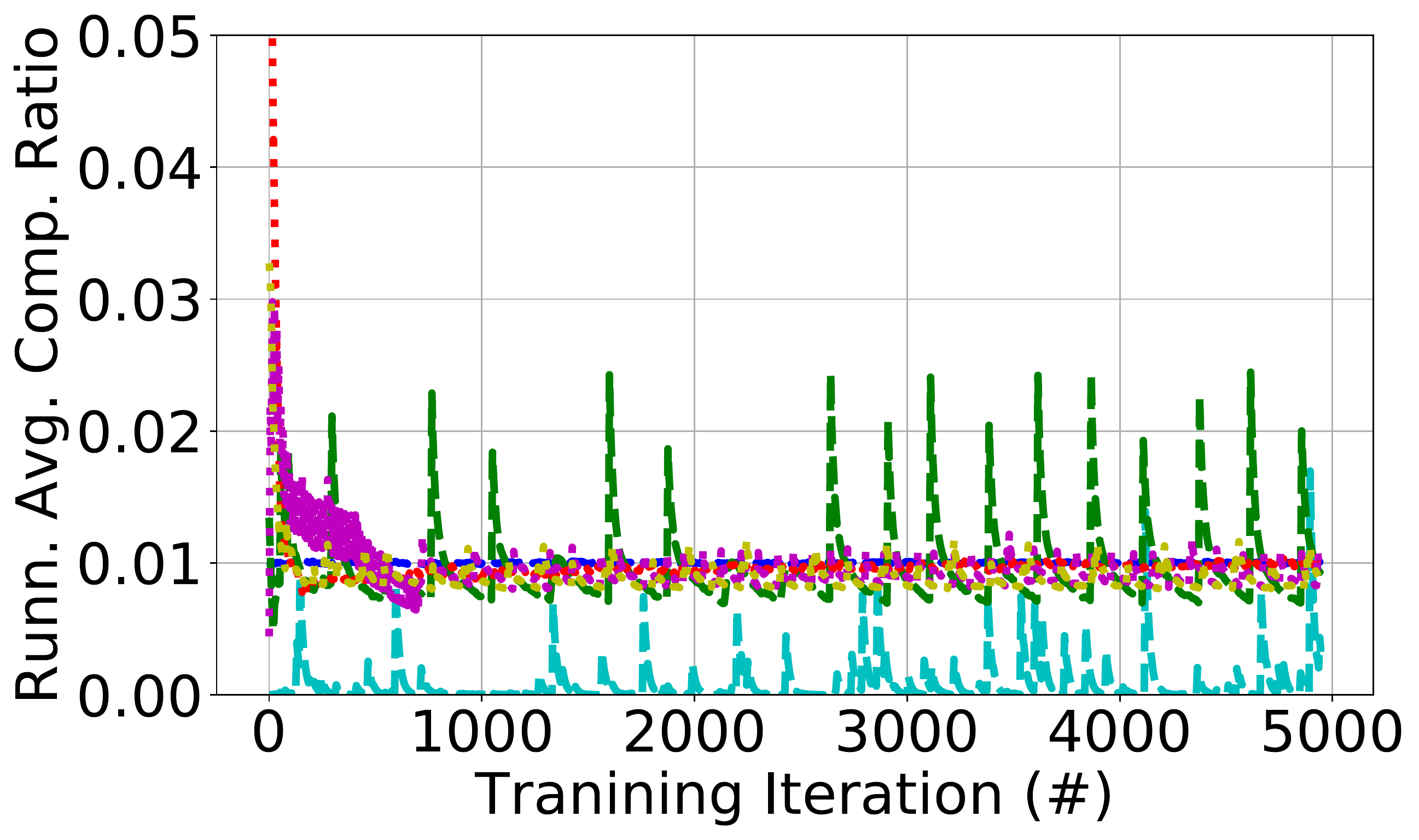}
	\caption{LSTM on PTB - Ratio 0.01.}
	\label{fig:ptb-avgcomp0.01-8}
    \end{subfigure}
     \hfill
    \begin{subfigure}[ht]{0.3\linewidth}
   \includegraphics[ width=\textwidth]{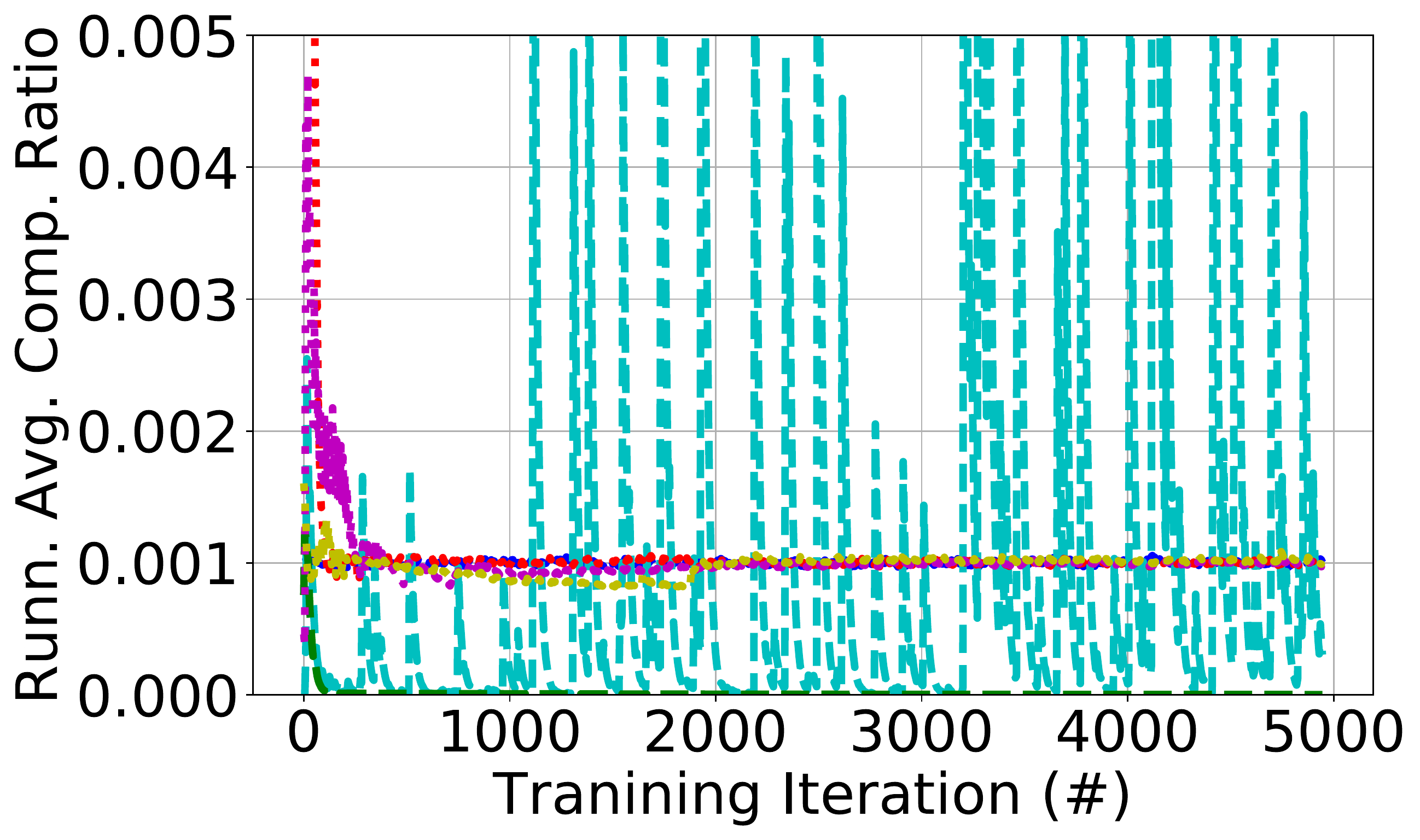}
	\caption{LSTM on PTB - Ratio 0.001.}
	\label{fig:ptb-avgcomp0.001-8}
     \end{subfigure}
     \\
     \begin{subfigure}[ht]{0.3\linewidth}
    \includegraphics[ width=\textwidth]{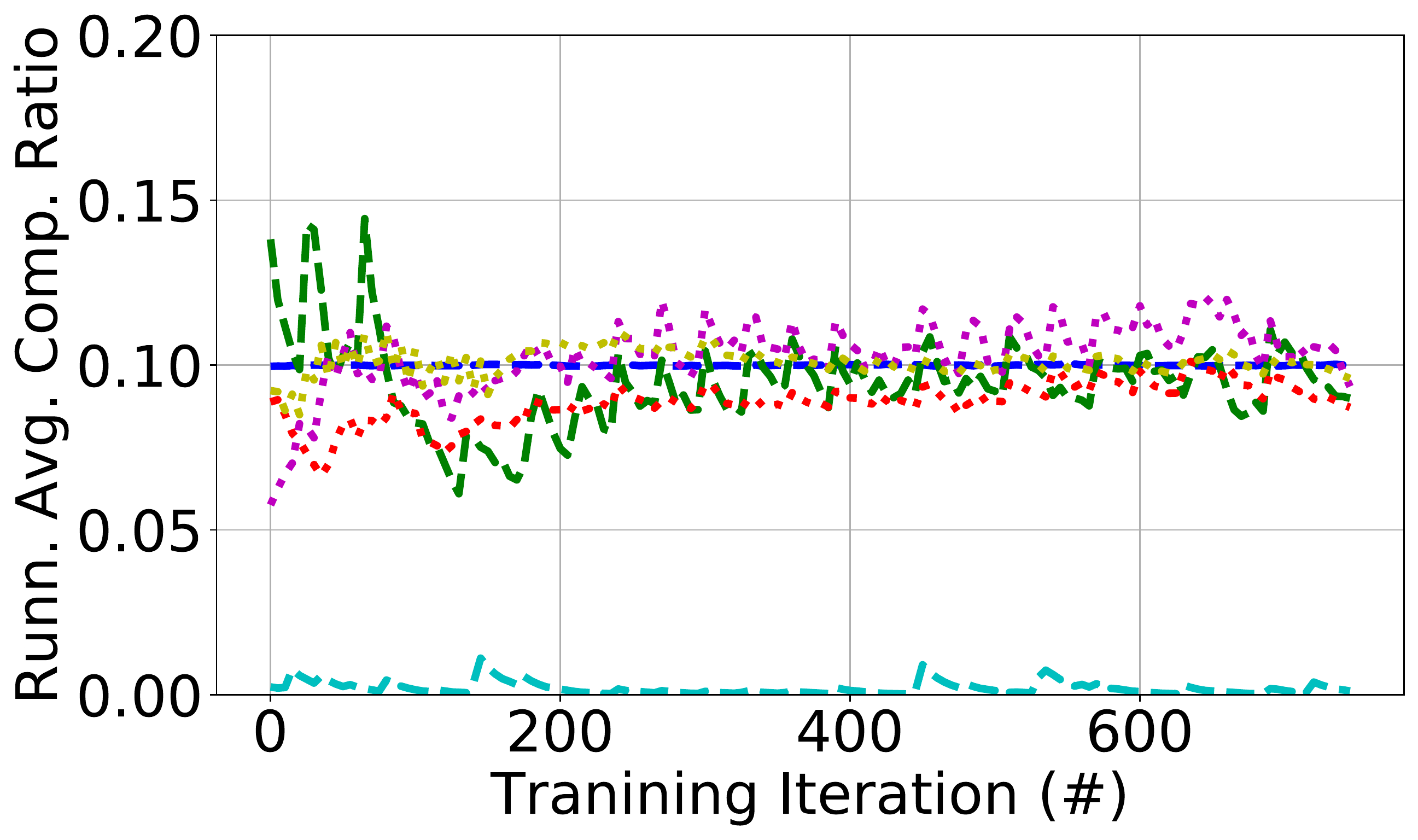}
	\caption{LSTM on AN4 - Ratio 0.1.}
	\label{fig:an4-avgcomp0.1-8}
     \end{subfigure}
     \hfill
	\begin{subfigure}[ht]{0.3\linewidth}
  \includegraphics[ width=\textwidth]{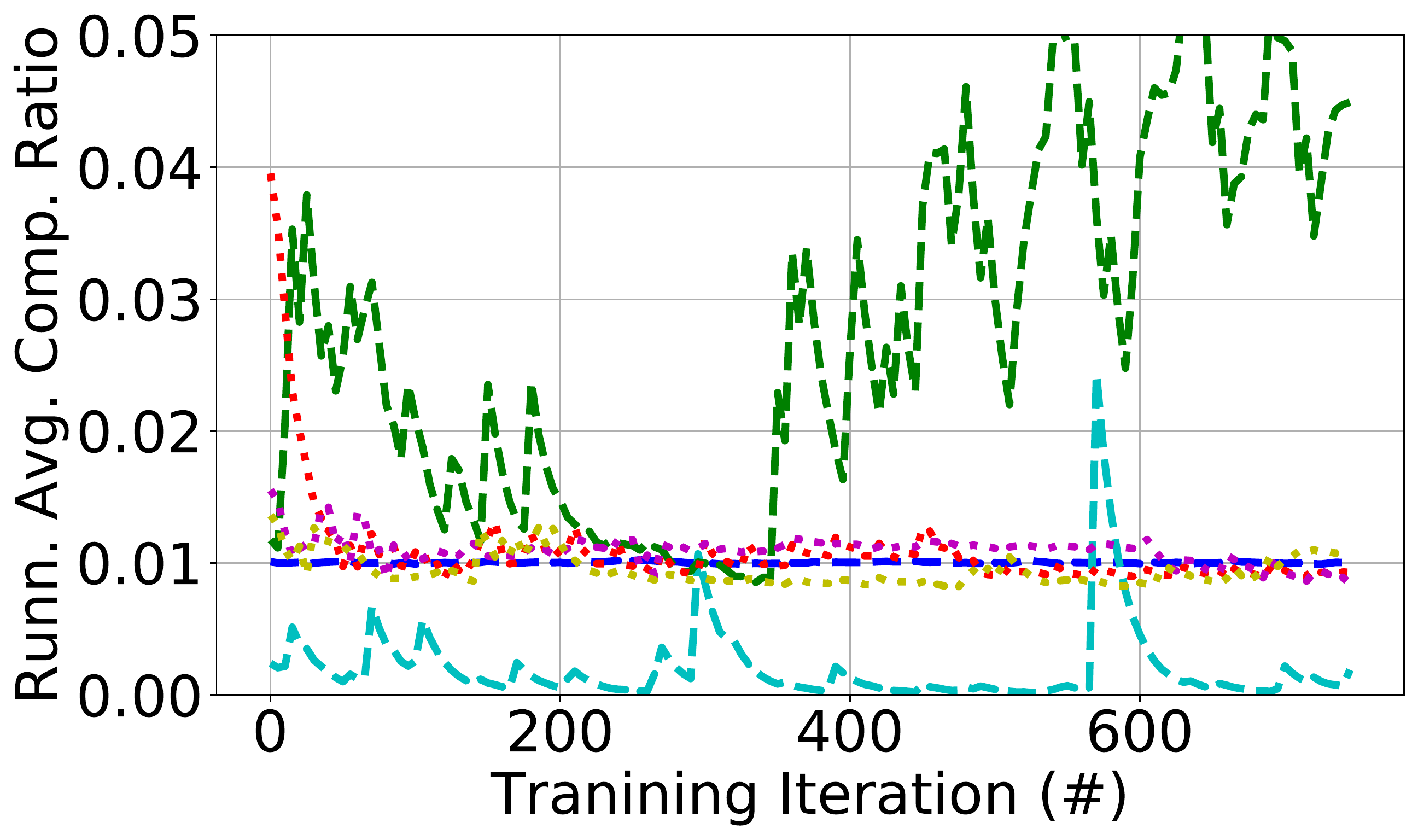}
	\caption{LSTM on AN4 - Ratio 0.01.}
	\label{fig:an4-avgcomp0.01-8}
    \end{subfigure}
     \hfill
    \begin{subfigure}[ht]{0.3\linewidth}
   \includegraphics[ width=\textwidth]{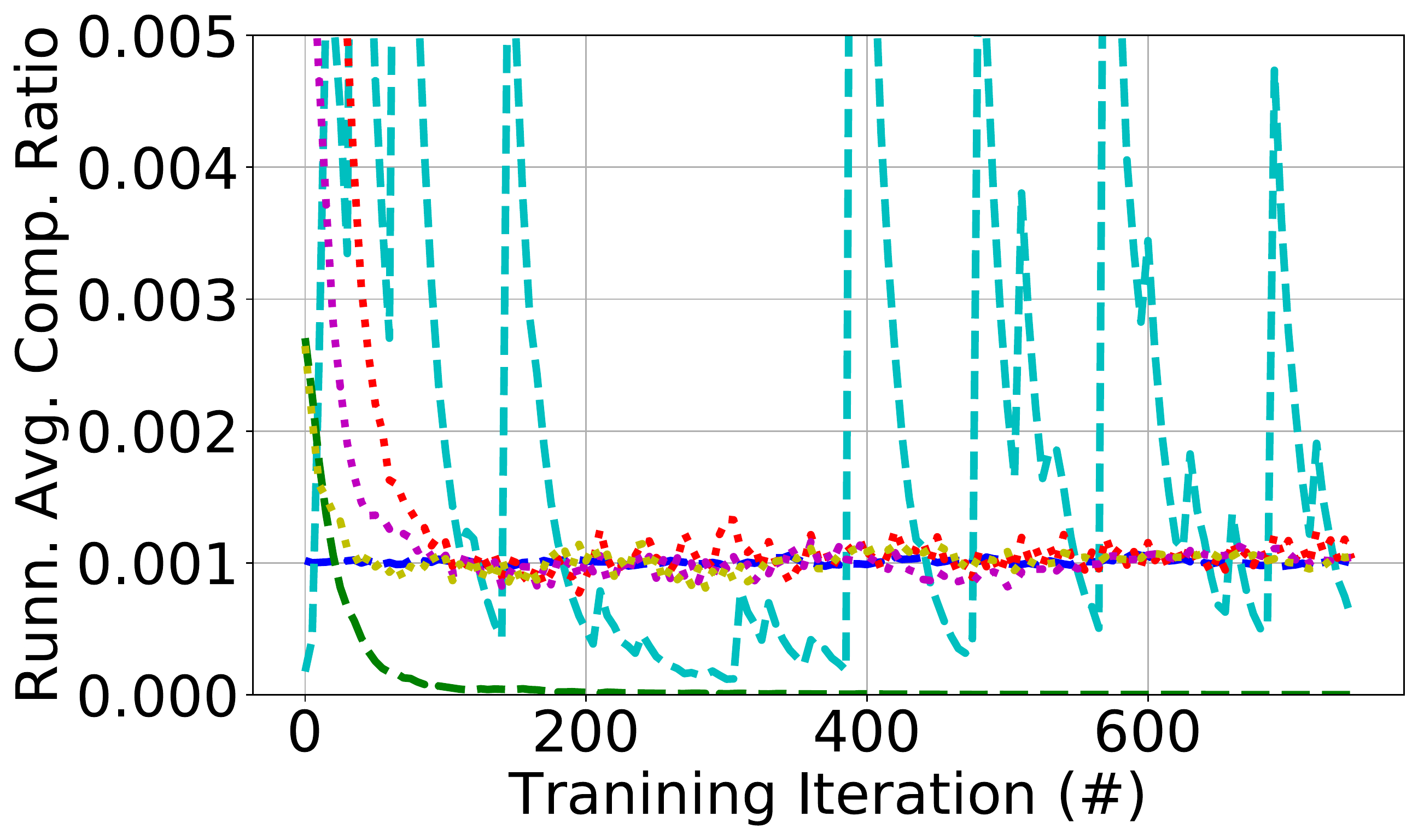}
	\caption{LSTM on AN4 - Ratio 0.001.}
	\label{fig:an4-avgcomp0.001-8}
     \end{subfigure}
\caption{Smoothed compression ratio for all benchmarks at different ratios.}
\label{fig:compratio}
\end{figure*}

In the following, we present more results including more detailed metrics and experimental scenarios. In the following, we refer to 1-stage double Gamma followed by $M-1$ stage Generalized Pareto and multi-stage Generalized Pareto, and multi-stage double exponential are refereed to \scheme\!-GP, \scheme\!-P, and \scheme\!-E respectively.
\subsection{Further Metrics and Experimental Scenarios}
\paragraph{\textbf{Quality of Estimation Methods:}} \cref{fig:compratio} shows the smoothed (or running average) of the compression ratio for all benchmarks and the three ratios ($0.1$, $0.01$, and $0.001$) used in the experiments. The results signify the quality of the obtained threshold throughout the training for DGC, RedSync, GaussianKSGD and the three \scheme\! methods. The results, in general, reinforce our previous observation that \scheme\! schemes perform quite well and achieve nearly the same threshold quality as of the sampling methods of DGC. \scheme\! schemes are also significantly better than the other estimation methods (i.e., RedSync and GaussianKSGD). Moreover, other estimation methods (e.g., RedSync and GaussianKSGD) generally results in high oscillations and their over/under-estimation can be up to $\approx\!\pm60\times$ the target. We also observe, in few cases, that the multi-stage \scheme\!-GP (i.e., Gamma-Pareto) results in slight over-estimation which is at most 2 times the target ratio. This could be attributed to the inaccuracies from the first-stage threshold estimation that uses closed-form moment-matching approximation used for fitting the double-Gamma distribution.

To support the observation presented in \cref{fig:an4-speedup-8-all} in which \scheme\!, unlike all other methods, achieved the target Character Error Rate (CER) because it over-estimated the threshold at early stage of training. In particular \cref{fig:an4-avgcomp0.001-8} shows that \scheme\!-E algorithm, at the beginning of training, uses the single-stage fitting for the target ratio which leads to threshold over-estimation for few iterations until it settles at the final number of stages. So, thanks to the multi-stage adaptation technique, it can reach to the appropriate number of stages which allows it stay at the target compression ratio. The initial extra-volume at the beginning of training, at this extreme sparsification ratio for this benchmark, leads to significant improvement in accuracy gains and explains the results presented in \cref{fig:an4-speedup-8}.

\paragraph{\textbf{Training Loss: }}
we present the training loss vs run time plots for all benchmarks using all ratios. \cref{fig:accuracy} shows the convergence of all schemes over time and the results in general confirm the speed-up results presented in \cref{sec:experiments} and \cref{apdx:expalldist}. The results highlight the gains in terms of time and accuracy from employing compression over the no-compression. They also signify that most compressors (except for GaussianKSGD and RedSync) achieve same accuracy as $\topk$ but at lower overhead than $\topk$.

\begin{figure*}[!h]
\captionsetup[subfigure]{justification=centering}
\centering
\begin{subfigure}[ht]{0.8\linewidth}
  \includegraphics[width=1\linewidth]{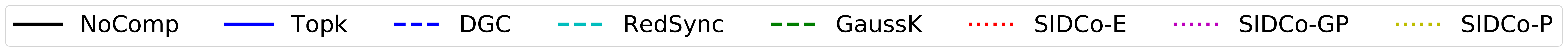}
 \end{subfigure}
  \\
  \begin{subfigure}[ht]{0.3\linewidth}
    \includegraphics[ width=\textwidth]{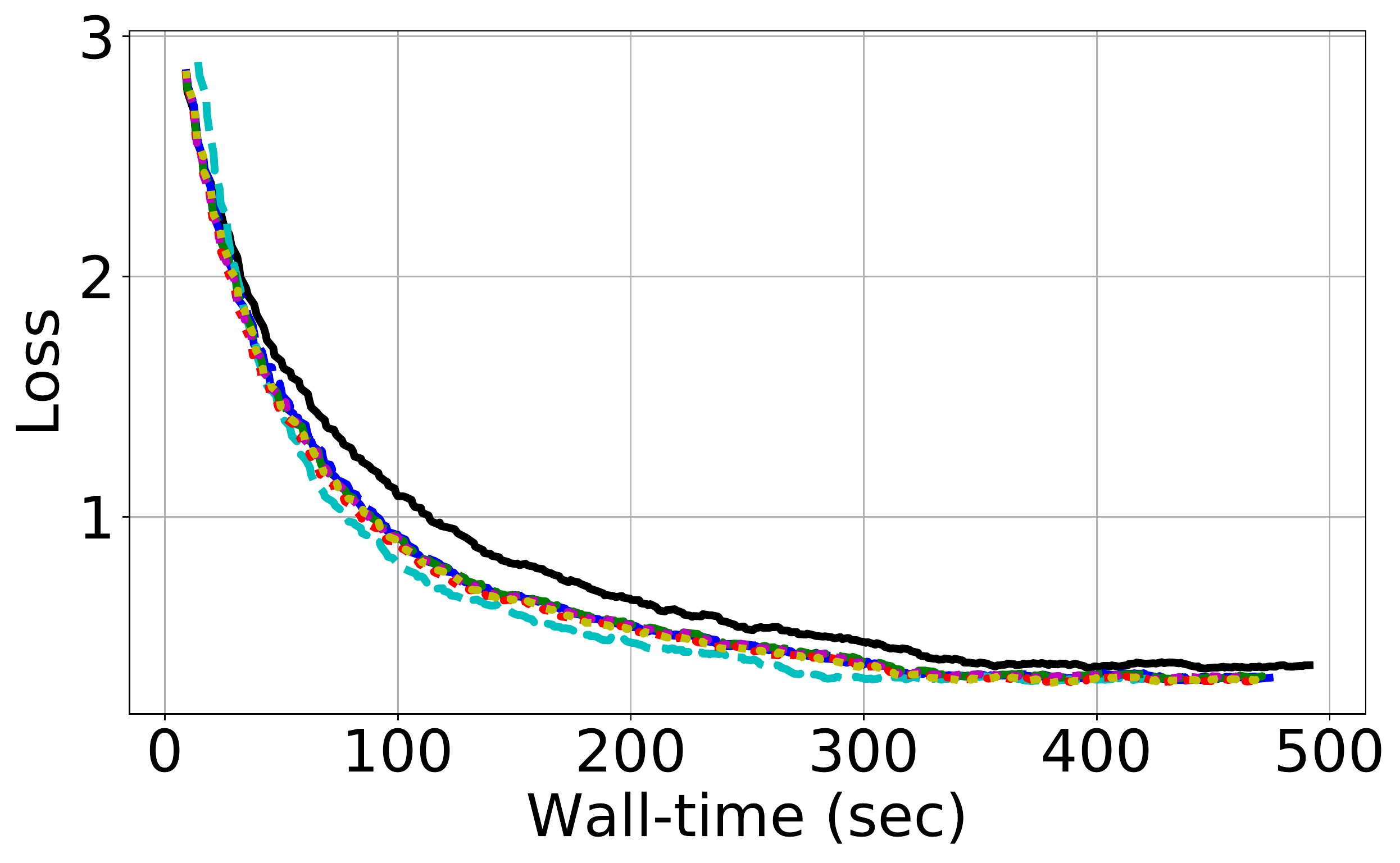}
	\caption{ResNet20 on CIFAR10 - Ratio 0.1.}
	\label{fig:resnet20-acc0.1-8}
     \end{subfigure}
     \hfill
	\begin{subfigure}[ht]{0.3\linewidth}
  \includegraphics[ width=\textwidth]{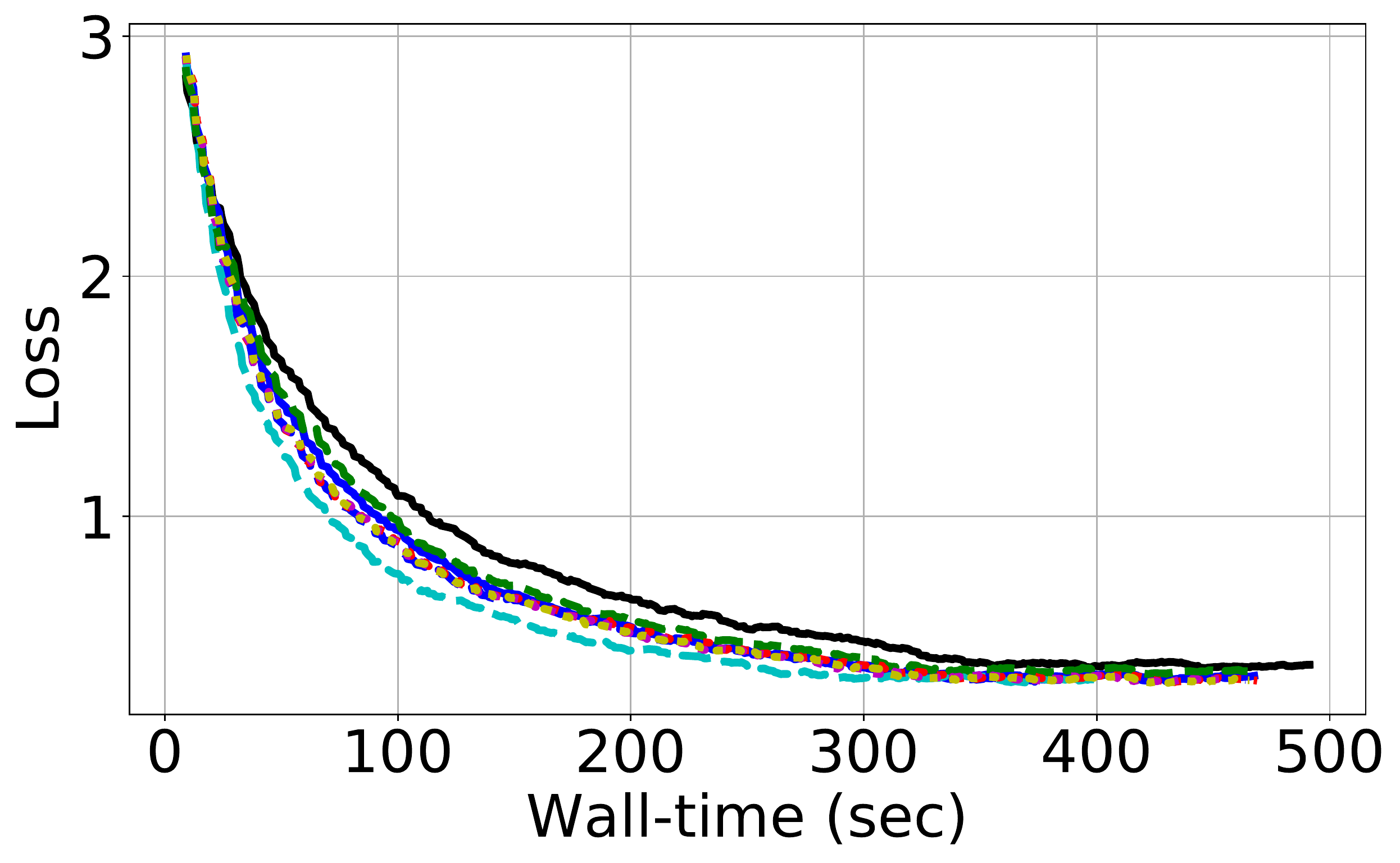}
	\caption{ResNet20 on CIFAR10 - Ratio 0.01.}
	\label{fig:resnet20-acc0.01-8}
    \end{subfigure}
     \hfill
    \begin{subfigure}[ht]{0.3\linewidth}
   \includegraphics[ width=\textwidth]{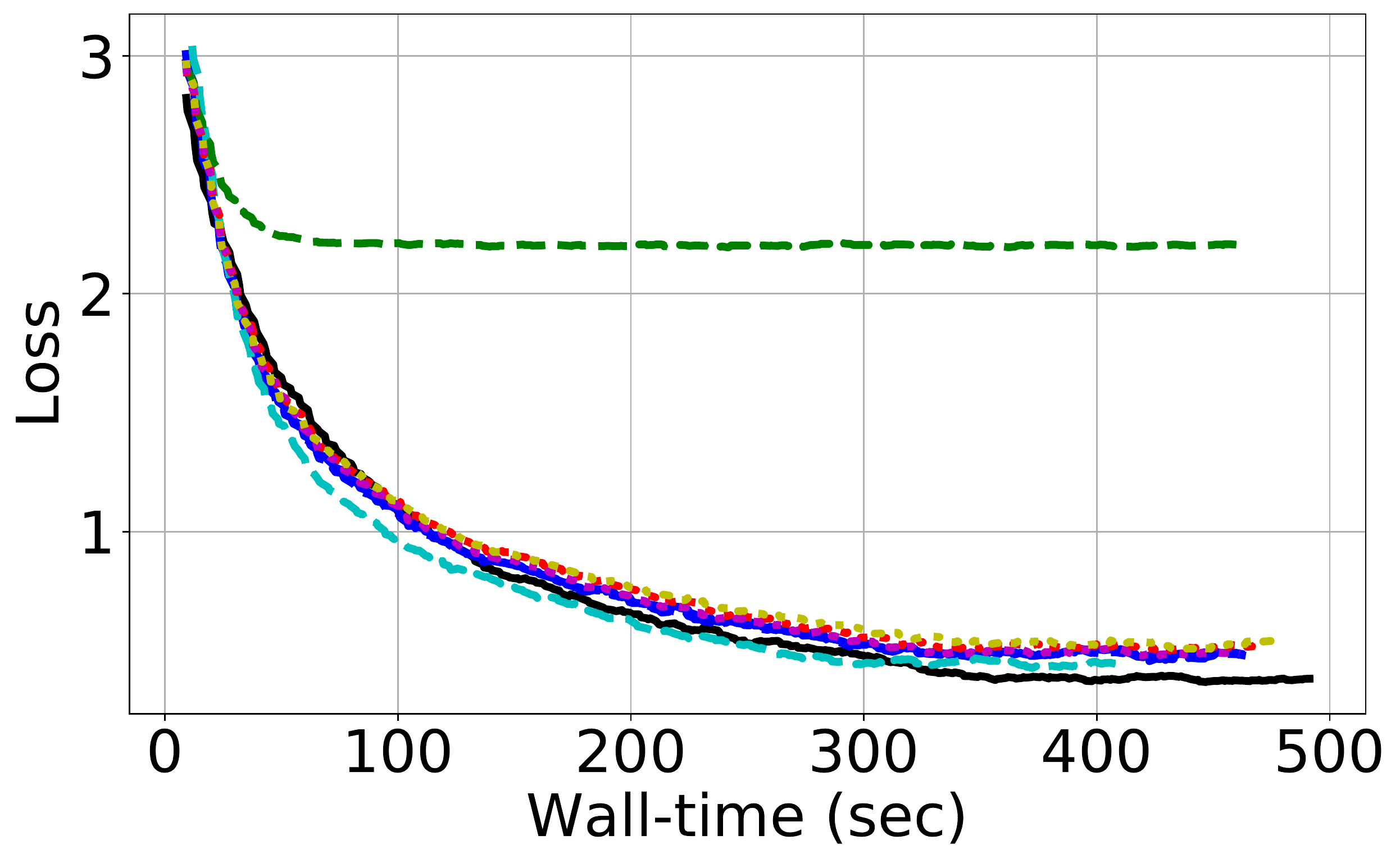}
	\caption{ResNet20 on CIFAR10 - Ratio 0.001.}
	\label{fig:resnet20-acc0.001-8}
     \end{subfigure}
     \\
     \begin{subfigure}[ht]{0.3\linewidth}
    \includegraphics[ width=\textwidth]{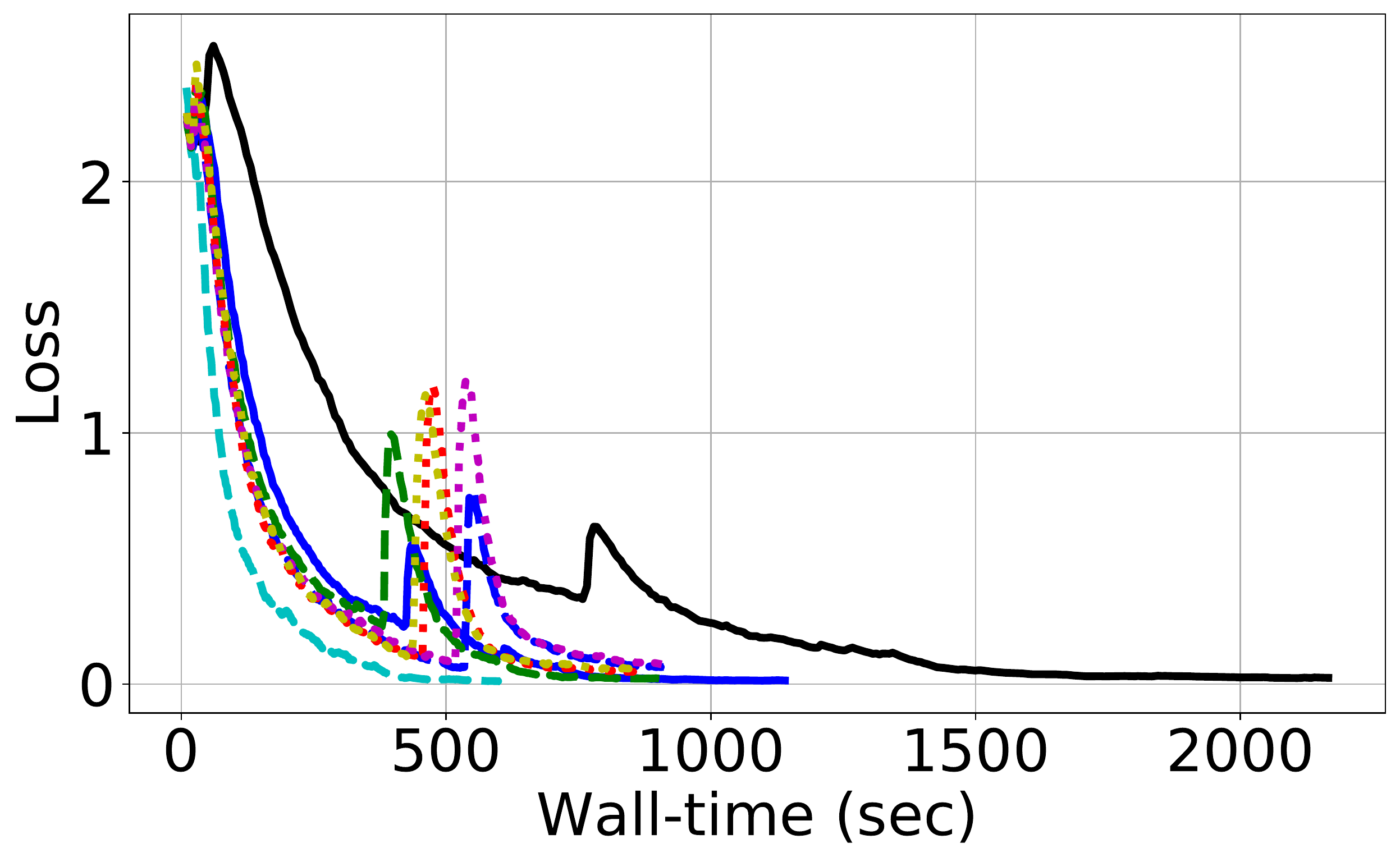}
	\caption{VGG16 on CIFAR10 - Ratio 0.1.}
	\label{fig:vgg16-acc0.1-8}
     \end{subfigure}
     \hfill
	\begin{subfigure}[ht]{0.3\textwidth}
  \includegraphics[ width=\textwidth]{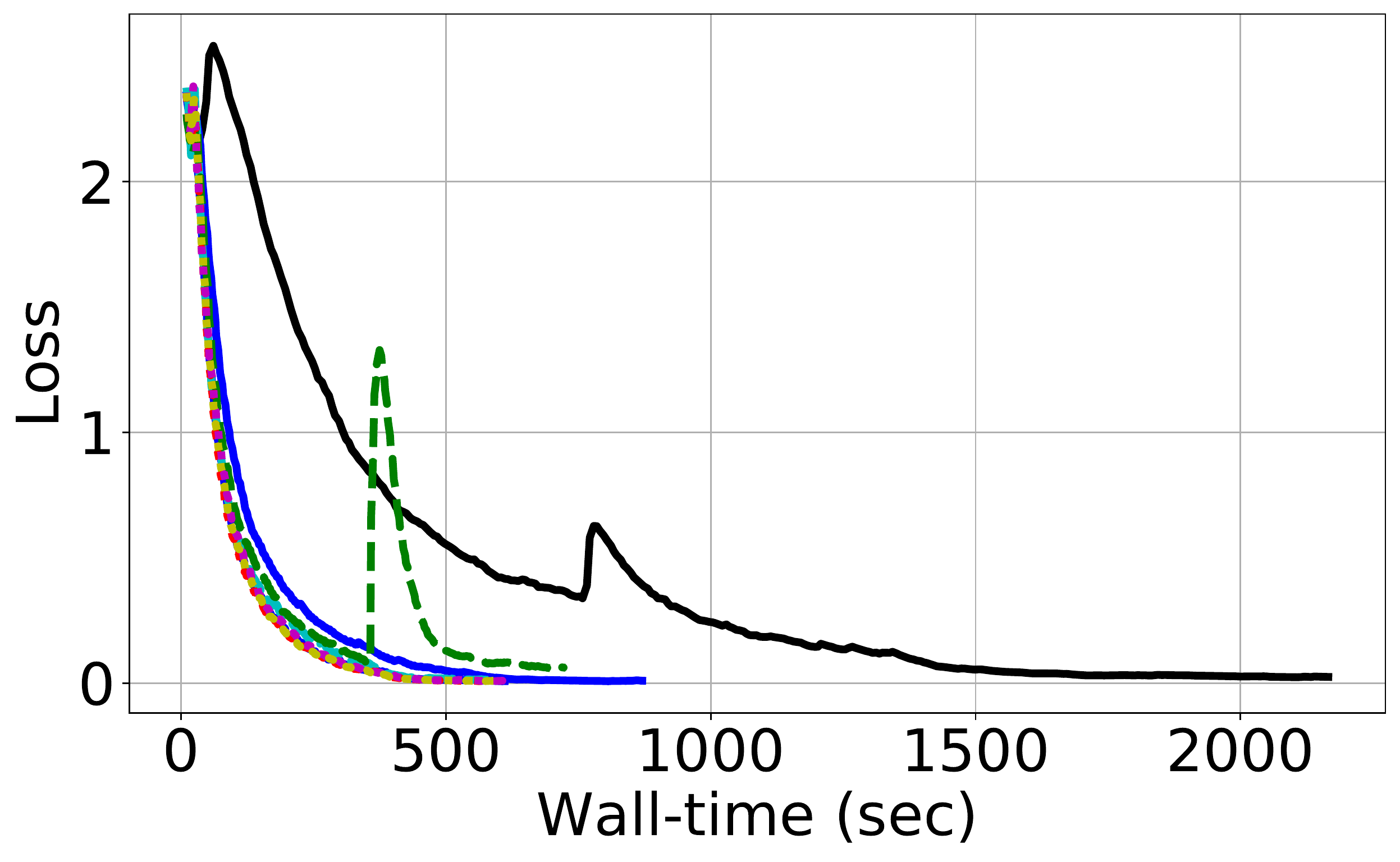}
	\caption{VGG16 on CIFAR10 - Ratio 0.01.}
	\label{fig:vgg16-acc0.01-8}
    \end{subfigure}
     \hfill
    \begin{subfigure}[ht]{0.3\linewidth}
   \includegraphics[ width=\textwidth]{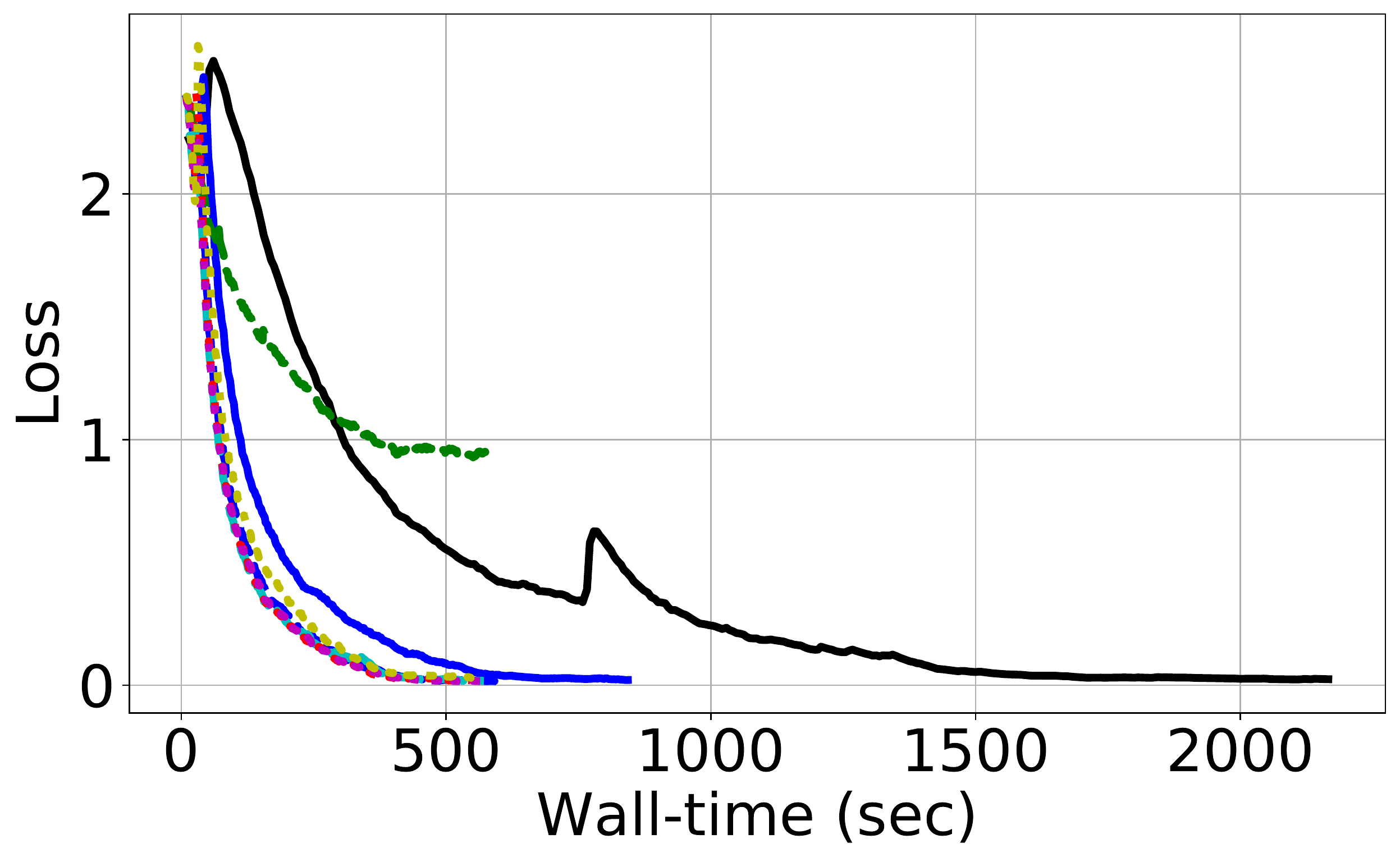}
	\caption{VGG16 on CIFAR10 - Ratio 0.001.}
	\label{fig:vgg16-acc0.001-8}
     \end{subfigure}
      \\
      \begin{subfigure}[ht]{0.3\linewidth}
    \includegraphics[ width=\textwidth]{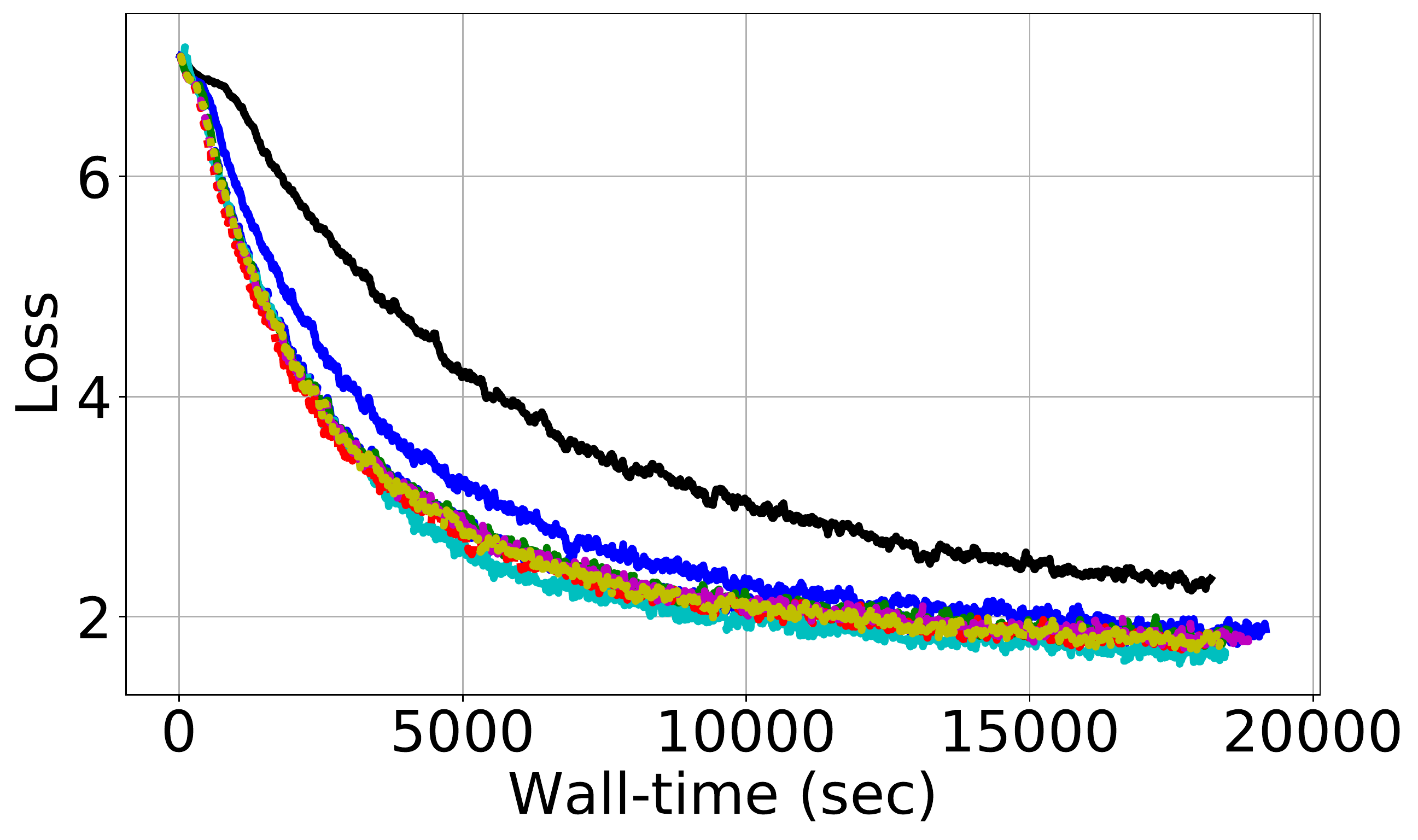}
	\caption{ResNet50 on ImageNet - Ratio 0.1.}
	\label{fig:resnet50-acc0.1-8}
     \end{subfigure}
     \hfill
	\begin{subfigure}[ht]{0.3\linewidth}
  \includegraphics[ width=\textwidth]{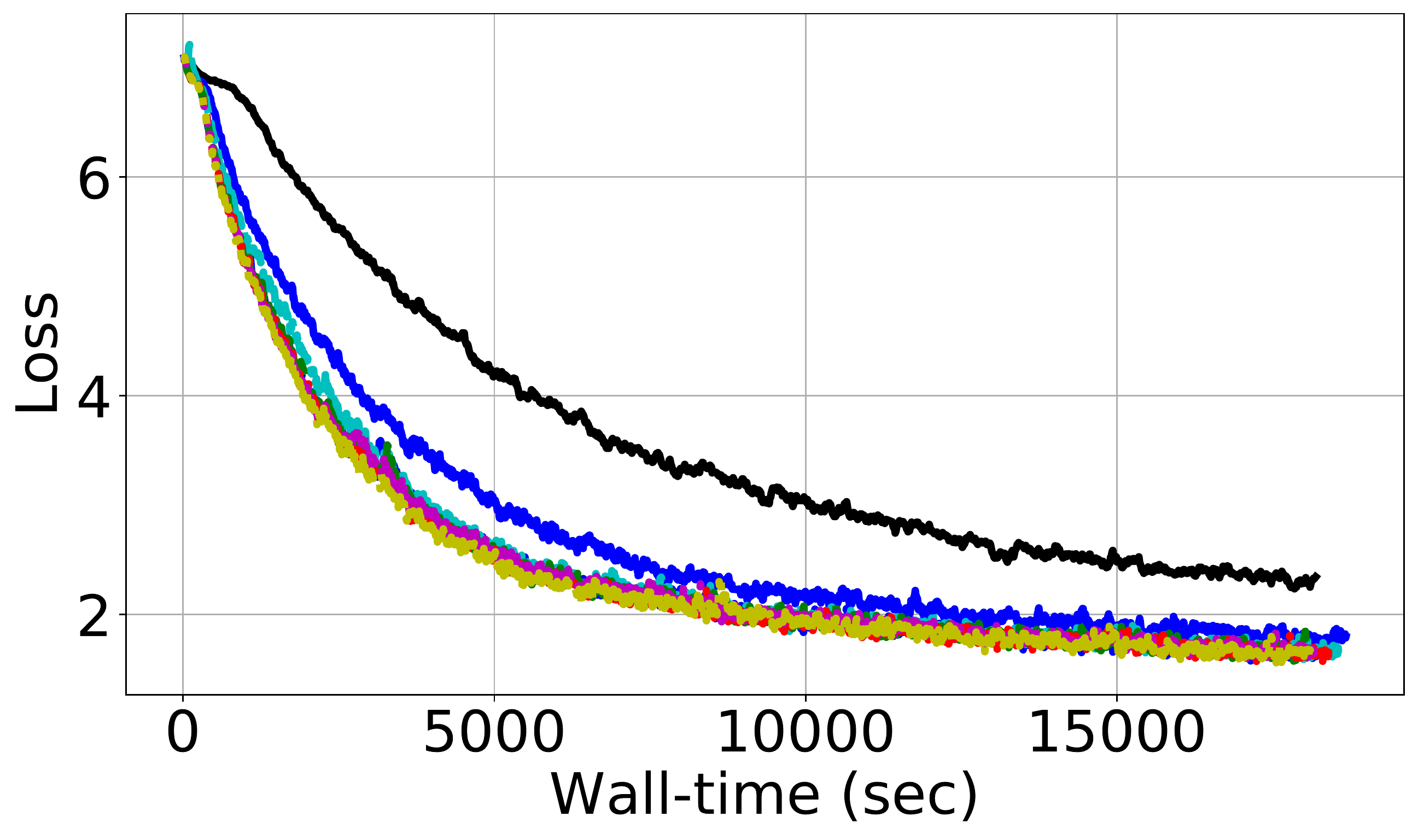}
	\caption{ResNet50 on ImageNet - Ratio 0.01.}
	\label{fig:resnet50-acc0.01-8}
    \end{subfigure}
     \hfill
    \begin{subfigure}[ht]{0.3\linewidth}
   \includegraphics[ width=\textwidth]{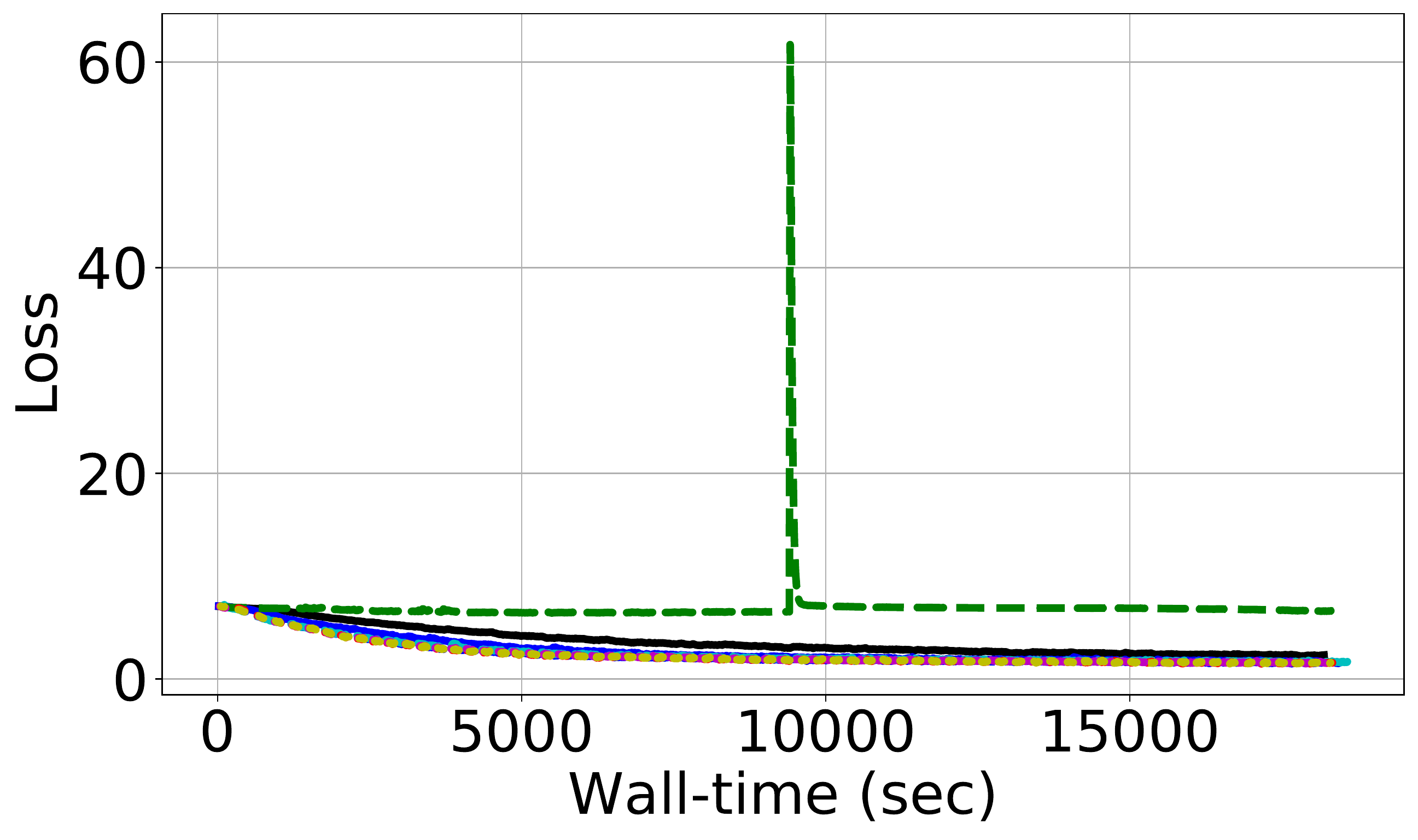}
	\caption{ResNet50 on ImageNet - Ratio 0.001.}
	\label{fig:resnet50-acc0.001-8}
     \end{subfigure}
     \\
     \begin{subfigure}[ht]{0.3\linewidth}
    \includegraphics[ width=\textwidth]{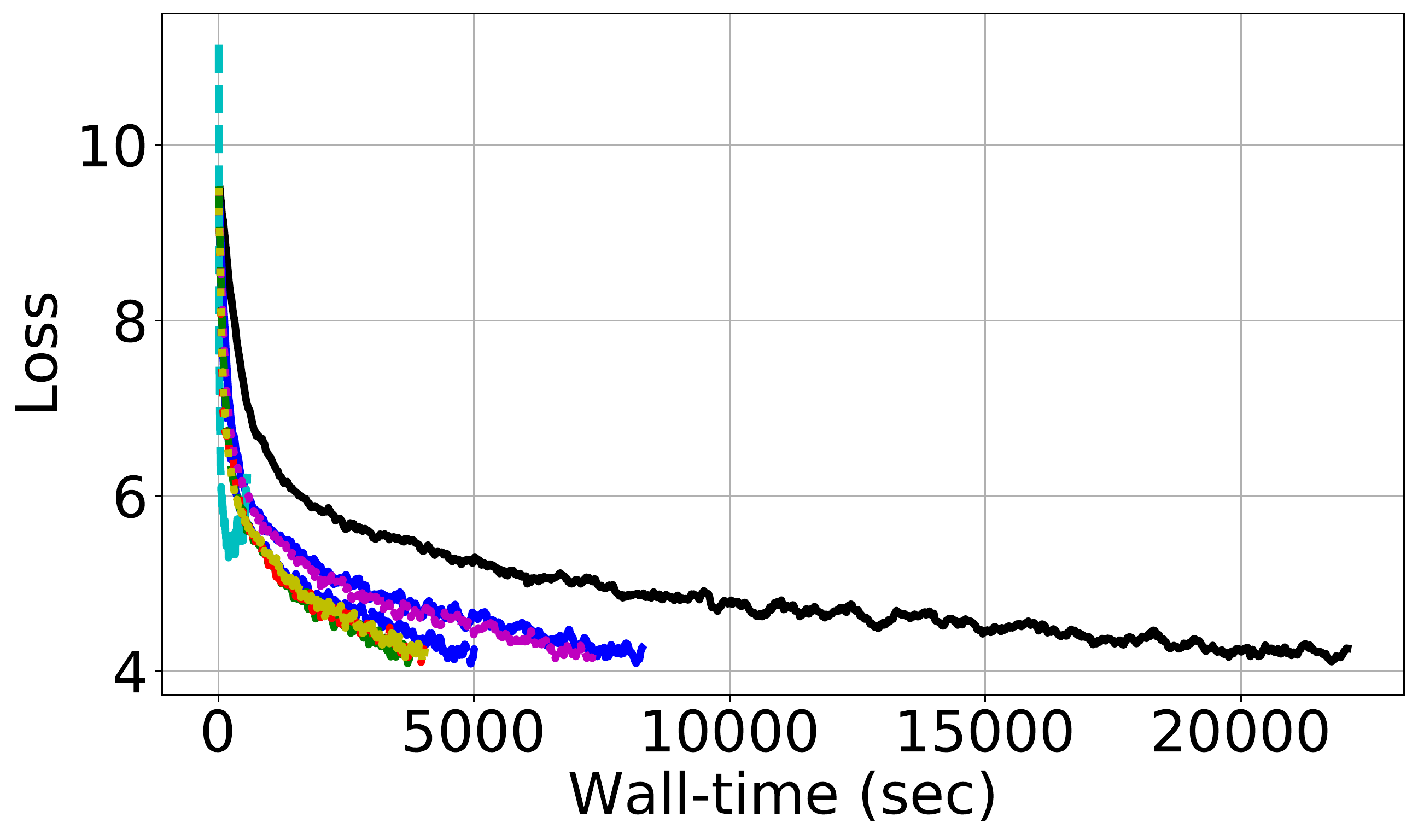}
	\caption{PTB on LSTM - Ratio 0.1.}
	\label{fig:ptb-acc0.1-8}
     \end{subfigure}
     \hfill
	\begin{subfigure}[ht]{0.3\linewidth}
  \includegraphics[ width=\textwidth]{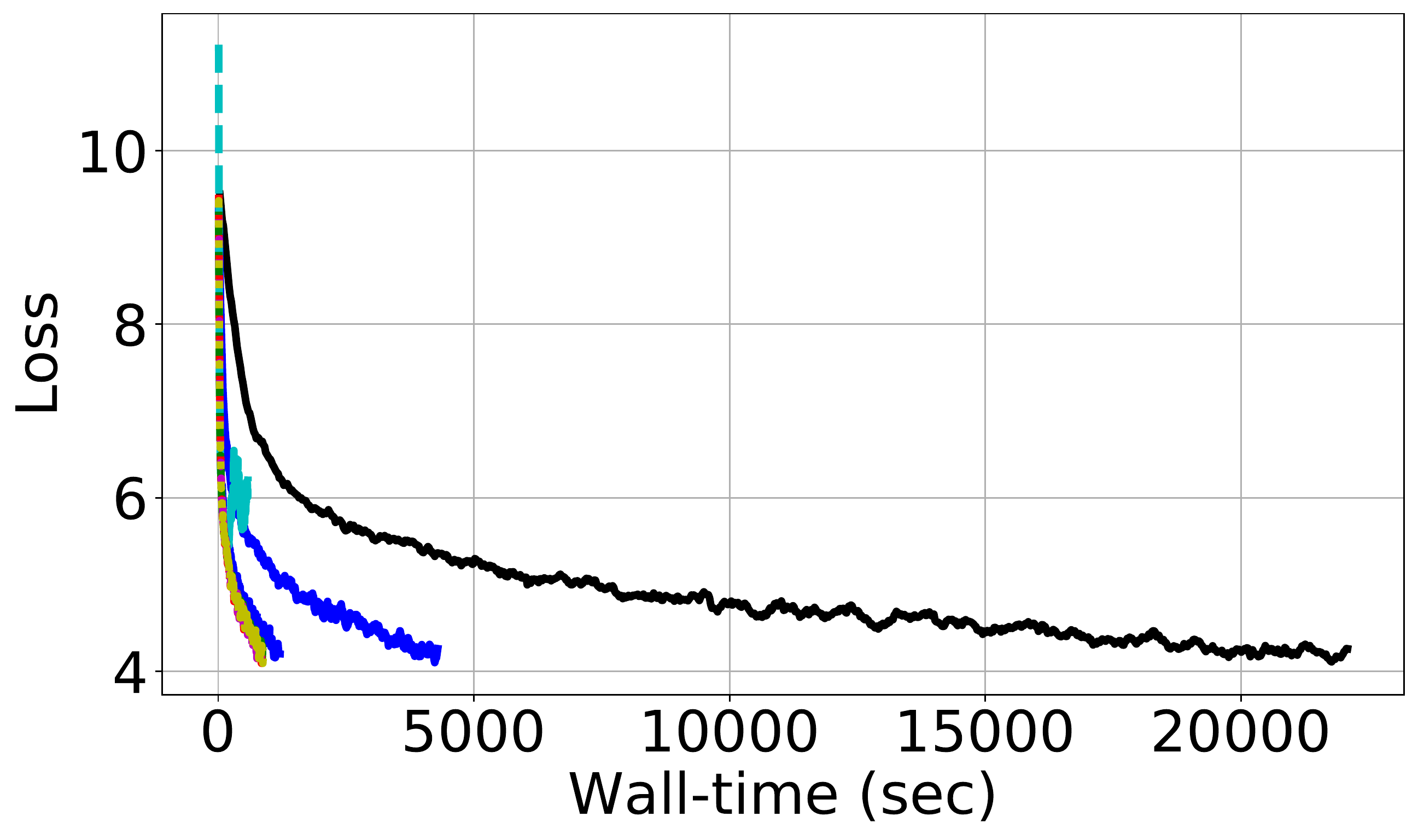}
	\caption{LSTM on PTB - Ratio 0.01.}
	\label{fig:ptb-acc0.01-8}
    \end{subfigure}
     \hfill
    \begin{subfigure}[ht]{0.3\linewidth}
   \includegraphics[ width=\textwidth]{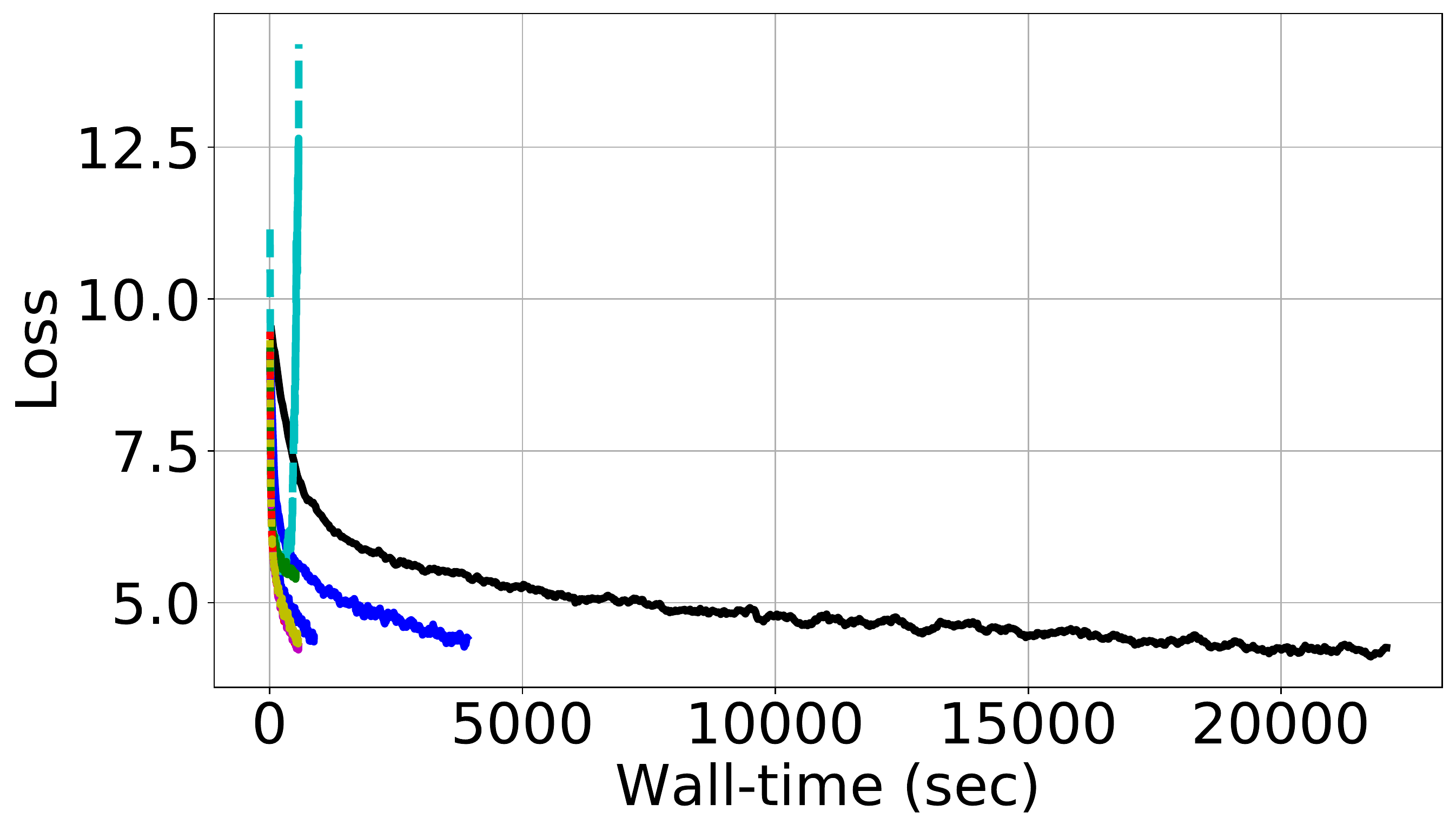}
	\caption{LSTM on PTB - Ratio 0.001.}
	\label{fig:ptb-acc0.001-8}
     \end{subfigure}
     \\
     \begin{subfigure}[ht]{0.3\linewidth}
    \includegraphics[ width=\textwidth]{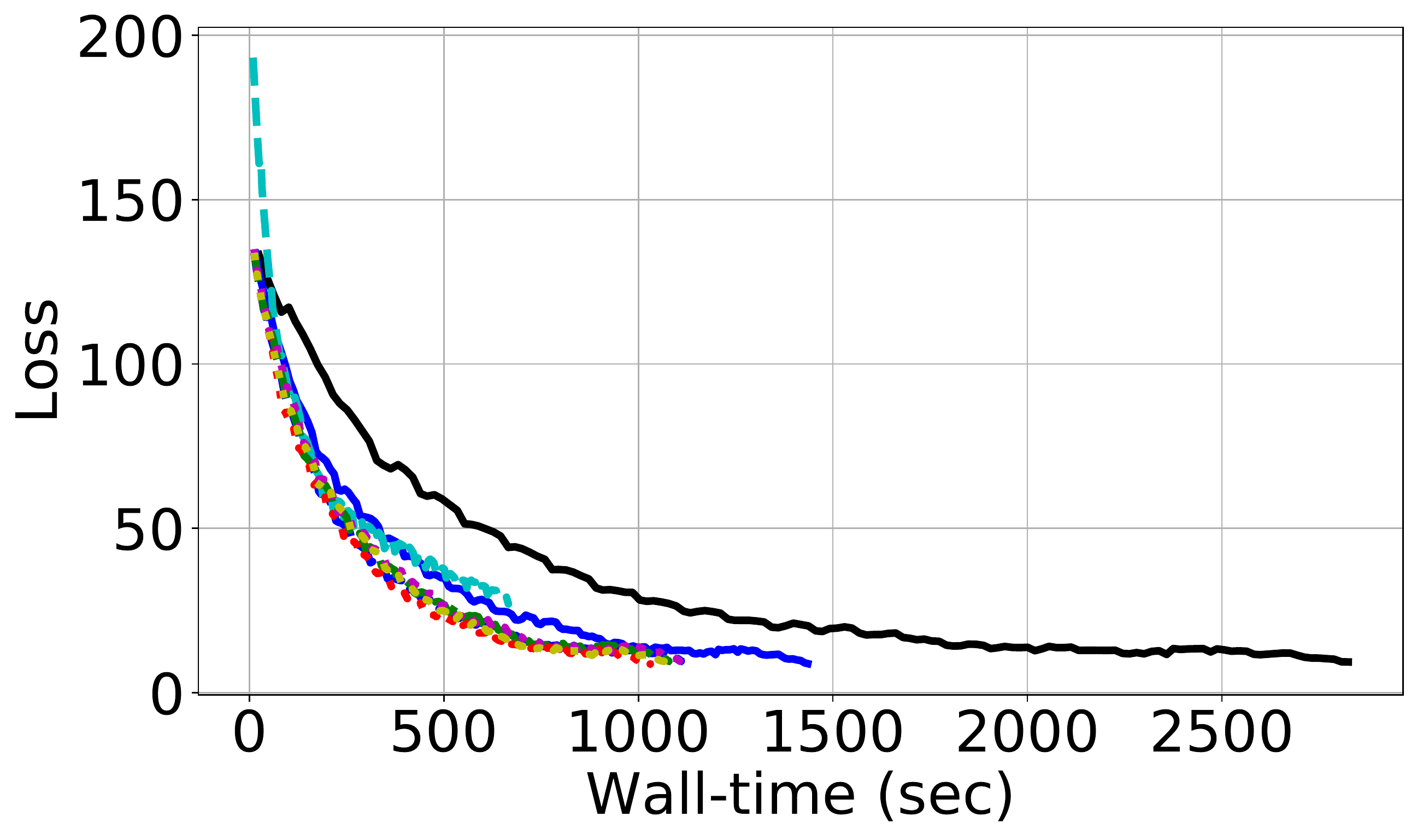}
	\caption{LSTM on AN4 - Ratio 0.1.}
	\label{fig:an4-acc0.1-8}
     \end{subfigure}
     \hfill
	\begin{subfigure}[ht]{0.3\linewidth}
  \includegraphics[ width=\textwidth]{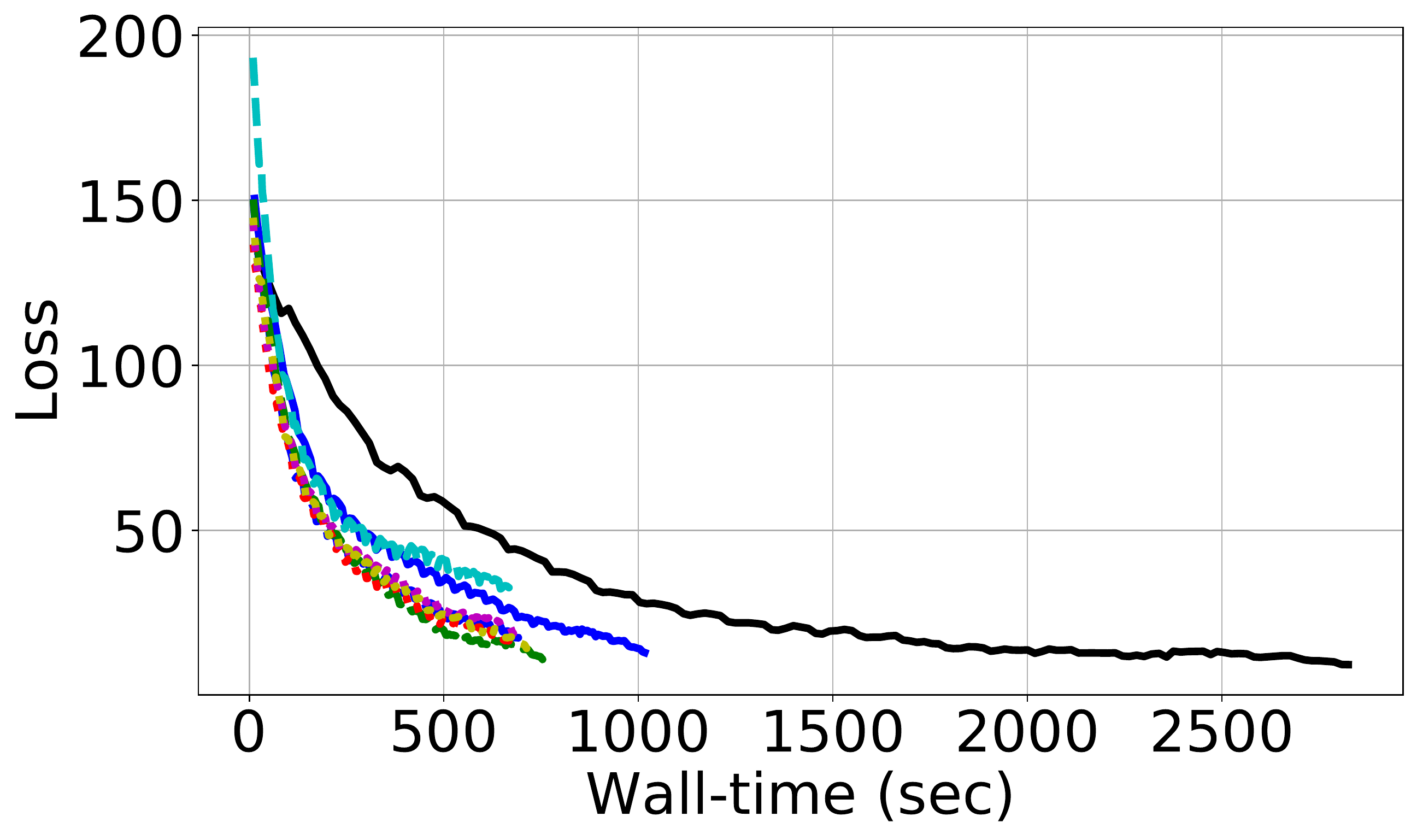}
	\caption{LSTM on AN4 - Ratio 0.01.}
	\label{fig:an4-accp0.01-8}
    \end{subfigure}
     \hfill
    \begin{subfigure}[ht]{0.3\linewidth}
   \includegraphics[ width=\textwidth]{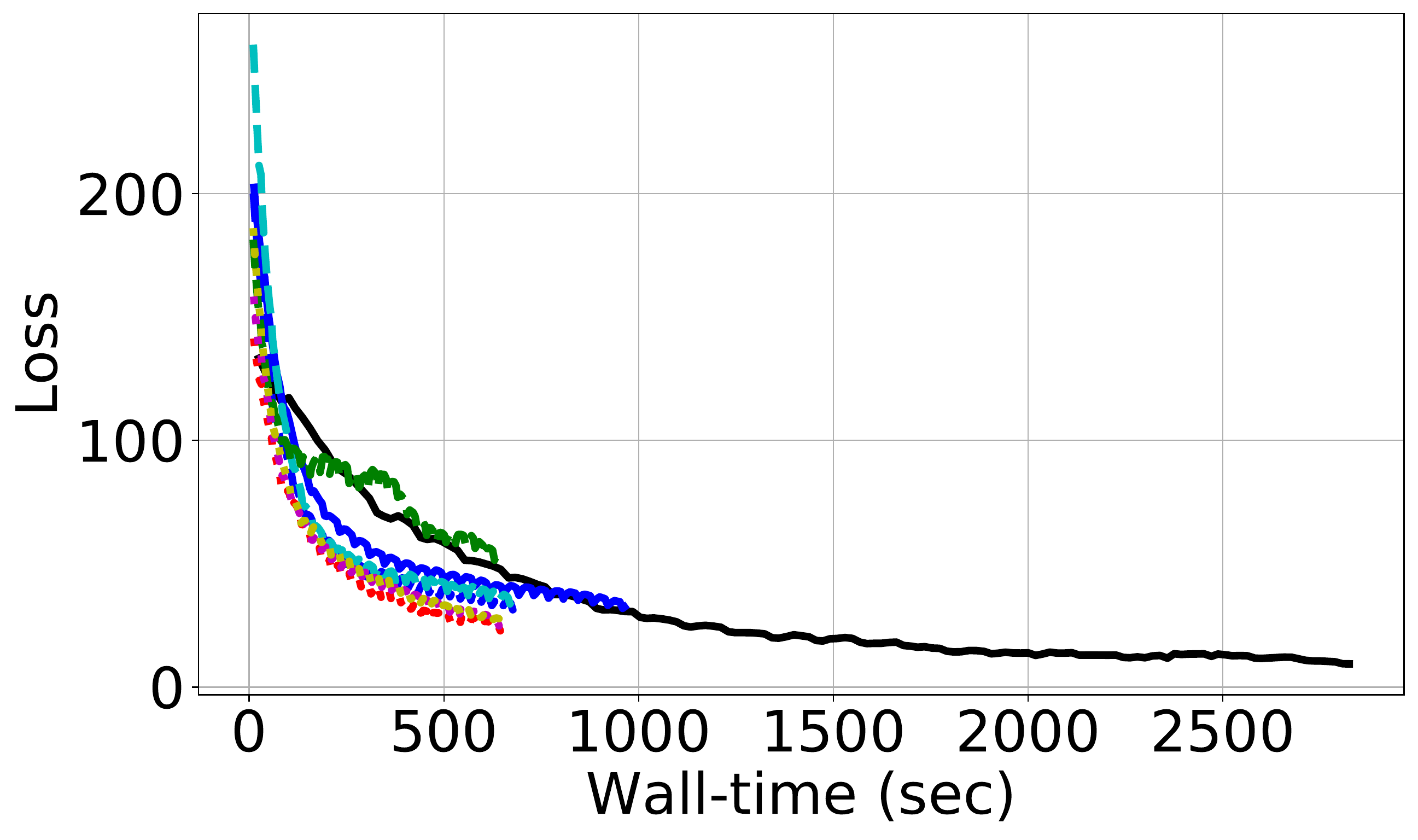}
	\caption{LSTM on AN4 - Ratio 0.001.}
	\label{fig:an4-acc0.001-8}
     \end{subfigure}
\caption{Smoothed training loss vs wall run-time for all benchmarks at different target sparsity ratios.}
\label{fig:accuracy}
\end{figure*}

\paragraph{\textbf{VGG19 on ImageNet: }}
We also present similar metrics (i.e., smoothed compression ration and training loss vs runtime) for the VGG19 benchmarks in \cref{fig:vgg19-more}. The results in \cref{fig:vgg19-avgcomp0.001-8} show that all \scheme\! methods estimate the threshold with high accuracy. They also show that GaussianKSGD miserably fails to estimate the threshold and RedSync experiences significantly high variability. \cref{fig:vgg19-avgcomp0.001-8} also shows that \scheme\! methods have noticeably higher speed-ups over all other schemes (esp., $\topk$, RedSync and GaussianKSGD).

\begin{figure*}[!h]
\captionsetup[subfigure]{justification=centering}
\centering
\begin{subfigure}[ht]{0.8\linewidth}
  \includegraphics[width=1\linewidth]{Figures/experiments/all-endtoend/mcnodes_cifar10/accuracy_ec_sgd/legend.pdf}
 \end{subfigure}
    \begin{subfigure}[ht]{0.44\linewidth}
    \includegraphics[ width=\textwidth]{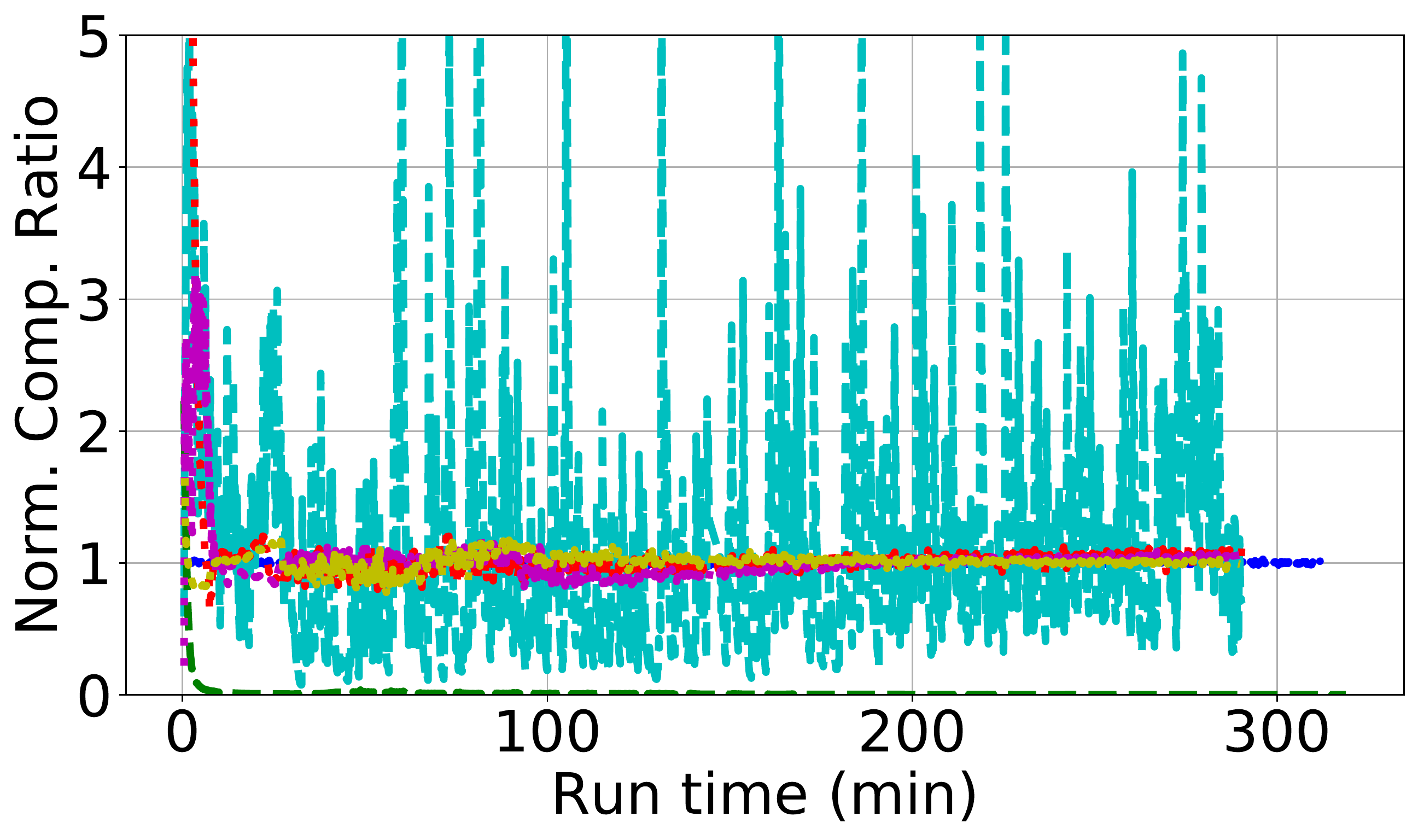}
	\caption{Smoothed compression ratio.}
	\label{fig:vgg19-avgcomp0.001-8}
     \end{subfigure}
     \hfill
     \begin{subfigure}[ht]{0.44\linewidth}
    \includegraphics[ width=\textwidth]{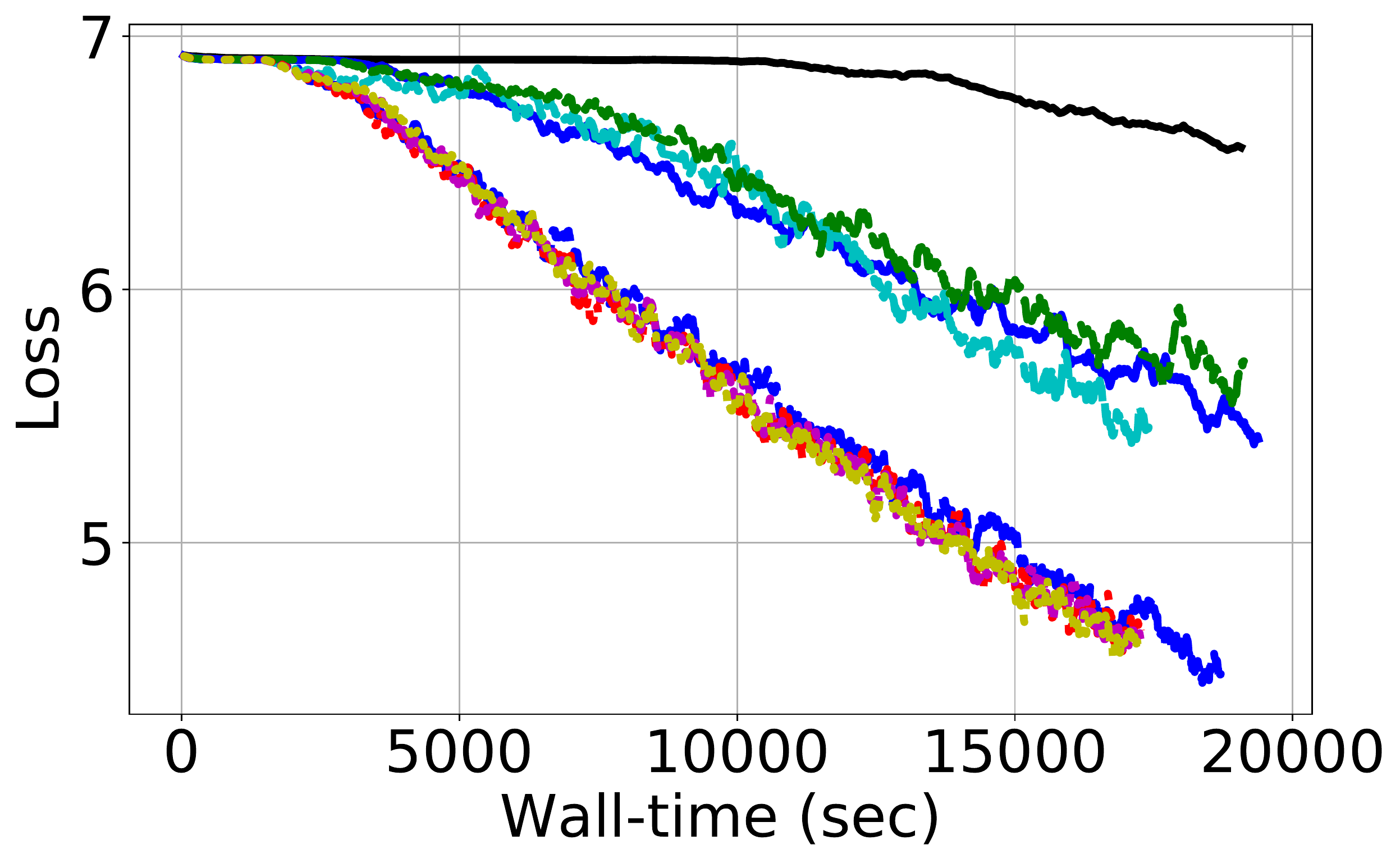}
	\caption{Training loss.}
	\label{fig:vgg19-acc0.001-8}
     \end{subfigure}
\caption{Performance metrics of training VGG19 on ImageNet using ratio of $0.001$.}
\label{fig:vgg19-more}
\end{figure*}

\paragraph{\textbf{CPU as the Compression Device:}} In this experiment, instead of using the GPU as the compression target, we use the CPU device as the compression device and report on the average training throughput. Due to the slow speed of the experiment, we only run the experiment for two epochs as we are interested in the throughput numbers. We compare the performance of $\topk$, DGC and \scheme\!-E. \cref{fig:compcpu} presents the average training throughput (the first 10 iterations are excluded in the average). First, we note that the throughput on CPU is relatively high for $\topk$ method which consistently performed the worst when GPU is the target compression device. In contrast, \ac{DGC} is now performing the worst among all methods due to the slow performance of random sampling on CPU device. On the other hand, \scheme\! consistently performs the best even on CPU as the target device. These results are not surprising as it closely matches the observations from the micro-benchmark results (\cref{apdx:moremicrobench}).

\begin{figure*}[!h]
\captionsetup[subfigure]{justification=centering}
\centering
     \begin{subfigure}[ht]{0.5\linewidth}
  \includegraphics[width=1\linewidth]{Figures/experiments/legend2.pdf}
  \end{subfigure}
  \\
     \begin{subfigure}[ht]{0.32\linewidth}
    \includegraphics[width=\linewidth]{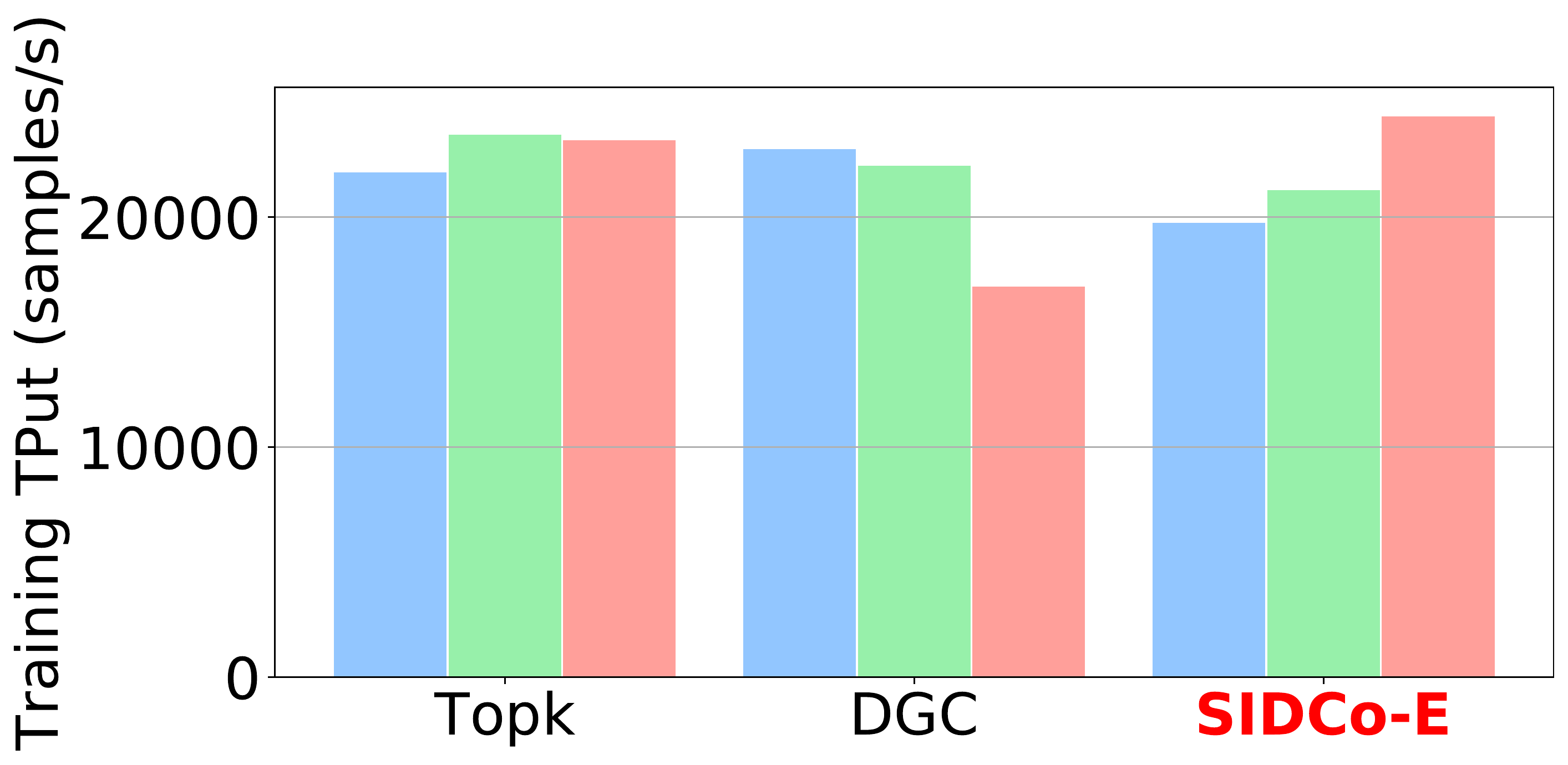}
	\caption{ResNet20 on CIFAR10 (TPut).}
	\label{fig:resnet20-cputput-8}
     \end{subfigure}
     \hfill
    \begin{subfigure}[ht]{0.32\linewidth}
  \includegraphics[width=\linewidth]{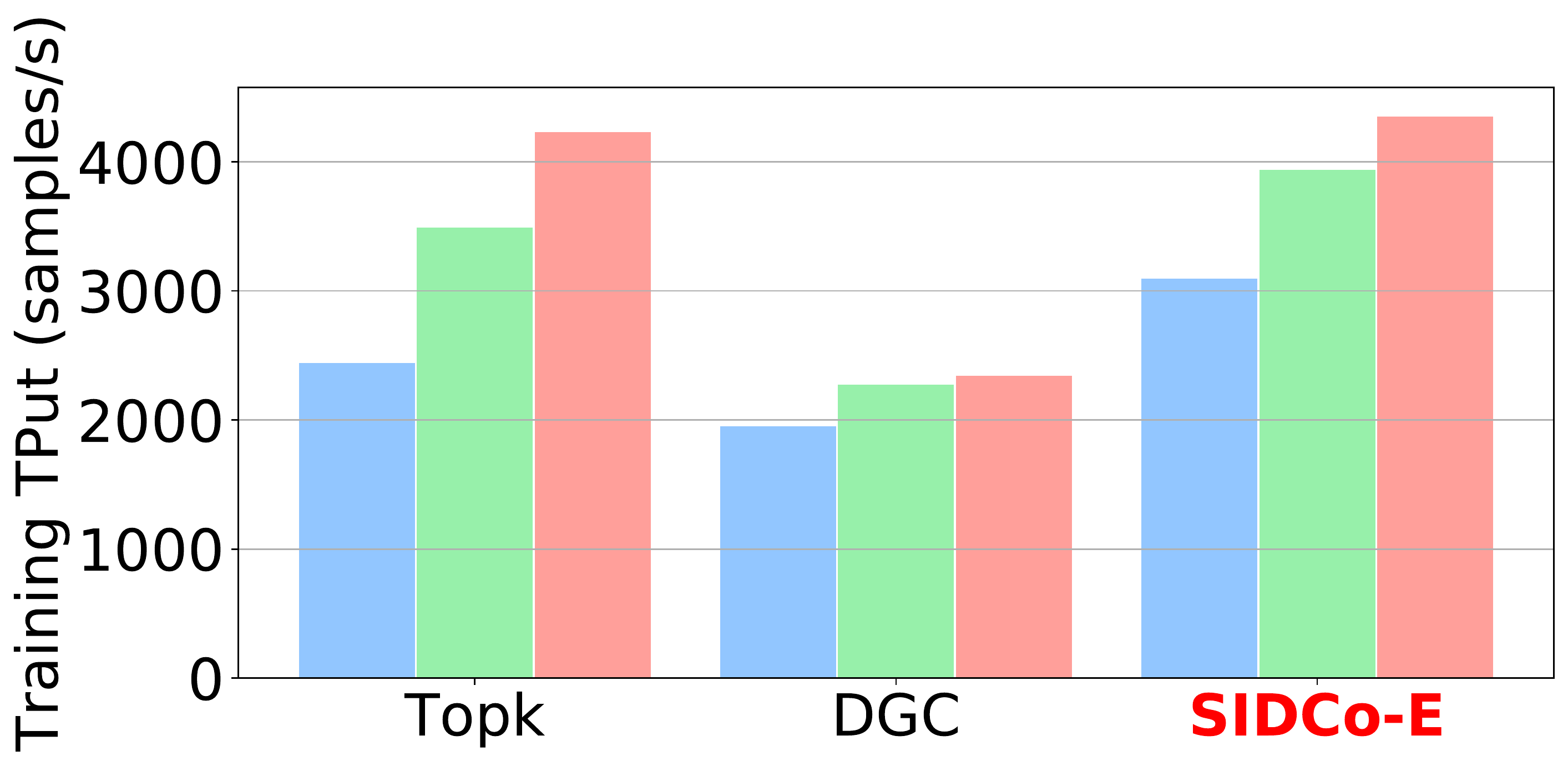}
	\caption{VGG16 on CIFAR10 (TPut).}
	\label{fig:vgg16-cputput-8}
    \end{subfigure}
    \hfill
     \begin{subfigure}[ht]{0.32\linewidth}
    \includegraphics[width=\linewidth]{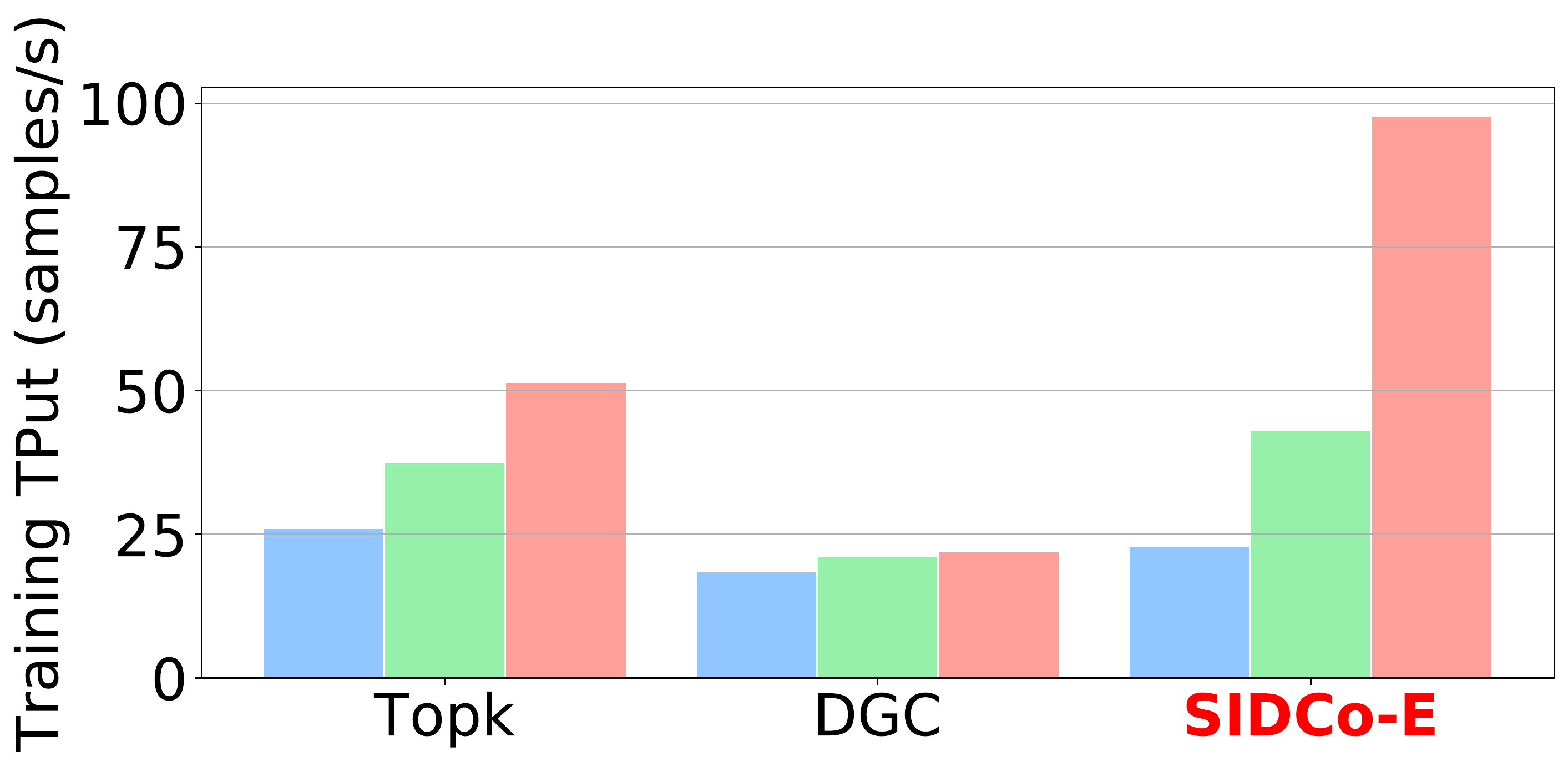}
	\caption{LSTM on PTB (TPut).}
	\label{fig:lstm-cputput-8}
     \end{subfigure}
\caption{Throughput when CPU is the compression device: (a) ResNet20 , (b) VGG16 and (c) LSTM-PTB.}
\label{fig:compcpu}
\end{figure*}

\paragraph{\textbf{Full ImageNet training on Multi-GPU node:}} In \cref{fig:ibex}, we present the results for training both ResNet50 and VGG19 on ImageNet fully for 90 epochs using a single node equipped with 8 Nvidia-V100 32GB GPUs in the shared cluster presented in \cref{apdx:clusters}. Each allocation of a node in the shared cluster is limited to 24 hours of run-time. We use compression ratio of $0.1$ for ResNet50 and $0.01$ for VGG19. Figure \ref{fig:resnet50-acc-ibex} and \ref{fig:vgg19-acc-ibex} show the top-1 test accuracy at the end of the training either due to finishing the 90 epochs or allocation is revoked. They shows that that compression can achieve the same or higher accuracy than no-compression baseline. Also, in case of VGG19, compression speed-ups allow the training to converge faster and hence the higher accuracy. \cref{fig:resnet50-tput-ibex} and \cref{fig:vgg19-tput-ibex} show the training throughput and that all methods supersedes $\topk$. Moreover, \scheme\! schemes achieve higher throughput than \ac{DGC} and $\topk$. Finally,  \cref{fig:resnet50-comp-ibex} and \cref{fig:vgg19-comp-ibex} show the estimation quality and they show that the quality is very bad for Gaussian-based fitting methods while \scheme\! schemes can achieve same estimation quality as of the sampling of DGC.
\begin{figure*}[!h]
\captionsetup[subfigure]{justification=centering}
\centering
     \begin{subfigure}[ht]{0.32\linewidth}
    \includegraphics[width=\linewidth]{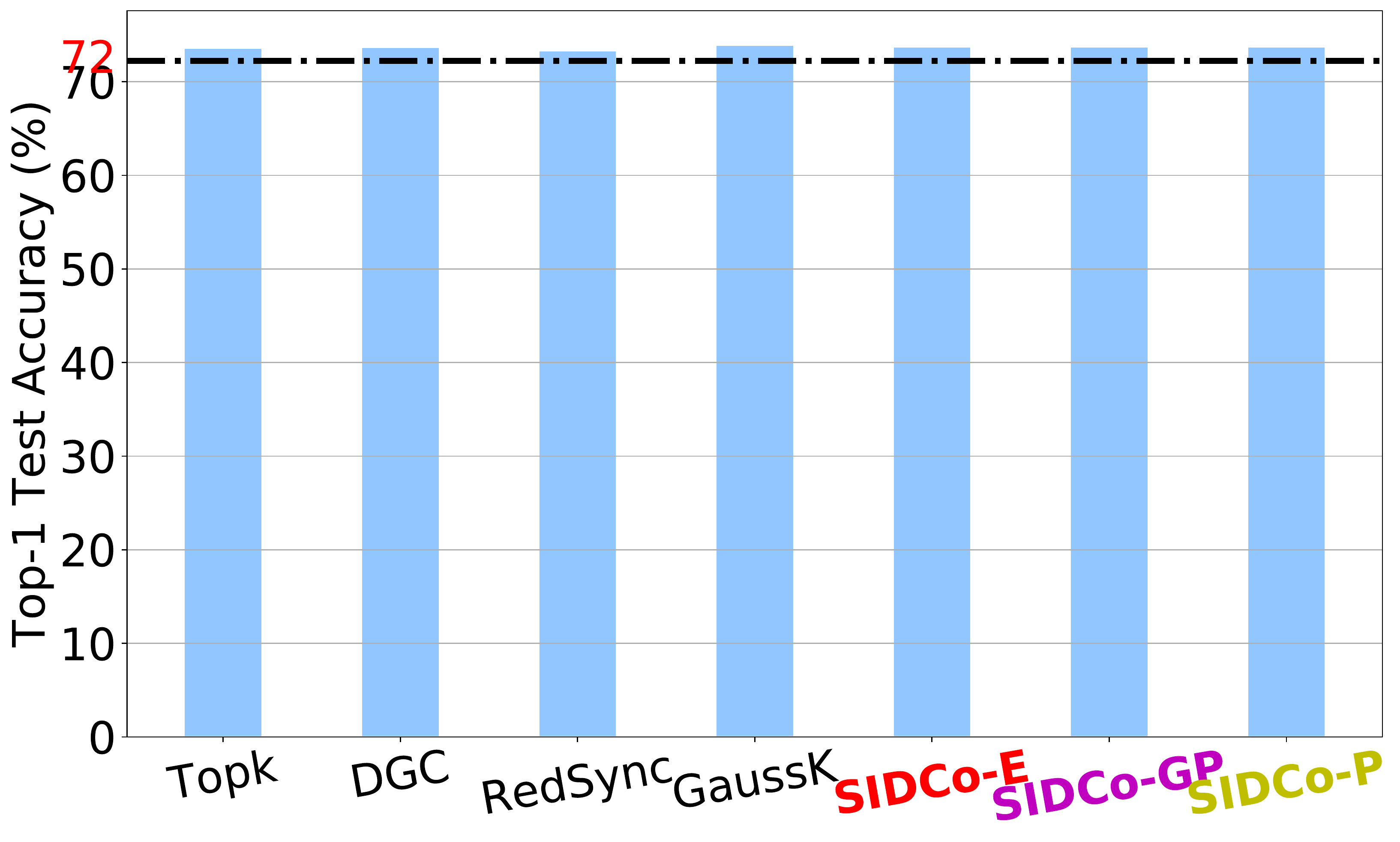}
	\caption{Training accuracy.}
	\label{fig:resnet50-acc-ibex}
     \end{subfigure}
     \hfill
    \begin{subfigure}[ht]{0.32\linewidth}
  \includegraphics[width=\linewidth]{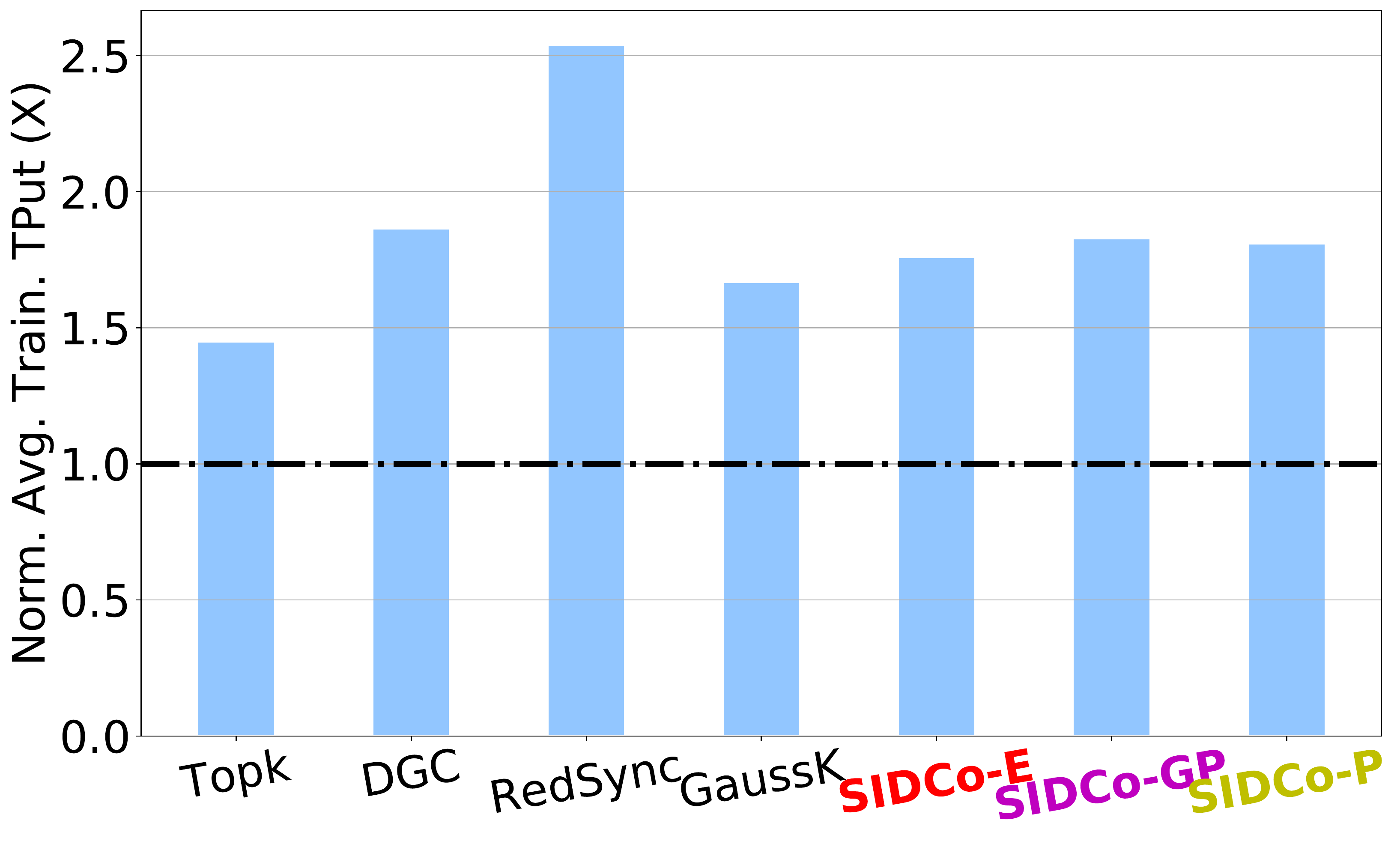}
	\caption{Training Throughput.}
	\label{fig:resnet50-tput-ibex}
    \end{subfigure}
    \hfill
     \begin{subfigure}[ht]{0.32\linewidth}
    \includegraphics[width=\linewidth]{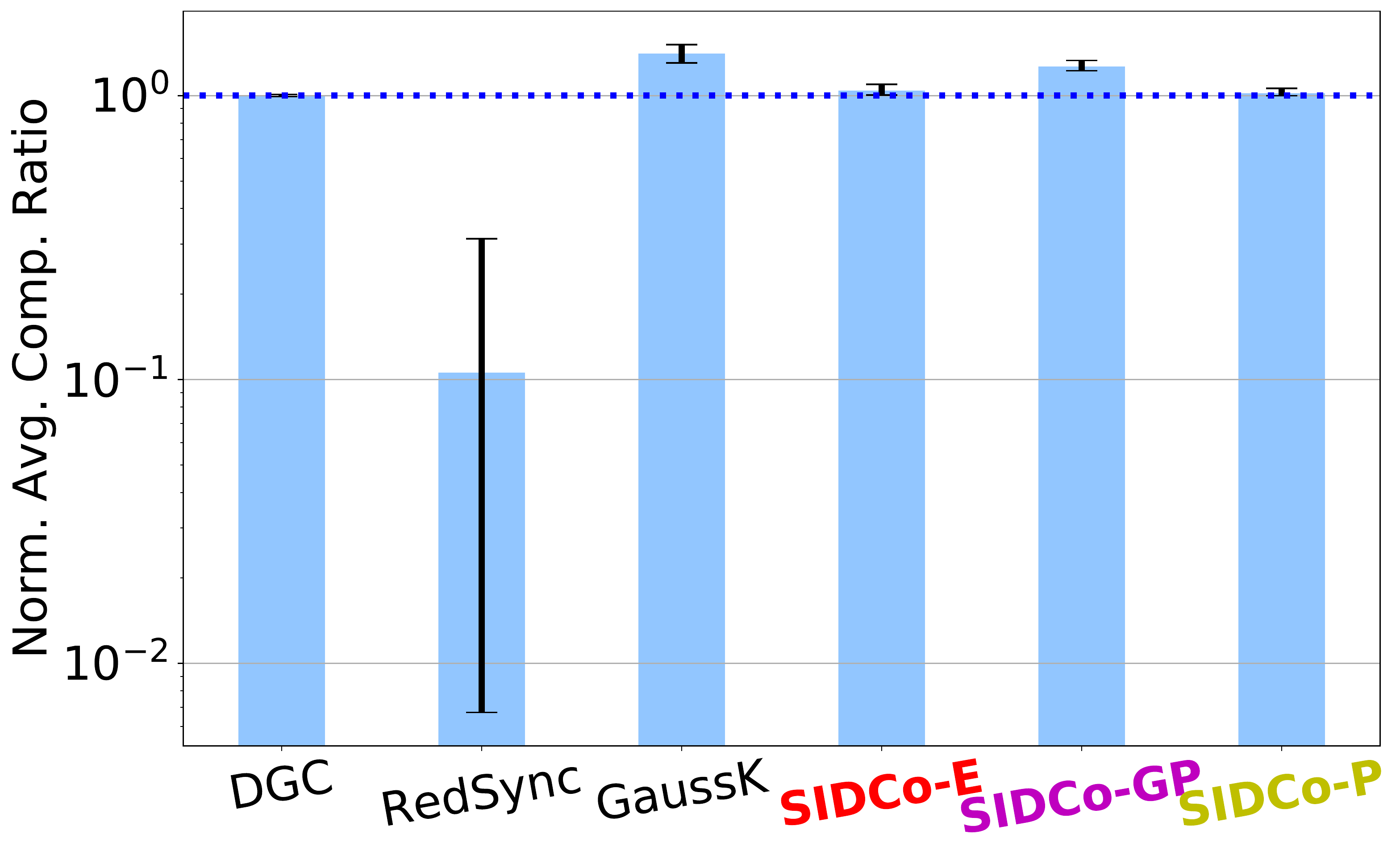}
	\caption{Estimation Quality.}
	\label{fig:resnet50-comp-ibex}
     \end{subfigure}
     \\
     \begin{subfigure}[ht]{0.32\linewidth}
    \includegraphics[width=\linewidth]{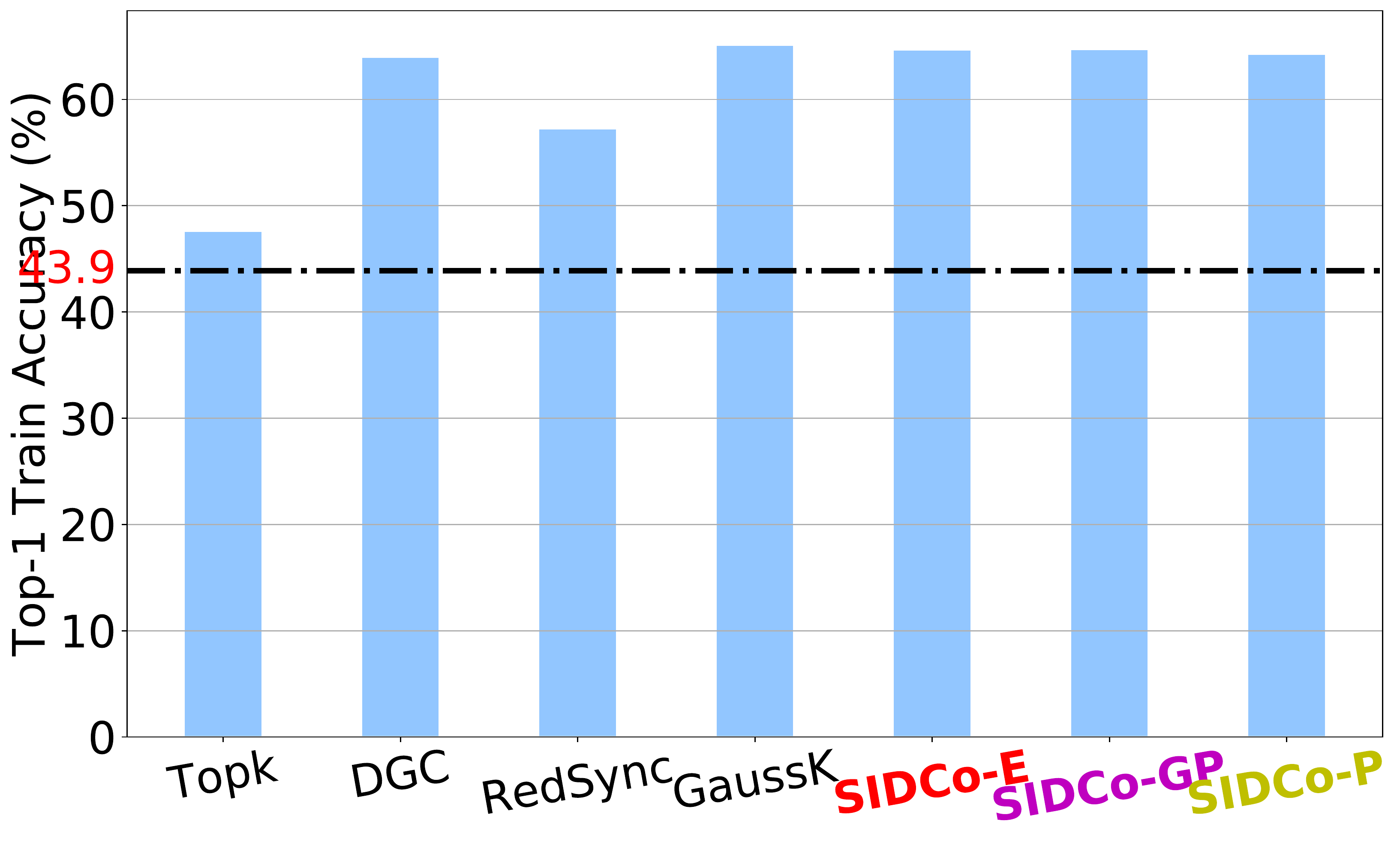}
	\caption{Training accuracy.}
	\label{fig:vgg19-acc-ibex}
     \end{subfigure}
     \hfill
    \begin{subfigure}[ht]{0.32\linewidth}
  \includegraphics[width=\linewidth]{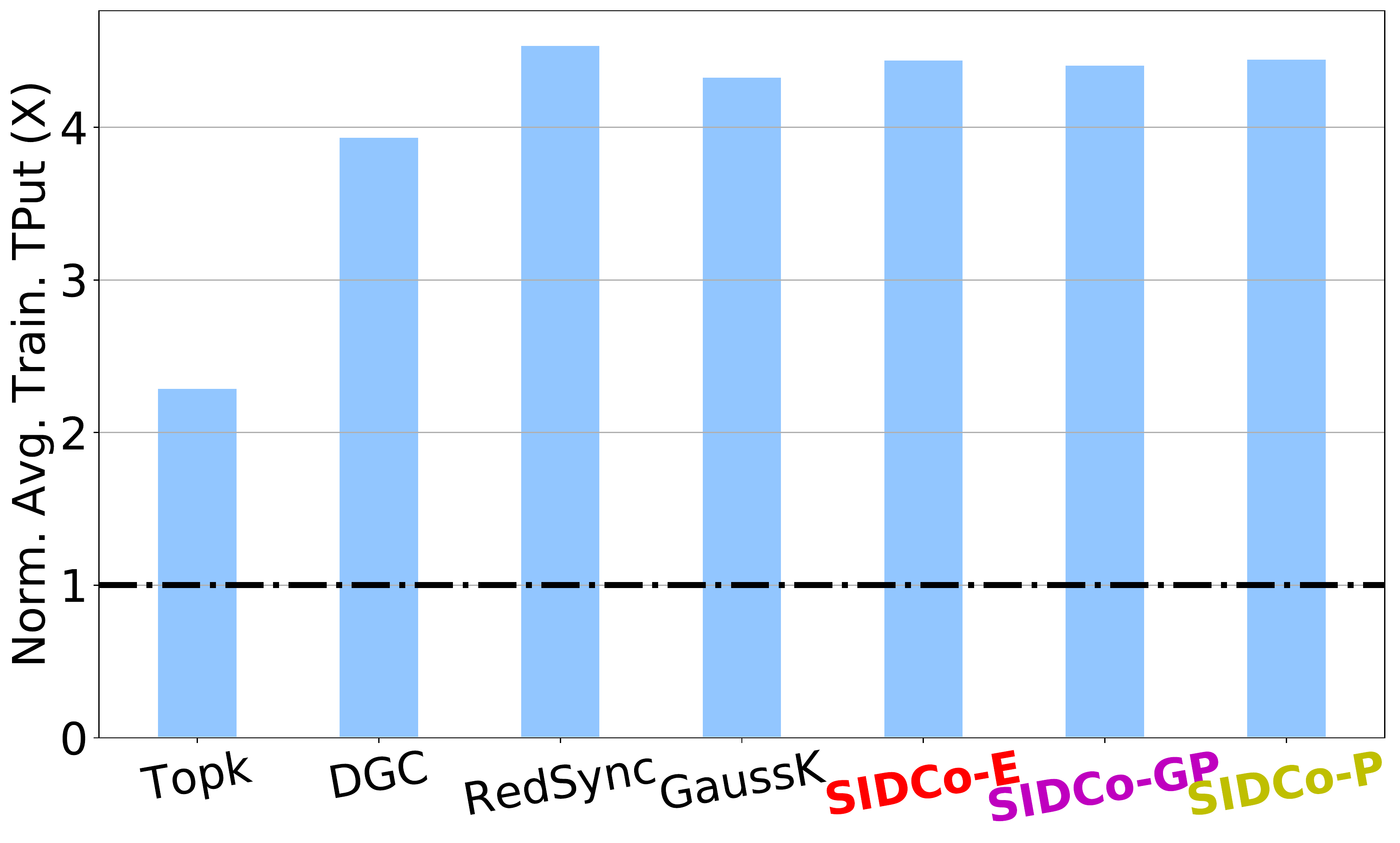}
	\caption{Training Throughput.}
	\label{fig:vgg19-tput-ibex}
    \end{subfigure}
    \hfill
     \begin{subfigure}[ht]{0.32\linewidth}
    \includegraphics[width=\linewidth]{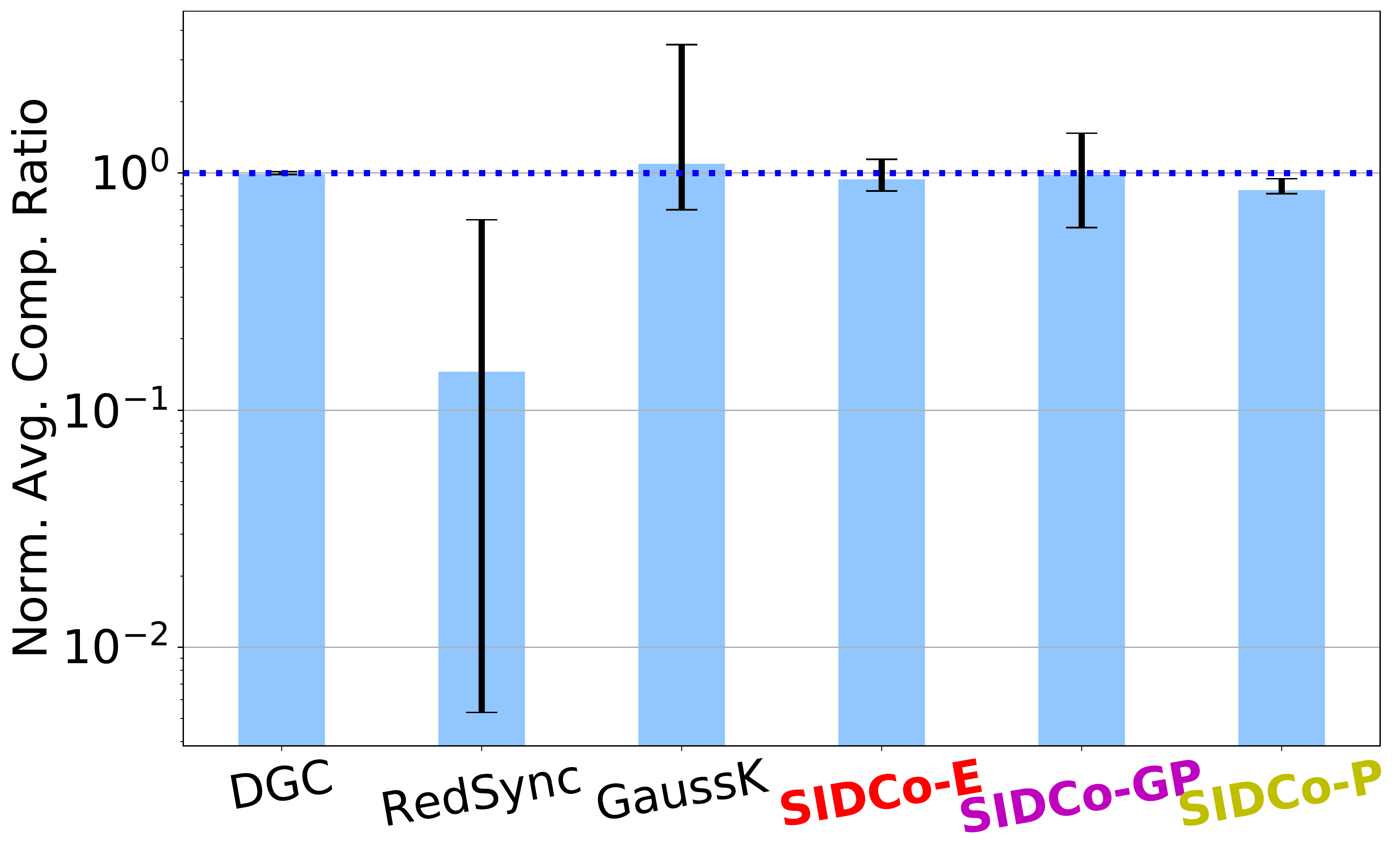}
	\caption{Estimation Quality.}
	\label{fig:vgg19-comp-ibex}
     \end{subfigure}
\caption{Training Performance of ImageNet on ResNet50 [(a), (b), (c)]  and VGG19 [(d), (e), (f)] using the multi-GPU node.}
\label{fig:ibex}
\end{figure*}

\subsection{Compression Complexity of DNN models}
\label{apdx:moremicrobench}

\textbf{Compression Overhead of Real Models:} In \cref{fig:microbench-models-speedup} and \cref{fig:microbench-models-time}, we present the compression speed-up over $\topk$ and the latency overhead for some models including ResNet20, VGG16, ResNet50 and RNN-LSTM used in training CIFAR10, ImageNet and PTB datasets, respectively. The results confirms the results, presented earlier, for VGG16, where Threshold-based methods including \scheme\! outperforms $\topk$ and \ac{DGC} both on GPU and CPU as target compression device over all models in comparison. The results also show that \ac{DGC} outperforms $\topk$ on the GPU device while $\topk$ outperforms \ac{DGC} on the CPU device. Overall, for flexibility reasons and the compatibility with various devices, both $\topk$ and DGC are not preferable.

 \begin{figure*}[!ht]
  \centering
  \begin{subfigure}[ht]{0.5\linewidth}
  \includegraphics[width=1\linewidth]{Figures/experiments/legend2.pdf}
 \end{subfigure}
  \\
      \begin{subfigure}{0.24\linewidth}
  \includegraphics[width=1\textwidth]{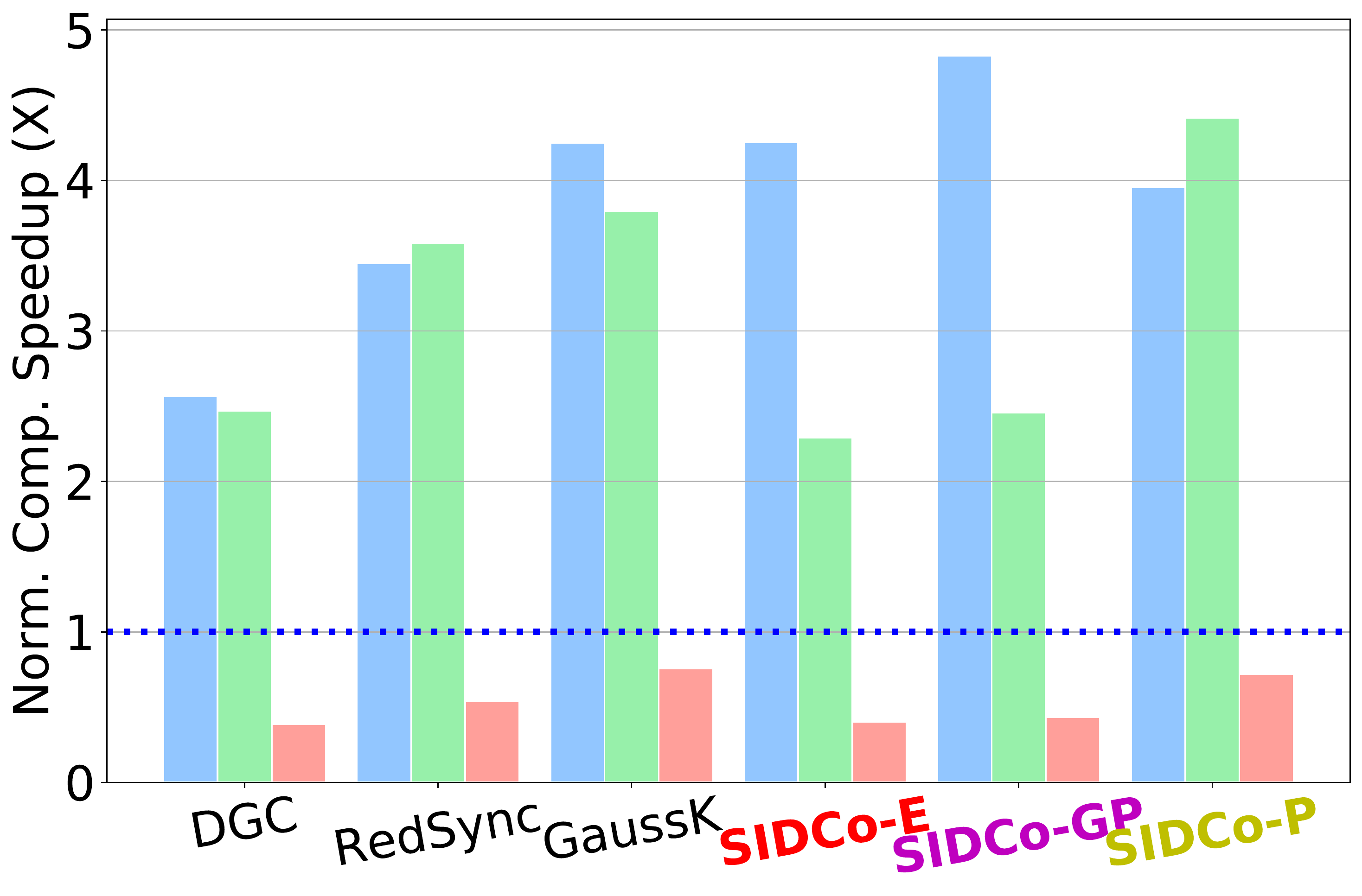}
      \caption{ResNet20 on GPU}
        \label{fig:resnet20-cuda-speedupall}
     \end{subfigure}
     \hfill
     \begin{subfigure}{0.24\linewidth}
  \includegraphics[width=1\textwidth]{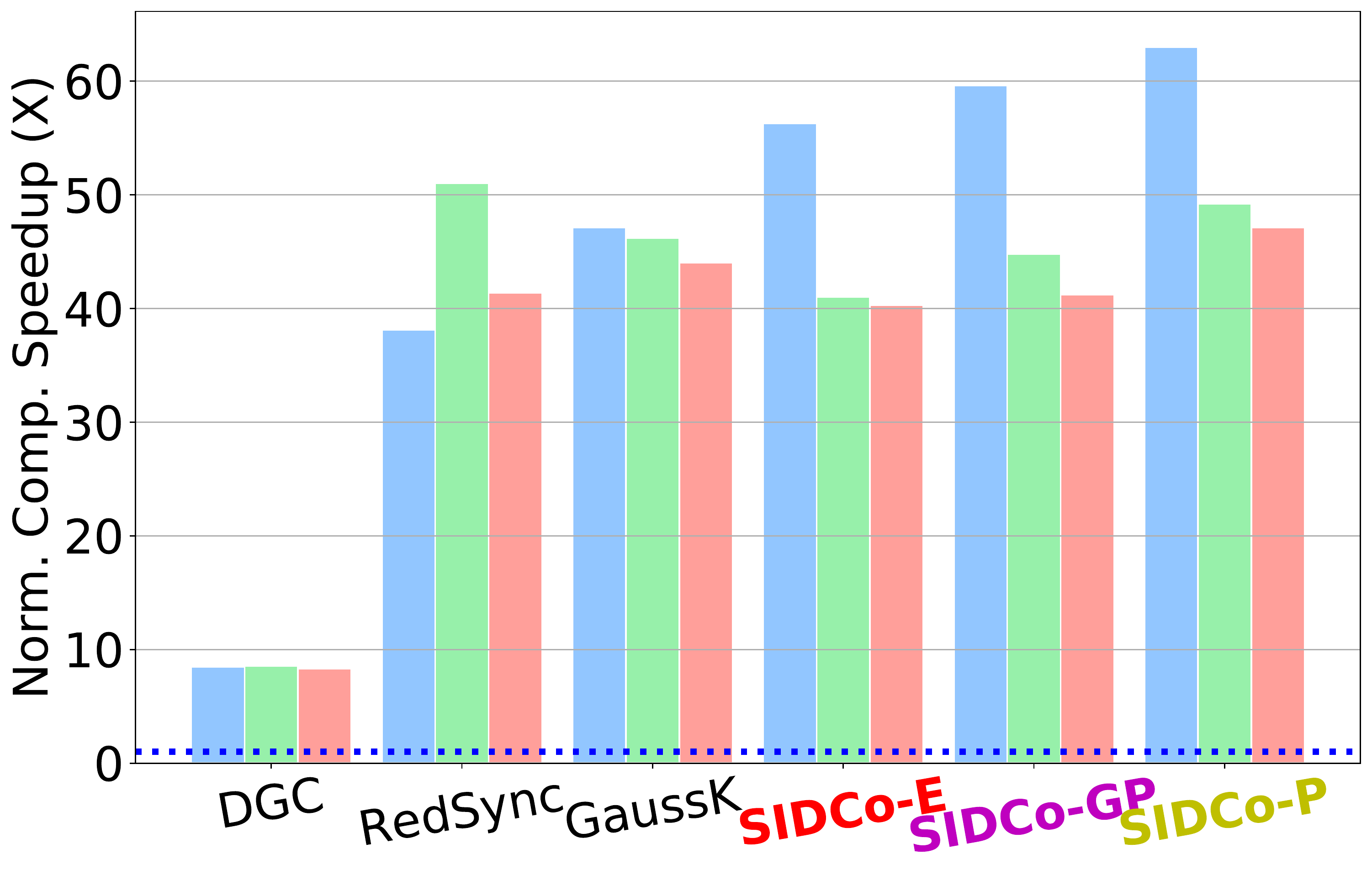}
      \caption{VGG16 on GPU}
        \label{fig:vgg16-cuda-speedupall}
     \end{subfigure}
        \hfill
  \begin{subfigure}{0.24\linewidth}
    	\includegraphics[width=1\textwidth]{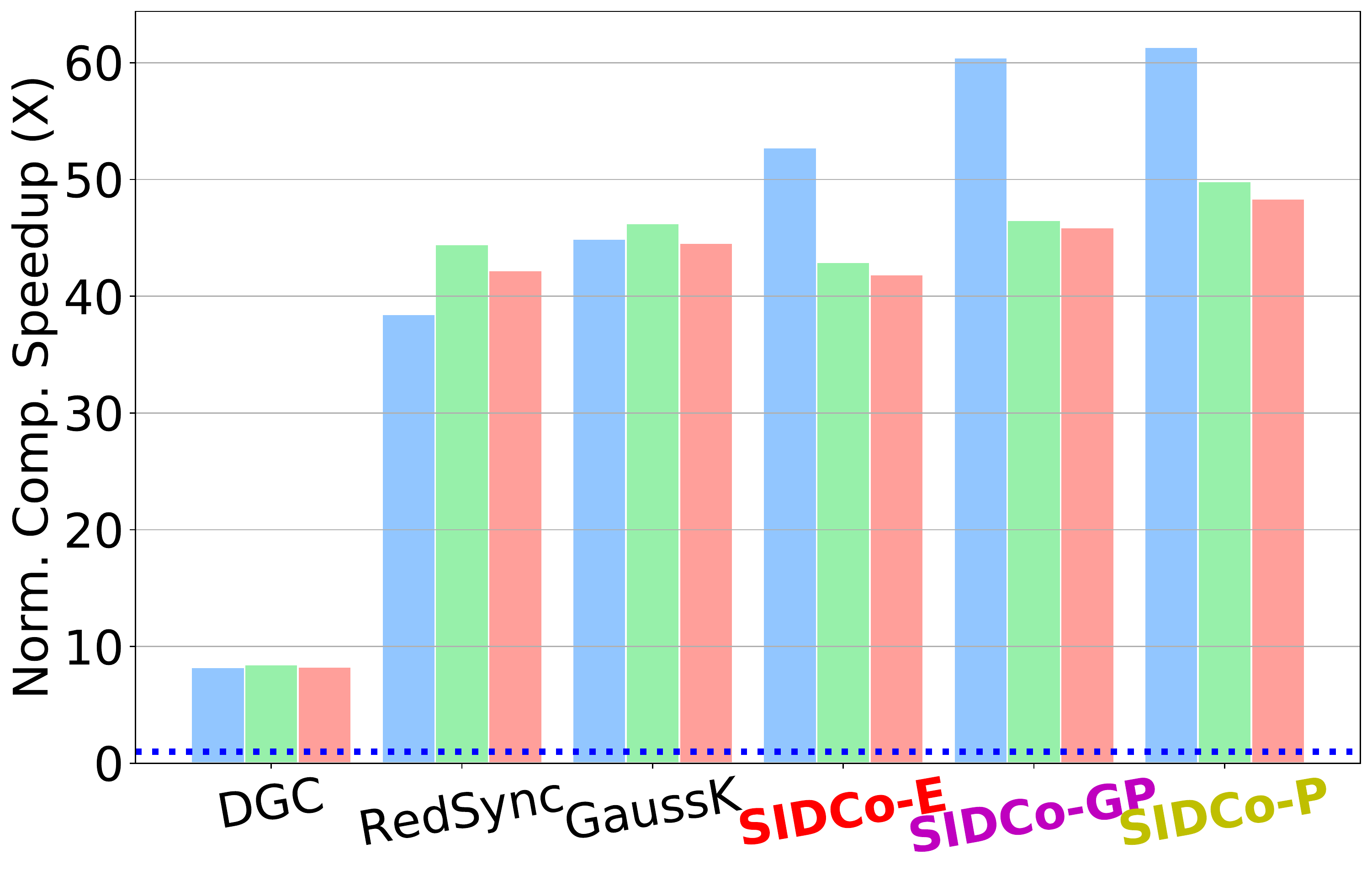}
    	\caption{ResNet50 on GPU}
     \label{fig:resnet50-cuda-speedupall}
    \end{subfigure}
    \hfill
    \begin{subfigure}{0.24\linewidth}
    \includegraphics[width=1\textwidth]{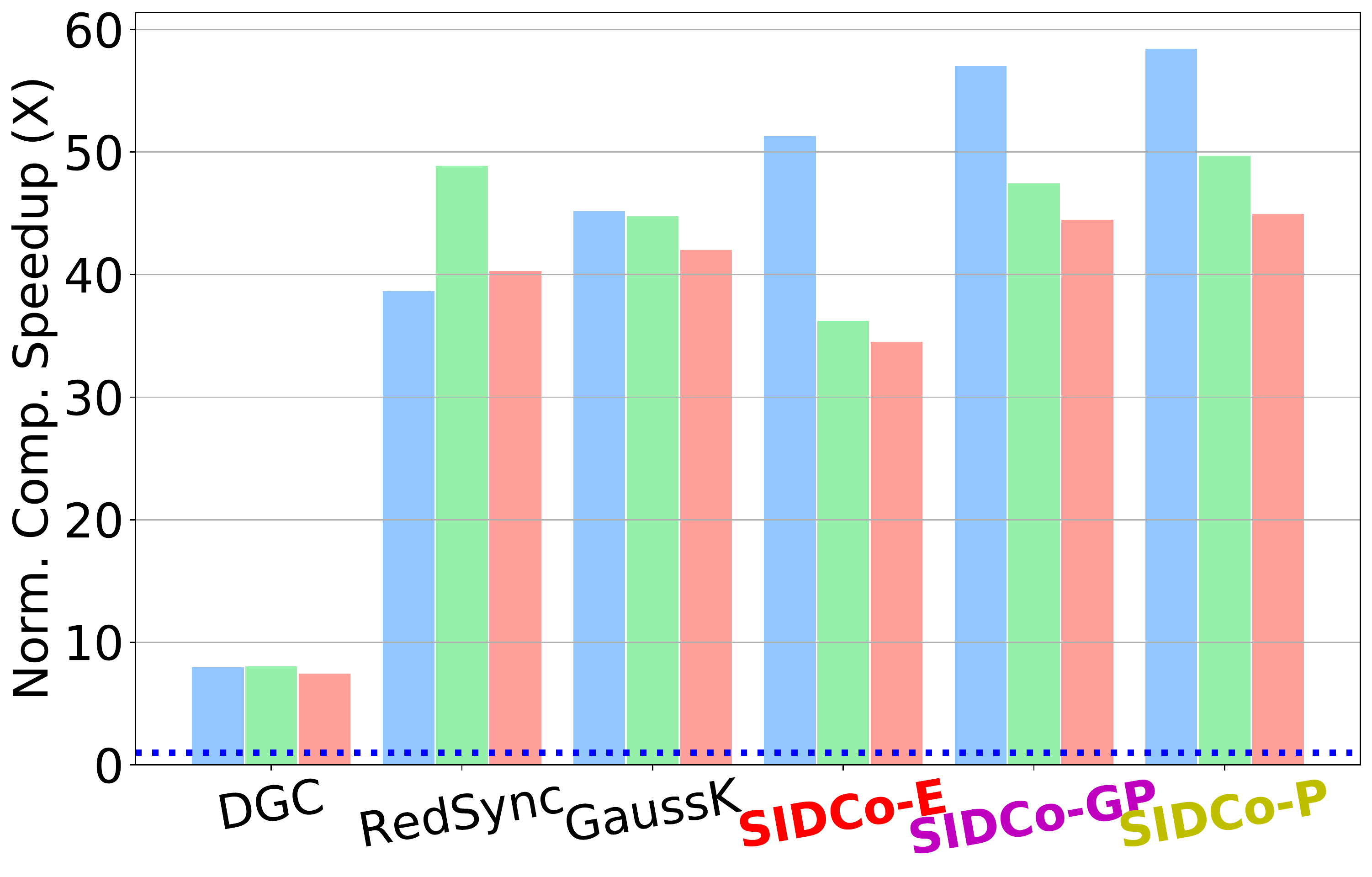}
    \caption{LSTM on GPU}
     \label{fig:lstm-cuda-speedupall}
    \end{subfigure}
    \\
      \begin{subfigure}{0.24\linewidth}
  \includegraphics[width=1\textwidth]{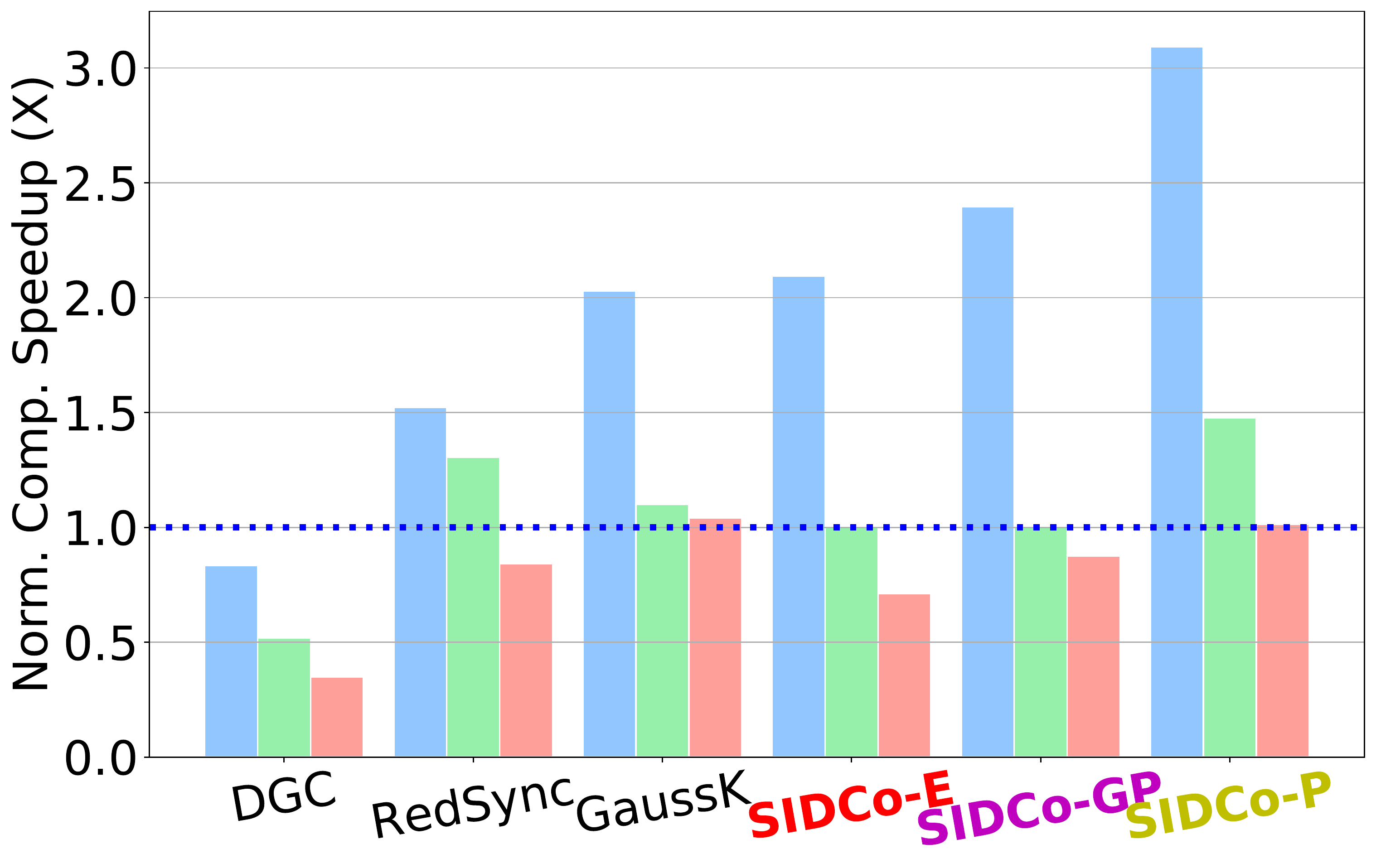}
      \caption{ResNet20 on CPU}
        \label{fig:resnet20-cpu-speedupall}
     \end{subfigure}
     \hfill
      \begin{subfigure}{0.24\linewidth}
  \includegraphics[width=1\textwidth]{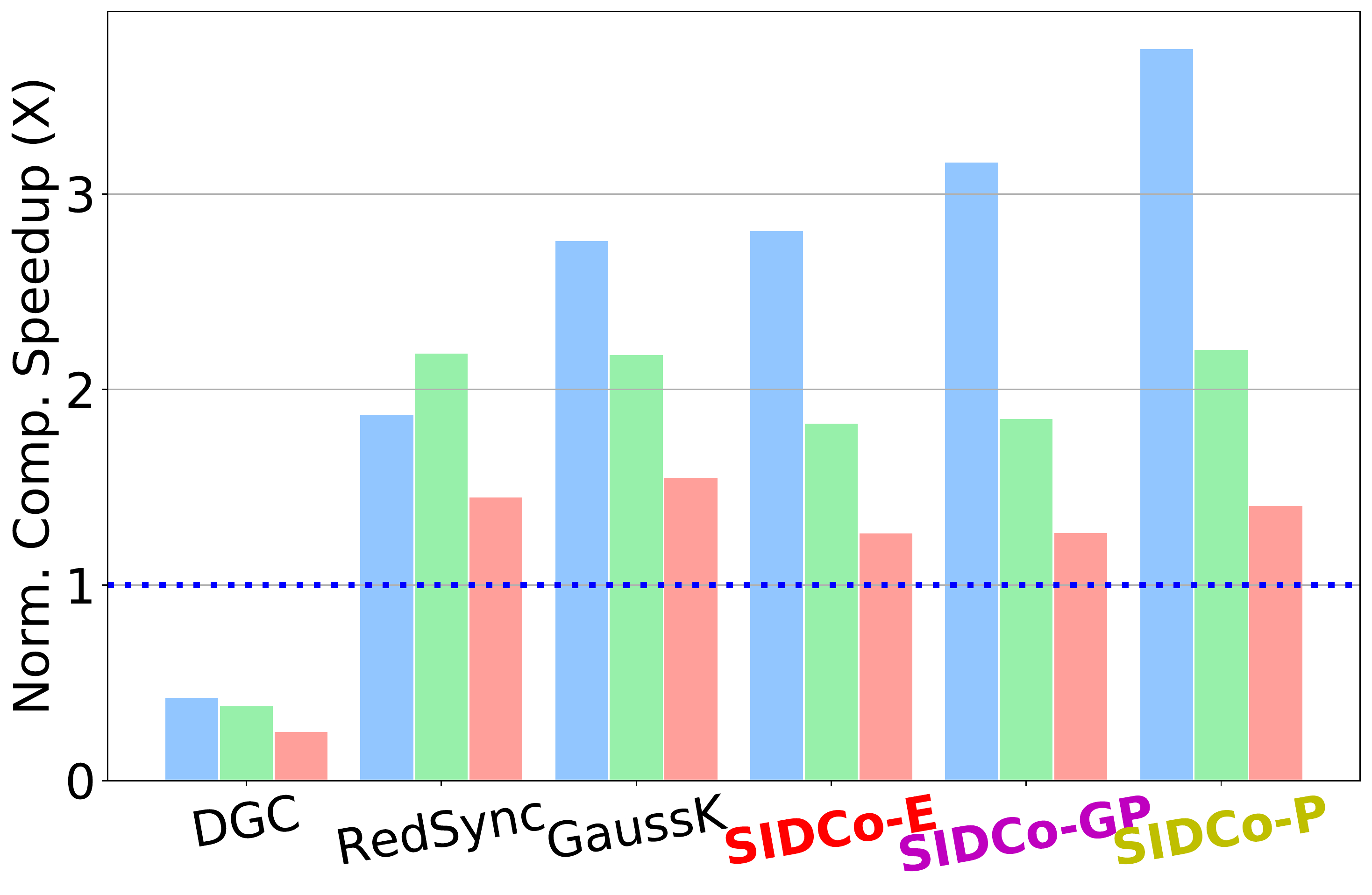}
      \caption{VGG16 on CPU}
        \label{fig:vgg16-cpu-speedupall}
     \end{subfigure}
        \hfill
  \begin{subfigure}{0.24\linewidth}
    	\includegraphics[width=1\textwidth]{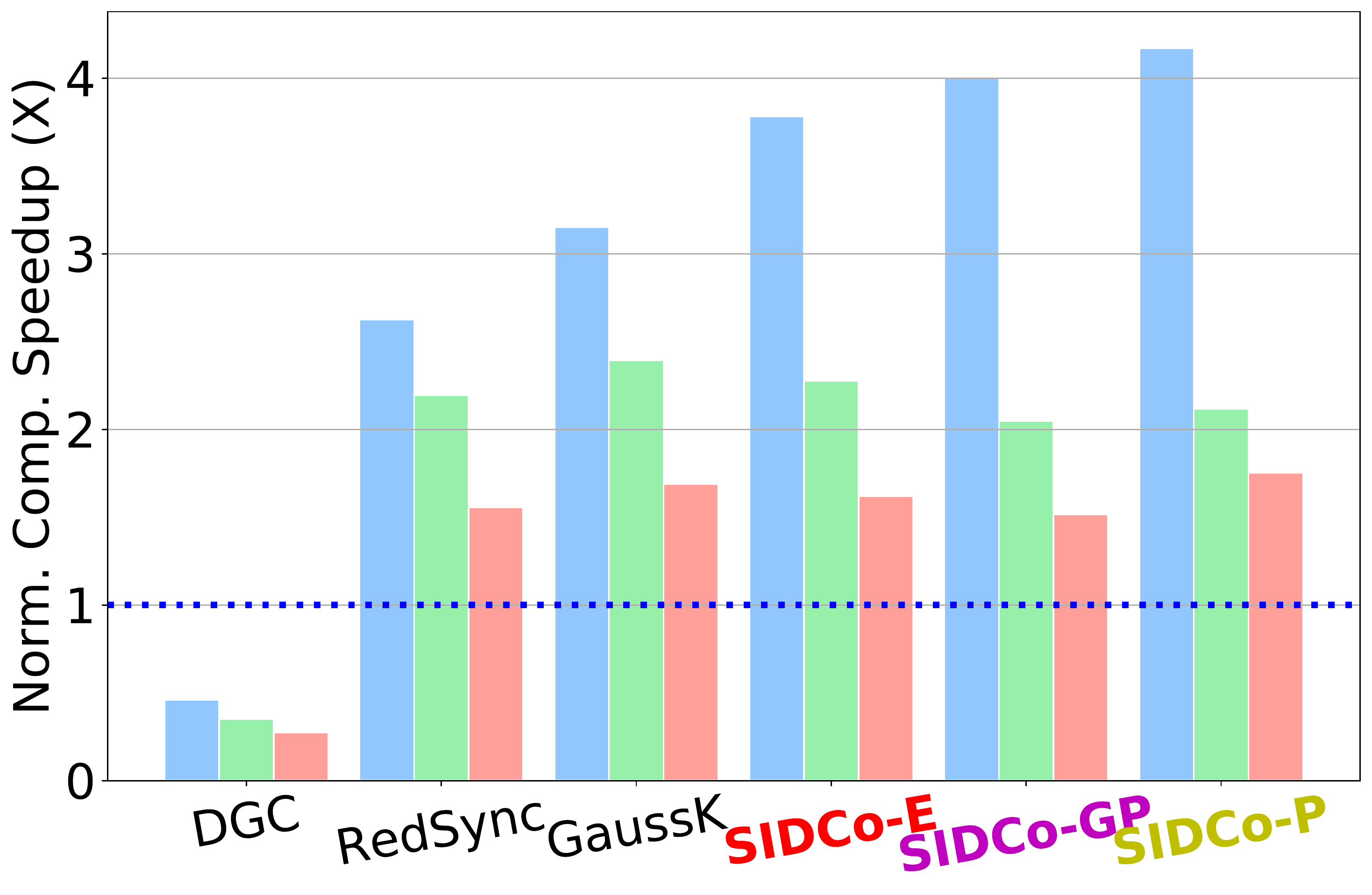}
    	\caption{ResNet50 on CPU}
     \label{fig:resnet50-cpu-speedupall}
    \end{subfigure}
    \hfill
     \begin{subfigure}{0.24\linewidth}
    \includegraphics[width=1\textwidth]{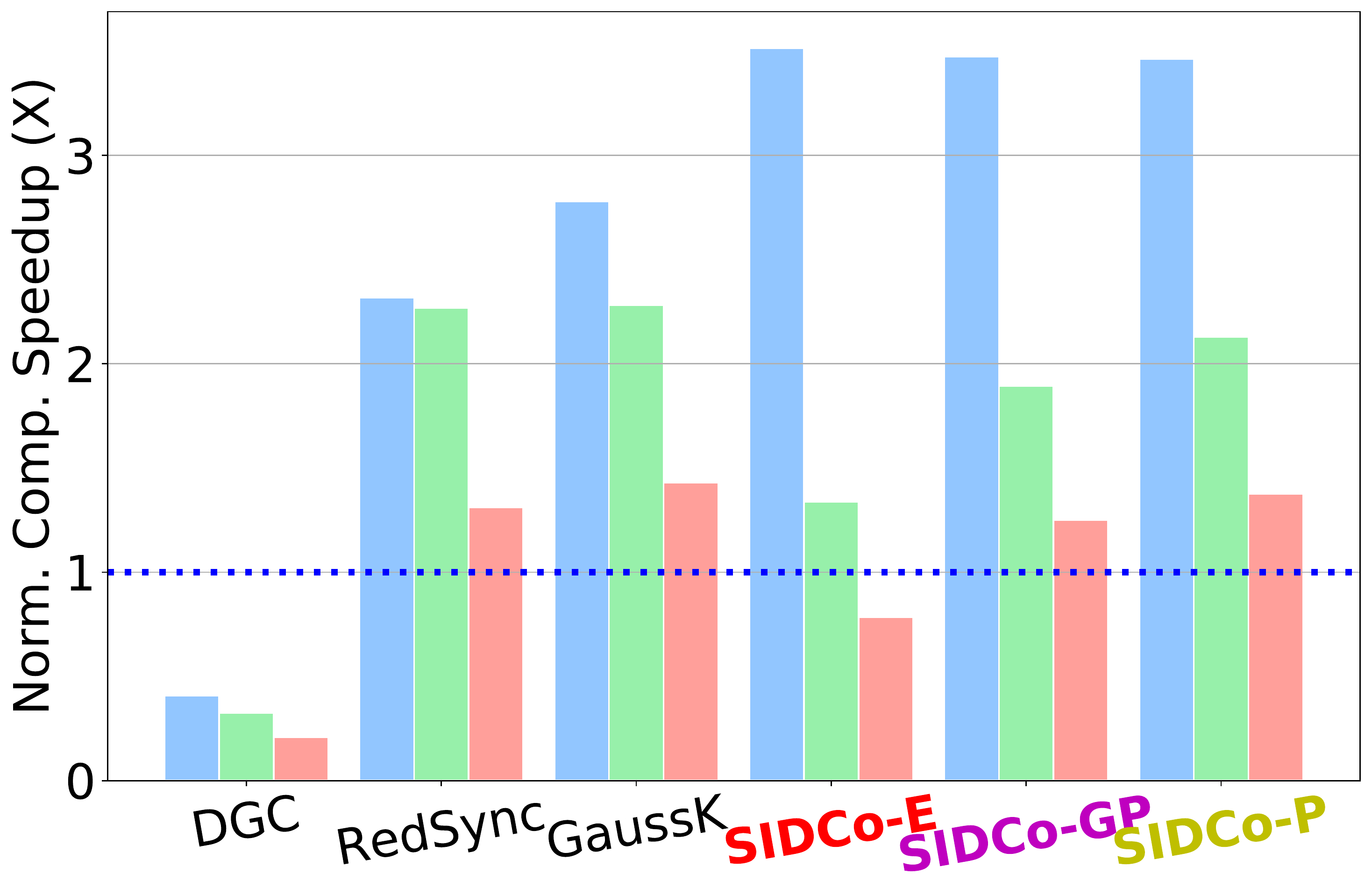}
    \caption{LSTM on CPU}
     \label{fig:lstm-cpu-speedupall}
    \end{subfigure}
    \caption{Compression speed-up over $\topk$ of compressing gradient vector of different models using various compressors and ratios on GPU (a,b,c,d) and CPU (e,f,g,h).}
    \label{fig:microbench-models-speedup}
\end{figure*}

 \begin{figure*}[!t]
  \centering
  \begin{subfigure}[ht]{0.37\linewidth}
  \includegraphics[width=1\linewidth]{Figures/experiments/legend2.pdf}
 \end{subfigure}
  \\
  \begin{subfigure}{0.235\linewidth}
    \includegraphics[width=1\textwidth]{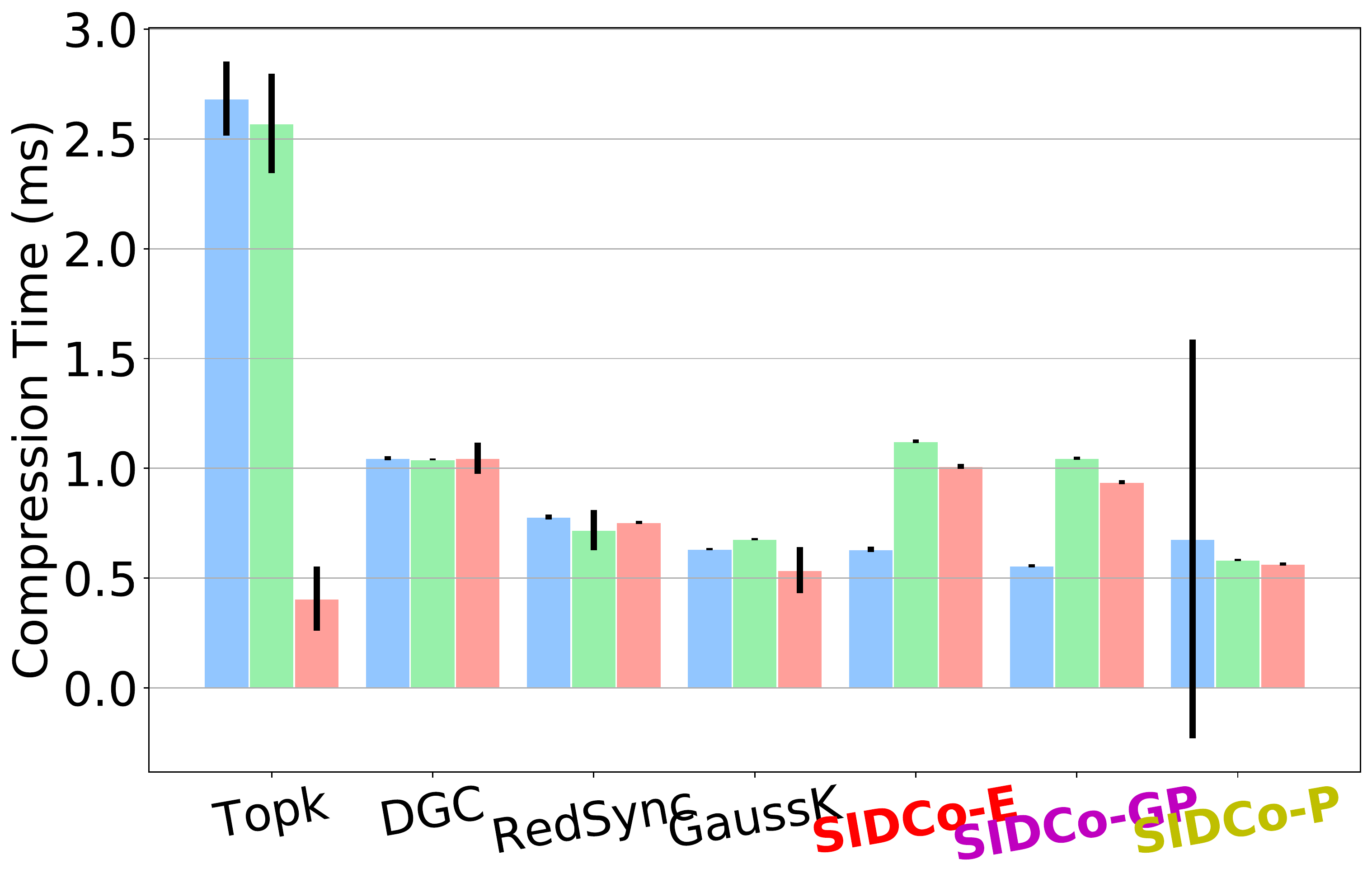}
    \caption{ResNet20 on GPU}
     \label{fig:resnet20-cuda-timeall}
    \end{subfigure}
    \hfill
      \begin{subfigure}{0.235\linewidth}
  \includegraphics[width=1\textwidth]{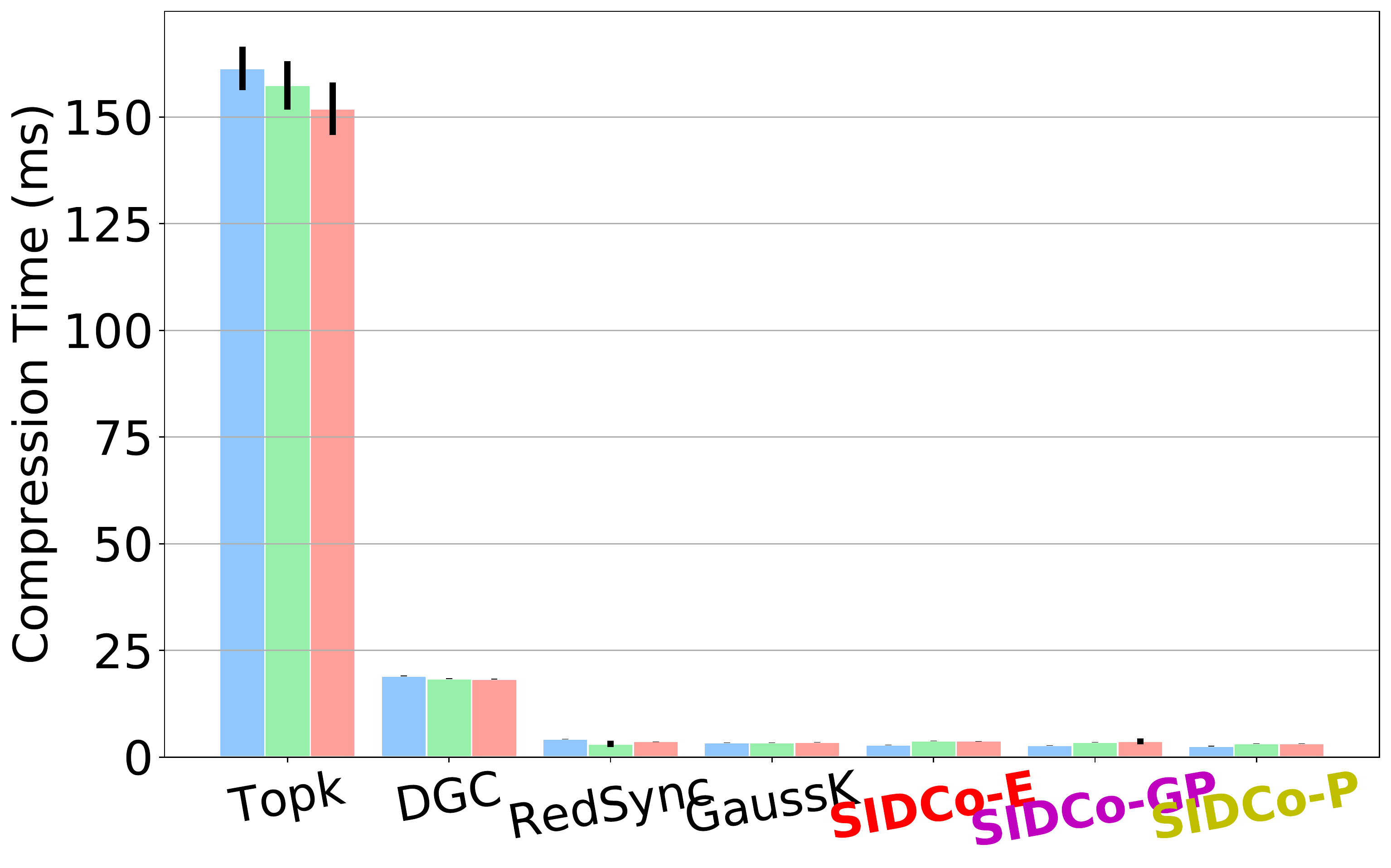}
      \caption{VGG16 on GPU}
        \label{fig:vgg16-cuda-timeall}
     \end{subfigure}
        \hfill
  \begin{subfigure}{0.235\linewidth}
    	\includegraphics[width=1\textwidth]{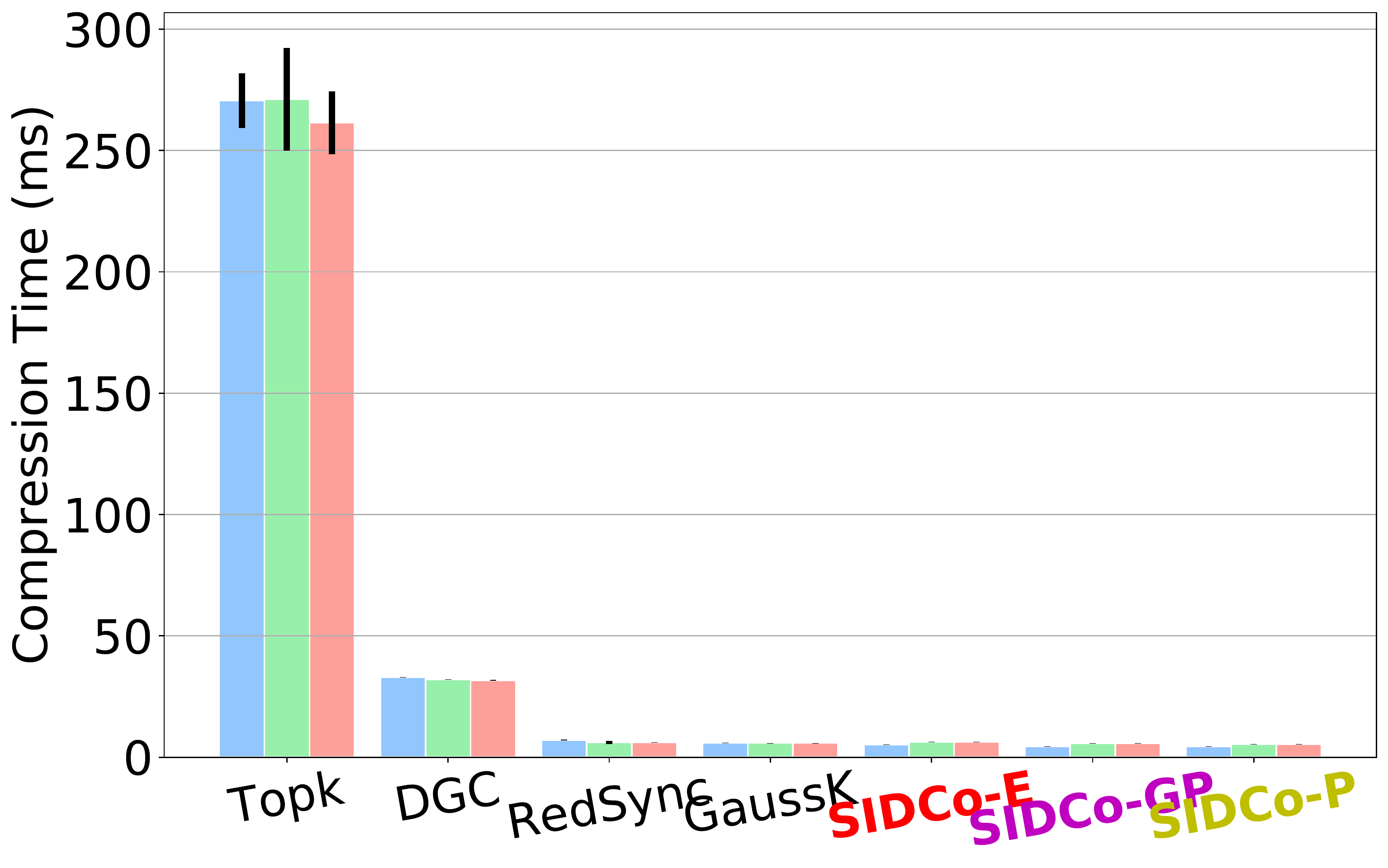}
    	\caption{ResNet50 on GPU}
     \label{fig:resnet50-cuda-timeall}
    \end{subfigure}
    \hfill
      \begin{subfigure}{0.235\linewidth}
    	\includegraphics[width=1\textwidth]{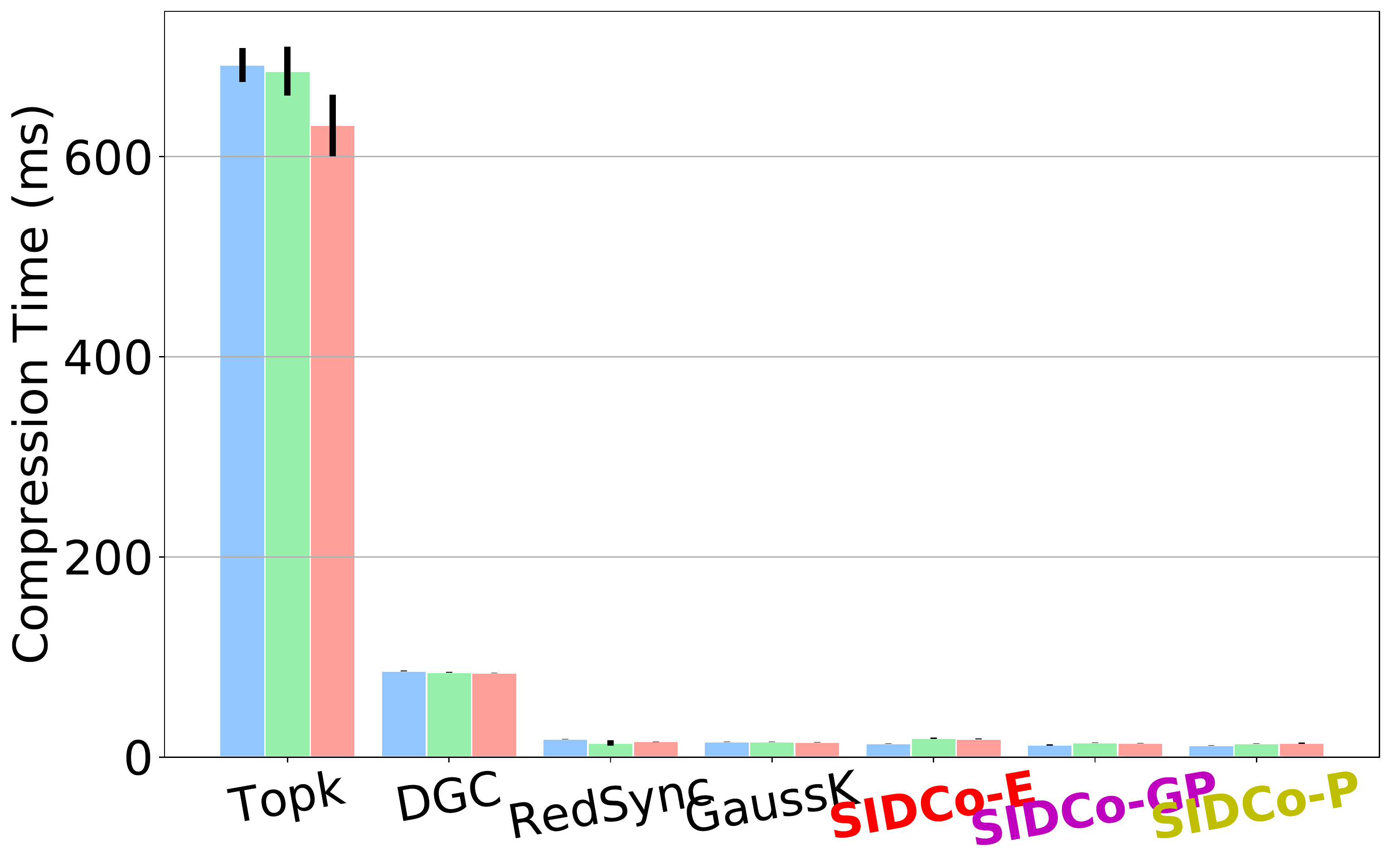}
    	\caption{LSTM on GPU}
     \label{fig:lstm-cuda-timeall}
    \end{subfigure}
    \\
    \begin{subfigure}{0.235\linewidth}
    \includegraphics[width=1\textwidth]{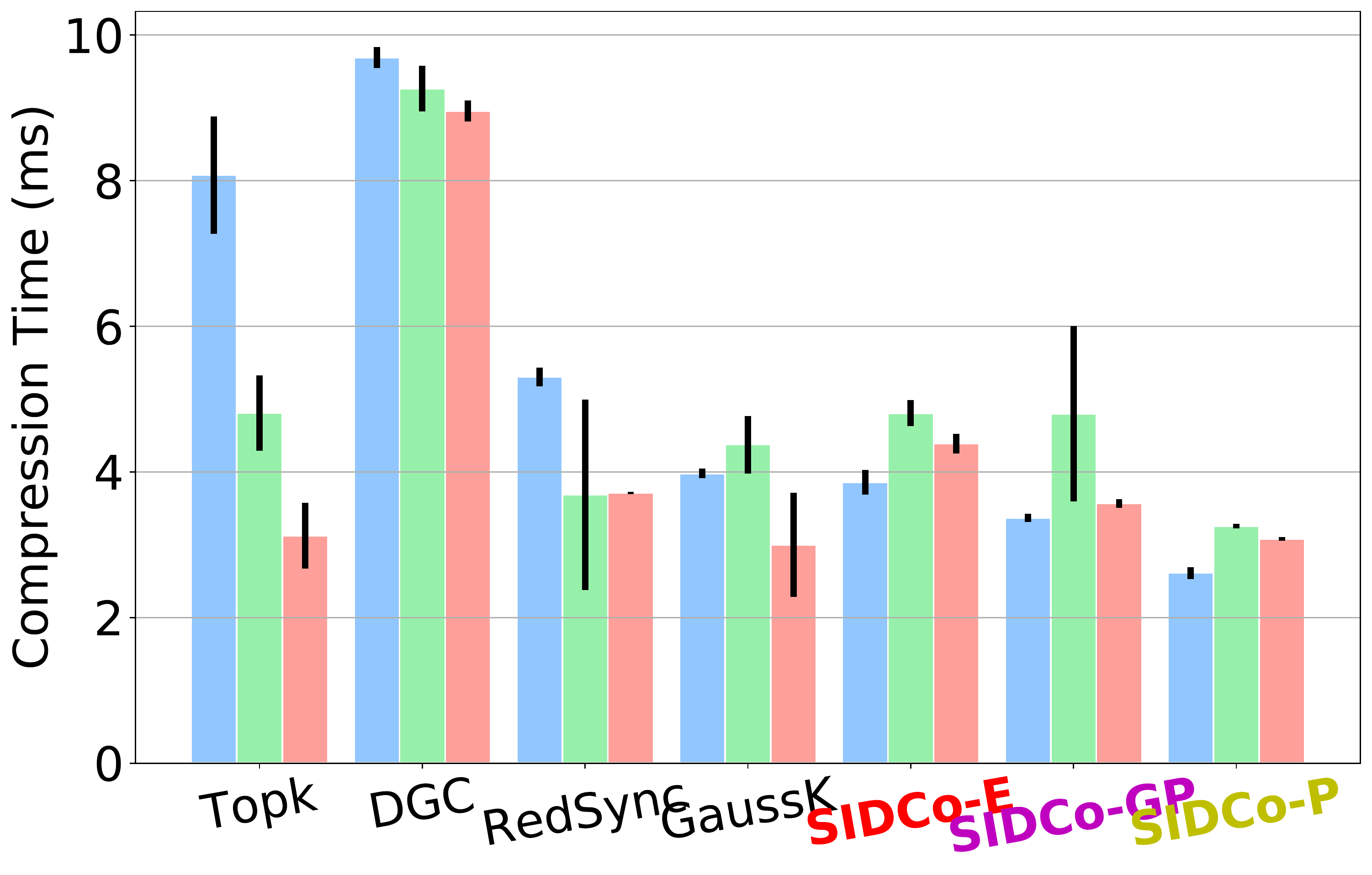}
    \caption{ResNet20 on CPU}
     \label{fig:resnet20-cpu-timeall}
    \end{subfigure}
    \hfill
      \begin{subfigure}{0.235\linewidth}
  \includegraphics[width=1\textwidth]{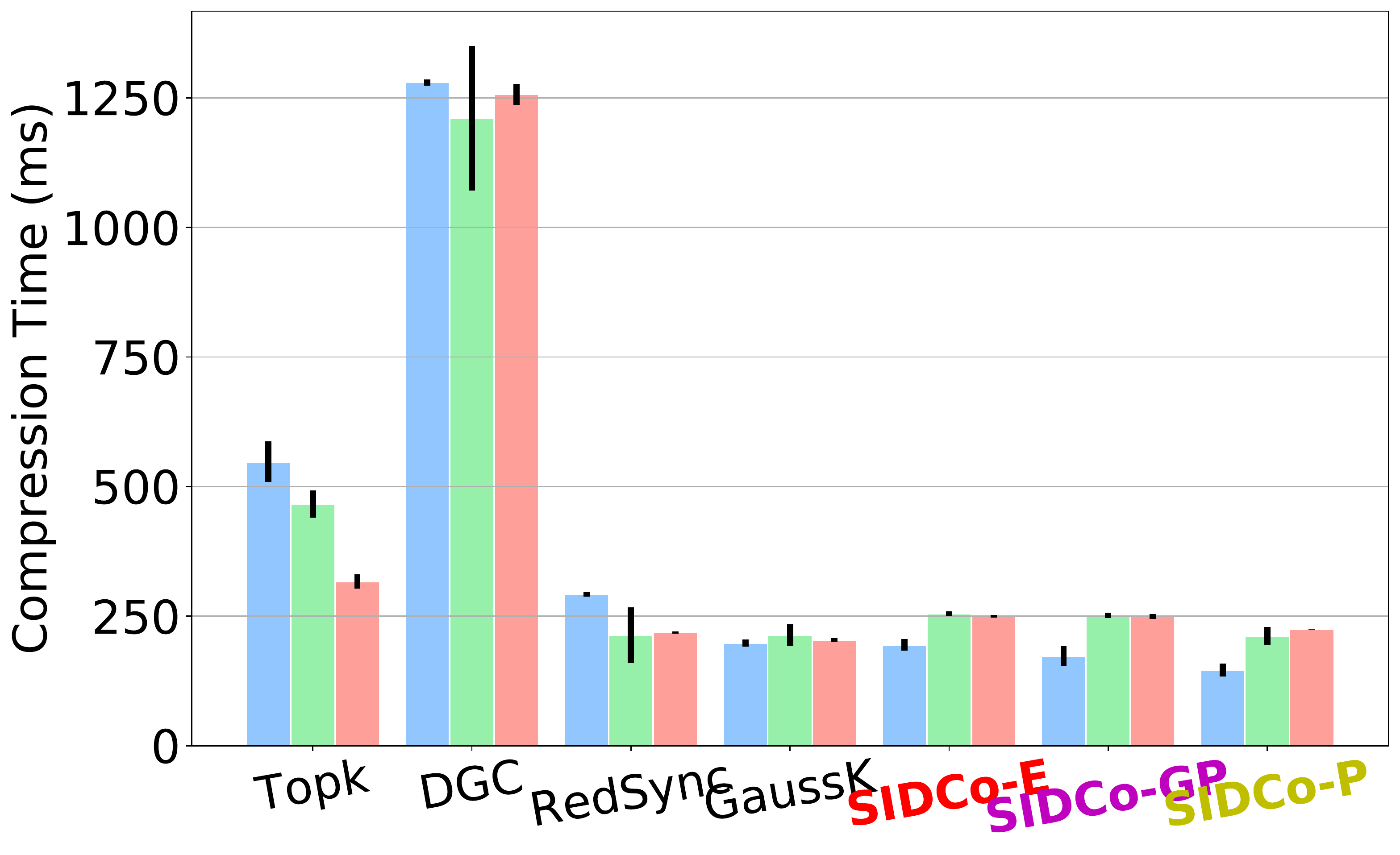}
      \caption{VGG16 on CPU}
        \label{fig:vgg16-cpu-timeall}
     \end{subfigure}
        \hfill
  \begin{subfigure}{0.235\linewidth}
    	\includegraphics[width=1\textwidth]{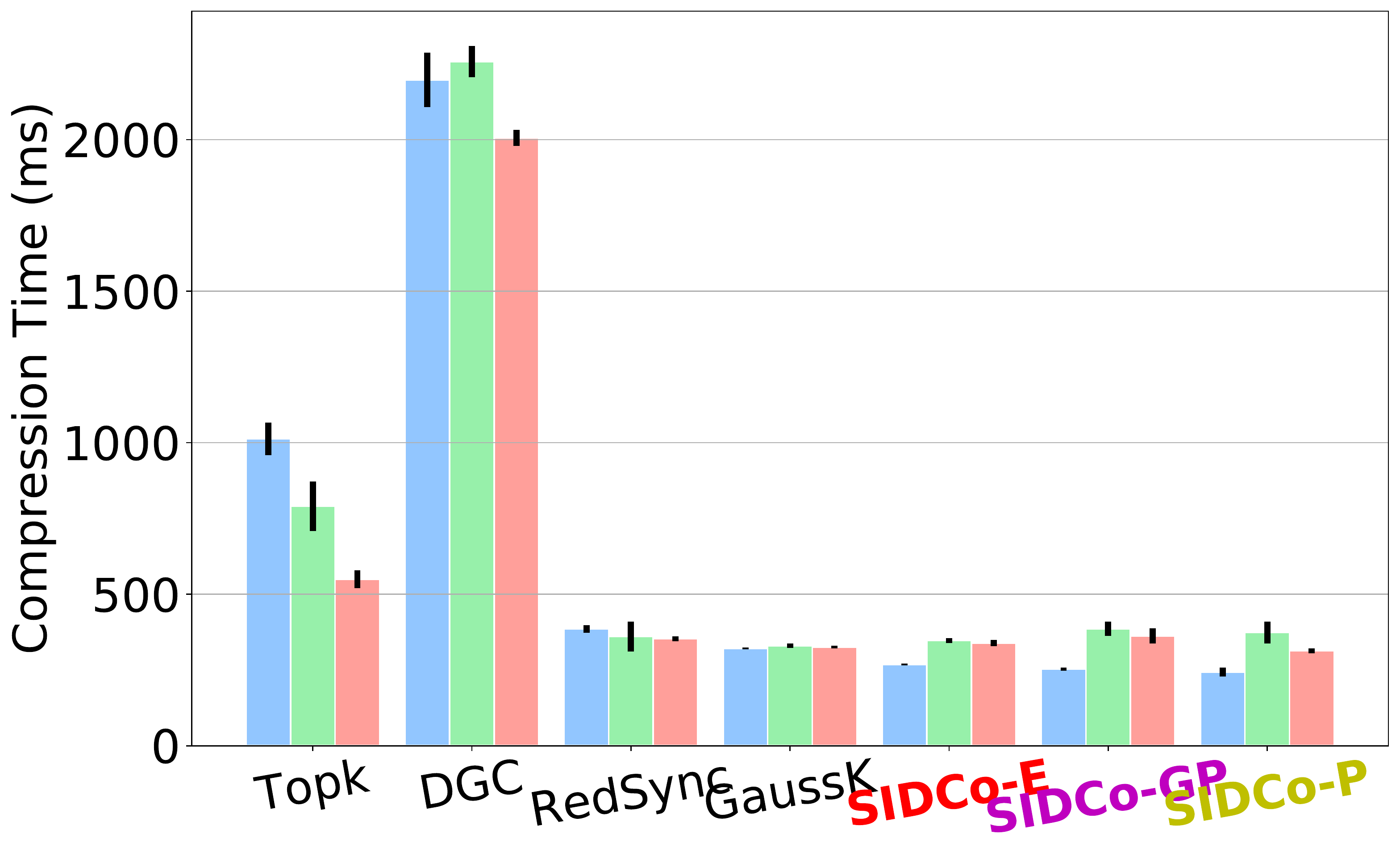}
    	\caption{ResNet50 on CPU}
     \label{fig:resnet50-cpu-timeall}
    \end{subfigure}
    \hfill
      \begin{subfigure}{0.235\linewidth}
    	\includegraphics[width=1\textwidth]{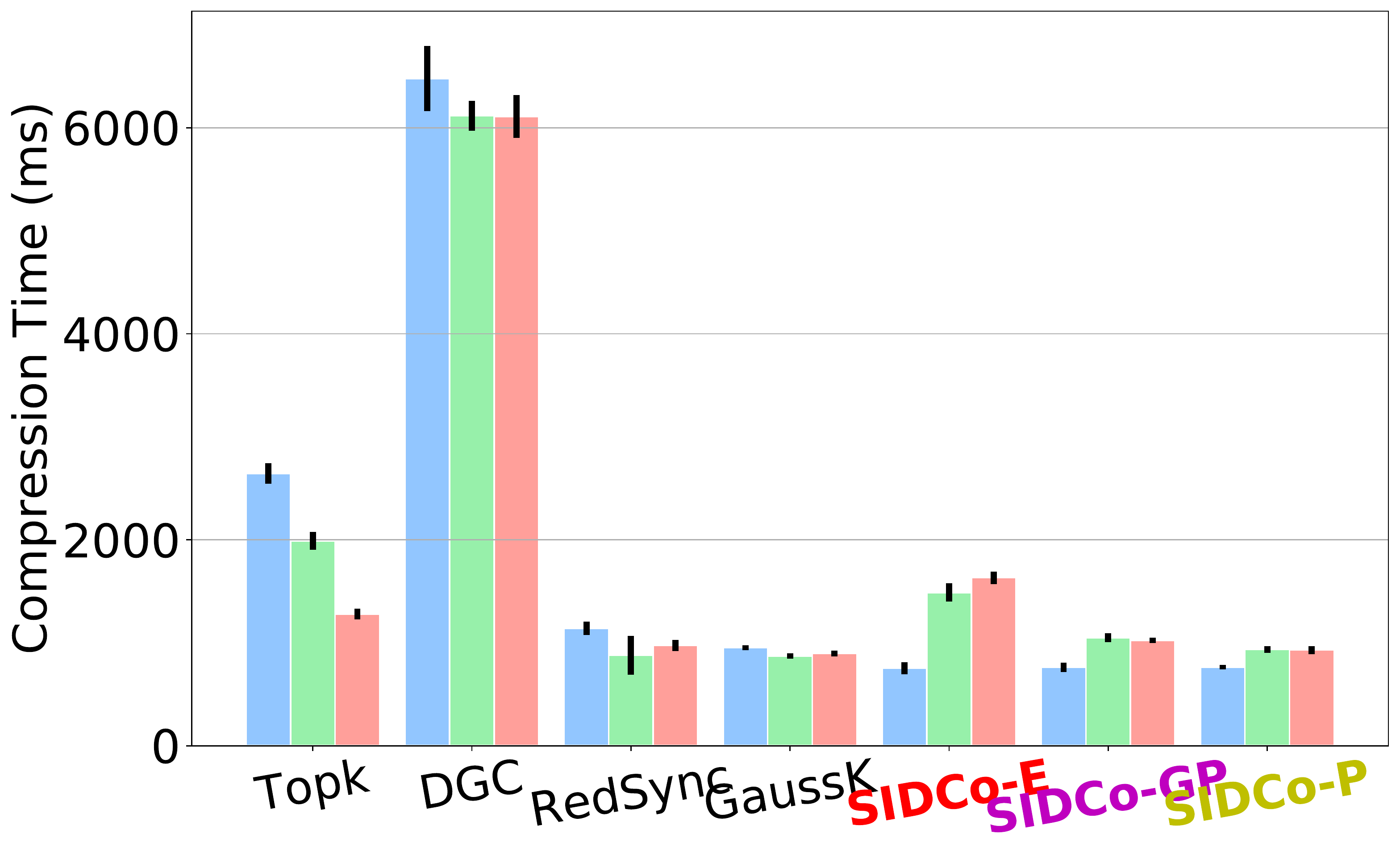}
    	\caption{LSTM on CPU}
     \label{fig:lstm-cpu-timeall}
    \end{subfigure}
    \caption{Compression latency of different models using various compressors and ratios on GPU (a,b,c,d) and CPU (e,f,g,h).}
    \label{fig:microbench-models-time}
\end{figure*}

\subsection{Compression Complexity using Synthetic Gradients Vectors of Different Sizes}
Here, we run the micro-benchmark using synthetic gradient vectors initialized based on input size of (0.26, 2.6, 26, 260) Million elements which is equivalent to $\approx\!(1, 11, 114, 1140)$ MBytes of gradient data sent in each iteration, respectively. We aim to measure the performance of each compressor in terms of the speed-up over $\topk$ and latency for wide range of gradient sizes. The results match the former observations on DNN models of different sizes. In particular, \cref{fig:synth-microbench-speedup} shows the speed-up over $\topk$ on GPU and CPU for each size of the synthetic gradient vectors. We again can observe that on GPU, all methods are faster than $\topk$ and all threshold estimation methods achieve higher speed-ups over DGC and nearly same speed-ups among each other which is attributed to the slow performance of $\topk$ (or sorting) operations on GPU. On the CPU, in contrary, we observe that DGC is the slowest method and $\topk$ excels over it which is attributed to slow performance of random sampling on CPU. Threshold estimation methods maintains same speed-ups on both GPU and CPU (but with relatively different compression times on CPU and GPU).
 \begin{figure*}[!t]
  \centering
  \begin{subfigure}{0.24\linewidth}
    \includegraphics[width=1\textwidth]{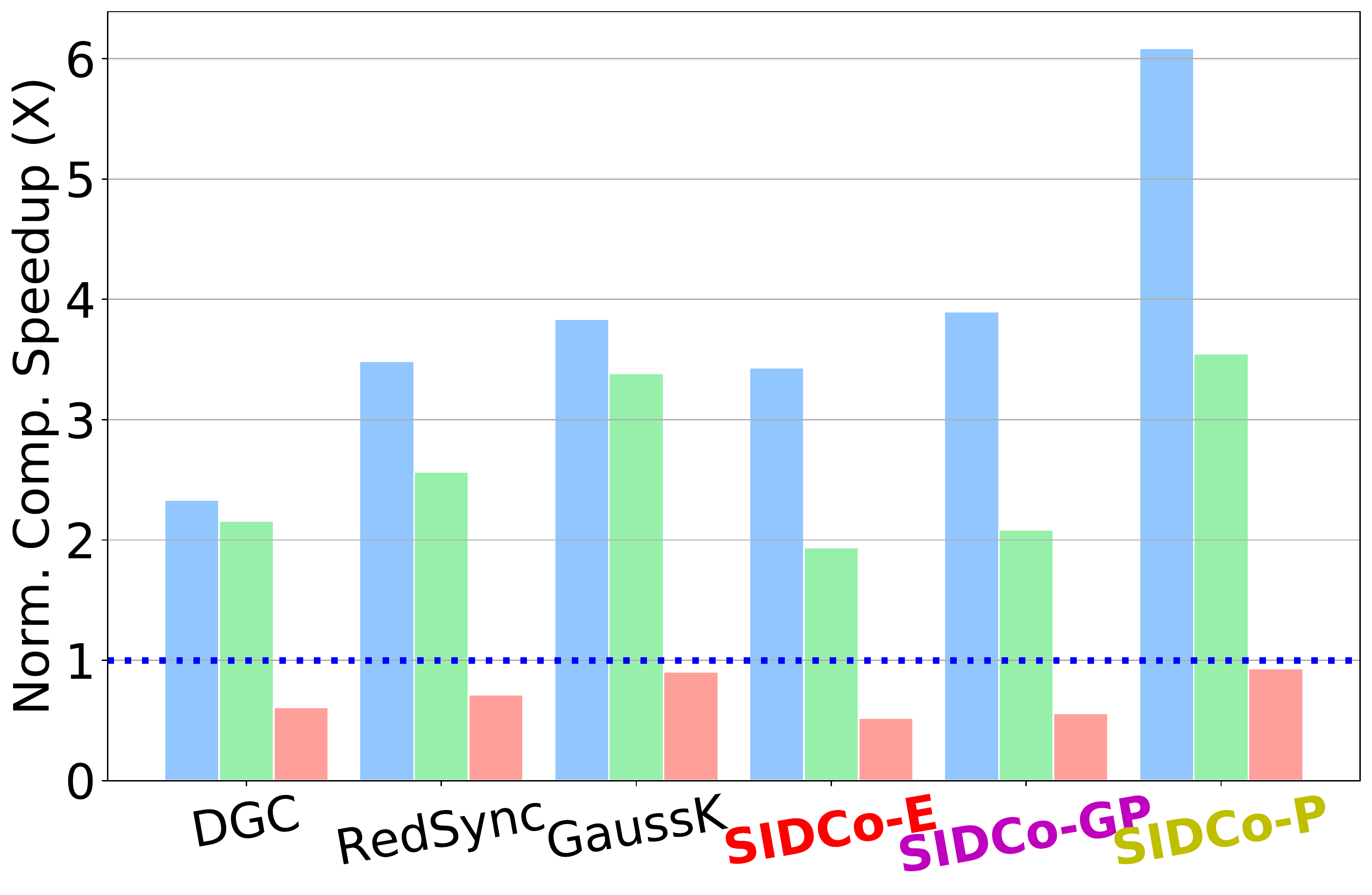}
    \caption{0.26 Mil Elem Tensor on GPU}
     \label{fig:randn0.26-cuda-speedup}
    \end{subfigure}
    \hfill
      \begin{subfigure}{0.24\linewidth}
   \includegraphics[width=1\textwidth]{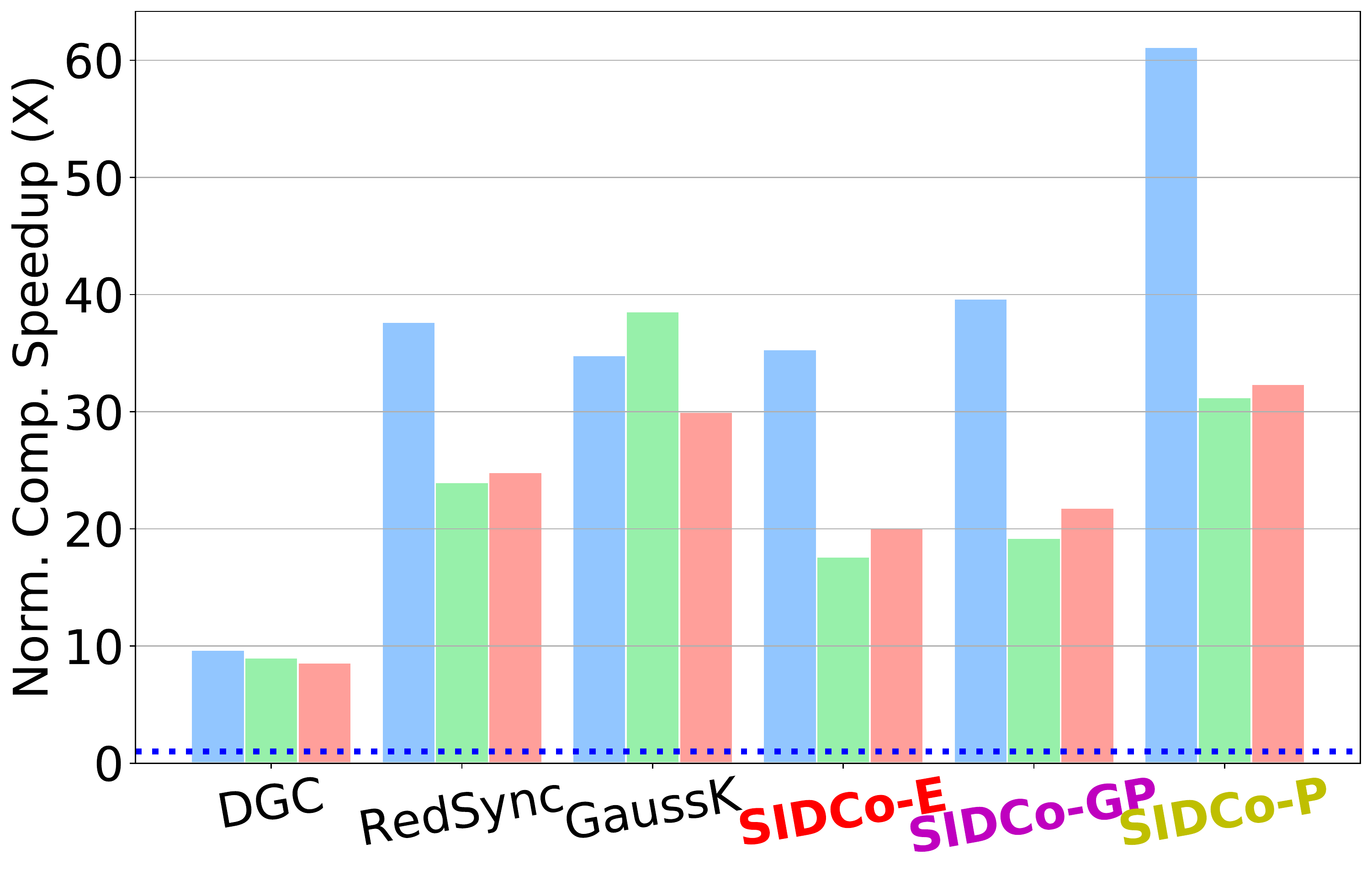}
       \caption{2.6 Mil Elem Tensor on GPU}
        \label{fig:rand2.6-cuda-speedup}
     \end{subfigure}
        \hfill
   \begin{subfigure}{0.24\linewidth}
    	\includegraphics[width=1\textwidth]{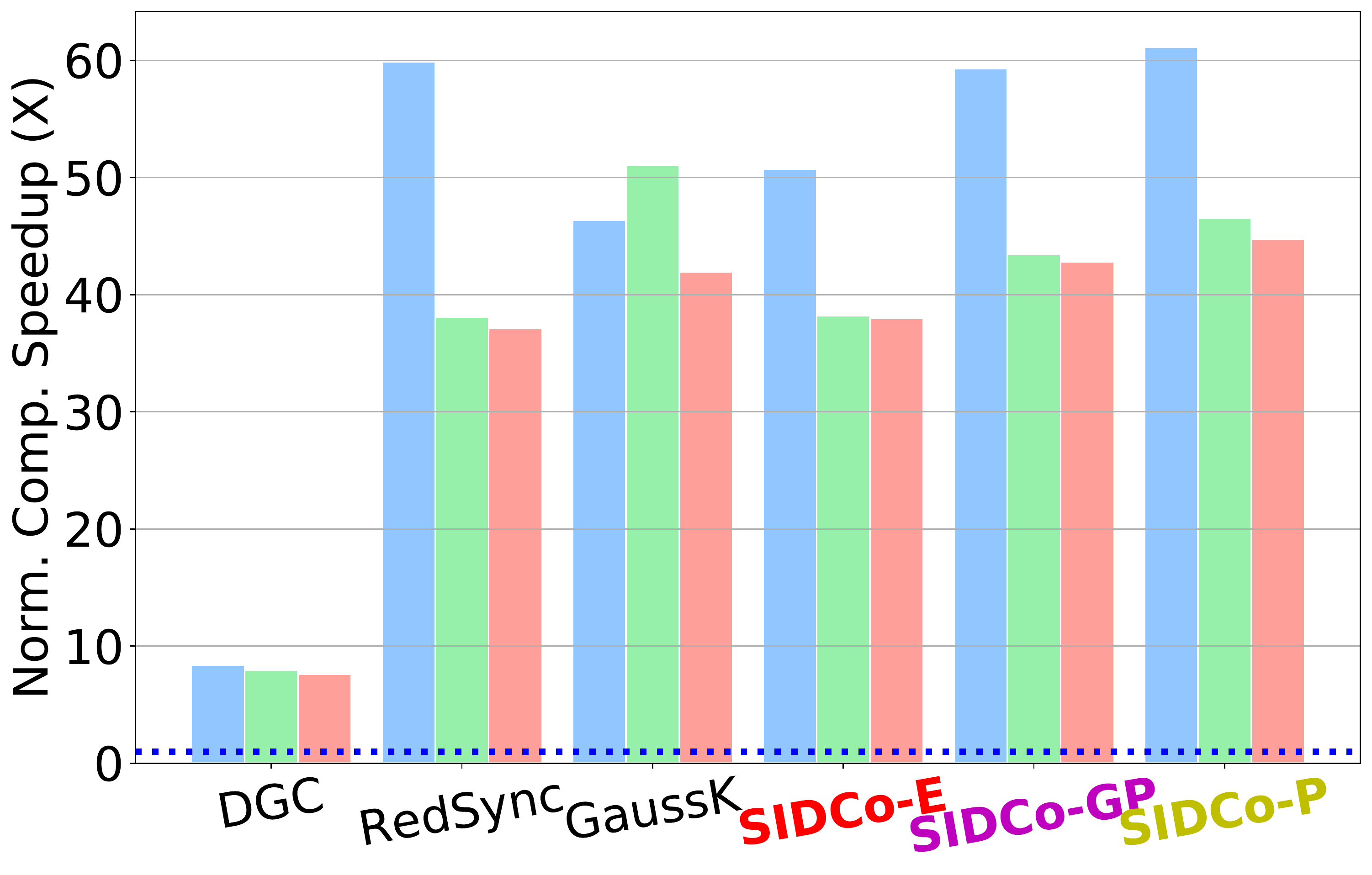}
    	\caption{26 Mil Elem Tensor on GPU}
     \label{fig:rand26M-cuda-speedup}
    \end{subfigure}
    \hfill
      \begin{subfigure}{0.24\linewidth}
    	\includegraphics[width=1\textwidth]{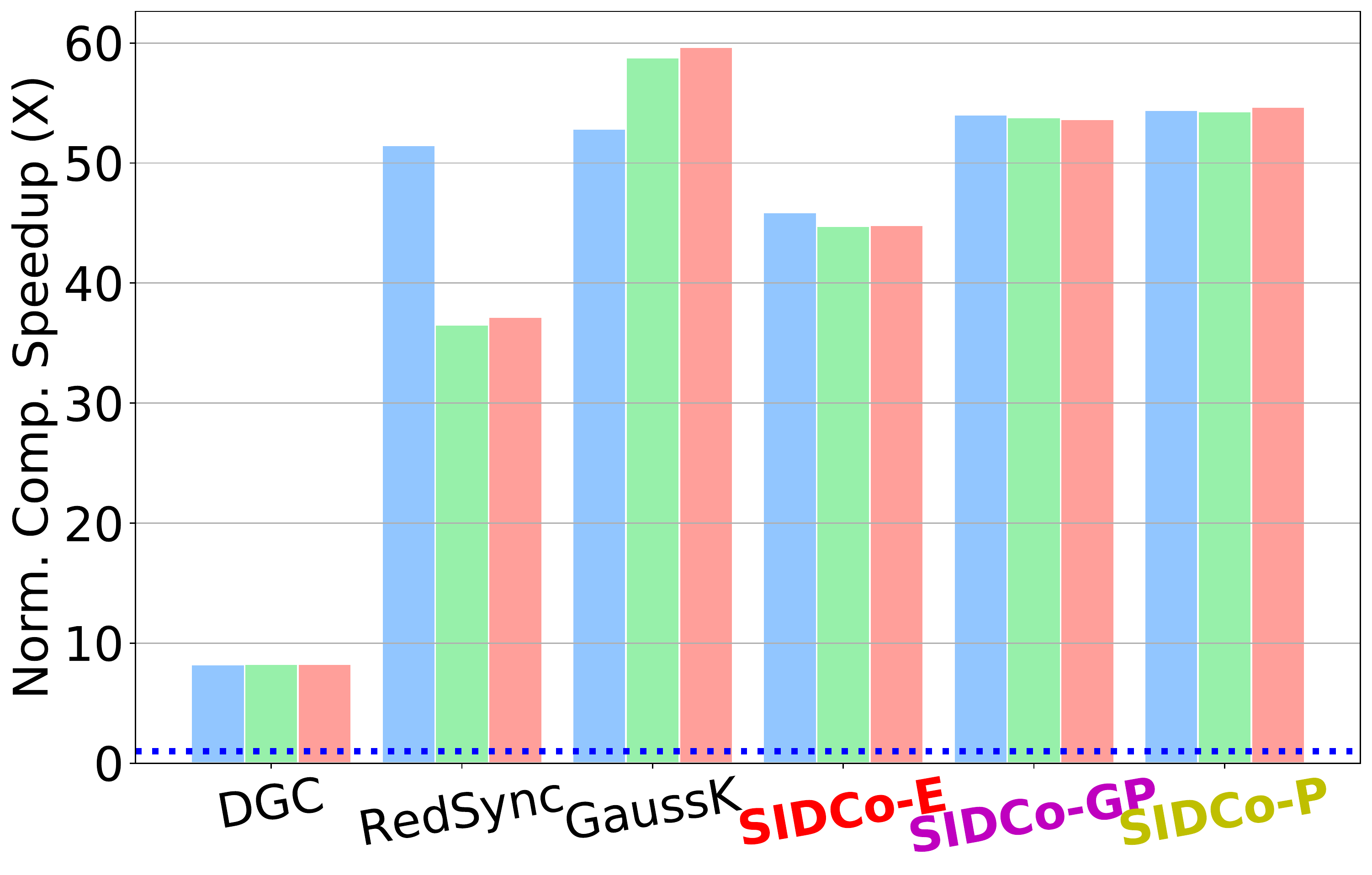}
    	\caption{260 Mil Elem Tensor on GPU}
     \label{fig:rand260M-cuda-speedup}
    \end{subfigure}
    \\
      \begin{subfigure}{0.24\linewidth}
    \includegraphics[width=1\textwidth]{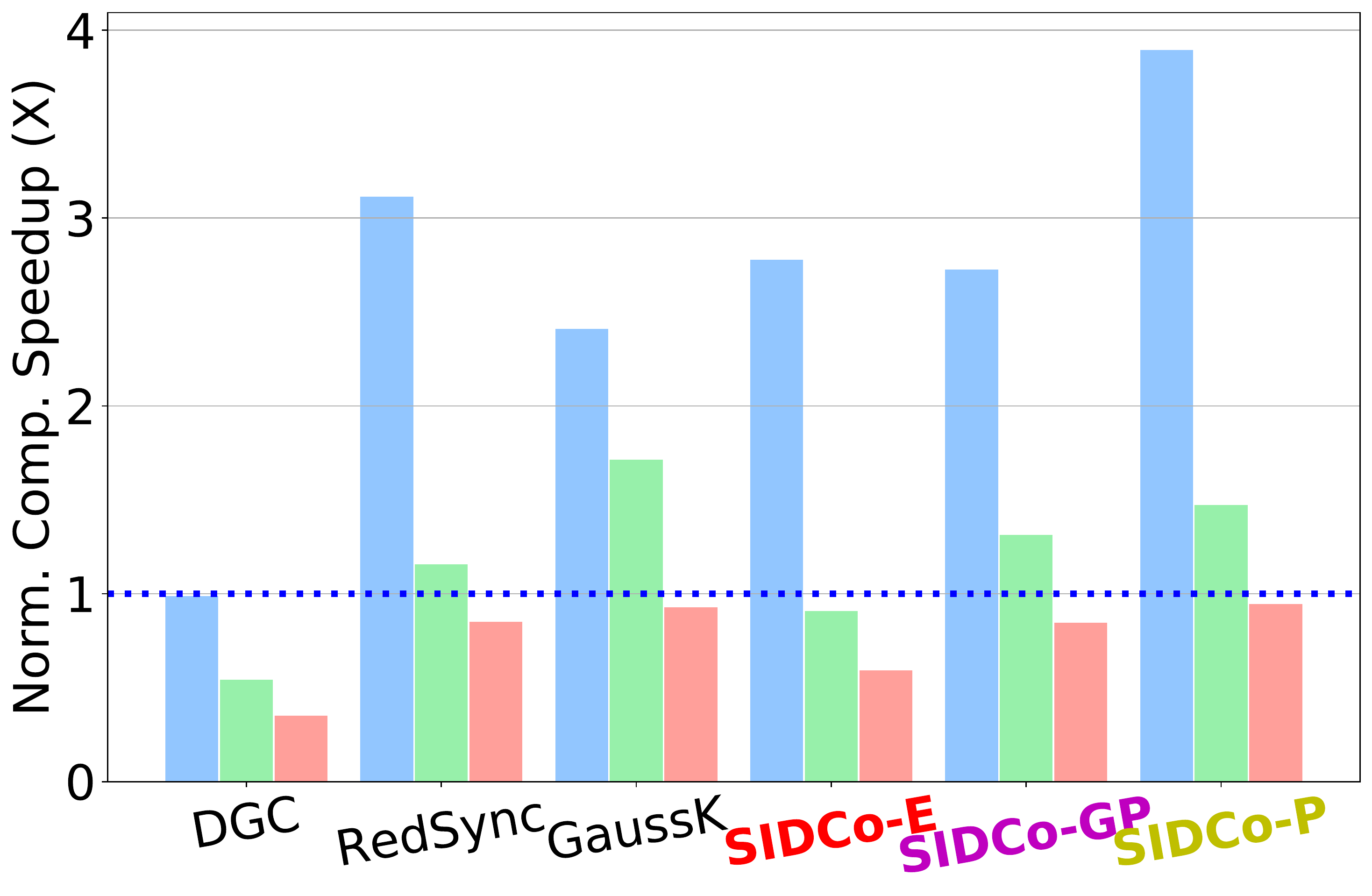}
    
    \caption{0.26 Mil Elem Tensor on CPU}
     \label{fig:randn0.26-cpu-speedup}
    \end{subfigure}
    \hfill
      \begin{subfigure}{0.24\linewidth}
   \includegraphics[width=1\textwidth]{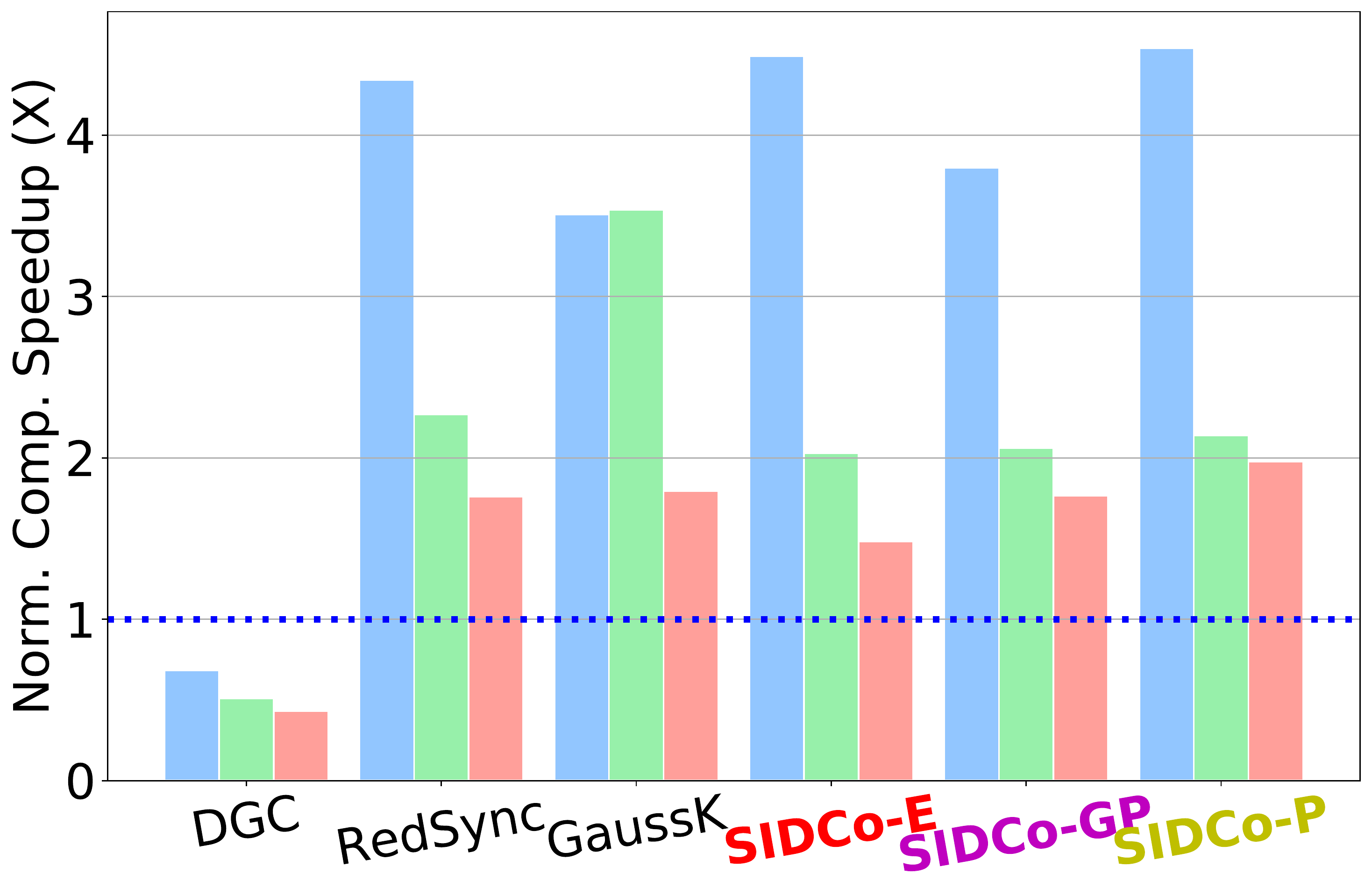}
       \caption{2.6 Mil Elem Tensor on CPU}
        \label{fig:rand2.6-cpu-speedup}
     \end{subfigure}
        \hfill
    \hfill
  \begin{subfigure}{0.24\linewidth}
	\includegraphics[width=1\textwidth]{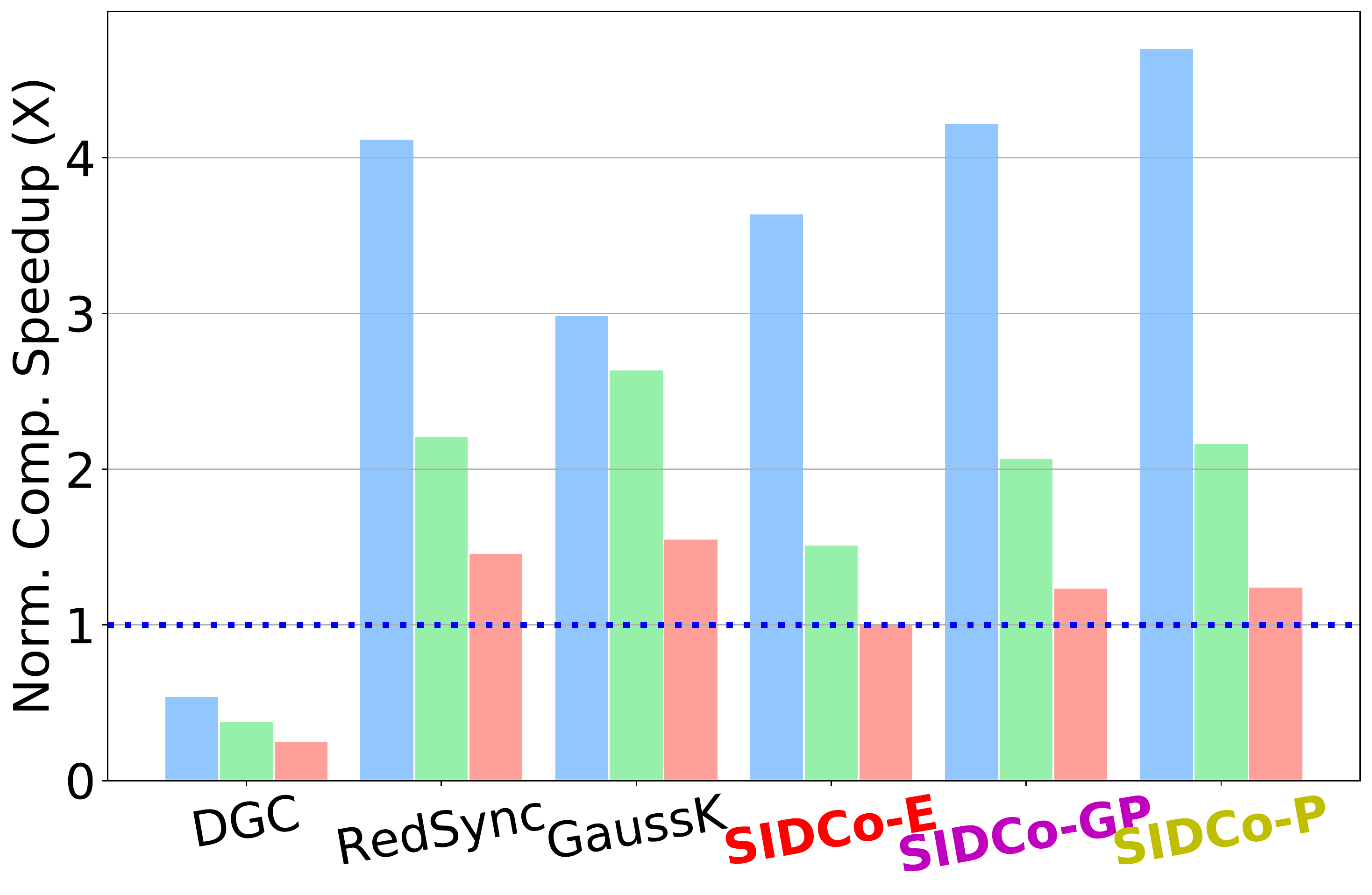}
    	\caption{26 Mil Elem Tensor on CPU}
     \label{fig:rand26M-cpu-speedup}
    \end{subfigure}
    \hfill
      \begin{subfigure}{0.24\linewidth}
    	\includegraphics[width=1\textwidth]{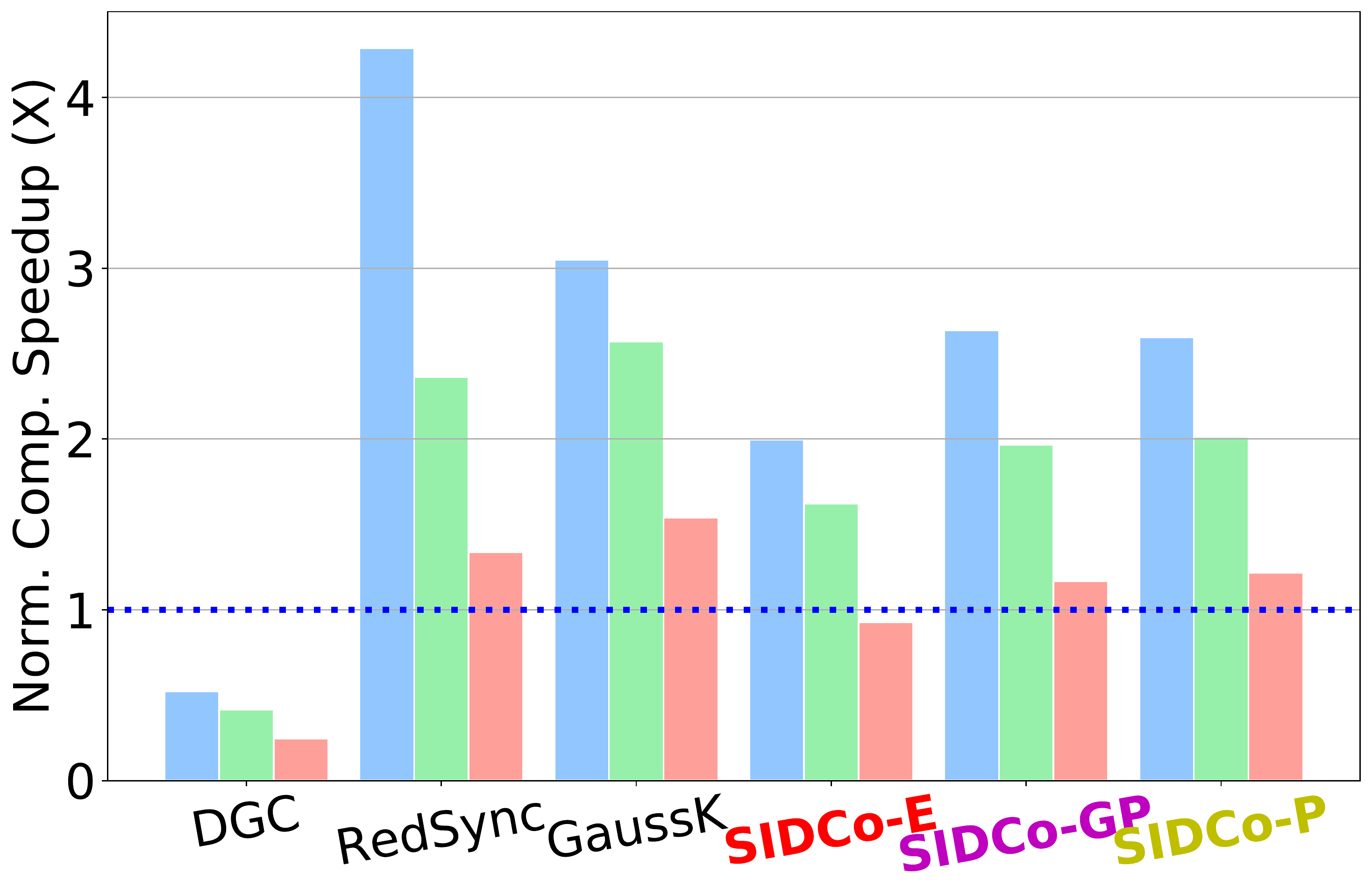}
    	\caption{260 Mil Elem Tensor on CPU}
     \label{fig:rand260M-cpu-speedup}
    \end{subfigure}
    
    \caption{Compression speedups over $\topk$ of synthetic tensors using various compressors and ratios on GPU (a,b,c,d) and CPU (e,f,g,h).}
    \label{fig:synth-microbench-speedup}
\end{figure*}

 \begin{figure*}[!t]
  \centering
  \begin{subfigure}{0.24\linewidth}
    \includegraphics[width=1\textwidth]{Figures/experiments/all-microbench/randnormal/allratio/randn_262144_cuda_compression_microbench_speedup.pdf}
    \caption{0.26 Mil Elem Tensor on GPU}
     \label{fig:randn0.26-cuda-time}
    \end{subfigure}
    \hfill
      \begin{subfigure}{0.24\linewidth}
   \includegraphics[width=1\textwidth]{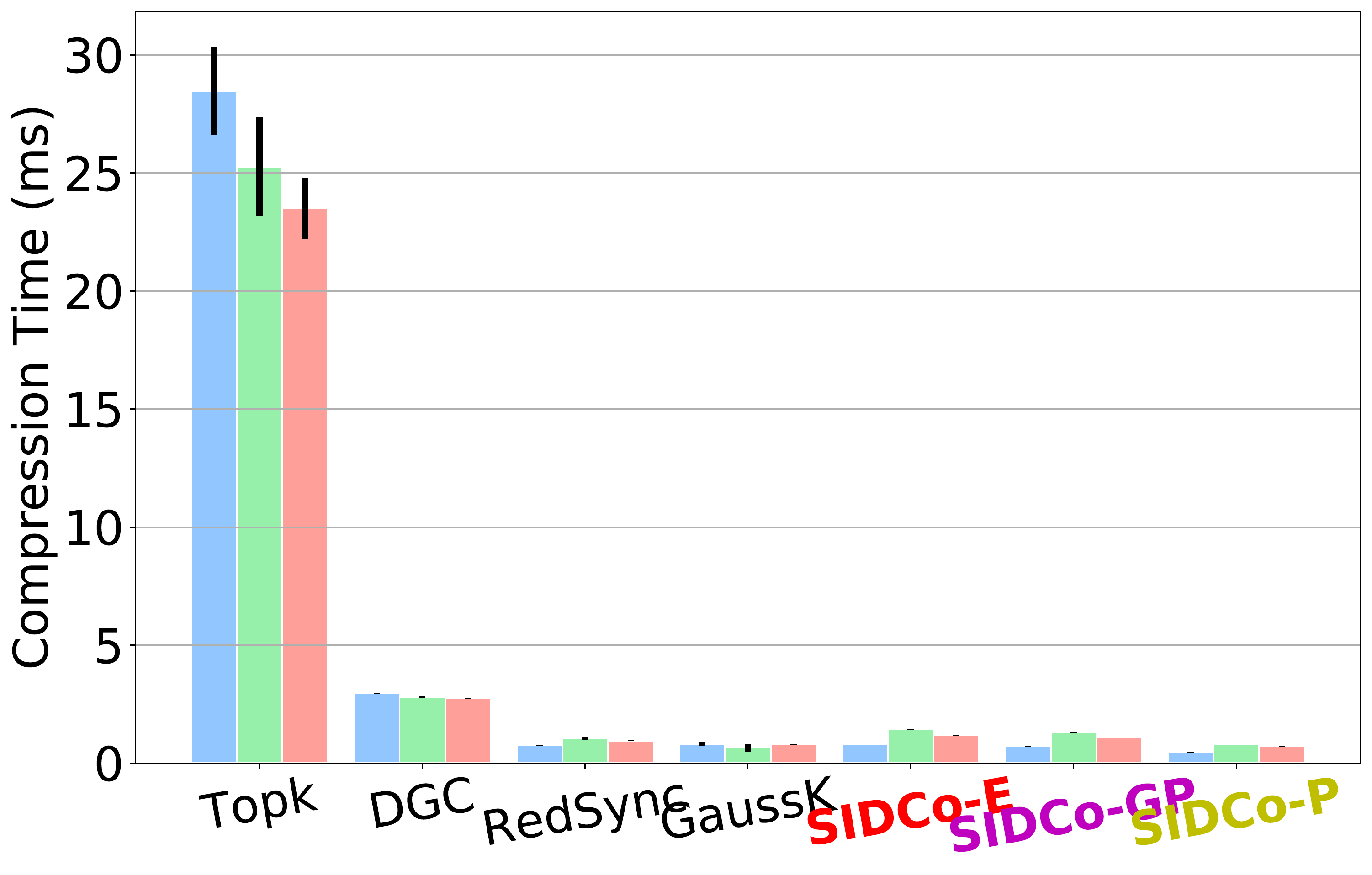}
       \caption{2.6 Mil Elem Tensor on GPU}
        \label{fig:rand2.6-cuda-time}
     \end{subfigure}
        \hfill
   \begin{subfigure}{0.24\linewidth}
    	\includegraphics[width=1\textwidth]{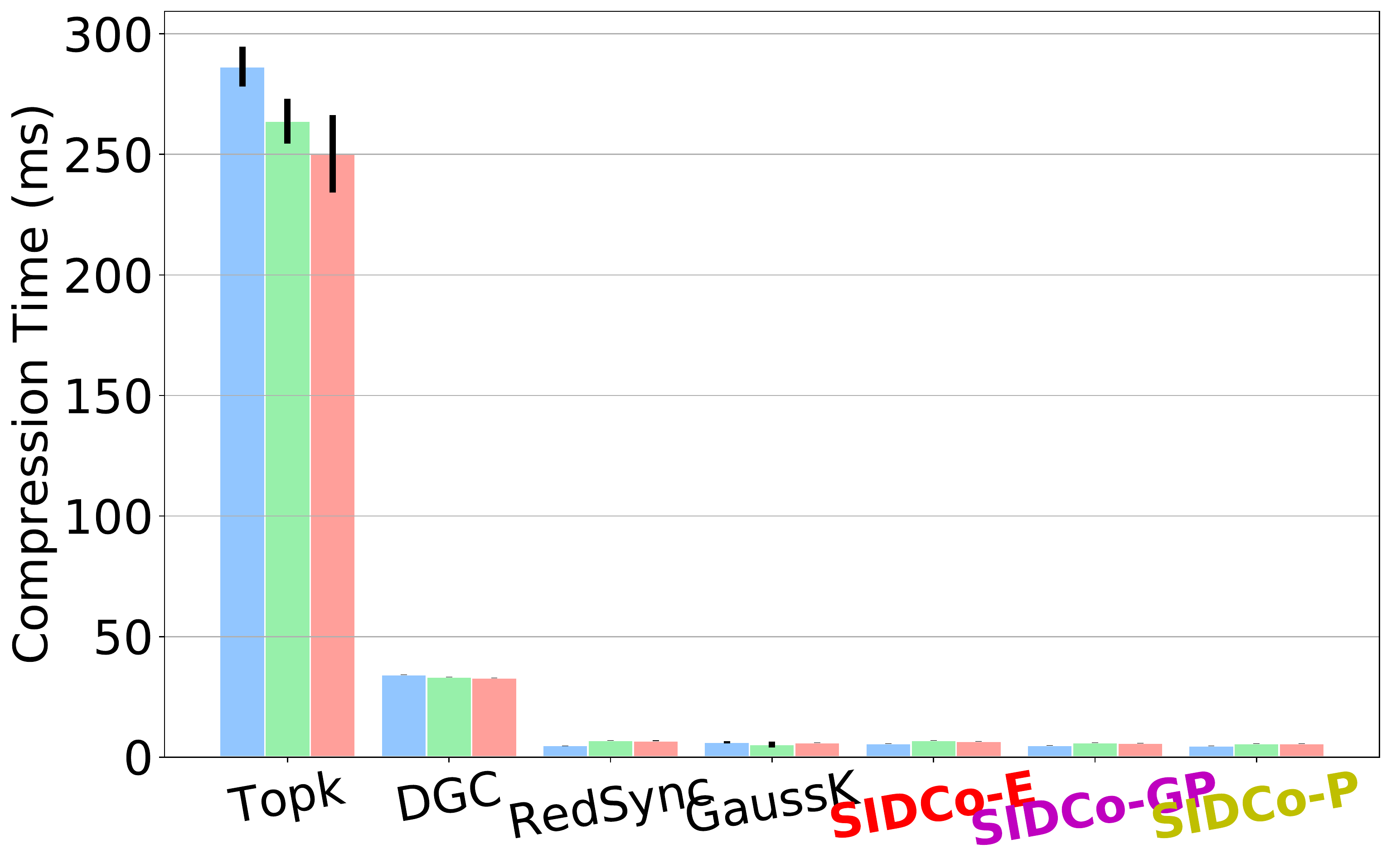}
    	\caption{26 Mil Elem Tensor on GPU}
     \label{fig:rand26M-cuda-time}
    \end{subfigure}
    \hfill
      \begin{subfigure}{0.24\linewidth}
    	\includegraphics[width=1\textwidth]{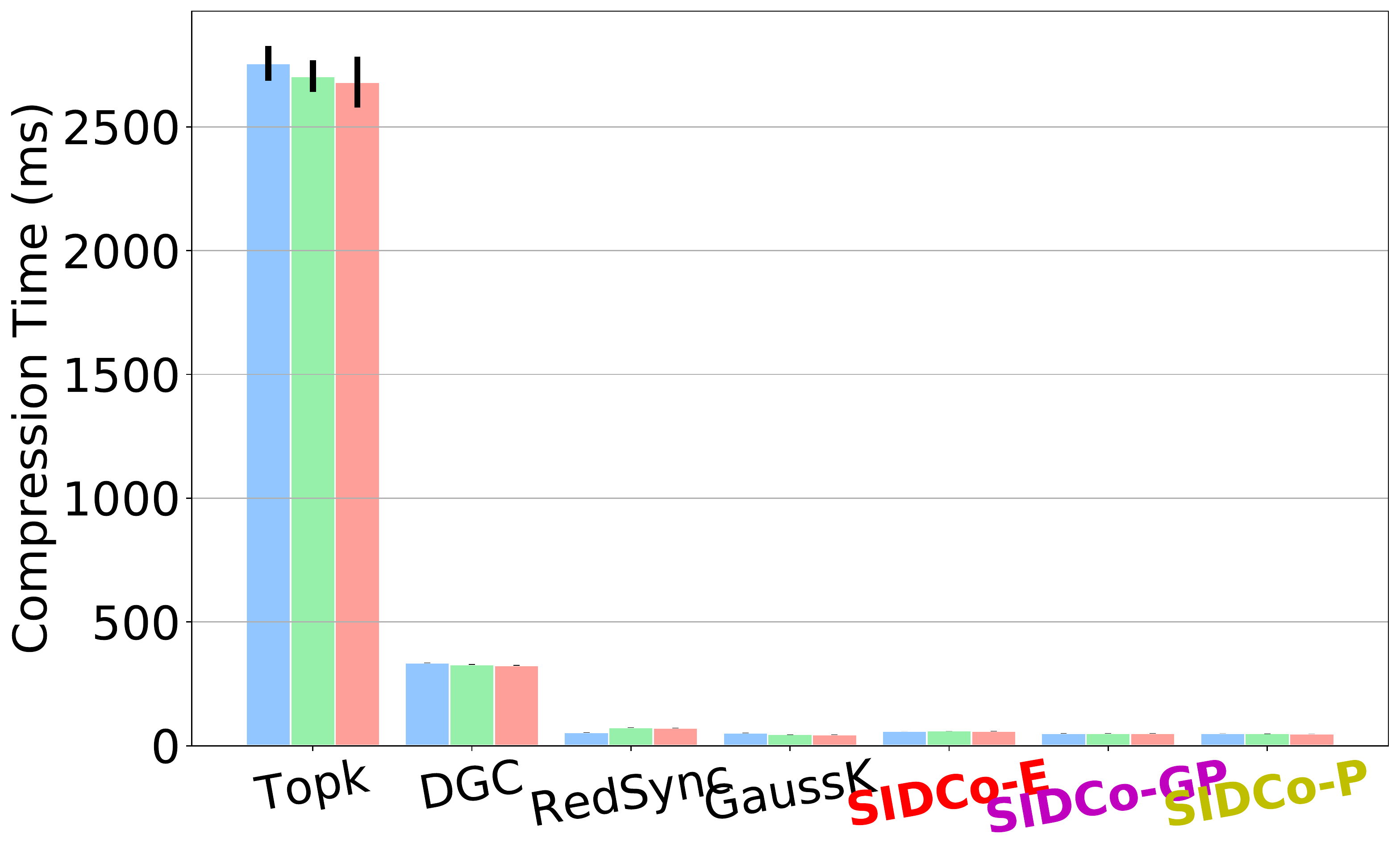}
    	\caption{260 Mil Elem Tensor on GPU}
     \label{fig:rand260M-cuda-time}
    \end{subfigure}
    \\
      \begin{subfigure}{0.24\linewidth}
    \includegraphics[width=1\textwidth]{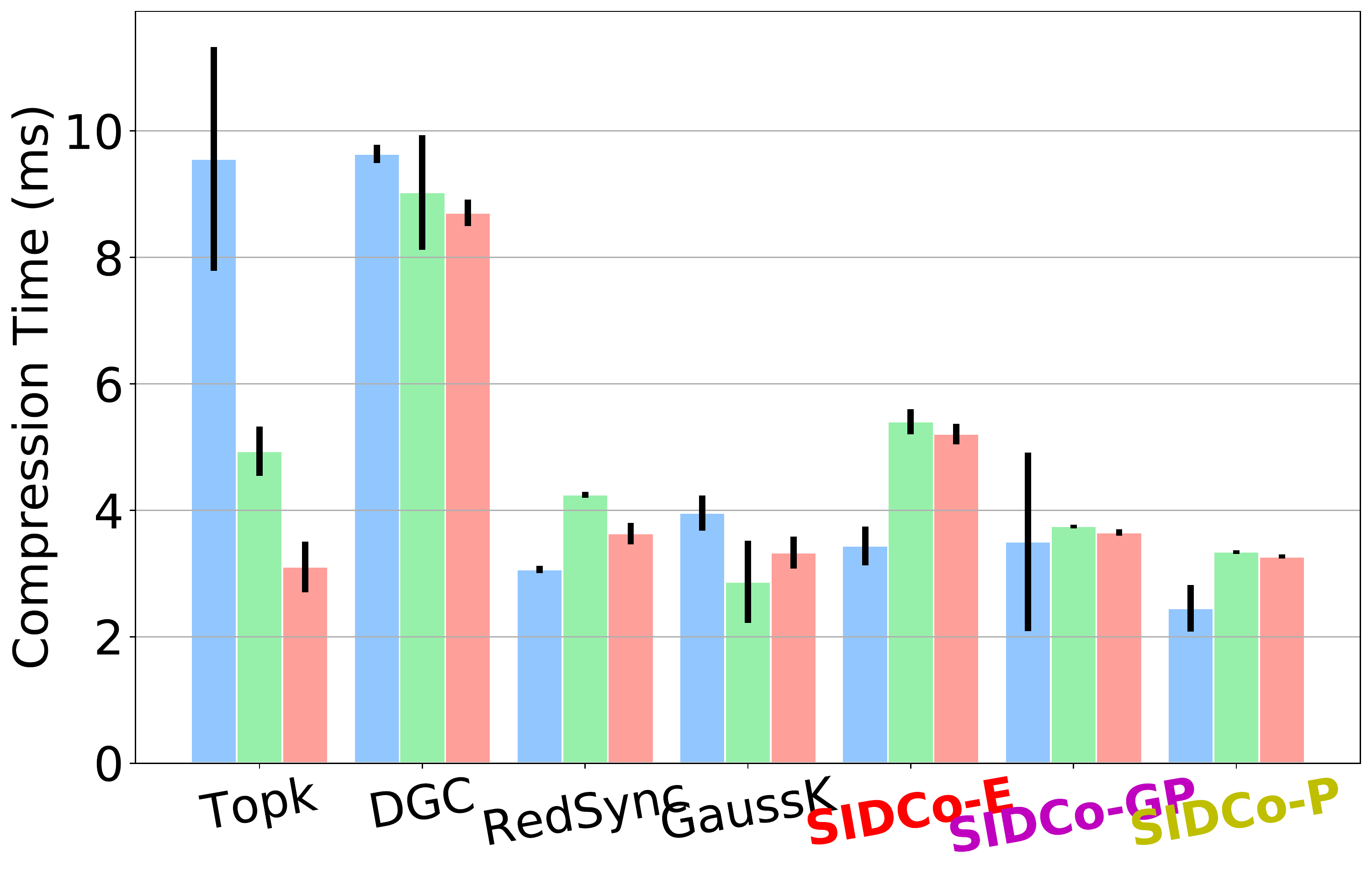}
    
    \caption{0.26 Mil Elem Tensor on CPU}
     \label{fig:randn0.26-cpu-time}
    \end{subfigure}
    \hfill
      \begin{subfigure}{0.24\linewidth}
   \includegraphics[width=1\textwidth]{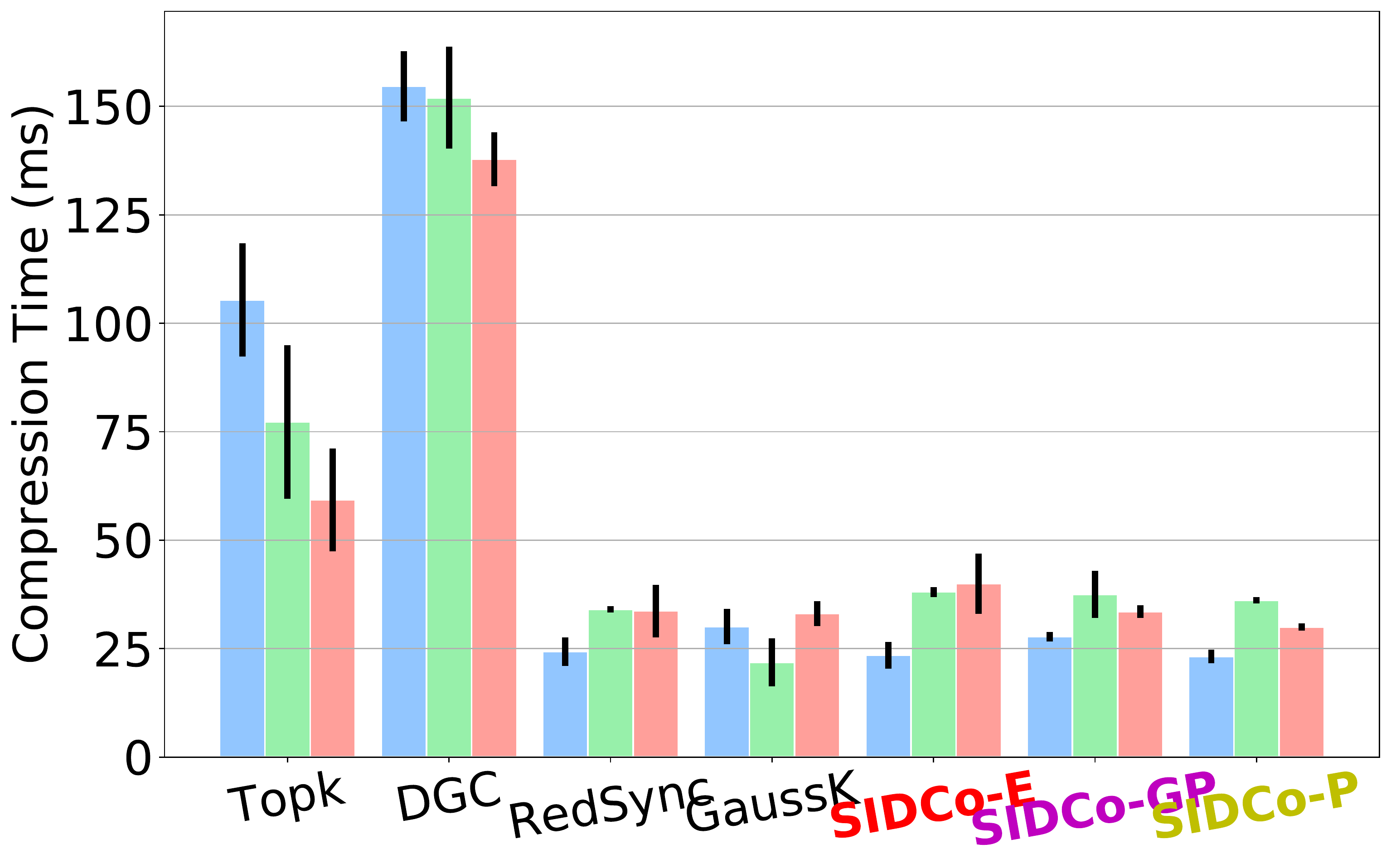}
       \caption{2.6 Mil Elem Tensor on CPU}
        \label{fig:rand2.6-cpu-time}
     \end{subfigure}
        \hfill
    \hfill
  \begin{subfigure}{0.24\linewidth}
	\includegraphics[width=1\textwidth]{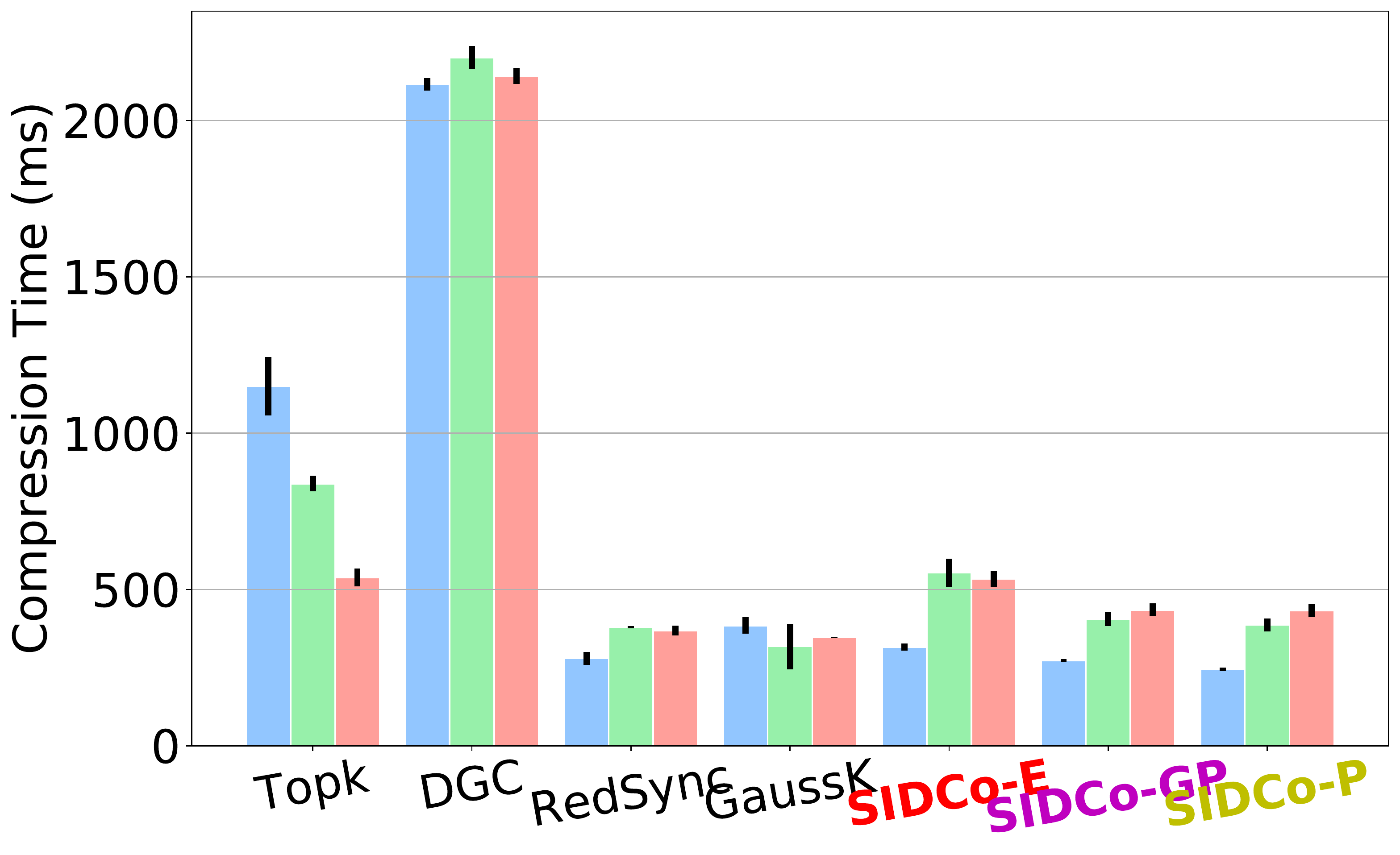}
    	\caption{26 Mil Elem Tensor on CPU}
     \label{fig:rand26M-cpu-time}
    \end{subfigure}
    \hfill
      \begin{subfigure}{0.24\linewidth}
    	\includegraphics[width=1\textwidth]{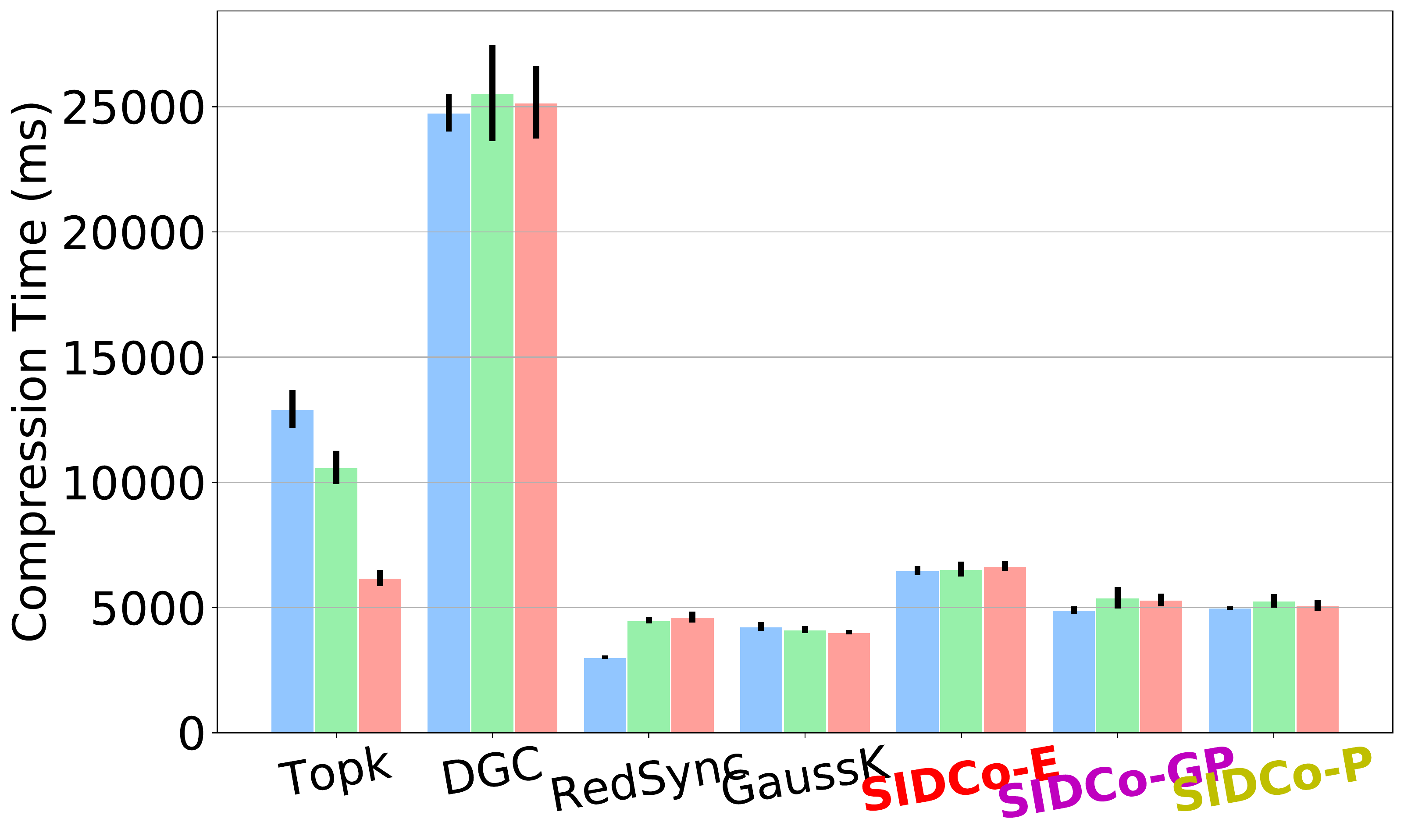}
    	\caption{260 Mil Elem Tensor on CPU}
     \label{fig:rand260M-cpu-time}
    \end{subfigure}
    \caption{Compression latency of synthetic tensors using various compressors and ratios on GPU (a,b,c,d) and CPU (e,f,g,h).}
    \label{fig:microbench-synth-time}
\end{figure*}

\section{Results of all SIDs}
\label{apdx:expalldist}
Here, in \cref{fig:all}, we include the results for the other two \ac{SPD}s discussed in \cref{apdx:GPDthreshold}, i.e., double Gamma and Generalized Pareto. Note that the two multi-stage \ac{SPD} added here are the 1-stage double Gamma followed by $M-1$ stage of Generalized Pareto and multi-stage Generalized Pareto which are refereed to as \scheme\!-GP and \scheme\!-P, respectively. The results and observations are, in general, match the ones we made earlier in \cref{sec:experiments} for \scheme\!-E. However, we observe that, in some cases, \scheme\!-E achieves slightly better speed-ups compared to \scheme\!-GP and \scheme\!-P. This is because of better and slightly lower overhead estimation of the exponential-based threshold which requires only the calculation of the mean of the gradient vector (\cref{algo:algo1}). Specifically, in these cases, \scheme\!-GP which achieves on average the target compression ratio but it tends to have slightly higher variance in terms of the estimation quality (e.g., \cref{fig:ptb-comp-8-all} and \cref{fig:ptb-comp-8-all}). Hence, while variance might be a problem, if it is within the pre-defined tolerance range from the target ratio ($\epsilon_L$,$\epsilon_H$), the impact on the performance would be negligible.

\begin{figure*}[!h]
\captionsetup[subfigure]{justification=centering}
\centering
 \begin{subfigure}[ht]{0.75\linewidth}
  \includegraphics[width=1\linewidth]{Figures/experiments/legend3.pdf}
 \end{subfigure}
  \\
  \begin{subfigure}[ht]{0.32\linewidth}
    \includegraphics[width=\linewidth]{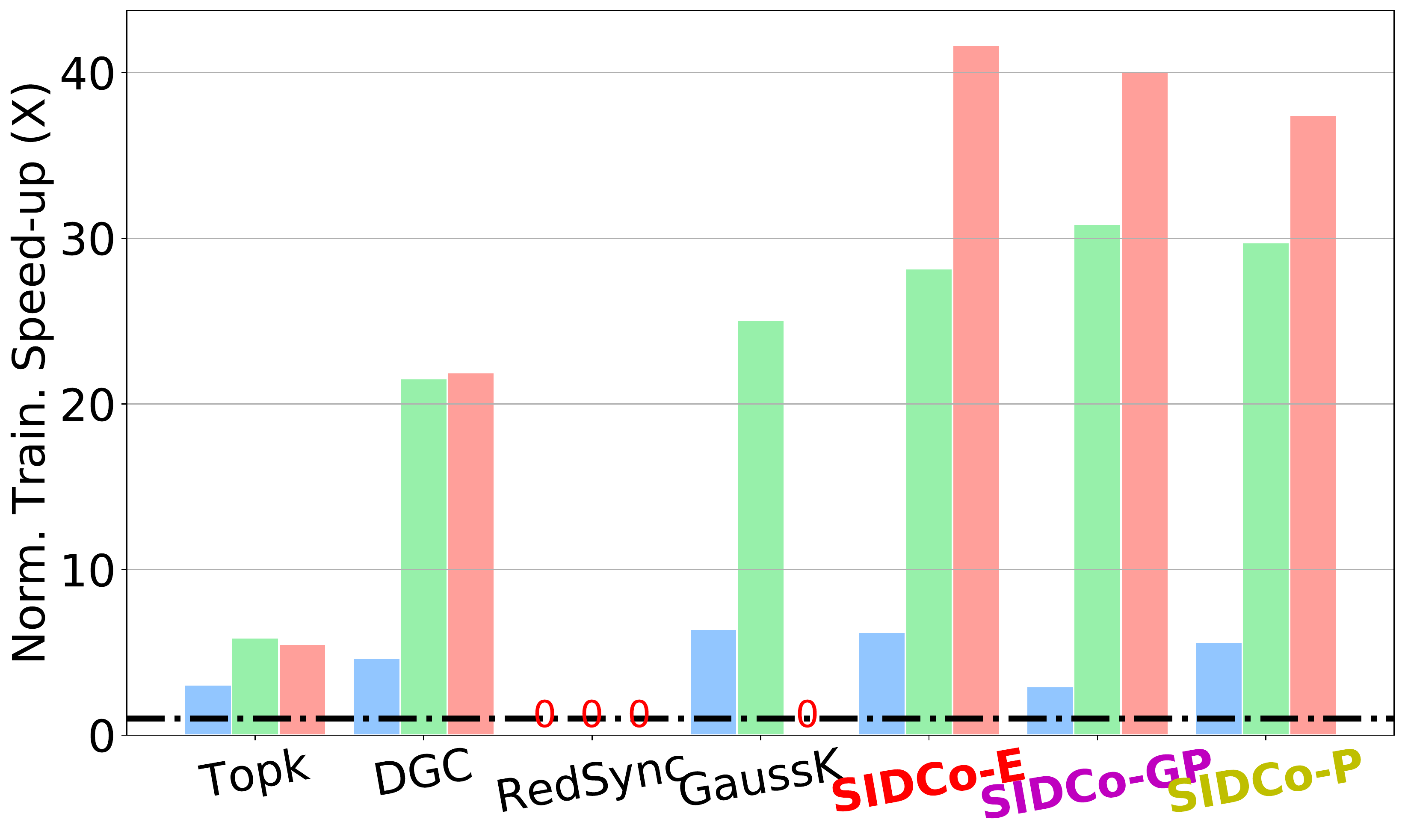}
	\caption{LSTM-PTB (Speed-up).}
	\label{fig:ptb-speedup-8-all}
     \end{subfigure}
     \hfill
	\begin{subfigure}[ht]{0.32\linewidth}
  \includegraphics[width=\linewidth]{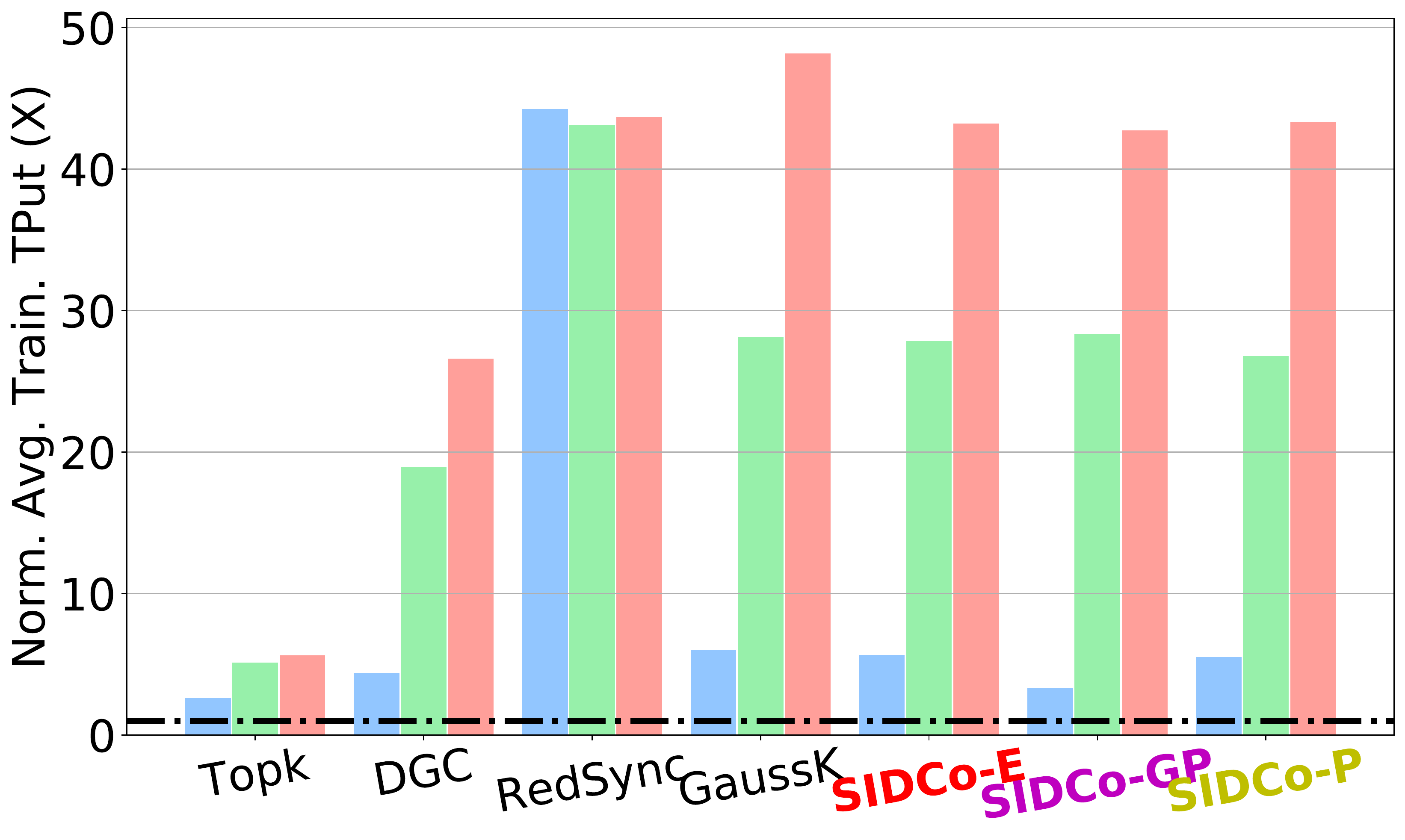}
	\caption{LSTM-PTB (Throughput).}
	\label{fig:ptb-throughput-8-all}
    \end{subfigure}
    \hfill
    \begin{subfigure}[ht]{0.32\linewidth}
    \includegraphics[width=\linewidth]{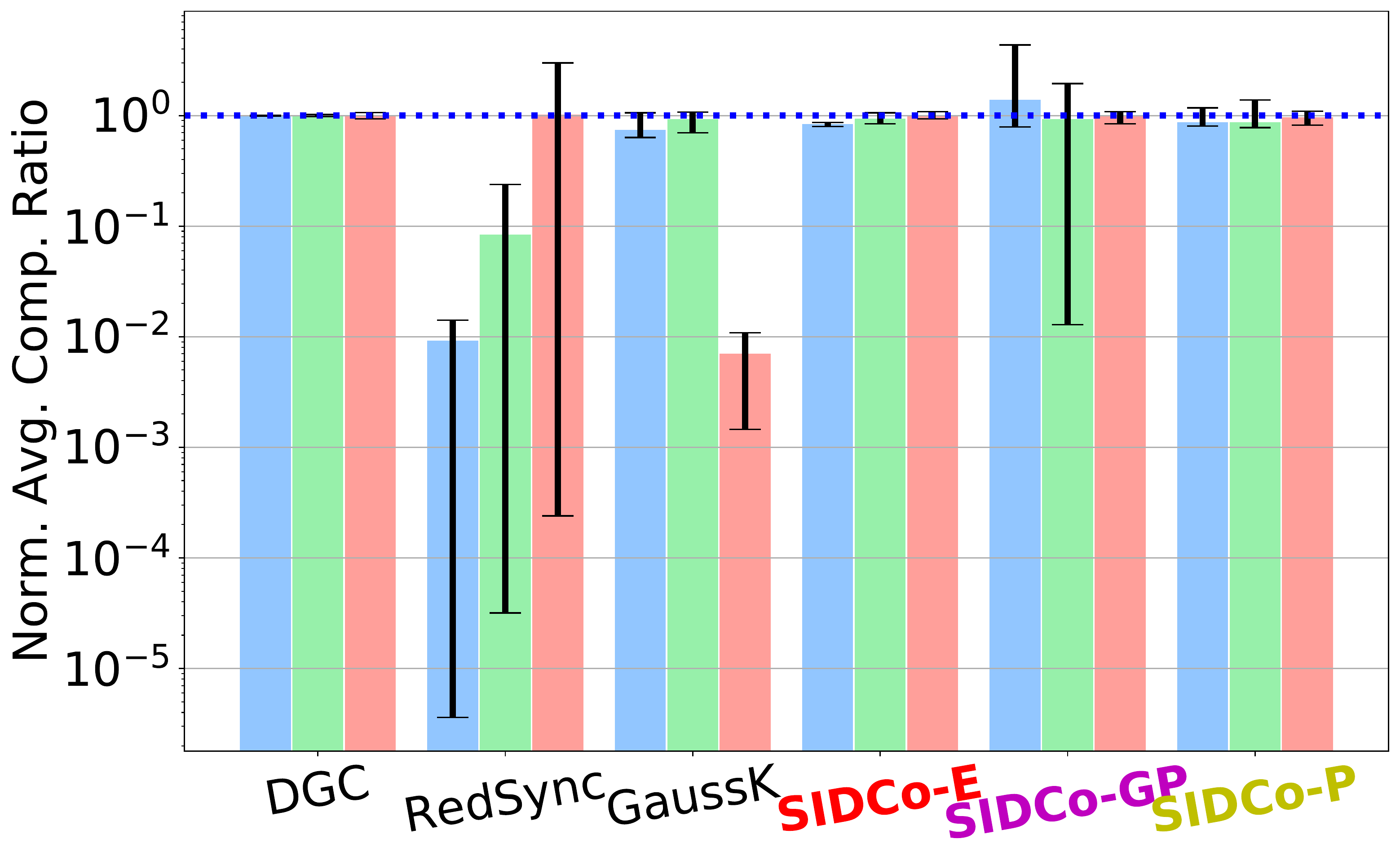}
	\caption{LSTM-PTB (Est. Quality).}
	\label{fig:ptb-comp-8-all}
     \end{subfigure}
     \\
      \begin{subfigure}[ht]{0.32\linewidth}
    \includegraphics[width=\linewidth]{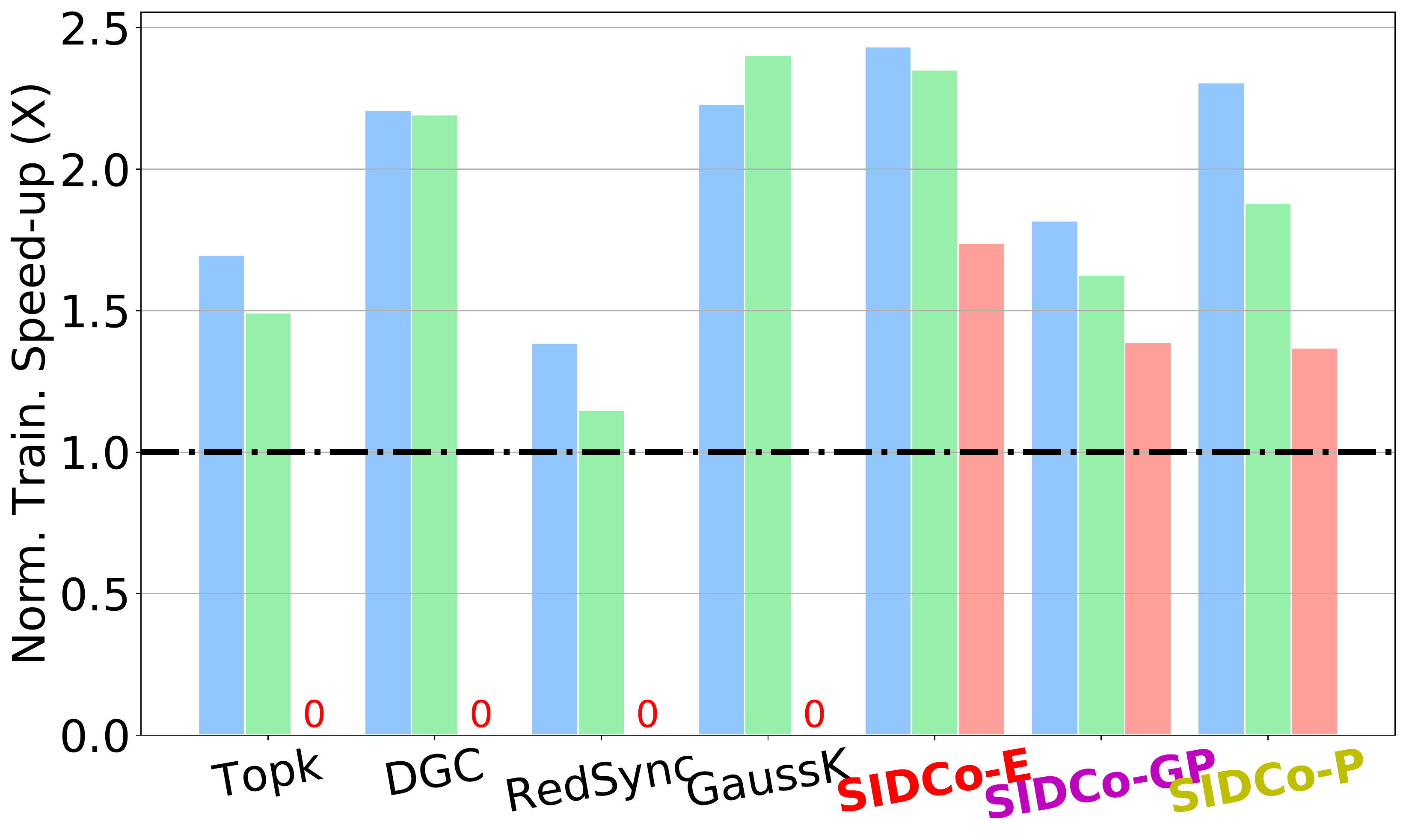}
	\caption{LSTM-AN4 (Speed-up).}
	\label{fig:an4-speedup-8-all}
     \end{subfigure}
     \hfill
	\begin{subfigure}[ht]{0.32\linewidth}
  \includegraphics[width=\linewidth]{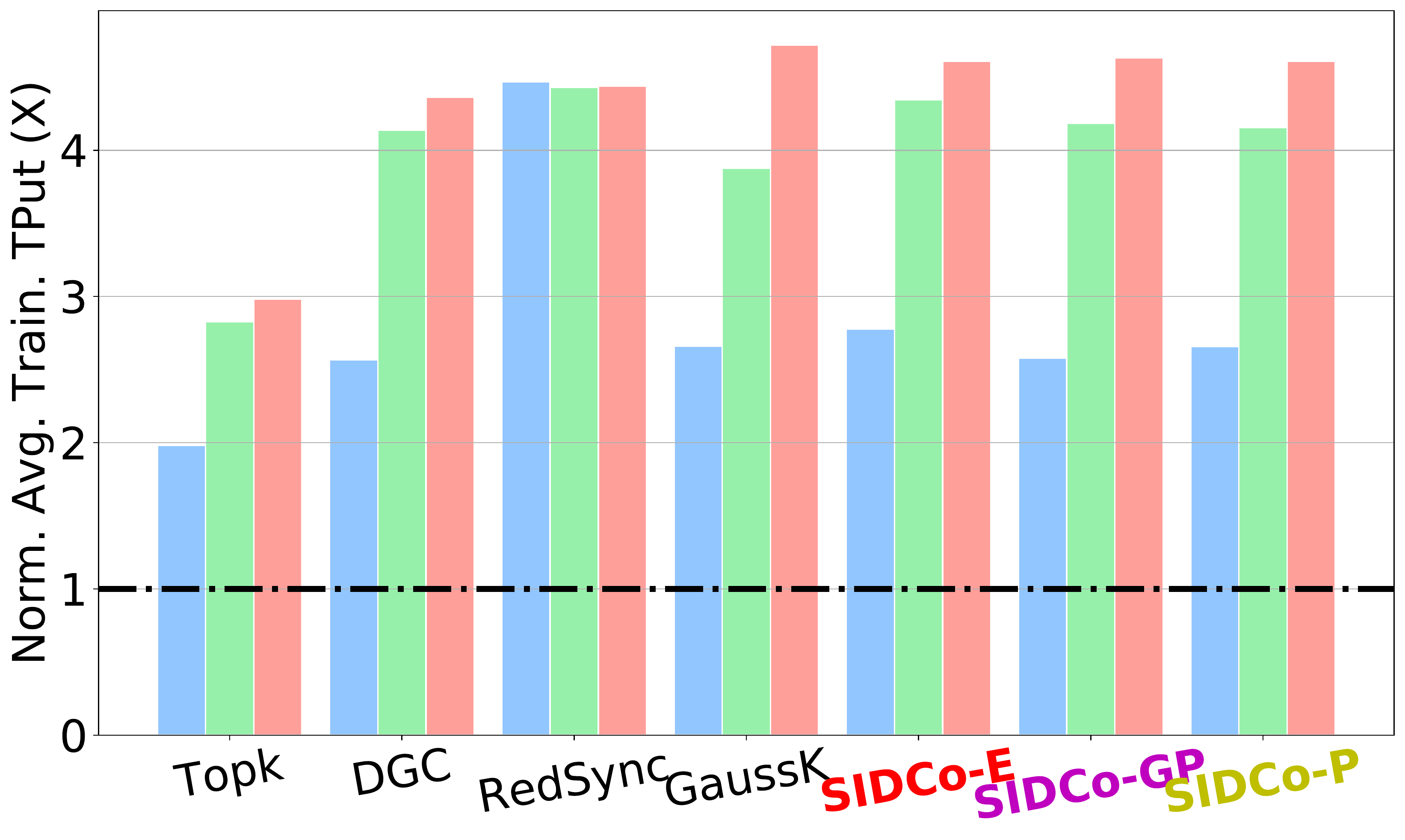}
	\caption{LSTM-AN4 (Throughput).}
	\label{fig:an4-throughput-8-all}
    \end{subfigure}
    \hfill
    \begin{subfigure}[ht]{0.32\linewidth}
    \includegraphics[width=\linewidth]{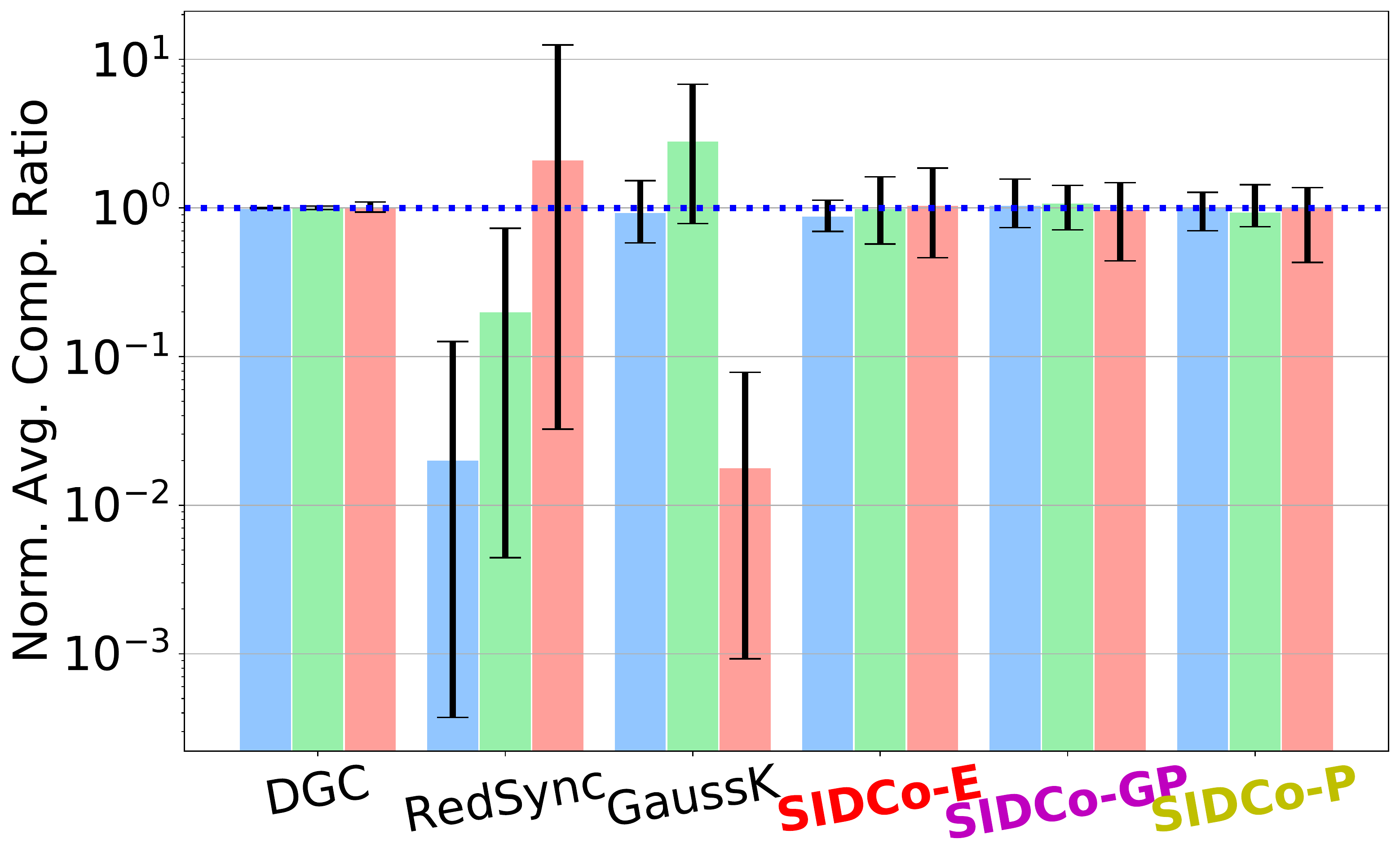}
	\caption{LSTM-AN4 (Est. Quality).}
	\label{fig:an4-comp-8-all}
     \end{subfigure}
     \\
  \begin{subfigure}[ht]{0.32\linewidth}
    \includegraphics[width=\linewidth]{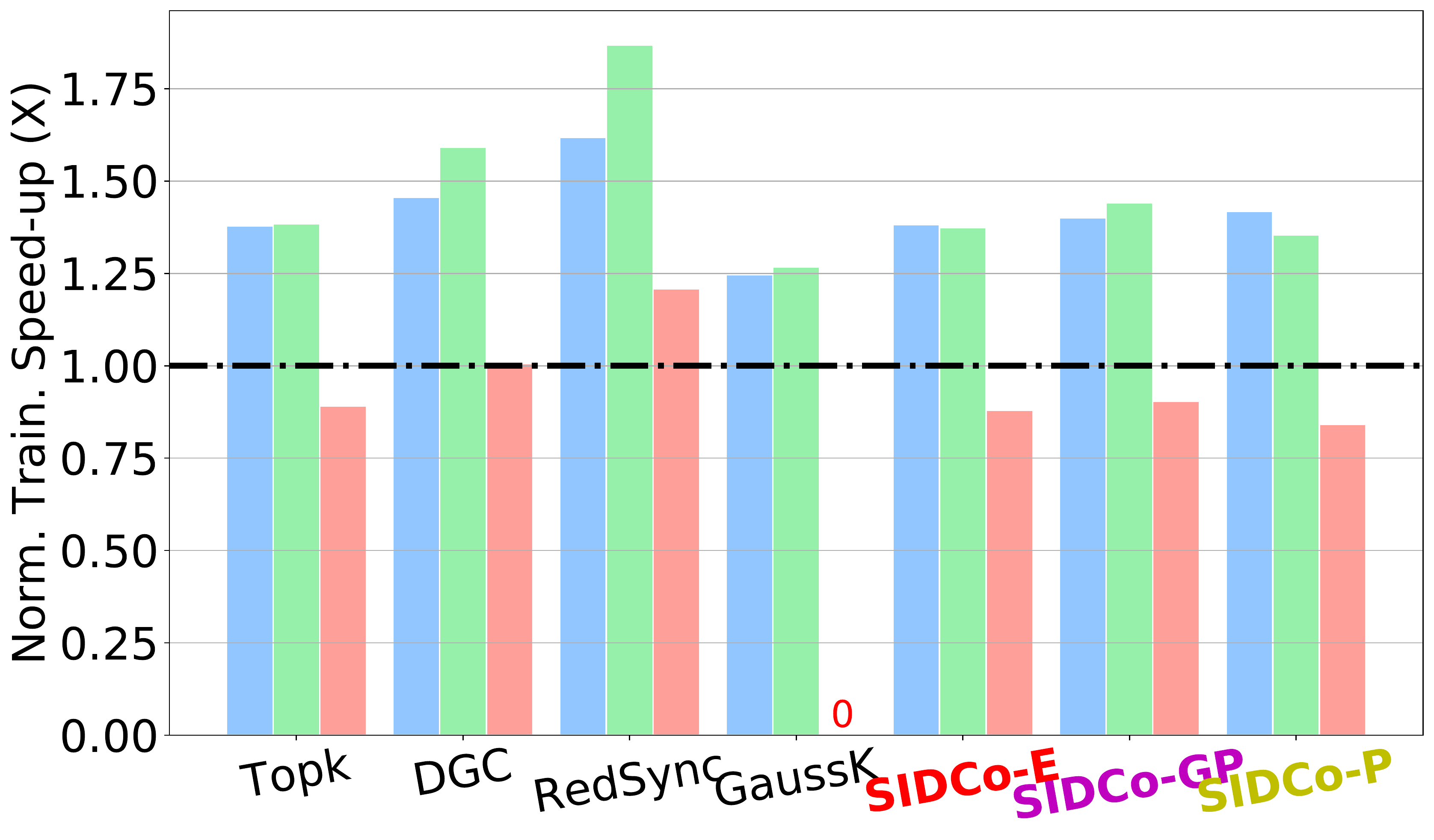}
	\caption{ResNet20-CIFAR10 (Speedup).}
	\label{fig:resnet20-speedup-8-all}
     \end{subfigure}
     \hfill
     \begin{subfigure}[ht]{0.32\linewidth}
  \includegraphics[width=\linewidth]{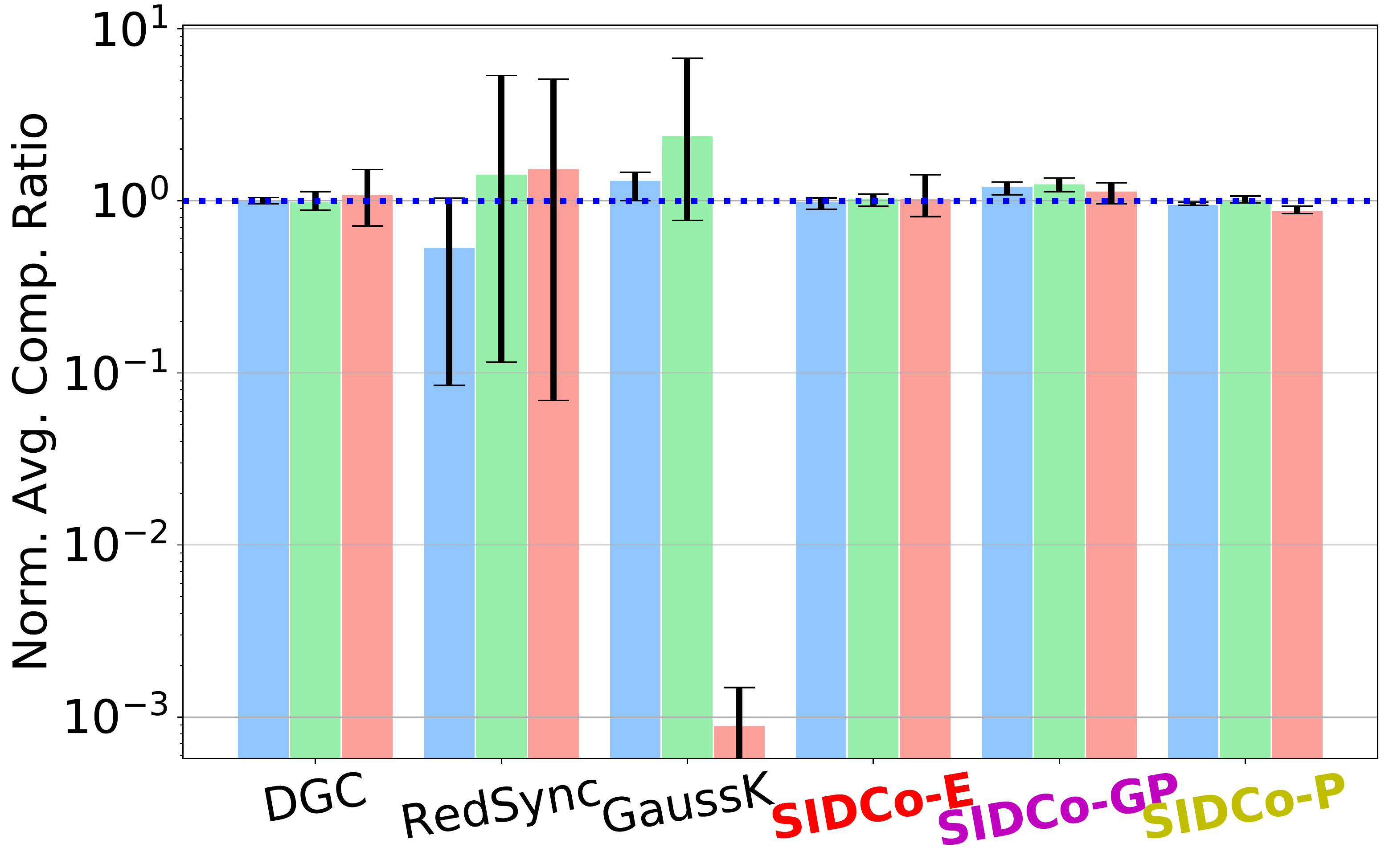}
	\caption{ResNet20-CIFAR10 (Est. Quality).}
	\label{fig:resnet20-good-8-all}
    \end{subfigure}
    \hfill
        \begin{subfigure}[ht]{0.32\linewidth}
    \includegraphics[width=\linewidth]{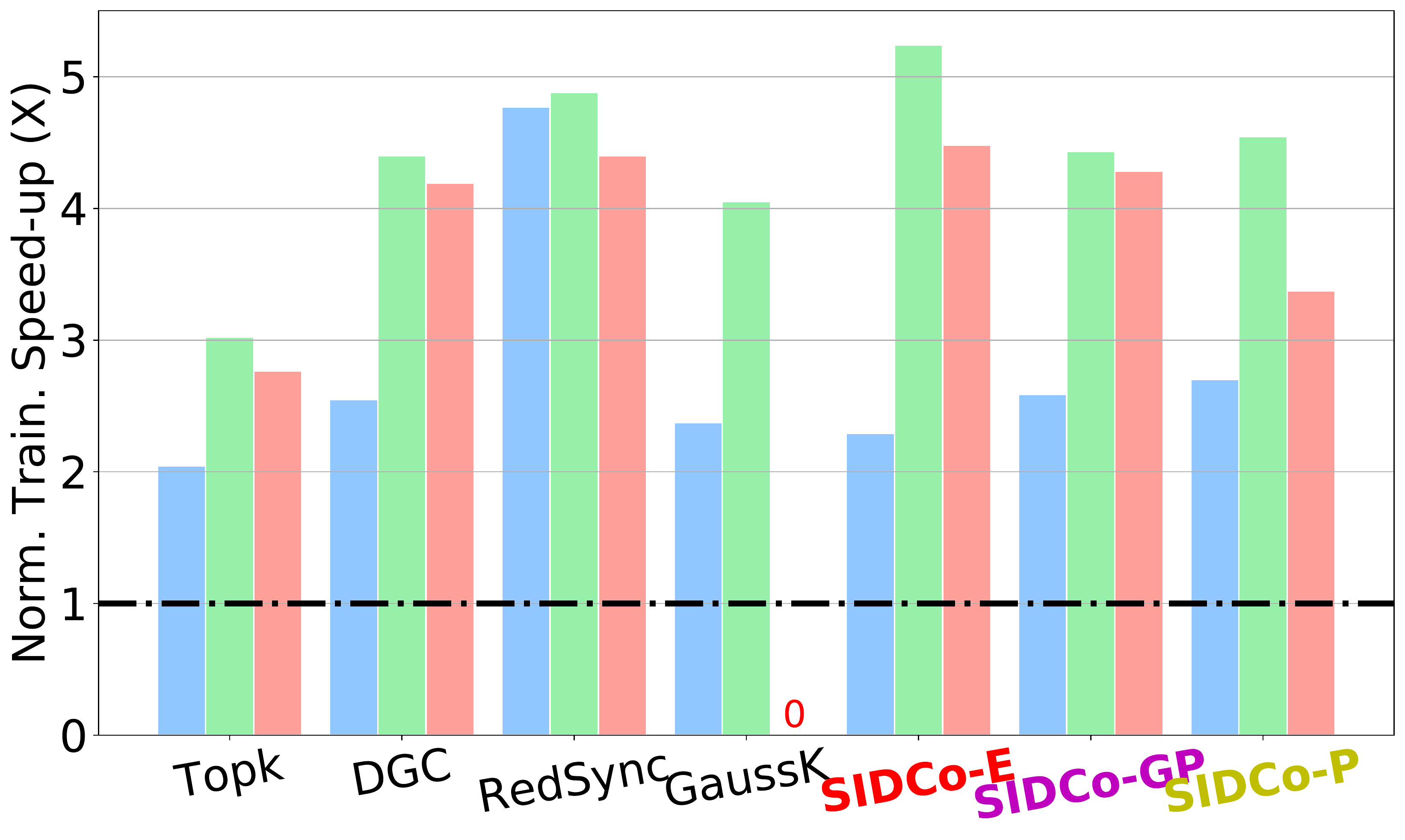}
	\caption{VGG16-CIFAR10 (Speedup).}
	\label{fig:vgg16-speedup-8-all}
     \end{subfigure}
     \\
  \begin{subfigure}[ht]{0.32\linewidth}
    \includegraphics[width=\linewidth]{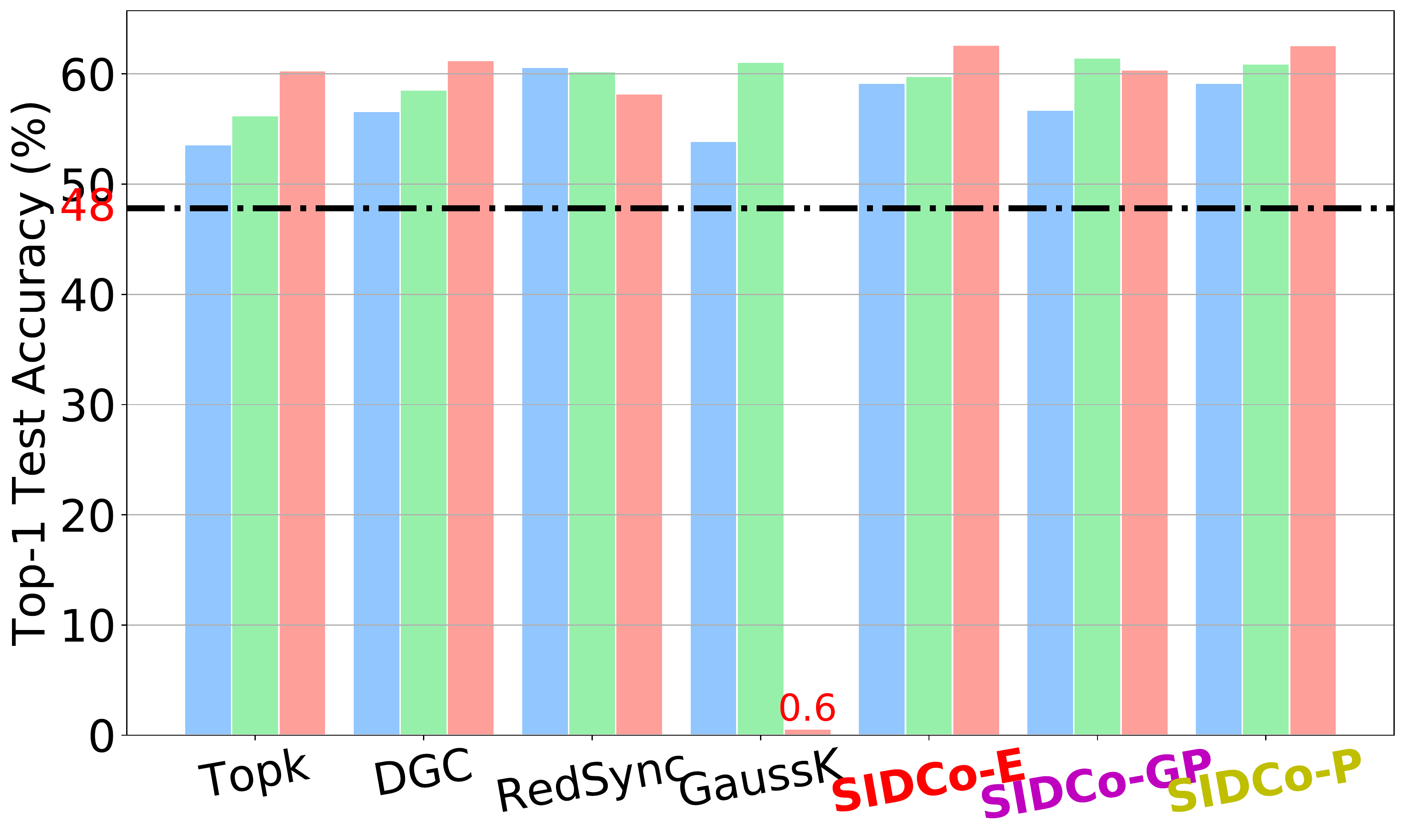}
	\caption{ResNet50-ImageNet (Accuracy).}
	\label{fig:resnet50-speedup-8-all}
     \end{subfigure}
     \hfill
     \begin{subfigure}[ht]{0.32\linewidth}
  \includegraphics[width=\linewidth]{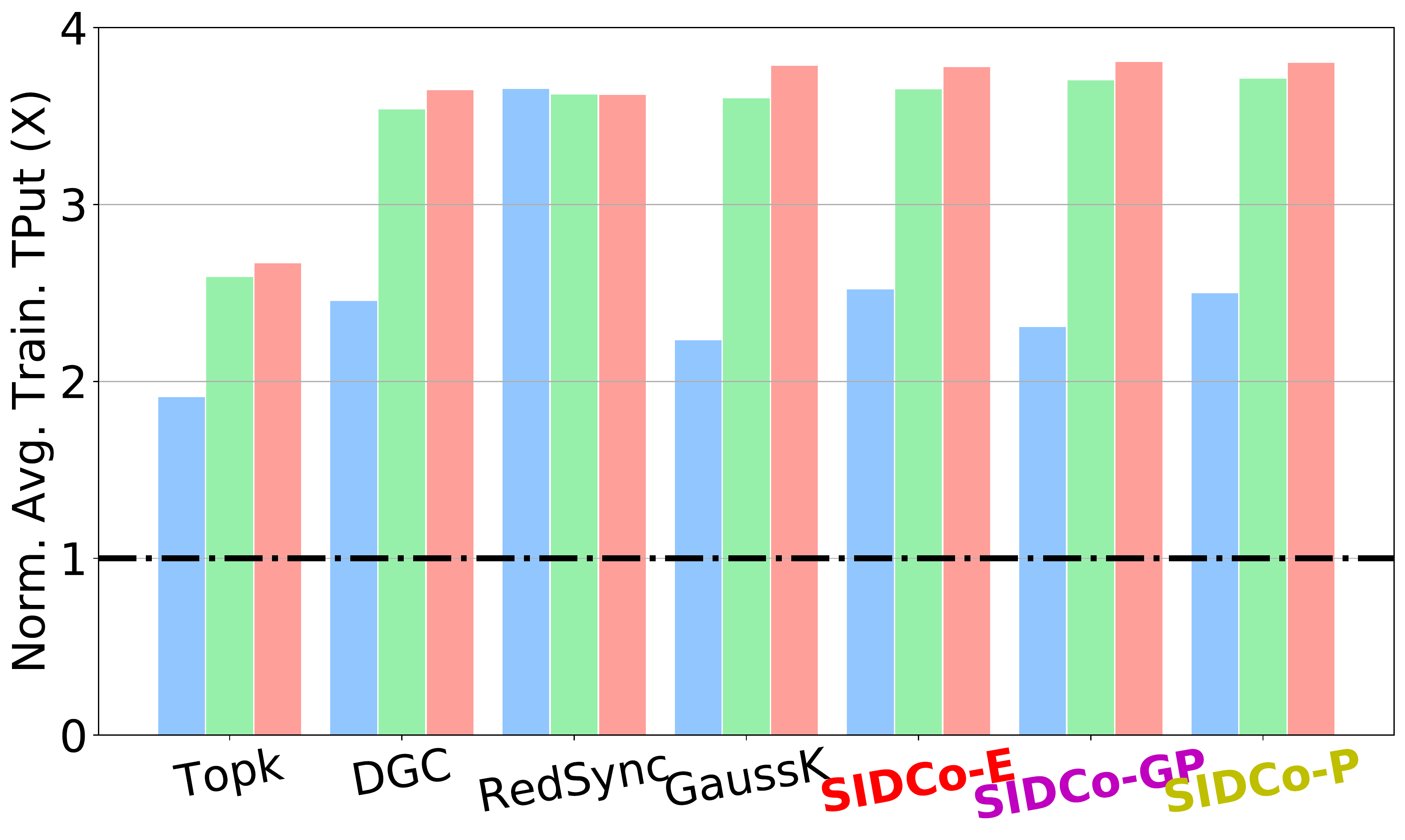}
	\caption{ResNet50-ImageNet (Throughput).}
	\label{fig:resnet50-tput-8-all}
    \end{subfigure}
    \hfill
        \begin{subfigure}[ht]{0.32\linewidth}
    \includegraphics[width=\linewidth]{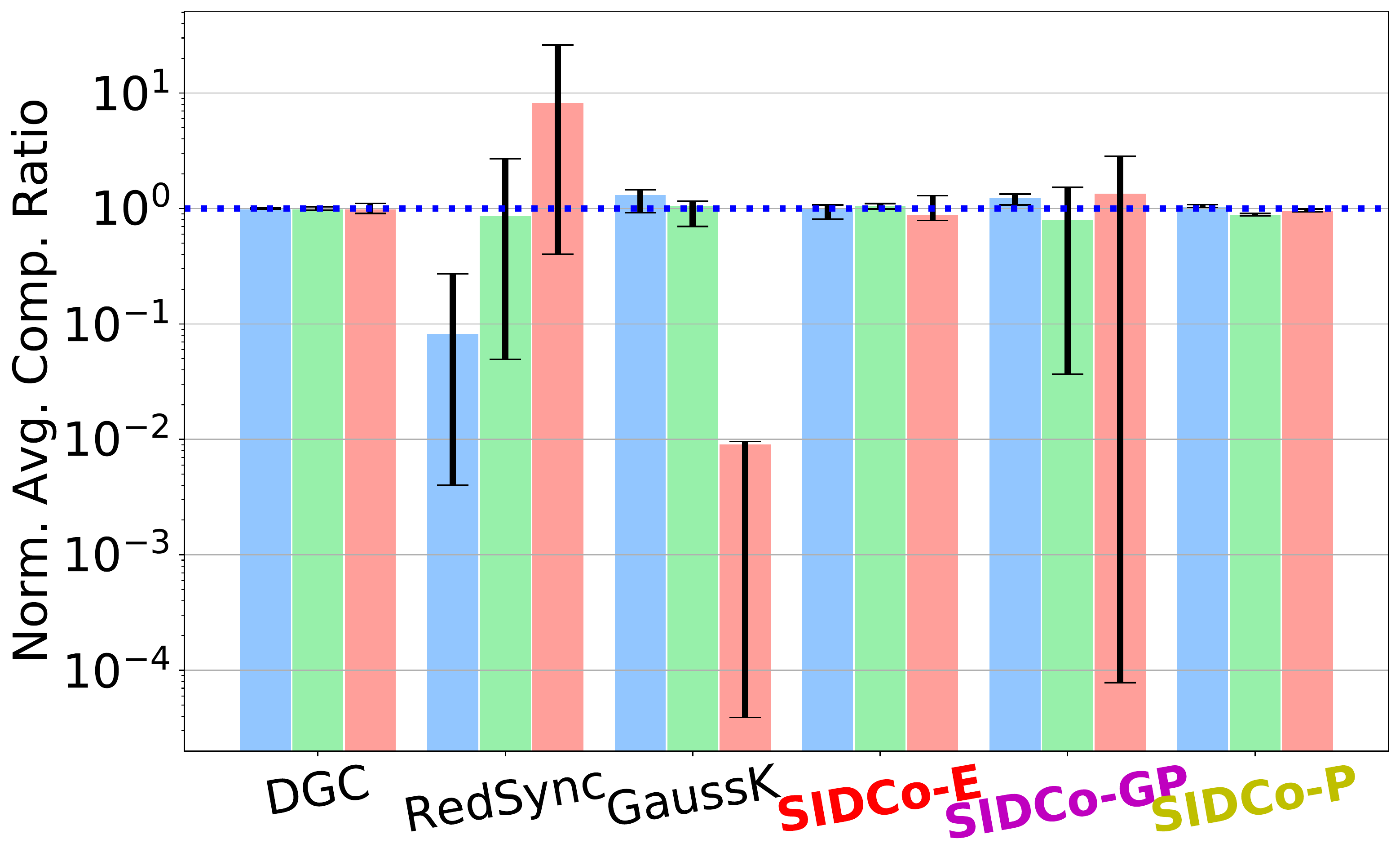}
	\caption{ResNet50-ImageNet (Est. Quality).}
	\label{fig:resnet50-comp-8-all}
     \end{subfigure}
     \\
     \begin{subfigure}[ht]{0.32\linewidth}
    \includegraphics[width=\linewidth]{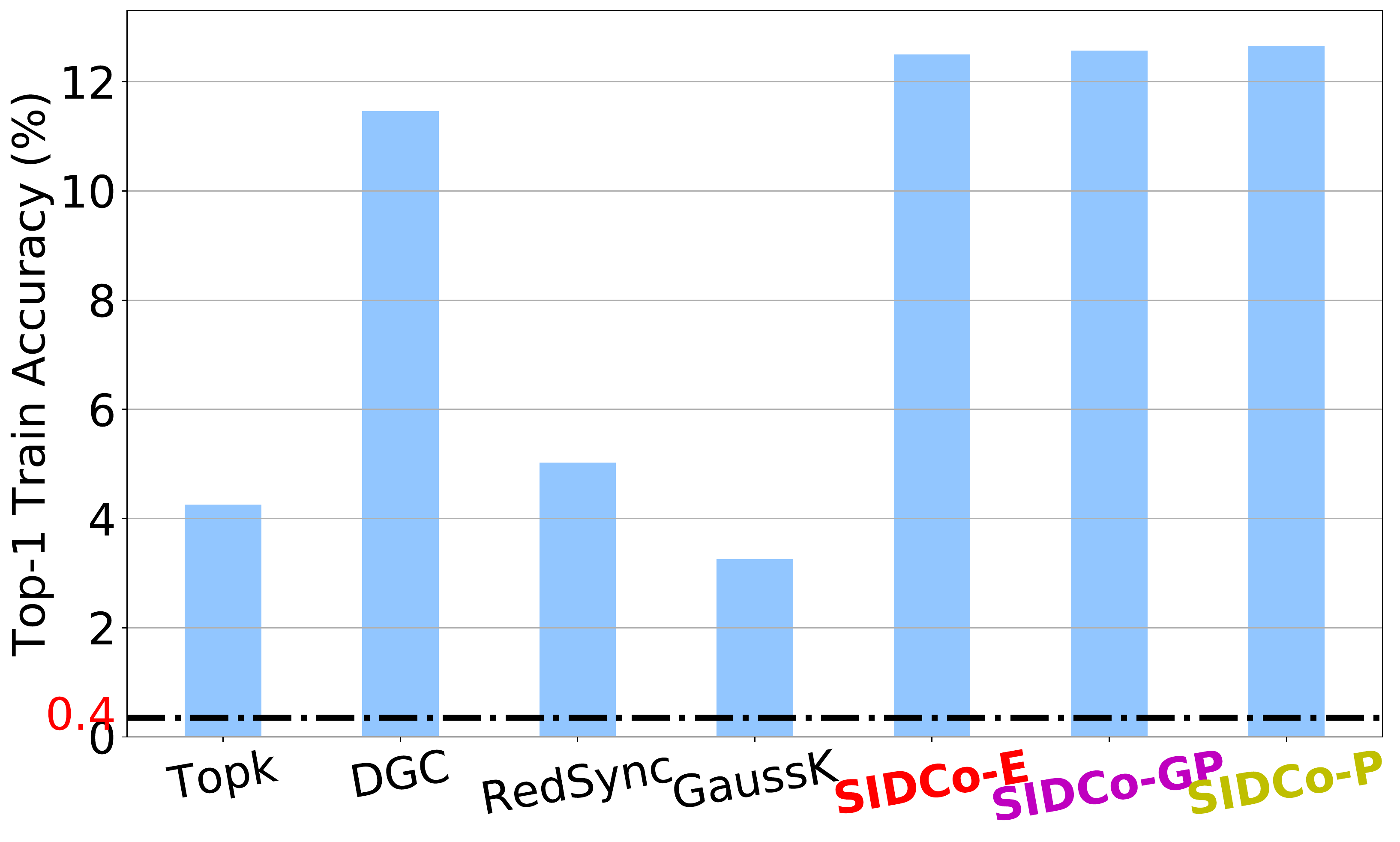}
	\caption{VGG19 on ImageNet (Accuracy).}
	\label{fig:vgg19-speedup-8-all}
     \end{subfigure}
     \hfill
     \begin{subfigure}[ht]{0.32\linewidth}
  \includegraphics[width=\linewidth]{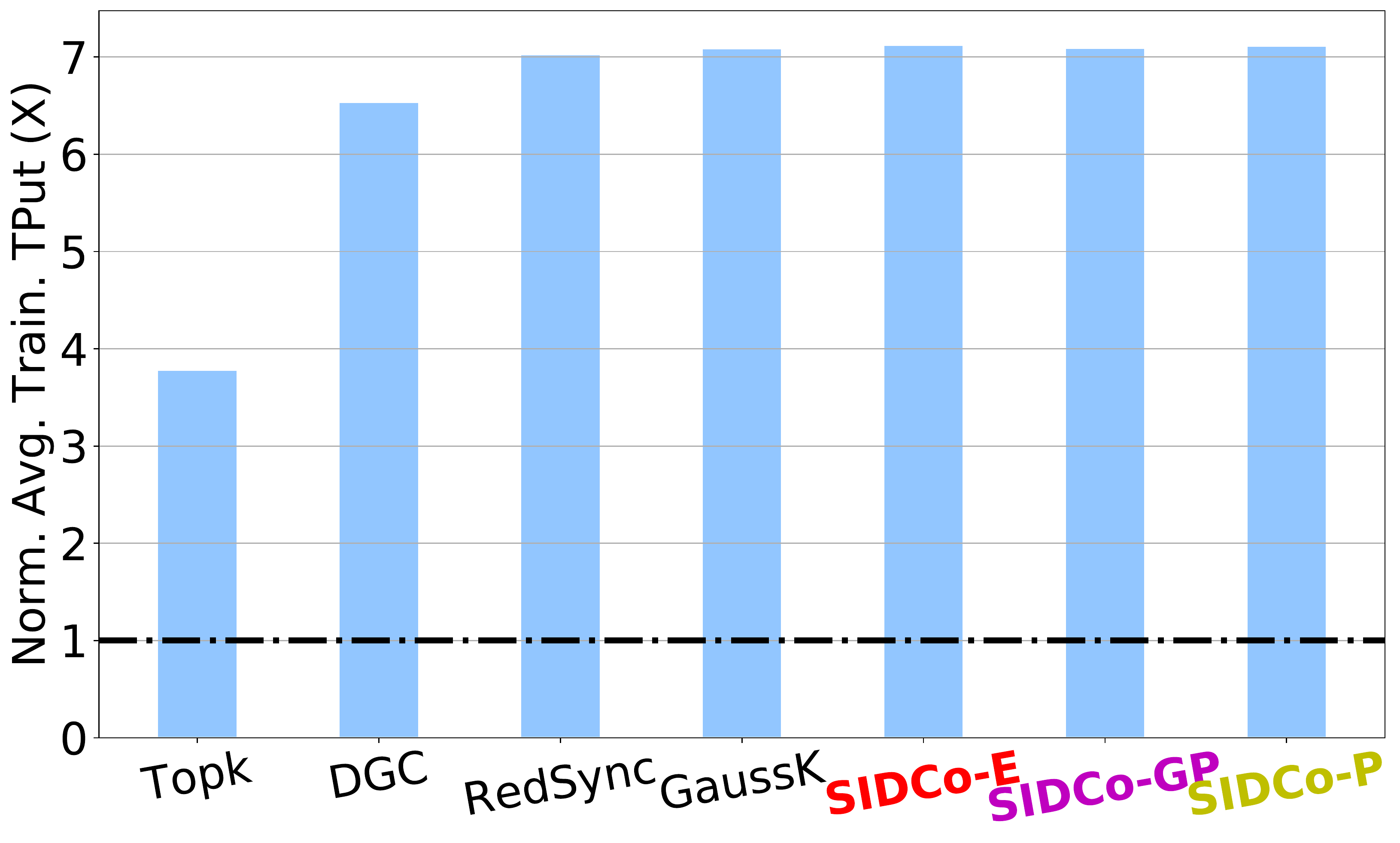}
	\caption{VGG19 on ImageNet (Throughput).}
	\label{fig:vgg19-tput-8-all}
    \end{subfigure}
    \hfill
    \begin{subfigure}[ht]{0.32\linewidth}
    \includegraphics[width=\linewidth]{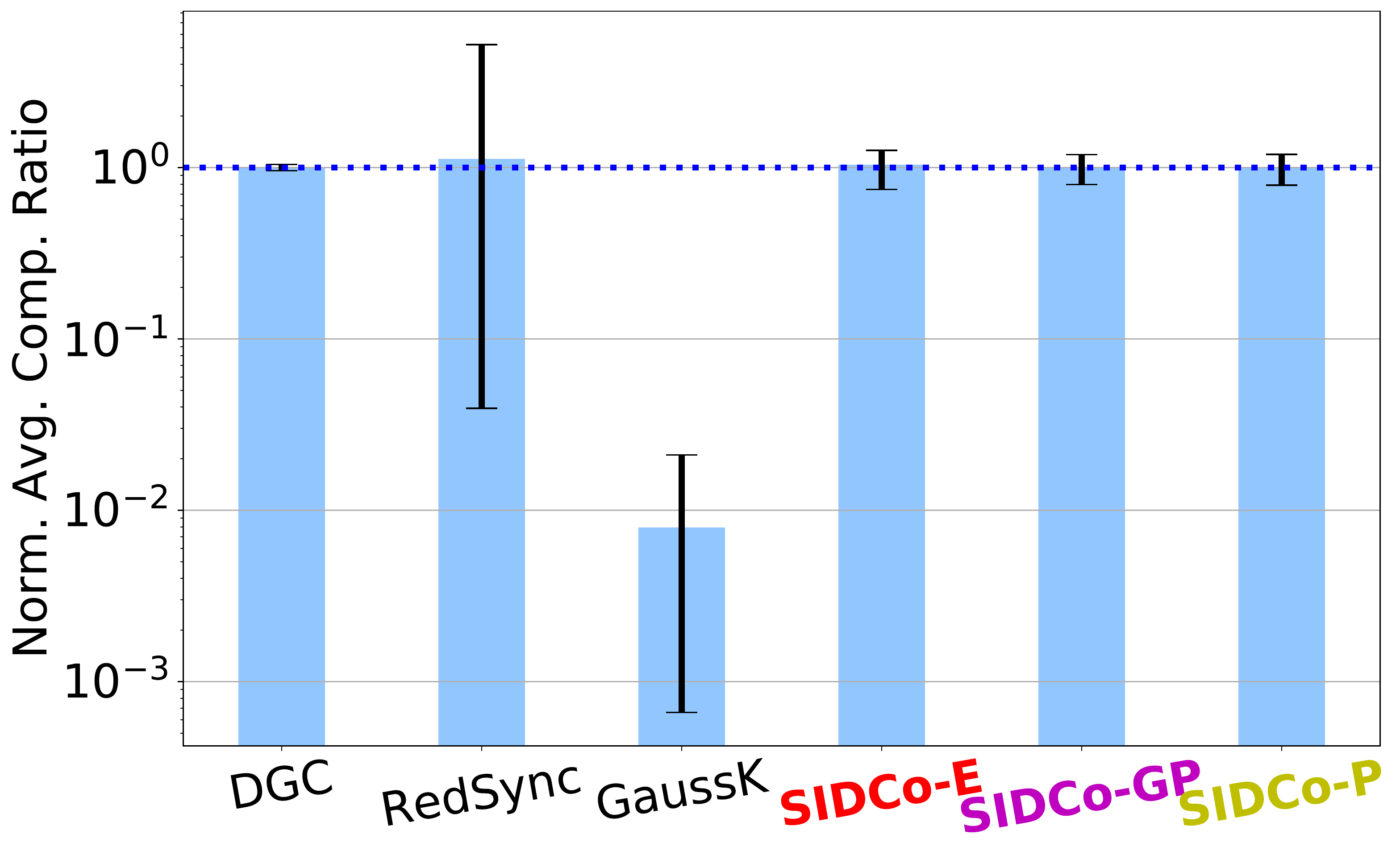}
	\caption{VGG19 on ImageNet (Est. Quality).}
	\label{fig:vgg19-comp-8-all}
     \end{subfigure}
\caption{Performance of using 8 nodes for training LSTM on PTB [(a),(b),(c)], a LSTM on AN4  [(d),(e),(f)], CIFAR10 on ResNet20 [(g),(h)] and VGG16 [(i)], and training ResNet50 [(j), (k), (l)] and VGG19 [(m), (n), (o)] on ImageNet dataset.}
\label{fig:all}
\end{figure*}

\end{document}